\documentclass[letterpaper, 11pt]{article}
\usepackage[utf8]{inputenc}
\usepackage[margin = 1in]{geometry}
\usepackage{smile}
\usepackage[colorlinks, linkcolor=orange, anchorcolor=blue, citecolor=blue]{hyperref}
\usepackage{xcolor}
\usepackage{natbib}
\usepackage{wrapfig}

\newcommand{\inj}{{\tt inj}}
\newcommand{\diff}{\textrm{d}}
\newcommand{\degree}{\gamma}
\newcommand{\xstar}{x^*_t}

\newcommand{\reach}{\tau}
\newcommand{\density}{p_{\rm data}}
\newcommand{\dist}{P_{\rm data}}
\newcommand{\holder}{\beta}  
\newcommand{\expp}{\mathrm{\bold{Exp}}}
\newcommand{\logg}{\mathrm{\bold{Log}}}
\newcommand{\coef}{\mathrm{\text{coef}}}
\newcommand{\lip}{\mathrm{\text{Lip}}}

\newcommand{\mult}{\mathrm{Mult}}
\newcommand{\vol}{\mu_\cM}  
\newcommand{\aerror}{\epsilon_0} 
\newcommand{\relu}{{\rm ReLU}}
\newcommand{\nnsmall}{\bar{s}_{\rm small}}
\newcommand{\nnlarge}{\bar{s}_{\rm large}}
\newcommand{\bigt}{T}
\newcommand{\tlarge}{t_{\rm large}}
\newcommand{\tsmall}{t_{\rm small}}
\newcommand{\projk}{\Pi_k}  
\newcommand{\projm}{\Pi_\mathcal{M}}  
\newcommand{\tball}[1]{\cB^d_{T_{#1}\cM}}
\newcommand{\tsd}[1]{T^d_{#1}\cM}
\newcommand{\rratio}{\eta}
\newcommand{\lscale}{\ell_0}

\newcommand{\dball}{\cB^d}
\newcommand{\pdata}{p_{\rm data}}

\newcommand{\vradius}{\Delta(t)}
\newcommand{\xradius}{\Delta_0(t)}
\newcommand{\thres}{\bar{r}(t)}
\newcommand{\dgM}{\gamma'}
\newcommand{\nn}{{\frak N}}
\newcommand{\nnprod}{\mathfrak{N}_{\times}}

\title{\LARGE \bf Diffusion Model for Manifold Data: Score Decomposition, Curvature, and Statistical Complexity}
\author{Zixuan Zhang$^1$ \quad Kaixuan Huang$^2$ \quad Tuo Zhao$^1$ \quad Mengdi Wang$^2$ \quad Minshuo Chen$^3$ \\
$^1$Georgia Tech \quad $^2$Princeton University \quad $^3$Northwestern University}
\date{\today}

\begin{document}

\maketitle

\begin{abstract}

Diffusion models have become a leading framework in generative modeling, yet their theoretical understanding---especially for high-dimensional data concentrated on low-dimensional structures---remains incomplete. This paper investigates how diffusion models learn such structured data, focusing on two key aspects: statistical complexity and influence of data geometric properties. By modeling data as samples from a smooth Riemannian manifold, our analysis reveals crucial decompositions of score functions in diffusion models under different levels of injected noise. We also highlight the interplay of manifold curvature with the structures in the score function. These analyses enable an efficient neural network approximation to the score function, built upon which we further provide statistical rates for score estimation and distribution learning. Remarkably, the obtained statistical rates are governed by the intrinsic dimension of data and the manifold curvature. These results advance the statistical foundations of diffusion models, bridging theory and practice for generative modeling on manifolds.

\end{abstract}

\section{Introduction}\label{sec:intro}
Diffusion models have recently emerged as a powerful framework in generative modeling, surpassing the performance of traditional methods like Generative Adversarial Networks (GANs, \citet{goodfellow2014generative}) and Variational AutoEncoders (VAEs, \citet{kingma2013auto}) in many applications \citep{ho2020denoising, song2019generative,  songdenoising,rombach2022high}. Diffusion models have found success across a variety of domains, including image synthesis \citep{dhariwal2021diffusion, huang2025diffusion, song2019generative}, audio generation \citep{kong2020diffwave, yang2023diffsound}, and natural language processing \citep{austin2021structured, lou2023discrete, nie2025large}, and have even extended their influence into areas like computational biology \citep{watson2023novo, guo2024diffusion}, reinforcement learning and control \citep{yang2023diffusion, croitoru2023diffusion, chen2024opportunities}. Despite their empirical success, a comprehensive theoretical understanding of diffusion models---especially in high-dimensional settings---remains limited, and further exploration is essential to advance both their theoretical and practical potential.

At their core, diffusion models learn complex data distributions through a pair of processes: a forward process, where Gaussian noise is incrementally added to data until the data distribution converges to white noise, and a backward process, where a denoising neural network learns to reverse this noise corruption. The denoising neural network estimates the so-called score function, which is the gradient of the log-probability density of the (corrupted) data distribution \citep{ho2020denoising, song2019generative, song2020sliced}. The score function is central to understanding the learned distribution and to the generation of new data samples.

While diffusion models have demonstrated their ability to generate high-quality samples in high-dimensional settings, this success challenges the common expectation that learning in such environments should be hampered by the curse of dimensionality \citep{tsybakov2008introduction, wasserman2006all}. A plausible explanation for this success lies in the observation that many real-world datasets exhibit an underlying low-dimensional structure. Although the data may reside in high-dimensional ambient spaces, the intrinsic dimensionality of the data is often much lower, and these low-dimensional structures can often be understood as manifolds \citep{tenenbaum2000global, roweis2000nonlinear}.

In many domains, data display regularities and symmetries that suggest they can be modeled on low-dimensional manifolds. For instance, the set of all images of an object taken under different lighting conditions or orientations tends to lie on a low-dimensional manifold within the high-dimensional pixel space. By exploiting such low-dimensional structures, a line of work has shown that neural networks can efficiently approximate and estimate functions supported on these data domains, with performance dependent primarily on the intrinsic dimensionality of the data rather than its ambient dimensionality \citep{schmidt2017nonparametric, suzuki2018adaptivity, chen2022nonparametric, suh2023approximation,10.1093/imaiai/iaad018}.

While the low-dimensional manifold hypothesis offers an intuitive explanation for the empirical success of diffusion models, fundamental gaps persist in understanding their behavior on manifold data. Rigorous analyses and precise characterizations of statistical complexity---beyond idealized settings such as access to accurate score estimators \citep{de2022convergence, huang2024denoising}---remain underdeveloped. Notable sample complexity bounds are only established in a few very recent works \citep{pmlr-v238-tang24a, azangulov2024convergence, yakovlev2025generalization, chakraborty2026generalization}. Yet, in these works, the interplay between the manifold geometry (e.g., curvature and boundary effects) and the performance of diffusion models is largely uncharacterized; see a detailed discussion in Section~\ref{sec:related_work}. Despite diffusion models excelling at generating globally coherent structures, they often struggle with local fine-scale details. For instance, diffusion models frequently produce globally coherent images with local geometric inconsistencies---such as distorted fingers in synthesized human hands or hallucinated text in generated documents \citep{lu2025towards, lu2024handrefiner, aithal2024understanding, saharia2022photorealistic}. These challenges motivate two foundational questions.

\noindent {\bf Q1}: {\it Can diffusion models efficiently model manifold data, and how does their sample complexity scale with intrinsic (as opposed to ambient) dimensionality?}

\noindent {\bf Q2}: {\it How do geometric properties---such as curvature---affect the quality of generative performance?}

\subsection{Contribution}
In this paper, we develop a formal statistical framework for diffusion models in the context of data supported on unknown low-dimensional manifolds. We answer the two questions by viewing diffusion models as an unsupervised learner aiming to estimate the underlying data distribution. Different from traditional density estimation methods, diffusion models implicitly learn the distribution by estimating the score function, which presents significant challenges to the analysis. One key issue is that the score function can be highly irregular, even if the manifold and the data distribution are relatively smooth. In particular, it is well-known that the score function has an exploding magnitude as the backward diffusion process approaches zero noise \citep{song2019generative, kim2021soft, pidstrigach2022score}. To address this challenge, we propose a novel score decomposition approach, which projects the score according to the noise level, inspired by prior work on modeling data distributions on low-dimensional subspaces \citep{ oko2023diffusionmodelsminimaxoptimal,pmlr-v202-chen23o}.

Our contributions are three-fold: (1) we identify key intrinsic structures of score functions for manifold data under both the large noise (Lemma~\ref{lemma:score-decomp-large}) and small noise (Lemma~\ref{lemma:score-decomp-small}) regimes; (2) based on the structures, we construct suitable neural network architectures that can effectively approximate the score function in diffusion models (Theorem~\ref{thm:approx}), where we highlight the influence of the curvature of manifold; and (3) we provide statistical rates for estimating the score function and further the data distribution (Theorems~\ref{thm:generalization} and \ref{thm:distribution}). The convergence rates depend on the intrinsic dimension of the data as well as the curvature of the manifold. We summarize our main theoretical results in the following theorem.
\begin{theorem}[Informal]
Under some manifold assumptions,
\begin{enumerate}
\item the score function exhibits distinct decompositions in the large noise and small noise regimes: in the large-noise regime, it decomposes as a weighted sum of localized components that guide noisy data toward the manifold; in the small-noise regime, it decomposes via direct projection onto the manifold, with an additional interaction term that reflects the influence of curvature;
\item based on the decompositions, the score function can be approximated by a ReLU neural network with $$\tilde{\mathcal{O}}( \epsilon^{-d/\holder})$$ non-zero weights, where $\epsilon$ is the target approximation error.
The network size scales exponentially only in the intrinsic dimension $d$, while the dependence on the ambient dimension~$D$ is polynomial with a degree governed by the manifold intrinsic dimension and curvature.
\item furthermore,
the data distribution is estimated in terms of Wasserstein distance at a rate $$\tilde{\mathcal{O}}(n^{-\frac{\holder+1}{d+2\holder}}),$$ where $n$ is the sample size.
\end{enumerate}
\end{theorem}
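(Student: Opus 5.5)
The informal theorem bundles the three main technical results: Lemmas~\ref{lemma:score-decomp-large} and \ref{lemma:score-decomp-small} give part 1, Theorem~\ref{thm:approx} gives part 2, and Theorems~\ref{thm:generalization} and \ref{thm:distribution} give part 3. So the plan is to prove these in order and then combine them. The setup is the Ornstein--Uhlenbeck forward process, under which
\[
\nabla\log p_t(x)=\frac{1}{\sigma_t^2}\bigl(\alpha_t\,\mathbb{E}[X_0\mid X_t=x]-x\bigr),
\]
with $\alpha_t=e^{-t/2}$ and $\sigma_t^2=1-e^{-t}$. Here the conditional expectation is an integral over the manifold $\cM$ against $\pdata$ times a Gaussian kernel. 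The time axis splits at a threshold $t^*$ into a large-noise regime and a small-noise regime.

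\emph{Large noise.} I would cover $\cM$ by finitely many charts of scale $\lscale$, say $\cN(\lscale)\lesssim \lscale^{-d}$ of them, with a smooth partition of unity. Writing $\mathbb{E}[X_0\mid x]$ as a ratio of integrals, each localized piece is a weighted average of chart contributions, and each chart contribution depends on $x$ only through its $d$-dimensional projection onto the chart's tangent space. This is a weighted sum of localized, guide-toward-$\cM$ components. Each component is a $d$-variate $\holder$-H\"older function, composed with a linear projection, and divided by a smooth positive denominator.

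\emph{Small noise.} For $x$ in the tube $\{x:\operatorname{dist}(x,\cM)<\reach\}$, I would use Laplace's method around $\projm(x)$. In normal coordinates, $\mathbb{E}[X_0\mid x]=\projm(x)+\sigma_t^2\,(\text{tangential correction})+\cdots$. Hence the score equals $-(x-\alpha_t\projm(x))/\sigma_t^2$ plus a bounded correction. The correction contains a term coupling the normal displacement with the second fundamental form, and this is the curvature interaction term. Points outside the tube carry exponentially small mass and can be truncated away.

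\emph{Approximation.} Next I would approximate each ingredient with ReLU networks: the projection $\projm$ via local chart maps with Taylor expansion up to the curvature order, the $d$-variate H\"older pieces via standard local-polynomial constructions costing $\tilde{\mathcal{O}}(\epsilon^{-d/\holder})$, and the products, divisions and Gaussian weights via $\nnprod$-type gadgets. Approximating the local inverse exponential map to accuracy $\epsilon$ needs polynomial features in $D$ variables of degree tied to $d$ and the curvature. This gives the $\mathrm{poly}(D)$ factor whose degree is governed by $d$ and the curvature. I expect this step to be the main obstacle: controlling the error of $\projm$ uniformly, and handling the $1/\sigma_t^2$ blow-up as $t\to0$ so that it multiplies only the explicitly computable normal part. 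I would first try to represent $x-\alpha_t\projm(x)$ exactly up to $\epsilon\sigma_t^2$ and clip the network outputs.

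\emph{Statistics.} For part 3, the score-matching loss is an empirical process over the network class. Its covering number scales with the number of weights, so covering-number bounds give generalization error of order (size)$/n$ plus $\epsilon^2$. Balancing $\epsilon^{-d/\holder}/n$ against $\epsilon^2$ yields $\epsilon\asymp n^{-\holder/(d+2\holder)}$. The score error translates into a KL bound through Girsanov's theorem with early stopping at $t_0$. Wasserstein distance is then controlled by $\sqrt{t_0}$ together with a transport bound on the learned process; optimizing $t_0$, and exploiting the extra smoothing of the Wasserstein metric, gives the exponent $(\holder+1)/(d+2\holder)$.
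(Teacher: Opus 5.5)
Your outline follows the paper's overall structure: a two-regime score decomposition, chartwise network approximation, a covering-number bound for score matching balanced at $\epsilon\asymp n^{-\holder/(d+2\holder)}$, then a Wasserstein bound. However, the mechanism behind ``polynomial in $D$ with curvature-governed degree'' is missing, and your large-noise claim that each chart contribution depends on $x$ only through $P_k^\top x$ is false.

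Expanding $\|x-\alpha_t x_0\|^2$ around $\projk(x,t)$ leaves the cross term $\langle x-\projk(x,t),\,\alpha_t(x_k-x_0)\rangle$. It is nonzero because $x_0-x_k\notin T_{x_k}\cM$ on a curved manifold, and it involves the full normal component of $x$. In the small-noise regime the same object is $E_2$. A Laplace expansion (leading term $+\cdots$, with a correction merely asserted to be bounded) gives no $\epsilon$-accurate handle on it.

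The paper works instead with exact identities (add and subtract $\projk$ or $\projm$, expand the square) and approximates the ratio $s_{\cM}=s_2/(\sqrt{h_t}s_1)$. The key step is the reach inequality $|E_2|\le\frac{4\|x-\projm(x,t)\|}{\alpha_t\reach}E_1$ (Lemma~\ref{lemma:cross-term}). In the large-noise regime it gives $|\cT_k|/h_t=\tilde{\mathcal O}(\epsilon^{1/\holder}/\reach)$ for charts of radius $r=\epsilon^{1/\holder}$ when $h_t\gtrsim\epsilon^{2/\holder}$. The paper then Taylor-expands $\exp(-E_2/2h_t)$ to degree $\degree$ in the tensorized form $\sum_{j<\degree}\langle[x-\projm(x,t)]^{\otimes j},\text{Tensor-Poly}^{k,j}(v_k(x,t),h_t,\alpha_t)\rangle$. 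The remainder is of order $(\epsilon^{1/\holder}/\reach)^{\degree}$ up to powers of $D$ and logarithms. Choosing $\degree=\lceil\holder\log\frac1\epsilon/(\log\frac1\epsilon+\holder\log\reach)\rceil$ makes it at most $\epsilon$. The $D^{j}$ tensor entries are exactly the $D^{\degree}$ in $W$ and $S$.

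Attributing the $D$-power to approximating $\logg_k\circ\projm$ does not reproduce this. In the paper that map costs $\tilde{\mathcal O}(\epsilon^{-d/\holder})$ weights, and even an exact projection leaves the on-support score's dependence on the normal displacement unaccounted for. Likewise, in the large-noise regime the factor $\epsilon^{-d/\holder}$ is the chart count $C_\cM\sim r^{-d}$. Each chart's piece is a genuine polynomial in $x$ and $P_k^\top x$, with coefficients that are moments of $\rho_k\pdata$; it is not a $d$-variate H\"older approximation.

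The Wasserstein step has a second gap. Girsanov gives $\mathrm{KL}\lesssim\int_{t_0}^{T}\|\hat s(\cdot,t)-\nabla\log p_t\|^2_{L^2(P_t)}\,dt$. Passing to $W_1$ through total variation then yields only $\sqrt{\mathrm{KL}}\approx D^{\degree+d/2+1}n^{-\holder/(d+2\holder)}$ up to logarithms. The missing factor $n^{-1/(d+2\holder)}=\epsilon^{1/\holder}$ is the whole content of the exponent $(\holder+1)/(d+2\holder)$, so ``exploiting the extra smoothing of the Wasserstein metric'' has to be an actual argument.

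In the paper this factor comes from the two-regime split, through two ingredients:
\begin{enumerate}
\item In the bound obtained via Theorem D.7 of \citet{oko2023diffusionmodelsminimaxoptimal}, errors in the small-noise window $h_t\le\epsilon^{2/\holder}$ enter $W_1$ weighted by $\sqrt{h_{\tsmall}}=\epsilon^{1/\holder}$. This gives the term $\sqrt{n^{-2/(d+2\holder)}D^{2\degree+d+2}n^{-2\holder/(d+2\holder)}}$.
\item In the large-noise regime, Lemma~\ref{lemma:approx-large} reaches the sharper accuracy $\epsilon^{2(\holder+1)/\holder}/h_t$, with $\aerror=h_t^{(d-1)/2}\epsilon^{(\holder+1)/\holder}$ and $\degree_0$ large. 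This contributes $\epsilon^{(\holder+1)/\holder}$ directly.
\end{enumerate}
You introduce the split but never use it in the statistical part, so as written your chain stops at the score-estimation rate.
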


Our results provide crucial insights into how diffusion models capture low-dimensional data structures. While scaling up models and data sets has driven progress in data distribution modeling \citep{kaplan2020scaling, openai2020scaling}, such approaches can be inefficient if they overlook the underlying structure of the data. For instance, Stable Diffusion employs a pre-trained variational autoencoder to first extract key features and reduce dimensionality before applying diffusion, demonstrating the value of architecture adaptation \citep{esser2024scaling}. Through our theoretical framework---encompassing approximation and estimation---we emphasize the need to tailor diffusion models to unknown low-dimensional structures, thereby tightening approximation and estimation bounds. This work paves the way for more efficient architectures and training strategies that explicitly leverage these structures.

\subsection{Related Work}\label{sec:related_work}
Recent theoretical studies have framed diffusion models as unsupervised distribution learners and samplers, establishing their statistical distribution learning guarantees \citep{oko2023diffusionmodelsminimaxoptimal, pmlr-v202-chen23o,dou2024optimal,mei2025deep} and sampling convergence guarantees \citep{block2020generative,chen2022sampling,lee2023convergence,chen2023probability,benton2023nearly,li2024adapting,li2024towards}. Such results offer invaluable theoretical insights into the efficiency and accuracy of diffusion models for modeling complex data.
Specifically, \citet{oko2023diffusionmodelsminimaxoptimal} analyzes $L_2$-error bounds in score estimation through score matching with neural networks, showing that the learned distribution achieves minimax optimal rates.
\citet{pmlr-v202-chen23o} establishes explicit convergence rates when data reside on
low-dimensional linear subspaces. \citet{mei2025deep}  interprets neural networks in diffusion models as denoising mechanisms, enabling an efficient score approximation in graphical models.

Moreover, several works have investigated diffusion models when data are supported on manifolds. \citet{de2022convergence} makes the first attempt to analyze diffusion models for learning low-dimensional manifold data. Assuming an accurate score estimator, \citet{de2022convergence} provides distribution estimation
guarantees of diffusion models in terms of the Wasserstein distance, though the obtained convergence rate
has an exponential dependence on the manifold diameter. \citet{huang2024denoising}
establishes a sampling theory for diffusion models on manifold data, with sampling speed scaling nearly linearly with the intrinsic data dimension. 
\citet{pmlr-v238-tang24a} proves a convergence rate of the distribution estimator induced by diffusion models only depends on the intrinsic data dimension. However, as \citet{yakovlev2025generalization} observes, the hidden constants in \citet{pmlr-v238-tang24a}'s convergence rates depend exponentially on
the ambient dimension $D$, which makes the derived rate ineffective when the ambient dimension is of logarithm order of sample size.  \citet{azangulov2024convergence} improves the
$D$-dependence to polynomial order, but without characterizing how convergence rates depend on manifold curvature. \citet{yakovlev2025generalization} studies denoising scoring matching under relaxed  manifold assumptions, considering noisy samples from a single-chart low-dimensional manifold. 
Additionally, \citet{li2024adapting} show that the DDPM sampler adapts to unknown low-dimensional structure with iteration complexity scaling in the intrinsic dimension. \citet{li2026scoreslearngeometryrate} reveal a separation between learning the data geometry and the data distribution from a non-statistical learning perspective. They show that recovering the data distribution demands significantly more accurate score function than recovering the data manifold. \citet{farghly2025diffusion} study the influence of manifold curvature on distribution approximation via diffusion models, establishing manifold-adaptive rates for a smoothed score function, though without sample complexity bounds. 
Beyond the manifold setting, \cite{chakraborty2026generalization} establish Wasserstein-$p$ convergence rates for diffusion models that scale with a notion of Wasserstein dimension, a covering-number-based complexity measure generalizing the Minkowski dimension. 

Our work addresses a fundamental gap in these theoretical analyses by explicitly characterizing how geometric properties of the data manifold---particularly its reach---govern both the efficiency of score estimation and subsequent distribution recovery guaranties. This geometric perspective reveals previously unexplored connections between manifold curvature and the performance of diffusion models. 

\paragraph{Notation}

Given a real value $a \in \RR$, we denote $\lfloor a \rfloor$ as the largest integer smaller than $a$. For vectors $u, v \in \RR^d$, we denote $\norm{v}$ as its Euclidean norm and $\inner{u}{v} = u^\top v$ as the inner product. Given an arbitrary integer $k \geq 1$, we denote $v^{\otimes k}$ as the $k$-th order outer product of $v$, which is a tensor with $[v^{\otimes k}]_{j_1, \dots, j_k} = \prod_{i=1}^k v_{j_i}$. Given a multi-index $\xi = [\xi_1, \dots, \xi_d]^\top \in \NN^D$, we denote $v^\xi = \prod_{i=1}^d v_i^{\xi_i}$.
We denote $\cB^d(v, r) = \{x : \norm{x - v} \leq r\}$ as a $d$-dimensional Euclidean ball, where $v$ is the center and $r$ is the radius. We frequently invoke two operations on a set $A$: For a scalar $a > 0$, we denote $a A = \{a x: x \in A\}$ and for a mapping $f$, we denote $f(A) = \{f(x): x \in A\}$. Given a real-valued function $f : \RR^d \mapsto \RR$ and a multi-index $\xi\in\NN^d$, we denote $\partial^\xi f$ as the partial derivative $\frac{\partial^{|\xi|} f}{\partial x_1^{\xi_1} \cdots \partial x_d^{\xi_d}}$ with $|\xi| = \sum_{i=1}^d \xi_i$. Given a distribution $P$, we denote the $L^2(P)$-norm of $f$ as $\norm{f}_{L^2(P)}^2 = \int f^2(x) \diff P(x)$. 

\section{Preliminary: Smooth Manifold and Diffusion Model}\label{sec:pre}
This section provides a brief introduction to manifolds and diffusion models. Readers may refer to \cite{tu2010introduction, flaherty2013riemannian} for a comprehensive introduction to manifolds and \cite{chan2024tutorial, chen2024opportunities, tang2024score} for a technical exposure of diffusion models.

\subsection{Riemannian Manifold and Density}
We denote $\cM$ as a $d$-dimensional Riemannian manifold isometrically embedded in $\RR^D$ with $d \ll D$. On a coarse level, a manifold $\cM$ is a set which is locally Euclidean, meaning that a small local neighborhood on $\cM$ can be continuously mapped into a Euclidean space. This is formalized with the definition of \textit{charts}.
\begin{definition}[Chart]
A chart for $\cM$ is a pair $(U, \phi)$ such that $U \subset \cM$ is open and $\phi : U \mapsto \RR^d,$ where $\phi$ is a homeomorphism (i.e., bijective, $\phi$ and $\phi^{-1}$ are both continuous).
\end{definition}
A manifold typically consists of many local neighborhoods overlapping with each other. We say two charts $(U, \phi)$ and $(V, \psi)$ on $\cM$ are $C^\infty$ compatible if and only if the transition functions,
$$\phi \circ \psi^{-1} : \psi(U \cap V) \mapsto \phi(U \cap V) \quad \textrm{and} \quad \psi \circ \phi^{-1} : \phi(U \cap V) \mapsto \psi(U \cap V),$$ are both $C^\infty$.
\begin{definition}[$C^\infty$ Atlas]
\label{def:atlas}
A $C^\infty$ atlas for $\cM$ is a collection of pairwise $C^\infty$ compatible charts $\{(U_k, \phi_k)\}_{k \in \cA}$ such that $\bigcup_{k \in \cA} U_k = \cM$.
\end{definition}
\begin{definition}[Smooth Manifold]\label{def:smoothmanifold}
A smooth manifold is a manifold together with a $C^\infty$ atlas.
\end{definition}
Classical examples of smooth manifolds are the Euclidean space $\RR^D$, the torus, and the unit sphere. We further define a Riemannian manifold as a pair $(\cM, g)$, where $\cM$ is a smooth manifold and $g$ is a Riemannian metric \cite[Chapter 2]{lee2018introduction}. An \textit{isometric embedding} of the $d$-dimensional $\cM$ in $\RR^D$ is an embedding that preserves the Riemannian metric of $\cM$. For more rigorous statements, see the classic reference \citep{flaherty2013riemannian}.

\subsubsection{Exponential Map}
In this paper, we will frequently work with a common choice of atlas on a Riemannian manifold, constructed from exponential maps. Given a point $x \in \cM$, we denote $T_x\cM \subset \RR^D$ as the tangent space of $\cM$ at $x$; a formal definition is provided in \cite[Section 8.1]{tu2010introduction}. The tangent space $T_x\cM$ is a $d$-dimensional vector space and isomorphic to $\RR^d$. Therefore, we can identify vectors in $T_x\cM$ with their $d$-dimensional representations, which we denote as the space $\tsd{x} \subset \RR^d$. We refer to $T_x\cM$ and $\tsd{x}$ both as tangent spaces. 

To define exponential maps, we begin with generalizing straight lines in Euclidean spaces to manifolds. We denote the Riemannian distance $d_\cM : \cM \times \cM \rightarrow \RR$ on $\cM$ as
\begin{align*}
d_{\cM}(x,y) = \inf\{{\sf length}(\gamma) ~|~ \gamma \textup{ is a } C^1(\cM) \textup{ curve such that } \gamma(0) = x, \gamma(1) = y\},
\end{align*}
where ${\sf length}(\gamma)$ is the length of curve $\gamma$ computed with respect to the Riemannian metric $g$. Riemannian distance $d_{\cM}(x, y)$ is referred to as the length of the shortest path or \textit{geodesic} connecting $x$ and $y$ on the manifold. Exponential map builds upon geodesics.
\begin{definition}[Exponential map]\label{def:exp-map}
Let $x \in \cM$. For a tangent vector $v \in \tsd{x}$, there is a unique geodesic $\gamma$ that starts at $x$ with initial velocity $v$, i.e., $\gamma(0) = x$ and $\gamma'(0) = v$. The exponential map centered at $x$ is given by $\expp_x(v) = \gamma(1)$, for all $v \in \tsd{x}$.
\end{definition} 
As can be seen, the exponential map takes a vector $v \in \RR^d$ on the tangent space $\tsd{x}$ as input. The output, $\expp_x(v) \in \RR^D$, is the point on the manifold obtained by traveling along a (unit speed) geodesic curve that starts at $x$ and has initial direction $v$; see Figure \ref{fig:exponential_map} for an illustration.

It is known that for all $x \in \cM$, there exists a radius $r$ such that the exponential map restricted to a ball $\tball{x}(0, r)$ is a diffeomorphism onto its image, i.e., it is a smooth map with smooth inverse. As the sufficiently small $r$-ball in the tangent space may vary for each $x \in \cM$, we define the injectivity radius of $\cM$ as the minimum $r$ over all $x \in \cM$. 

\begin{wrapfigure}{r}{0.4\textwidth}
\vspace{-0.3in}
\centering
\includegraphics[width=0.39\textwidth]{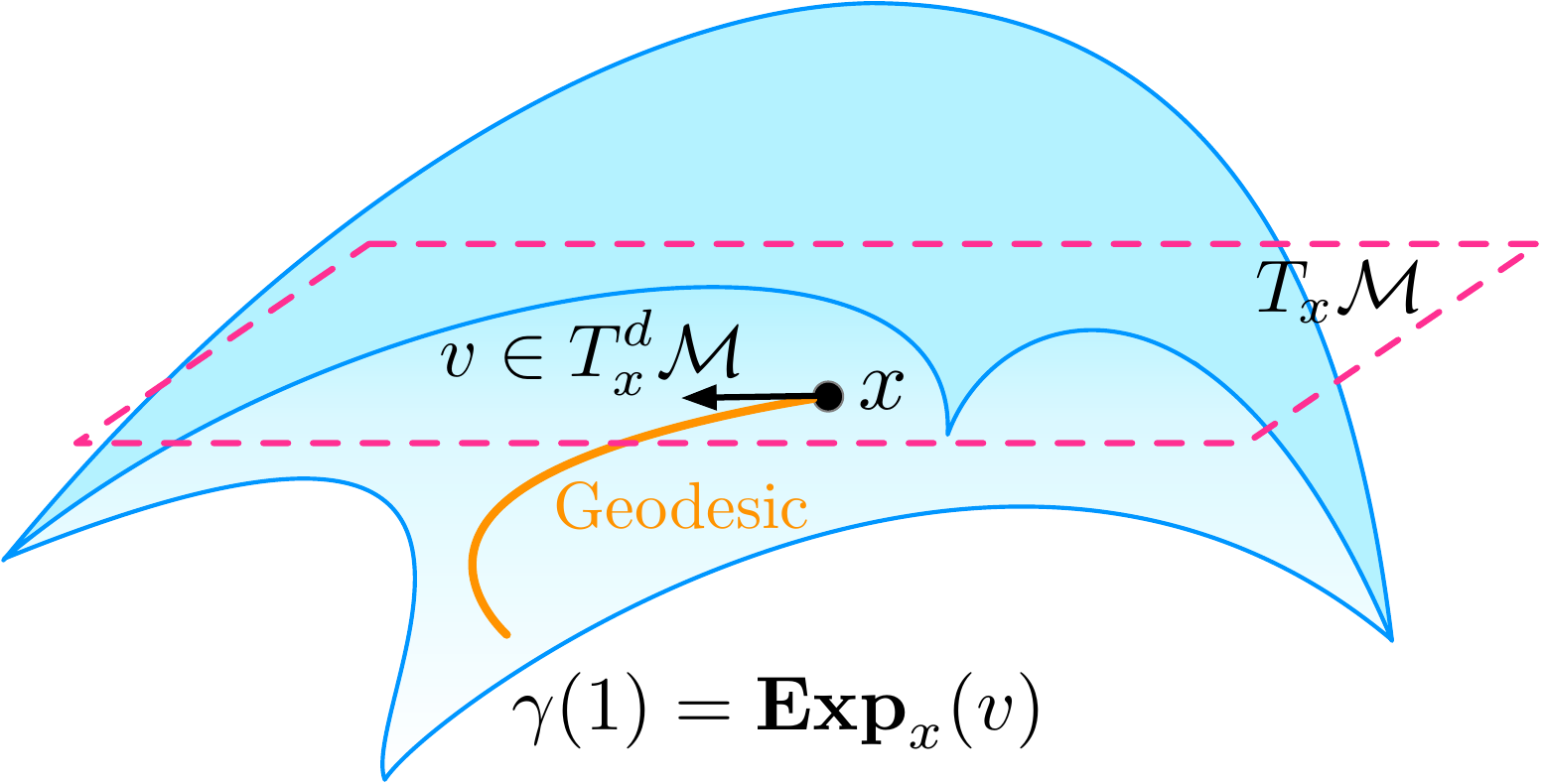}
\vspace{-0.15in}
\caption{Demonstration of tangent space $T_x\cM$, geodesic, and exponential map based on $x \in \cM$ and $v \in \tsd{x}$.}
\vspace{-0.1in}
\label{fig:exponential_map}
\end{wrapfigure}
\begin{definition}[Injectivity radius]\label{def:inj}
For all $x \in \cM$, the injectivity radius at $x$ is defined as $\inj_{\cM}(x) = \sup~ \{r > 0 \mid \expp_{x}: \tball{x}(0, r) \mapsto \cM \textup{ is a diffeomorphism}\}$. Then the injectivity radius of $\cM$ is defined as
\begin{align*}
    \inj(\cM) = \inf~\{\inj_{\cM}(x) \mid x \in \cM\}.
\end{align*}
\end{definition}
As a result, for any $x \in \cM$, the exponential map restricted to a ball of radius $\inj(\cM)$ in $\tsd{x}$ is a well-defined diffeomorphism, which validates $(U_x, \expp_x^{-1})$ as a chart when $U_x = \{\expp_x(v) \mid v \in \tball{x}(0, \inj(\cM))\}$. 
It is convenient to denote the inverse of the exponential map as $\logg_x$, the log map.
Controlling a quantity called reach allows us to lower bound the injectivity radius.
\begin{definition}[Reach \citep{Federer}]
    The reach $\tau$ of a manifold $\cM$ is defined as the quantity
    \begin{align*}
        \tau = \inf~\{r > 0 \mid \exists~ x\neq y \in \cM, v \in \RR^D \text{ such that } r =\|x- v\| = \|y- v\| = \inf_{z \in \cM} \| z - v \| \}.
    \end{align*}
\end{definition}
Intuitively, if the distance of a point $x$ to $\cM$ is smaller than the reach, then there is a unique point in $\cM$ that is closest to $x$. However, if the distance between $x$ and $\cM$ is larger than the reach, then there will no longer be a unique closest point to $x$ in $\cM$. For example, the reach of a sphere is its radius. The reach gives us control over the injectivity radius $\inj({\cM})$.
\begin{proposition}[Proof in \cite{AL}]\label{prop:inj-radius}
    For a manifold $\cM$ with reach $\tau >0$, it holds that $\inj({\cM}) \geq \pi\tau$.
\end{proposition}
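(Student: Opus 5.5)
The argument combines two classical facts: reach bounds curvature, and curvature bounds conjugate points and short closed geodesics. We assume $\cM$ is compact and connected (a closed manifold, as in the paper's setting). Then Klingenberg's lemma gives
\[
\inj(\cM) \geq \min\{\pi/\sqrt{K_{\max}},\ \ell/2\},
\]
where $K_{\max}$ is an upper bound on the sectional curvatures and $\ell$ is the length of the shortest nontrivial closed geodesic. It therefore suffices to show $K_{\max}\le 1/\tau^2$ and $\ell \geq 2\pi\tau$.

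\textbf{Step 1: curvature.} Let $\mathrm{II}$ be the second fundamental form. The reach condition says that every open ball of radius $\tau$ tangent to $\cM$ at $x$, with center $x+\tau u$ for a unit normal $u$, contains no point of $\cM$. Comparing a unit-speed geodesic $\gamma$ through $x$ with such a ball gives $\langle \gamma''(0), u\rangle \le 1/\tau$ for every unit normal $u$. Since $\gamma''(0)=\mathrm{II}(\dot\gamma,\dot\gamma)$, this yields $\|\mathrm{II}(v,v)\|\le 1/\tau$ for unit $v$ (Niyogi--Smale--Weinberger). The Gauss equation then gives, for orthonormal $v,w$,
\[
K(v,w)=\langle \mathrm{II}(v,v),\mathrm{II}(w,w)\rangle-\|\mathrm{II}(v,w)\|^2\le 1/\tau^2 .
\]
So conjugate points along a geodesic cannot occur before length $\pi\tau$.

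\textbf{Step 2: closed geodesics.} Let $\gamma$ be a closed geodesic of length $\ell$ in $\cM$, viewed as a closed curve in $\RR^D$. By Step 1 its ambient curvature satisfies $\|\gamma''\|=\|\mathrm{II}(\dot\gamma,\dot\gamma)\|\le 1/\tau$. Fenchel's theorem says a closed curve has total curvature at least $2\pi$, so $2\pi\le \ell/\tau$, that is, $\ell\ge 2\pi\tau$. Hence $\ell/2\ge\pi\tau$.

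\textbf{Step 3: conclusion.} Inserting both bounds into Klingenberg's lemma gives $\inj(\cM)\ge \min\{\pi\tau,\pi\tau\}=\pi\tau$.

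The main obstacle is Step 1, namely passing rigorously from the global, metric definition of reach to the pointwise bound on $\mathrm{II}$. First I would use the standard fact that for $\|u\|=1$ normal at $x$ and $r<\tau$, the projection of $x+ru$ onto $\cM$ is $x$. Then I would Taylor-expand $\|\gamma(s)-(x+ru)\|^2\ge r^2$ to second order in $s$, which gives $1- r\langle\gamma''(0),u\rangle\ge 0$. Letting $r\to\tau$ finishes the bound. Klingenberg's lemma is taken from the literature, with compactness used there.
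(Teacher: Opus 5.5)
Your proof is correct. The tangent-ball characterization of reach gives $\|\mathrm{II}(v,v)\|\le 1/\tau$, the Gauss equation turns this into $K\le 1/\tau^2$, Fenchel's theorem (valid in any $\RR^D$) forces every closed geodesic to have length at least $2\pi\tau$, and Klingenberg's lemma then yields $\inj(\cM)\ge\pi\tau$. The paper gives no argument of its own and only cites \cite{AL}, so there is no proof to compare against; your self-contained derivation of this standard fact suffices. The compactness Klingenberg's lemma needs, which the proposition's statement omits, is supplied by Assumption~\ref{assump:manifold}, and assuming connectedness loses nothing since you can argue componentwise.
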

\begin{remark}[Atlas using exponential map]\label{remark:exp_atlas}
Constructing an atlas on a compact Riemannian manifold $\cM$ using exponential maps is commonly adopted. In particular, recall that for any point $x \in \cM$, the exponential map is diffeomorphic on a ball $\tball{x}(0, r)$ with $r < \inj(\cM)$. Denote $U_x = \expp_x(\tball{x}(0, r))$ as a local neighborhood, whose union $\cup_{x \in \cM} U_x$ forms an open cover of $\cM$. For a compact $\cM$, there exists a finite sub-cover $\{U_{x_k}\}_{k=1}^{C_{\cM}}$. As shown in \cite{chen2022nonparametric}, $C_{\cM}$ depends on the reach and dimension of the manifold. We suppress the subscript $x_k$ to $k$ and thus, $\{(U_k, \logg_k)\}_{k=1}^{C_\cM}$ becomes an atlas on $\cM$.
\end{remark}

\subsubsection{Differentiable Function on Manifold}
The existence of an atlas allows us to define differentiable functions on $\cM$.
\begin{definition}[$C^s$ Functions on $\cM$]
Let $\cM$ be a $d$-dimensional Riemannian manifold isometrically embedded in $\RR^D$. A function $f: \cM \mapsto \RR$ is $C^s$ if for any chart $(U, \phi)$, the composition $f \circ \phi^{-1}: \phi(U) \mapsto \RR$ is continuously differentiable up to order $s$.
\end{definition}
We note that the definition of $C^s$ functions is independent of the choice of the chart $(U, \phi)$. Suppose $(V, \psi)$ is another chart and $V \cap U \neq \emptyset$. Then we have $f \circ \psi^{-1} = (f \circ \phi^{-1}) \circ (\phi \circ \psi^{-1})$. Since $\cM$ is a smooth manifold, $(U, \phi)$ and $(V, \psi)$ are $C^\infty$ compatible. Thus, $f \circ \phi^{-1}$ is $C^s$ and $\phi \circ \psi^{-1}$ is $C^\infty$, and their composition is $C^s$. The following definition generalizes $C^s$ functions to H\"{o}lder continuous functions.
\begin{definition}[H\"{o}lder Functions on $\cM$]\label{def:holder}
Let $\cM$ be a $d$-dimensional compact Riemannian manifold isometrically embedded in $\RR^D$.
Let $\{(U_k, \textbf{Log}_k)\}_{k\in\mathcal{A}}$ be an atlas of $\cM$ where $\textbf{Log}_k$ is the log map on $U_k$.
For $\beta > 0$, the H\"{o}lder norm of a function $f: \cM \mapsto \RR$ is defined as
\begin{align*}
  \|f\|_{\cH^\beta(\cM)}
  := \max\Bigg\{
    &\max_{\substack{k \in \mathcal{A},\; u \in \NN^d \\ |u| \le \lfloor\beta\rfloor}}
      \sup_{x \in U_k} \big|\partial^u (f \circ \textbf{Exp}_k)\big|,  \\
    &\max_{\substack{k \in \mathcal{A},\; u \in \NN^d \\ |u| = \lfloor\beta\rfloor}}
      \sup_{\substack{x, y \in U_k \\ x \neq y}}
      \frac{\Big|\partial^u (f \circ \textbf{Exp}_k)\big|_{\textbf{Log}_k(x)}
            - \partial^u (f \circ \textbf{Exp}_k)\big|_{\textbf{Log}_k(y)}\Big|}
           {\|\textbf{Log}_k(x) - \textbf{Log}_k(y)\|^{\beta - \lfloor\beta\rfloor}}
  \Bigg\}.
\end{align*}
A function $f$ is $\beta$-H\"{o}lder continuous if $\|f\|_{\cH^\beta(\cM)} < \infty$.
We further denote $\cH^\beta(\cM, C_{\cH})$ as the collection of $\beta$-H\"{o}lder functions on $\cM$ with H\"{o}lder norm bounded by a constant $C_{\cH}$.
\end{definition}

Definition \ref{def:holder} requires that all $\lfloor \beta \rfloor$-th order derivatives of $f \circ \expp_k$ are H\"{o}lder continuous.
We recover the standard H\"{o}lder class on a Euclidean space when $\expp_k$ is the identity map. We next introduce the partition of unity, which plays a crucial role in our construction of neural networks.
\begin{definition}[Partition of Unity, Definition 13.4 in \cite{tu2010introduction}]\label{def:pou}
A $C^\infty$ partition of unity on a manifold $\cM$ is a collection of nonnegative $C^\infty$ functions $\rho_k: \cM \mapsto \RR_+$ for $k \geq 1$ such that
\begin{enumerate}
\item the collection of supports, $\{\textrm{supp} (\rho_k)\}_{k \in \cA}$, is locally finite, i.e., every point on $\cM$ has a neighborhood that meets only finitely many of ${\rm supp} (\rho_k)$'s;
\item $\sum_k \rho_k = 1$.
\end{enumerate}
\end{definition}
For a smooth manifold, a $C^\infty$ partition of unity always exists.
\begin{proposition}[Existence of a $C^\infty$ partition of unity, Theorem 13.7 in \cite{tu2010introduction}]\label{thm:parunity}
Let $\{U_k\}_{k \in \cA}$ be an open cover of a compact smooth manifold $\cM$. Then there is a $C^\infty$ partition of unity $\{\rho_k\}_{k \in \cA}$ where every $\rho_k$ has a compact support such that $\textrm{supp}(\rho_k) \subset U_k$.
\end{proposition}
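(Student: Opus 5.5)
We construct the partition of unity in the standard way: smooth bump functions localized to each open set, normalized by their sum. Since $\cM$ is compact, we first reduce to a finite situation. The statement allows an arbitrary index set $\cA$, so we also need that every index not used in the finite reduction receives the zero function.

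\emph{Step 1 (local bumps).} For each point $p \in \cM$, choose an index $k(p)$ with $p \in U_{k(p)}$ and a chart $(V_p, \phi_p)$ around $p$ with $V_p \subset U_{k(p)}$. Pick $r_p>0$ such that the closed ball $\overline{\cB^d}(\phi_p(p), 2r_p) \subset \phi_p(V_p)$. Take the classical $C^\infty$ function $h(s) = e^{-1/s}$ for $s>0$ and $h(s)=0$ otherwise. Define
\[
\psi_p(x) = h\bigl(r_p^2 - \norm{\phi_p(x)-\phi_p(p)}^2\bigr) \text{ on } V_p, \qquad \psi_p = 0 \text{ off } V_p.
\]
This function is $C^\infty$ on $\cM$: it is smooth in the chart, and it vanishes on a neighborhood of $\cM \setminus \phi_p^{-1}(\overline{\cB^d}(\phi_p(p), r_p))$. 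That closed preimage is compact because $\phi_p^{-1}$ is a homeomorphism, so it is closed in the Hausdorff space $\cM$. In any other chart, smoothness of $\psi_p$ follows from the $C^\infty$ compatibility of transition maps, exactly as argued after the definition of $C^s$ functions. Moreover $\psi_p>0$ on the open set $W_p = \phi_p^{-1}(\cB^d(\phi_p(p), r_p))$, and $\mathrm{supp}(\psi_p)$ is compact and contained in $U_{k(p)}$.

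\emph{Step 2 (finite subcover).} The sets $\{W_p\}_{p\in\cM}$ cover $\cM$. By compactness, choose finitely many points $p_1,\dots,p_N$ with $\cM = \bigcup_j W_{p_j}$. Then $\Psi := \sum_{j} \psi_{p_j}$ is $C^\infty$ and strictly positive on all of $\cM$.

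\emph{Step 3 (normalize and regroup).} For each $k \in \cA$ set
\[
\rho_k = \Psi^{-1} \sum_{j: k(p_j)=k} \psi_{p_j},
\]
with the empty sum giving $\rho_k=0$. Each $\rho_k$ is nonnegative and $C^\infty$, since $1/\Psi$ is smooth when $\Psi>0$. We have $\sum_k \rho_k = 1$. The support of $\rho_k$ is a finite union of compact sets each contained in $U_k$, so it is compact and contained in $U_k$; for $\rho_k=0$ the support is empty. Only finitely many $\rho_k$ are nonzero, so the supports are trivially locally finite. This verifies every condition of Definition~\ref{def:pou}.

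The only delicate point is Step 1: we must check that the bump function, extended by zero, is globally smooth and has compact support inside $U_k$. This needs Hausdorffness of $\cM$, which holds since $\cM$ is embedded in $\RR^D$, so that the compact set $\phi_p^{-1}(\overline{\cB^d})$ is closed and $\psi_p$ vanishes on an open neighborhood of every point outside it. The remaining steps are routine.
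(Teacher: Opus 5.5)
Your proof is correct. The paper gives no argument of its own and only cites Tu's Theorem 13.7. That proof is written for general, possibly noncompact, second countable Hausdorff manifolds, where no finite subcover is available. It therefore builds a compact exhaustion, obtains a countable \emph{locally finite} family of compactly supported bump functions, normalizes by their locally finite sum, and regroups along a refinement map. Your route replaces the exhaustion and all local-finiteness bookkeeping with a single finite subcover. This is the natural specialization to the compact case, and it makes Steps 2 and 3 elementary.

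Your version also makes explicit something the citation glosses over. Tu's theorem gives \emph{either} compactly supported functions indexed by a countable set, each supported in some member of the cover, \emph{or} functions indexed by the original index set and subordinate to the cover, with no claim that their supports are compact. The paper's statement asks for both at once, with the same index set $\cA$. That combination genuinely needs compactness: on $\RR$ with the one-set cover $\{\RR\}$ it fails, since $\rho \equiv 1$ is forced. Your Step 3 shows exactly where compactness enters. Each $\rho_k$ is a finite sum of bumps with compact supports inside $U_k$, multiplied by the positive smooth function $\Psi^{-1}$, so its support is compact and contained in $U_k$.

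One minor remark: choosing the $2r_p$-ball is harmless but unnecessary. Step 1 only uses that the closed $r_p$-ball lies in $\phi_p(V_p)$.
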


\subsubsection{Distribution and Density on Manifold}\label{sec:pre_distribution}
To define distributions and density functions on $\cM$, we first define the \textit{volume measure} on a manifold $\cM$ and establish integration. 
\begin{definition}[Volume measure]
For a compact $d$-dimensional Riemannian manifold $\cM$, its volume measure $\mu_{\cM}$ is the $d$-dimensional Hausdorff measure on $\cM$ \citep{Federer}.
\end{definition}

We say that a data distribution $\dist$ supported on $\cM$ has a density $p_{\rm data}$ if the Radon-Nikodym derivative of $\dist$ with respect to $\mu_{\cM}$ is $p_{\rm data}$.
According to \cite{EG}, for any continuous function $f : \cM \mapsto \RR$ supported within the image of the ball $\tball{x}(0,r)$ under the exponential map for $r < \inj(\cM)$, we have
\begin{align}\label{localdensity}
\int f {\rm d} \dist = \int (f \cdot p_{\rm data})  {\rm d} \mu_{\cM} = \int_{\tball{x}(0,r)} (f\cdot p_{\rm data}) \circ \expp_x(v)G_x(v) {\rm d} v.
\end{align}
Here $G_x(v) = \sqrt{\det g_{ij}^x(v)}$ with $g_{ij}^x(v) = \langle \partial \expp_x(v)/\partial e_i, \partial \expp_x(v) / \partial e_j \rangle_g$ for $(e_1,...,e_d)$ being an orthonormal basis of $T_x\cM$.

To extend the integration of $f$ to arbitrary domains, we consider a partition of unity $\{\rho_k(x)\}_{k \in \cA}$, such that each $\rho_k(x)$ is supported on the image of the ball $\tball{x_k}(0,r)$ under the exponential map. Then we have  
\begin{align}\label{globaldensity}
\int f {\rm d} \dist = \int (f\cdot p_{\rm data}) {\rm d} \mu_{\cM} = \sum_{k} \int_{\tball{x_k}(0,r)} (f \cdot p_{\rm data} \cdot \rho_k) \circ \expp_{x_k}(v)G_{x_k}(v) {\rm d} v.
\end{align}
\begin{remark}[Decomposition of Manifold Distribution]\label{remark:decomp_Q}
An important implication of \eqref{globaldensity} is that we may view the manifold distribution $\dist$ as a mixture of local measures supported on charts. To see this, for any measurable subset $A \subset \cM$, we define $P_{\textrm{data}, k}(A) = \int (\mathds{1}_{A} \cdot p_{\rm data} \cdot \rho_k) \circ \expp_{x_k}(v)G_{x_k}(v) {\rm d} v$. Then it holds that $\dist(A) = \sum_{k} P_{\textrm{data}, k}(A)$. Clearly, each $P_{\textrm{data}, k}$ is locally supported within the image of $\tball{x_k}(0,r)$ in the tangent space. Since the support of $P_{\textrm{data}, k}$ may overlap with each other, $P_{\textrm{data}, k}$ is only a measure instead of a probability distribution with unit total mass. For convenience, we denote the normalized version of $P_{\textrm{data}, k}$ as $\mu_k$, which is a probability distribution supported on $U_k$. Then it holds that $\dist = \sum_{k} P_{\textrm{data}, k}(U_k) \cdot \mu_k$, where $P_{\textrm{data}, k}(U_k)$ is the total mass of $\mu_k$. Such a decomposition into local measures is crucial to our analysis on structures in the score function in Section~\ref{sec:decomp}.
\end{remark}

\subsection{Diffusion Model}
We adopt a continuous-time description of diffusion models for the ease of theoretical analysis. There are two coupled processes in diffusion models. The forward process progressively injects independent Gaussian noise to clean data. Following the convention in literature, we use an Ornstein-Ulhenbeck process (also known as the variance preserving process):
\begin{align}\label{eq:forward_sde}
\diff X_t = -\frac{1}{2} X_t \diff t + \diff B_t \quad \text{with} \quad X_0 \sim \dist,
\end{align}
where $(B_t)_{t\geq 0}$ is a Wiener process. We denote the marginal distribution of $X_t$ at time $t$ as $P_t$. Under \eqref{eq:forward_sde}, given $X_0$, the conditional distribution of $X_t | X_0$ is Gaussian ${\sf N}(\alpha_t X_0, h_tI_D)$, where $\alpha_t = \exp(-t/2)$ and $h_t = 1 - \alpha^2_t$. Consequently, under mild conditions, \eqref{eq:forward_sde} transforms the initial distribution $P_{\rm data}$ to $P_{\infty} = {\sf N}(0, I_D)$ at infinite time. In practice, the forward process \eqref{eq:forward_sde} will terminate at a sufficiently large time $T > 0$, where the corrupted marginal distribution $P_T$ is expected to be close to the standard Gaussian distribution.

When generating new samples, diffusion models reverse the time of \eqref{eq:forward_sde}, which leads to the following backward process,
\begin{align}\label{eq:backward_sde}
\diff Y_t & = \left[\frac{1}{2}Y_t + \nabla \log p_{T-t}(Y_t)\right] \diff t + \diff \overline{B}_t \quad \text{with} \quad Y_0 \sim P_T,
\end{align}
where $\nabla \log p_t(\cdot)$ is the so-called score function, i.e., the gradient of log probability density function of $P_t$, and $\overline{B}_t$ is another independent Wiener process. With mild assumptions, the backward process $(Y_t)_{0\leq t\leq T}$ has the same distribution as the time-reversed version of the forward process $(X_{T-t})_{0 \leq t \leq T}$ \citep{anderson1982reverse, haussmann1986time}.

Simulating \eqref{eq:backward_sde}, however, poses substantial difficulties, since both the score function $\nabla \log p_t$ and initial distribution $P_T$ are unknown. To address these challenges, we replace $P_T$ by the standard Gaussian distribution. Moreover, we use a score estimator $\hat{s}$ instead of the ground truth score $\nabla \log p_t$. The estimated score $\hat{s}$ is parameterized by a neural network. With these substitutions, we obtain the following backward SDE,
\begin{align}\label{eq:backward_practice}
\diff  \hat{Y}_t & = \left[\frac{1}{2} \hat{Y}_t + \hat{s}(\hat{Y}_{t}, T-t)\right] \diff t + \diff \overline{B}_t \quad \text{with} \quad \hat{Y}_{0} \sim {\sf N}(0, I_D).
\end{align}
In practice, new data are generated by applying a discretization scheme to simulate \eqref{eq:backward_practice}.

The estimated score $\hat{s}$ is obtained by score matching \citep{song2019generative, ho2020denoising}, which solves a least square problem aggregated over time $t \leq T$. Conceptually, we minimize
\begin{align*}
\min_{s \in \cF}~ \cR(s) = \frac{1}{T-t_0} \int_{t_0}^T \mathbb{E}_{X_0 \sim P_{\rm data}} \EE_{X_t \sim {\sf N}(\alpha_t X_0, h_t I)} \left[\left\| s(X_t,t) - \nabla\log p_t(X_t)\right\|^2 \right]\ud t,
\end{align*}
where $\cF$ is a class of score functions and $t_0$ is an early-stopping time for stabilizing the training \citep{song2019generative, song2020improved}.
Since $\nabla \log p_t$ is not tractable with unknown $P_{\rm data}$, loss $\cR$ cannot be directly implemented and optimized. Thanks to the seminal work \citep{hyvarinen2005estimation}, an equivalent loss (up to a constant) is derived:
\begin{align*}
\min_{s \in \cF}~ \cL(s) & = \EE_{X_0 \sim P_{\rm data}} [\ell(X_0;s)] \quad \text{with} \\
\ell(X_0;s) & = \frac{1}{{\bigt}-t_0} \int_{t_0}^{\bigt} \EE_{X_t \sim {\sf N}(\alpha_t X_0, h_t I)}  \Big\|s(X_t,t) + \frac{X_t - \alpha_t X_0}{h_t} \Big\|^2 \ud t.
\end{align*}
It is shown that $\nabla \cR(s) = \nabla \cL(s)$ for any $s$, and therefore, the minimizer of $\cR$ and $\cL$ is identical. Given a data set $\cD = \{x_1, \dots, x_n\}$ sampled from $P_{\rm data}$, we replace the population expectation by an empirical average:
\begin{align}\label{eq:erm_score}
\hat{s} \in \argmin_{s \in \cF} \hat{\cL}(s) = \frac{1}{n} \sum_{i=1}^n \ell(x_i; s).
\end{align}
In our analysis, we choose $\cF$ as a feedforward neural network architecture (Section~\ref{sec:approx}) taking $X_t$ and time $t$ as input. For technical convenience, we assume $\alpha_t$ and $h_t$ are also inputs to the score network, since they are pre-determined in the forward process. We also focus on the empirical risk minimizer in \eqref{eq:erm_score}.

\section{Structure in Score Function for Manifold Data}\label{sec:decomp}

The score function $\nabla \log p_t$ is the steering power of diffusion models in generating high-fidelity samples.
In this section, we study structures in the score function, when the ground-truth data distribution $P_{\rm data}$ is supported on a $d$-dimensional Riemannian manifold $\cM$. We present key insights on how the original manifold structure and data distribution are progressively restored from pure noise in the backward dynamics, driven by the score function. Formally, we show that the score function decomposes distinctively at different stages of the backward process. These properties inform an efficient representation and estimation of the score function.

\subsection{Warmup: Linear Subspace Data}\label{sec:decomp_warmup}
We start with a simple yet intriguing example---data concentrated on a linear subspace. Consider data $x \in \RR^D$ being represented as $x = Az$, where $A \in \RR^{D\times d}$ is an unknown matrix with orthonormal columns and $z \in \RR^d$ is a latent variable following some underlying distribution. This says that each data point lies in the subspace spanned by the column vectors of matrix $A$. For any $t > 0$, the score function decomposes into two orthogonal components as
\begin{align}\label{eq:score-linear}
\nabla \log p_t(x) = \underbrace{\frac{\alpha_t \EE[X_0 | X_t = x] - \Pi_A(x)}{h_t}}_{s_A(x,t):~\text{on-support~score}} \underbrace{- \frac{x - \Pi_A(x)}{h_t}}_{s_\bot(x,t):~\rm orthogonal~score},
\end{align}
where $\Pi_A(x) = AA^\top x$ is the projection onto the column span of matrix $A$.
The same result is presented in \cite{pmlr-v202-chen23o} with an explicit formula for the conditional expectation $\EE[X_0 | X_t = x]$ using the latent variable. We term the two components as the on-support score $s_A$ and the orthogonal score $s_\bot$, respectively. The on-support score $s_A$ lies in the column space of matrix $A$ as $\EE[X_0 | X_t = x]$ concentrated on the subspace. The orthogonal score function $s_{\perp}(x, t) = -\frac{1}{h_t} (I - AA^\top) x$ is perpendicular to the subspace. As time $t \to 0$, the variance $h_t$ of the added noise tends to zero, and we check that $s_\bot$ blows up. This is consistent with the score blowup issue for learning generic structured data \cite{song2019generative,kim2021soft, pidstrigach2022score}. 

The score decomposition suggests decoupled dynamics in the backward process on the subspace and its orthogonal complement. The on-support score $s_A$ aims to recover the distribution of the latent variable, while the orthogonal score $s_{\perp}$ dictates a contraction towards the subspace whenever the noisy state $x$ is not on the subspace. As $t \to 0$, the strength of the contraction to the subspace amplifies and finally pushes data onto the subspace in the limit.

Structures in the score function motivate an efficient statistical argument to demystify the empirical performance of diffusion models in complex high-dimensional data \cite{oko2023diffusionmodelsminimaxoptimal, pmlr-v202-chen23o}. However, linear subspace is oversimplified for real-world applications, spelling the need for a generalization to generic geometric data, which we present in the next section.

\subsection{Nonlinear Manifold Data}\label{sec:decomp_general}
We generalize the decomposition in Section~\ref{sec:decomp_warmup} to compact $d$-dimensional Riemannian manifolds. On the one hand, the score function for a manifold $\cM$ presents local structures analogous to that for the linear subspace case, given that manifolds locally resemble Euclidean space. On the other hand, the score function possesses time-inhomogeneous behavior, and the score decomposition becomes more involved due to nonlinearity.

We investigate structures of the score function in two stages depending on the amount of noise injected in data. We refer to ``{\it large noise}'' when $t$ is large in the forward process \eqref{eq:forward_sde}, and ``{\it small noise}'' when $t$ is small. The forward process suppresses the data magnitude exponentially fast, i.e., $\alpha_t = e^{-t/2}$, therefore, in the large noise regime, injected noise dominates the corrupted data. The behavior of the score function centers around pushing noisy data towards a close vicinity of the original data manifold. On the other hand, in the small noise regime, the score function recovers geometric details of the manifold and eventually recovers the clean data distribution. Figure~\ref{fig:score_decomp} demonstrates the distinct behaviors in two regimes. A two-time-scale perspective also appears in \cite{dou2024optimal}. We formalize our analysis in the sequel.

\subsubsection{Large Noise: Weighted Decomposition}
Given a large time $t$, due to the dominance of the injected noise, it suffices to consider a coarse linear approximation to the manifold $\cM$ constructed by the tangent spaces at $x_k$ for $k = 1, \dots, C_{\cM}$. 
For the $k$-th chart, we denote the projection of $x$ onto the shrunk tangent space $\alpha_t \cdot T_{x_k}\cM$ as
\begin{align*}
    \projk(x, t) = \argmin_{y \in \alpha_t \cdot T_{x_k}\cM} \|y-x\| \quad \text{for} \quad k=1,\ldots,C_\cM.
\end{align*}
We recall that $\alpha_t$ is the shrinkage ratio introduced by the forward process \eqref{eq:forward_sde} at time $t$. Based on projections onto tangent spaces and the decomposition of manifold distribution into a mixture of local measures in Remark~\ref{remark:decomp_Q}, we present the following weighted score decomposition, whose proof is provided in Appendix~\ref{pf:score_decomp_large}.
\begin{lemma}\label{lemma:score-decomp-large}
For a large time $t>0$ and an arbitrary noisy state $x \in \RR^D$, the score function $ \nabla \log p_t(x)$ takes the form:
\begin{align*}
    \nabla \log p_t(x) = \sum_{k=1}^{C_\cM} w_k(x, t)  \bigg(\underbrace{ \frac{\alpha_t \EE_{X_0 \sim \mu_k}[X_0 | X_t = x] - \projk(x, t)}{h_t}}_{s^k_\cM:~\text{$k$-th~on-support~score}} ~~ \underbrace{-\frac{x-\projk(x, t)}{h_t} }_{s^k_\bot:~\text{$k$-th~orthogonal~score}} \bigg),
\end{align*}
where 
\begin{align*}
    w_k(x, t) = \frac{\int \exp \left( -\frac{\|x - \alpha_t x_0\|^2}{2h_t} \right) \ud P_{\textrm{data}, k}(x_0)}{\sum_{j=1}^{C_\cM} \int \exp \left( -\frac{\|x - \alpha_t x_0\|^2}{2h_t} \right) \ud P_{\textrm{data}, j}(x_0)}.
\end{align*}
\end{lemma}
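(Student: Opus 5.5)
The plan is to write $p_t$ as a mixture of the smoothed local measures and then decompose each component's score exactly as in the linear-subspace warmup \eqref{eq:score-linear}. The identity is algebraic and exact for every $t>0$; ``large $t$'' only motivates why this particular decomposition is useful.

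\textbf{Step 1: mixture form of the density.} By Remark~\ref{remark:decomp_Q}, $\dist=\sum_k P_{\textrm{data},k}$. Since $X_t\mid X_0\sim{\sf N}(\alpha_tX_0,h_tI_D)$,
\begin{align*}
p_t(x)=\sum_{k=1}^{C_\cM} q_k(x,t),\qquad q_k(x,t)=(2\pi h_t)^{-D/2}\int \exp\Big(-\frac{\|x-\alpha_t x_0\|^2}{2h_t}\Big)\,\ud P_{\textrm{data},k}(x_0).
\end{align*}
\textbf{Step 2: differentiate.} Compactness of $\cM$ makes the integrands and their $x$-gradients uniformly bounded on compact sets, so we may differentiate under the integral. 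This gives
\begin{align*}
\nabla\log p_t(x)=\sum_k \frac{q_k}{\sum_j q_j}\,\nabla\log q_k .
\end{align*}
The normalizing constants $(2\pi h_t)^{-D/2}$ cancel in the ratio, so $q_k/\sum_j q_j = w_k(x,t)$ exactly as defined in the statement.

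\textbf{Step 3: the component scores.} Differentiating the Gaussian kernel yields
\begin{align*}
\nabla\log q_k(x)=\frac{\alpha_t\,\EE_{X_0\sim\mu_k}[X_0\mid X_t=x]-x}{h_t}.
\end{align*}
Here the conditional expectation is taken under the joint law $X_0\sim\mu_k$, $X_t\mid X_0\sim{\sf N}(\alpha_tX_0,h_tI)$. Normalizing $P_{\textrm{data},k}$ to $\mu_k$ cancels between numerator and denominator of the posterior mean, so using $\mu_k$ in place of $P_{\textrm{data},k}$ is legitimate. We then add and subtract $\projk(x,t)$:
\begin{align*}
\alpha_t\EE[X_0\mid X_t=x]-x=\big(\alpha_t\EE[X_0\mid X_t=x]-\projk(x,t)\big)-\big(x-\projk(x,t)\big).
\end{align*}
This produces $s^k_\cM+s^k_\bot$, and summing with weights $w_k$ gives the claimed formula.

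\textbf{Main obstacle.} The algebra is routine; the main subtlety is interpretation. Unlike the linear case, $\alpha_t\EE_{\mu_k}[X_0\mid X_t=x]$ need not lie exactly in $\alpha_t T_{x_k}\cM$. The support of $\mu_k$ is a curved chart, and the shrunk tangent space is linear through the origin; if one insists on an affine tangent plane at $\alpha_t x_k$, the projection must be adjusted accordingly. So the names ``on-support'' and ``orthogonal'' are only approximate, with an error governed by the chart radius $r$ and the reach $\reach$. This is harmless for the identity itself, which holds for any choice of $\projk$. For later use I would record the bound that the component of $s^k_\cM$ normal to $T_{x_k}\cM$ is of order $\alpha_t r^2/(\reach h_t)$, which is small in the large-noise regime since $\alpha_t\to0$ while $h_t\to1$. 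I would also check that the weights $w_k$ are well defined, which holds because every $P_{\textrm{data},k}$ has positive mass.
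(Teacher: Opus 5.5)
Your proposal is correct and follows the paper's argument. The paper also writes $p_t$ and $\nabla p_t$ as sums over the chart measures $P_{\textrm{data},k}$, splits $x-\alpha_t x_0=(x-\Pi_k(x,t))+(\Pi_k(x,t)-\alpha_t x_0)$ inside the integrand, divides by $p_t$ to read off $w_k$ and the posterior mean, and then renormalizes to $\mu_k$; you simply add and subtract $\Pi_k(x,t)$ after applying the mixture-score identity rather than before.

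One small nit: $w_k$ is well defined regardless, since its denominator is $\int \exp(-\|x-\alpha_t x_0\|^2/(2h_t))\,\mathrm{d}P_{\textrm{data}}(x_0)>0$. Positivity of $P_{\textrm{data},k}(U_k)$ is needed only to define $\mu_k$ and to divide by $q_k$, and a chart with zero mass just contributes a zero term.
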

Lemma~\ref{lemma:score-decomp-large} decomposes the score function according to the atlas on $\cM$, giving rise to a weighted sum of localized terms. We provide interpretation of these terms in below.
\paragraph{Decomposition on Single Chart} For the $k$-th chart, $s_\perp^k$ acts analogously to the orthogonal score in the linear subspace case. While the difference is that $\projk(x, t)$ is a time-dependent projection onto the rescaled tangent space. As a result, the noisy state $x$ moves towards the tangent space $T_{x_k}\cM$ and simultaneously approaches the manifold. Since $\Pi_k(x, t)$ may not belong to the manifold, the on-support term $s_{\cM}^k$ guides the noisy state moving towards $U_k$ by compensating for the deviation of $\projk(x, t)$ from the anticipated mean $\EE_{X_0 \sim \mu_k}[X_0 | X_t = x]$. We explain $w_k$ and $\EE_{X_0 \sim \mu_k}[X_0 | X_t = x]$ in more detail.

\paragraph{Understanding Weight $w_k$} Recall the decomposition of $P_{\rm data}$ into local measures on each chart. The weight $w_k$ is interpreted as the probability that a given noisy state $x$ eventually falls into the $k$-th chart. If $x$ is close to the scaled chart $\alpha_t U_k$, the score function is dominated by the corresponding chart. That is, $w_k \approx 1$ and the score function behaves as if there is only one chart on the manifold. On the contrary, when $x$ is not close to any of $\alpha_t U_k$, the score function behaves as that of a mixture of distributions. Attentive readers may find $w_k$ echos that in Gaussian mixture models. In fact, for a Gaussian mixture $\sum_{k=1}^{C_{\cM}} \pi_k {\sf N}(\mu_k, I)$ with prior probability $\pi_k$, the score function is $\nabla \log p_t(x) = -\sum_{k=1}^{C_{\cM}} c_k(x, t) (x - \alpha_t \mu_k)$, where $c_k(x, t) = \pi_k \exp(-\frac{\norm{x - \alpha_t \mu_k}^2}{2h_t}) / \sum_{j=1}^{C_{\cM}} \pi_j \exp(-\frac{\norm{x - \alpha_t \mu_j}^2}{2h_t})$. Our localized measure $P_{\textrm{data}, k}$ generalizes the prior probability.

\paragraph{Anticipated Mean $\EE_{X_0 \sim \mu_k}[X_0 | X_t = x]$} The anticipated mean can be interpreted as a posterior average, where the prior distribution is $\mu_k$ and the current observation is $X_t = x$. Equivalently, we presume that the noisy state generates clean data belonging to $U_k$. Unfortunately, the anticipated mean is generally not on the manifold or the tangent space due to nonlinearity, leaving $s_{\cM}^k$ and $s_\perp^k$ not orthogonal to each other.

\subsubsection{Small Noise: Local Decomposition}
Driven by the score function in the large noise regime, noisy states gradually reduce noise and evolve towards a close vicinity of the manifold. Then the small noise regime emerges, since the coarse piecewise linear approximation using tangent spaces becomes insufficient for precise manifold recovery. Instead, the score function operates with higher precision, accurately capturing the local geometry of $\cM$ around $x$ through orthogonal projection directly onto $\cM$.

Suppose $\cM$ has a positive reach $\reach$. We define an inflated region around $\cM$ as $\cK(\cM,\reach) = \bigcup_{x \in \cM} \cB(x, \reach)$. At time $t$, for any $x \in \alpha_t \cK(\cM, \reach)$, we define 
\begin{align}\label{eq:def-proj}
\projm(x, t) = \argmin_{y \in \alpha_t \cM} \norm{ y-x}.
\end{align}
The projection \eqref{eq:def-proj} is well-defined as shown in \cite{leobacher2020existence}. The following decomposition of the score function is established using the projection operator, whose proof is provided in Appendix~\ref{pf:score_decomp_small}.

\begin{lemma}\label{lemma:score-decomp-small}
For any fixed time $t > 0$, given an arbitrary $x \in \alpha_t \cK(\cM, \tau)$, the score function $\nabla \log p_t(x)$ decomposes into
\begin{align*}
\nabla \log p_t(x) =  \underbrace{\frac{\alpha_t \EE[X_0 | X_t = x] - \projm(x, t)}{h_t}}_{s_{\cM}:~\text{on-support score}}  ~~\underbrace{-\frac{x -\Pi_{\cM}(x, t) }{h_t}}_{s_{\perp}:~\text{orthogonal score}}.
\end{align*}
Moreover, for the on-support score, it holds that
\begin{align*}
& \hspace{0.45in} s_{\cM}(x, t) = \nabla_x \log \left[\int \exp \left(-\frac{1}{2h_t}(E_1(t) + E_2(t)) \right) \ud \dist(x_0) \right] \quad \text{with} \\
& E_1(t) = \|\projm(x, t) - \alpha_t x_0\|^2 \quad \text{and} \quad E_2(t) = 2\langle x - \projm(x, t) , \projm(x, t) - \alpha_t x_0\rangle.
\end{align*}
\end{lemma}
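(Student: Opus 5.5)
The plan is to start from the explicit form of the marginal density of the forward process. Since $X_t \mid X_0 \sim {\sf N}(\alpha_t X_0, h_t I_D)$, we have
\begin{align*}
p_t(x) = (2\pi h_t)^{-D/2} \int \exp\left(-\frac{\|x - \alpha_t x_0\|^2}{2h_t}\right) \ud \dist(x_0).
\end{align*}
This integral is smooth in $x$ and can be differentiated under the integral sign, because $\cM$ is compact and the integrand is bounded together with its $x$-derivatives. Differentiating gives Tweedie's formula,
\begin{align*}
\nabla \log p_t(x) = \frac{\alpha_t \EE[X_0 \mid X_t = x] - x}{h_t}.
\end{align*}
For $x \in \alpha_t \cK(\cM,\tau)$, the projection $\projm(x,t)$ is well defined, as noted after \eqref{eq:def-proj}. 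Adding and subtracting $\projm(x,t)/h_t$ then yields the first claimed decomposition $s_\cM + s_\perp$ immediately; this step is just algebra.

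For the second claim, I would expand the squared norm around the projection point, writing $x - \alpha_t x_0 = (x - \projm(x,t)) + (\projm(x,t) - \alpha_t x_0)$:
\begin{align*}
\|x - \alpha_t x_0\|^2 = \|x - \projm(x,t)\|^2 + E_1(t) + E_2(t).
\end{align*}
The first term does not depend on $x_0$, so it factors out of the integral:
\begin{align*}
\log p_t(x) = \text{const} - \frac{\|x-\projm(x,t)\|^2}{2h_t} + \log \int \exp\left(-\frac{E_1(t)+E_2(t)}{2h_t}\right)\ud \dist(x_0).
\end{align*}
Taking gradients, I need $\nabla_x \frac{1}{2}\|x - \projm(x,t)\|^2 = x - \projm(x,t)$, i.e.\ the gradient of half the squared distance to $\alpha_t\cM$. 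Subtracting this from the full score identifies the remaining log-integral gradient with $\nabla\log p_t(x) - s_\perp(x,t) = s_\cM(x,t)$, which is exactly the claimed formula.

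The main obstacle is justifying $\nabla_x \tfrac12 \mathrm{dist}^2(x,\alpha_t\cM) = x - \projm(x,t)$ on the region $\alpha_t\cK(\cM,\tau)$. The scaled manifold $\alpha_t \cM$ has reach $\alpha_t \tau$, so points of $\alpha_t \cK(\cM,\tau)$ lie within distance $\alpha_t\tau$ of it. I would invoke Federer's result that the squared distance function is $C^1$ on the open tubular neighborhood of radius equal to the reach, with gradient $2(x - \text{proj}(x))$. The cited \cite{leobacher2020existence} gives differentiability of the projection in this region.

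Alternatively, I would avoid differentiating the projection altogether, using the chain rule together with the first-order optimality condition: $x - \projm(x,t)$ is orthogonal to $T_{\projm(x,t)}(\alpha_t\cM)$, and $D\projm$ maps into that tangent space, so the term from differentiating the projection vanishes. Points exactly at distance $\alpha_t\tau$ are a boundary issue; if needed I would restrict to the open neighborhood, or use one-sided derivatives there.

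Since the $x$-dependence of $E_1$ and $E_2$ goes through $\projm$, the stated gradient of the log-integral is then interpreted in this same differentiable sense, and the identity holds.
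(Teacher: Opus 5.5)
Your proposal is correct and follows essentially the same route as the paper. Both use Tweedie's formula and add and subtract $\projm(x,t)$ for the first identity; both then expand $\|x-\alpha_t x_0\|^2$ around the projection, factor $\|x-\projm(x,t)\|^2$ out of the integral, and invoke the gradient identity for the squared distance to a positive-reach set to identify the remaining log-integral gradient with $s_\cM$. Your caveat about points at distance exactly $\alpha_t\tau$ is well taken: the closed tube $\alpha_t\cK(\cM,\tau)$ can contain points whose projection is not unique (e.g.\ the center of a circle), so reading the lemma on the open tube is appropriate.
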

Compared to the large noise regime in Lemma~\ref{lemma:score-decomp-large}, small noise decomposition does not assert a mixed structure. Alternatively, the decomposition is driven by the unique projection $\projm(x, t)$ onto the manifold, resembling that in the linear subspace case. The reason behind this is that the variance of the added noise to state $x$ is small. It is less likely that $x$ is obtained by corrupting distant clean data points on the manifold. We provide the following further discussions.

\paragraph{Hitting $\alpha_t \cK(\cM, \reach)$ with High Probability} Conditioned on a clean data point $x_0 \in \cM$, the distribution of $X_t$ is Gaussian ${\sf N}(\alpha_t x_0, h_t I)$. When $t$ is sufficiently small, it holds that $\alpha_t \approx 1$ and $h_t \approx 0$. Therefore, we have $\sqrt{D h_t} < \alpha_t \reach$, meaning that $\norm{X_t - \alpha_t x_0} < \alpha_t \reach$ with high probability. Consequently, $\alpha_t \cK(\cM, \reach)$ is hit by noisy states with high probability in the forward process \eqref{eq:forward_sde}. More importantly, when $\reach$ is small, $\cK(\cM, \reach)$ is a narrow band around the manifold, making the requirement on small $t$ stringent. As a result, the learning of the score function becomes challenging. We present explicit approximation and estimation complexities of the score function in relation to the reach $\reach$ in Section~\ref{sec:approx}.

\paragraph{Further Decomposition in $s_{\cM}$} While Lemma~\ref{lemma:score-decomp-small} replicates the same form of score decomposition as the linear subspace, the on-support score possesses intricate structures captured by $E_1$ and $E_2$. Term $E_1$ attempts to distribute the probability mass at the projection point to the manifold according to the Gaussian transition kernel defined in \eqref{eq:forward_sde}. Unfortunately, the nonlinear manifold introduces a nuisance: the movement $x - \projm(x, t)$ is not orthogonal to the movement towards the projection $\projm(x, t) - \alpha_t x_0$. As a result, $E_2$ captures the interaction between the two movements,  and we cannot decouple the dynamics in the backward process as in the linear subspace case. It is worth mentioning that the interaction term $E_2$ heavily depends on the curvature of the manifold. As a sanity check, when the manifold reduces to a linear subspace without curvature, $E_2$ vanishes and Lemma~\ref{lemma:score-decomp-small} reduces to \eqref{eq:score-linear}. Figure~\ref{fig:score_decomp} illustrates such an interaction.

\paragraph{Implication of Score Decomposition} The score decompositions in Lemmas~\ref{lemma:score-decomp-large} and \ref{lemma:score-decomp-small} are key to an efficient neural network approximation developed in Section~\ref{sec:approx}. The on-support scores map $D$-dimensional noisy states to points that can be effectively approximated using $d$-dimensional representations, since the manifold assumes low-dimensional local charts. The orthogonal score exhibits simple structures and can also be approximated efficiently.

\begin{figure}[htb!]
    \centering
    \includegraphics[width=0.99\linewidth]{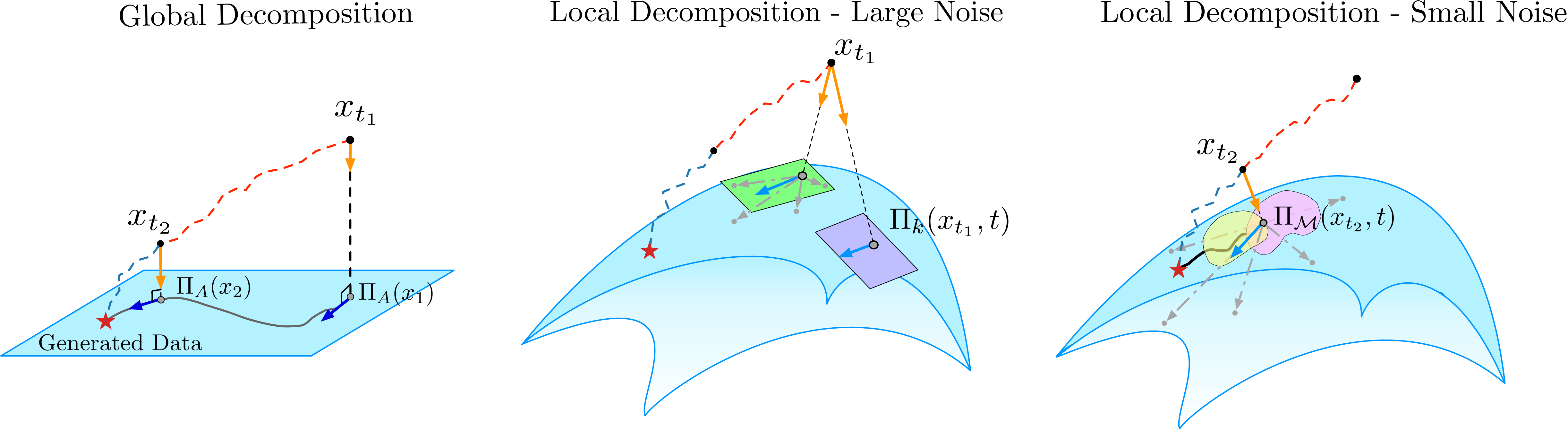}
    \caption{Score decomposition for linear subspace and general manifold. For linear subspace data, the projection $\Pi_A$ is globally defined for all $t$ and noisy state $x$. The score function decomposes into two orthogonal components. For a smooth manifold, the score function behaves distinctly according to the magnitude of the added noise. In the large noise regime, score function decomposes corresponding to tangent spaces on the manifold. In the small noise regime, local decomposition exists and is centered around the projection point $\Pi_{\cM}(x_{t_2}, t_2)$ of a noisy state $x_{t_2}$.}
    \label{fig:score_decomp}
\end{figure}
\section{Approximating Score Function using Neural Network}\label{sec:approx}

In practice, score functions are parameterized by deep neural networks. Leveraging the structures in score functions identified in Section~\ref{sec:decomp}, we develop score approximation theories. We impose some mild regularity conditions on the manifold.
\begin{assumption}[(Data Domain)]\label{assump:manifold}
$\cM$ is a $d$-dimensional compact Riemannian manifold isometrically embedded in $\RR^D$. There exists a constant $B_{\cM} > 0$ such that, for any point $x \in \cM$, it holds that $\|x\|_\infty \leq B_{\cM}$. The reach of $\cM$ is lower bounded by $\reach>0$.
\end{assumption}

A positive reach of $\cM$ ensures geometric regularity by ruling out sharp corners, cusps, or self-intersections. This enables well-defined projections onto the manifold, and ensures that local charts behave nicely with bounded distortion and controlled curvature. Many commonly studied manifolds satisfy such regularity conditions, including the unit sphere $\mathbb{S}^d$ and the torus $\mathbb{T}^d$.

\begin{assumption}\label{assump:exp-ball}
For the atlas in Remark~\ref{remark:exp_atlas}, there exists a constant $\rratio \in (0,1]$ such that choosing $r = 3\reach$ ensures $\cB(x_k, \rratio \reach) \cap\cM$ is a subset of $U_k$.
\end{assumption}

Assumption \ref{assump:exp-ball} is an implicit regularity condition for the geometric structure of $\cM$. While exponential map $\expp_x$ is well-defined on $\cB_{T_{x} \cM}(0,3 \reach)$ by Proposition \ref{prop:inj-radius}, Assumption \ref{assump:exp-ball} further guarantees its inverse map $\logg_x$ is well-defined so that $\expp_x$ is locally a bijection. In addition, we assume the smoothness of
the ground-truth probability density $\density$.
 
\begin{assumption}[(Data Distribution)]\label{assump:density}
    The data distribution $P_{\rm data}$ has a density $\density$ supported on $\cM$ and belonging to $\cH^\holder(\cM)$ for a H\"{o}lder index $\beta > 0$. Moreover, there exists a constant $C_f > 1$ such that $C_f^{-1} \leq \density(x) \leq C_f$ for any $x \in \cM$. 
\end{assumption}
Assumption~\ref{assump:density} generalizes the standard H\"{o}lder densities in nonparametric statistics \cite{tsybakov2008introduction, wasserman2006all} to Riemannian manifolds. We require the density bounded below by a positive constant for technical convenience, which is often adopted in the existing literature \cite{tsybakov2008introduction, oko2023diffusionmodelsminimaxoptimal}.

We parameterize the score function using a class of deep neural networks $\cF$ with entrywise ReLU activation (i.e., $\relu(a) = \max \{a,0\}$):
\begin{equation}\label{def:nn-class}
    \begin{split}
        \cF(L,W,S,B,R) := \big\{ f(x) &= A_L \cdot {\relu}(\cdots {\relu}(A_1 x + b_1)\cdots)+b_L: \\
        & \text{width bounded by } W, ~\sum_{i=1}^L (\|A_i\|_0 + \|b_i\|_0) \leq S, \\
        & \max_{i}\|A_i\|_\infty \leq B, ~ \max_i \|b_i\|_\infty \leq B, ~\sup_z \| f(z)\|_2 \leq R \big\}.
    \end{split}
\end{equation}
Here $\|\cdot\|_0$ and $\|\cdot\|_\infty$ denote the number of nonzero entries and the maximum magnitude of entries. Hyper-parameters control the depth ($L$) and width ($W$) of the network, as well as the sparsity ($S$) and boundedness ($B$ and $R$). The following theorem shows that the network class $\cF$ can accurately represent the score function.
\begin{theorem}\label{thm:approx}[Score Approximation Theory] Suppose Assumptions \ref{assump:manifold}-\ref{assump:density} hold.
For any sufficiently small $\epsilon >0$ and any time $t \in [t_0, \bigt]$ with $t_0 = {\rm poly}(n^{-1})$ and $\bigt=\cO(\log(1/\epsilon))$, there exists a network class $\cF(L,W,S,B,R)$ that yields a function $\bar{s}$ satisfying
\begin{align*}
\left\|\bar{s}(x,t) - \nabla \log p_t(x) \right\|^2_{L^2(P_t)}
= \tilde{\cO}\left(\frac{1}{h_t} D^{2\degree+d+2} \epsilon^2 \right) \quad \text{with} \quad \degree = \left\lceil \frac{\beta \log \frac{1}{\epsilon}}{\log \frac{1}{\epsilon} + \beta \log \reach}\right\rceil.
\end{align*}
The configuration of $\cF$ verifies 
\begin{align*}
  L =\tilde{\cO}\left(\degree^3   \right), \quad
    W = \tilde{\cO}\left(D^{\degree} \degree^3 \epsilon^{-d/\holder}\right), \quad
        S = \tilde{\cO}\left(\degree D^{\degree}\epsilon^{-d/\holder} \right), \quad B= \cO\left( h_{t_0}^{-\degree} \epsilon^{-2\log(1/\epsilon)/\holder} \right).
    \end{align*}
\end{theorem}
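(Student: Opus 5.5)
The plan is to split $[t_0,\bigt]$ at a threshold $t_*$ with $\sqrt{D h_{t_*}\log(1/\epsilon)}\asymp \alpha_{t_*}\reach$. For $t\ge t_*$ (large noise) we approximate the decomposition of Lemma~\ref{lemma:score-decomp-large}, and for $t\le t_*$ (small noise) that of Lemma~\ref{lemma:score-decomp-small}. A final network is obtained by gluing the two with a ReLU-implementable switch in $t$. It suffices to approximate the score in $L^2(P_t)$ on a high-probability set. The complement $\{x:\mathrm{dist}(x,\alpha_t\cM)\gtrsim\sqrt{D h_t\log(1/\epsilon)}\}$ has $P_t$-mass $\mathrm{poly}(\epsilon)$. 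On this complement we use the crude bound $\|\nabla\log p_t(x)\|\lesssim(\|x\|+\sqrt{D})/h_t$, together with output clipping at $R$, to make its contribution $\tilde\cO(\epsilon^2/h_t)$. On the good set we approximate the denoiser $\alpha_t\EE[X_0\mid X_t=x]$ to accuracy $\epsilon$. The projection terms are handled separately, and dividing by $h_t$ gives the $1/h_t$ factor.

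\textbf{Chartwise posterior means.} We write $\EE[X_0\mid X_t=x]=N(x,t)/Z(x,t)$ with $Z=\sum_k\int e^{-\|x-\alpha_t x_0\|^2/2h_t}\,\ud P_{\textrm{data},k}(x_0)$ and $N$ the analogous vector integral. Each chart term, by \eqref{globaldensity}, is a $d$-dimensional integral over $v\in\tball{x_k}(0,r)$ of
\[
e^{-\|x-\alpha_t\expp_{x_k}(v)\|^2/2h_t}\,(\pdata\rho_k)(\expp_{x_k}(v))\,G_{x_k}(v).
\]
We expand $\|x-\alpha_t\expp_{x_k}(v)\|^2$ around the projection. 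In the large-noise regime we expand around $\projk(x,t)$, writing $x=\projk(x,t)+(x-\projk(x,t))$. In the small-noise regime we expand around $\projm(x,t)$, which produces exactly the terms $E_1,E_2$ of Lemma~\ref{lemma:score-decomp-small}. The orthogonal component then factors out of the Gaussian except through the curvature cross term. We Taylor-expand $\expp_{x_k}(v)=x_k+v+\sum_{j=2}^{\degree}Q_j(v)+\mathrm{Rem}$. On a ball of radius $\ell\lesssim\reach$, the remainder is of order $(\ell/\reach)^{\degree}$. Choosing $\ell\approx\epsilon^{1/\holder}$ makes this error $\le\epsilon$ exactly when $\degree\ge\holder\log(1/\epsilon)/(\log(1/\epsilon)+\holder\log\reach)$; this is the origin of $\degree$ in the statement. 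The resulting integrand is a polynomial of degree $\degree$ in the $D$ coordinates of $x$ and the $d$ coordinates of $v$, multiplied by a $\holder$-H\"older function of $v$. We then partition each chart into $\epsilon^{-d/\holder}$ cubes, approximate the H\"older part by local Taylor polynomials, and integrate the Gaussian–polynomial products in closed form as polynomials in $x$. Implementing monomials of degree $\degree$ in $D$ variables costs $D^{\degree}$ weights, each of depth $\cO(\degree\log(1/\epsilon))$ via the standard ReLU product network. This accounts for $W,S\propto D^{\degree}\epsilon^{-d/\holder}$, and the factor $\degree^3$ in $L$ arises from composing the Taylor, product, and exponential subnetworks. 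The exponential $e^{-u/2h_t}$ is approximated on the bounded range by a piecewise polynomial, which yields the $h_{t_0}^{-\degree}$ term in $B$.

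\textbf{Assembling the score.} The projections are implemented as follows. $\projk(x,t)=\alpha_t A_kA_k^\top(x/\alpha_t-x_k)+\alpha_t x_k$ is linear in $x$, with $\alpha_t$ supplied as an input. For $\projm$, we avoid computing it explicitly: the sum $s_\cM+s_\perp=(\alpha_t\EE[X_0\mid X_t]-x)/h_t$ is all we need. Since $\projm$ cancels, the small-noise regime reduces to approximating the denoiser with localized integrals. Lemma~\ref{lemma:score-decomp-small} is used only to justify that, on the good set, the integrals localize to an $\cO(\sqrt{h_t\log(1/\epsilon)})$-neighborhood of $\projm(x,t)$ with $E_2$ controlled by curvature; this is why the degree-$\degree$ expansion suffices. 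The ratio $N/Z$ is computed by a ReLU reciprocal network. Here Assumption~\ref{assump:density} guarantees $Z\gtrsim C_f^{-1}\cdot\mathrm{poly}(\epsilon)\,h_t^{d/2}$ on the good set, so the division amplifies errors by at most a polynomial factor. This factor is absorbed into $\tilde\cO$ by rescaling $\epsilon$, and it accounts for the $D^{2\degree+d+2}$ prefactor. The weights $w_k$ of Lemma~\ref{lemma:score-decomp-large} appear automatically as the ratios of the chart-level $Z_k$.

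\textbf{Main obstacle.} We expect the hardest step to be controlling the error of the division $N/Z$ uniformly in $t$ down to $t_0=\mathrm{poly}(n^{-1})$. As $h_t\to0$ the Gaussian becomes sharply peaked, so the polynomial expansion of the exponent must be taken relative to $\projm(x,t)$ in rescaled coordinates $v\mapsto v/\sqrt{h_t}$. Otherwise the approximation degree would blow up with $1/h_t$. The first approach we would try is to change variables to $u=(v-\logg_{x_k}(\projm(x,t)/\alpha_t))/\sqrt{h_t}$, which makes the exponent $\|u\|^2/2$ plus curvature corrections of size $\sqrt{h_t}\,\|u\|^3/\reach$. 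We would then verify that truncating at degree $\degree$ gives error $\epsilon$ for all $t\le t_*$, with constants depending on $\reach$ only through $\degree$.
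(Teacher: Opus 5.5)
Your overall architecture matches the paper's: two regimes glued by a ReLU switch in $t$, truncation to $\cK_t(\epsilon)$ with output clipping, and ratios of localized chart integrals. The gap is in the small-noise regime. The mechanism you propose there, and hence your account of where $\degree$ comes from, does not work as stated.

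\textbf{The missing term.} In the paper, $\degree$ is the truncation order of the Taylor series of $\exp(-E_2/2h_t)$. This is the curvature interaction between the normal displacement $x-\projm(x,t)$ and the on-manifold displacement $\projm(x,t)-\alpha_t x_0$. In your rescaled variable $u$ it is of order $\norm{x-\projm(x,t)}\,\norm{u}^2/\reach$, i.e.\ up to $\sqrt{Dh_t\log(1/\epsilon)}\,\norm{u}^2/\reach$ on the good set (Lemma~\ref{lemma:cross-term}). Your rescaled exponent, ``$\norm{u}^2/2$ plus corrections of size $\sqrt{h_t}\norm{u}^3/\reach$'', omits this term, which is the dominant correction. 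Its degree-$\degree$ truncation error is $(\sqrt{Dh_t}\,\mathrm{polylog}/\reach)^{\degree}/\degree!$ (Lemma~\ref{lemma:f3-approx-error}). For the stated $\degree$ this is $\lesssim\epsilon$ only when $h_t\lesssim\epsilon^{2/\holder}$.

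\textbf{Consequence for your split.} This is why the paper ends the small-noise regime at $h_{t_{\rm small}}=\epsilon^{2/\holder}$. From $h_t\ge\epsilon^{2/\holder}/4$ onward it uses the tangent-space construction, where the analogous cross term is $\tilde{\cO}(\epsilon^{1/\holder}/\reach)$ because charts have radius $\epsilon^{1/\holder}$. With your split at $\sqrt{Dh_{t_*}\log(1/\epsilon)}\asymp\alpha_{t_*}\reach$, the range $\epsilon^{2/\holder}\lesssim h_t\lesssim h_{t_*}$ falls to the small-noise construction. There $E_2/h_t=\Theta(\norm{u}^2)$ is not perturbative, and no fixed-degree expansion suffices.

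\textbf{Expanding $\expp$ does not supply $\degree$.} Your alternative source of $\degree$, a degree-$\degree$ Taylor expansion of $\expp_{x_k}$ at scale $\epsilon^{1/\holder}$, also fails. Any $h_t$-independent error $\zeta$ in $\expp$ enters the exponent as roughly $\zeta\sqrt{D\log(1/\epsilon)/h_t}$, which blows up as $t\to t_0$. Moreover, the reach controls only second-order (curvature) quantities, so a remainder $(\ell/\reach)^{\degree}$ is not implied by the assumptions. The paper instead approximates $\expp_k$ by a fixed-degree averaged Taylor polynomial with $W^{1,\infty}$ error. The errors in $\cD_k$ and $\cT_k$ then scale like $\norm{v-v_k(x,t)}^2\lesssim h_t\log(1/\epsilon)$, uniformly in $h_t$ (Lemma~\ref{lemma:prop-Dk-hat}).

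\textbf{Downstream steps that also need repair.}

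(i) Integrals of Gaussians against polynomials over cubes are not polynomials in $x$ (think of error functions). The paper localizes to $\dball(v_k(x,t),\vradius)$ and Taylor-expands both exponentials: $E_1$ to degree $\cO(\log)$ and $E_2$ to degree $\degree$. Each chart integral then becomes a polynomial in $v_k(x,t)$, contracted with the tensors $[x-\projm(x,t)]^{\otimes j}/h_t^{j/2}$ for $j<\degree$. These tensors are the real source of the $D^{\degree}$ width.

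(ii) Consequently $\projm$ does not cancel. Your own recentering needs $v_k(x,t)=\logg_k(\projm(x,t)/\alpha_t)$. The network must compute it, together with the tensor powers, using the smoothness of $\projm$ (Proposition~\ref{prop:proj-smooth}, Lemmas~\ref{lemma:encoder-approx} and~\ref{lemma:nn-proj}).

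(iii) Your $Z$ keeps the factor $e^{-\norm{x-\projm(x,t)}^2/2h_t}$, which is as small as $\epsilon^{2D}$ on the good set. Dividing by $Z$ then amplifies absolute errors by up to $\epsilon^{-2D}$. Absorbing this by rescaling $\epsilon$ costs $\epsilon^{-\Theta(Dd/\holder)}$ parameters. The paper factors this term out, which is the point of the $E_1+E_2$ form in Lemma~\ref{lemma:score-decomp-small}. It then divides by $s_1\gtrsim h_t^{d/2}e^{-\norm{x-\alpha_t\xstar}/(2\reach\alpha_t)}$ (Lemma~\ref{lemma:s1-lb}).

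(iv) Approximating the denoiser to accuracy $\epsilon$ and dividing by $h_t$ gives squared error $\epsilon^2/h_t^2$, not the claimed $\epsilon^2/h_t$. You need accuracy $\epsilon\sqrt{h_t}$, including for its leading part $\projm(x,t)$. The paper gets this from the $1/\sqrt{h_t}$ normalization built into $s_2$ and into $\nn_{\rm proj}^j$.
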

The complete proof is provided in Appendix~\ref{sec:proof-approx}, and a proof sketch is presented in Section~\ref{sec:proof-sketch}. The precise requirement on $\epsilon$
is specified in condition~\eqref{cond:epsilon}.
Theorem \ref{thm:approx} establishes an $L^2$ score approximation guarantee over the unbounded domain $\RR^D$, while existing universal approximation theory of neural networks majorly focuses on approximating target functions on a compact domain under the $L^\infty$ norm \cite{yarotsky2017error,schmidt2017nonparametric,chen2022nonparametric}. Our analysis addresses the unboundedness through a truncation argument. Moreover, to achieve an $\epsilon$ approximation error, the network size scales exponentially with the intrinsic dimension $d$, emphasizing the adaptability of neural networks to the manifold structures. Several additional remarks are in turn.

\paragraph{Influence of Manifold Curvature} Theorem~\ref{thm:approx} suggests that the approximation error needs to scale with the reach $\tau$ to capture manifold structures. A slight rewrite of the error guarantee shows an explicit impact of the reach. For any $\epsilon \in (0, 1)$ and $\gamma > 0$, the same network class $\cF$ in Theorem~\ref{thm:approx} gives rise to a $\bar{s}$ with
\begin{align}\label{eq:approx-bound}
\norm{\bar{s}(x, t) - \nabla \log p_t(x)}_{L^2(P_t)}^2 = \tilde{\cO}\left(\frac{D^{2\degree+d+2}}{h_t} \left(\frac{ \epsilon^{2\gamma/\beta}}{\tau^{2\gamma}} + \epsilon^2\right)\right).
\end{align}
Setting $\tau^{-2\gamma}\epsilon^{2\gamma/\beta} = \epsilon^2$ recovers the bound in Theorem~\ref{thm:approx}. As evident from \eqref{eq:approx-bound}, a smaller $\reach$, corresponding to higher manifold curvature, leads to increased approximation error. The reason behind this is that a manifold with a small $\reach$ exhibits rapidly changing local geometry, causing local projections to possess significantly different orientations. These variations are difficult to approximate, especially in the small noise regime. Indeed, $E_2$ in Lemma~\ref{lemma:score-decomp-small} varies significantly as noisy state $x$ moves.

\paragraph{Dependence on Ambient Dimension} The approximation guarantee depends on the ambient dimension $D$ and is determined by the exponent $\gamma$. When the reach $\reach \geq 1$, we can show that $\gamma$ is upper bounded by $\beta$. This indicates that for relatively smoothly varying manifolds, the dependence on the ambient dimension scales at most $D^{2\beta+d+2}$---a polynomial dependence. On the other hand, when $\tau < 1$, $\gamma$ grows as $\tau$ decreases, potentially leading to an exponential dependence on $D$ for extremely small reach. Our bound improves the ambient dimension dependence in \cite{pmlr-v238-tang24a} and explicitly characterizes the interplay between approximation error and the manifold curvature.

\subsection{Large and Small Noise Analysis}\label{sec:approx1}
Theorem \ref{thm:approx} builds upon separate analyses for large noise and small noise regimes. To construct the network class $\cF$, we first construct sub-networks for the large noise and small noise regimes respectively, and then incorporate them by a time switching network. The network architecture is depicted in Figure~\ref{fig:nn}. We highlight the respective approximation guarantees in each regime.

\begin{figure}[htb!]
    \centering
    \includegraphics[width=0.97\linewidth]{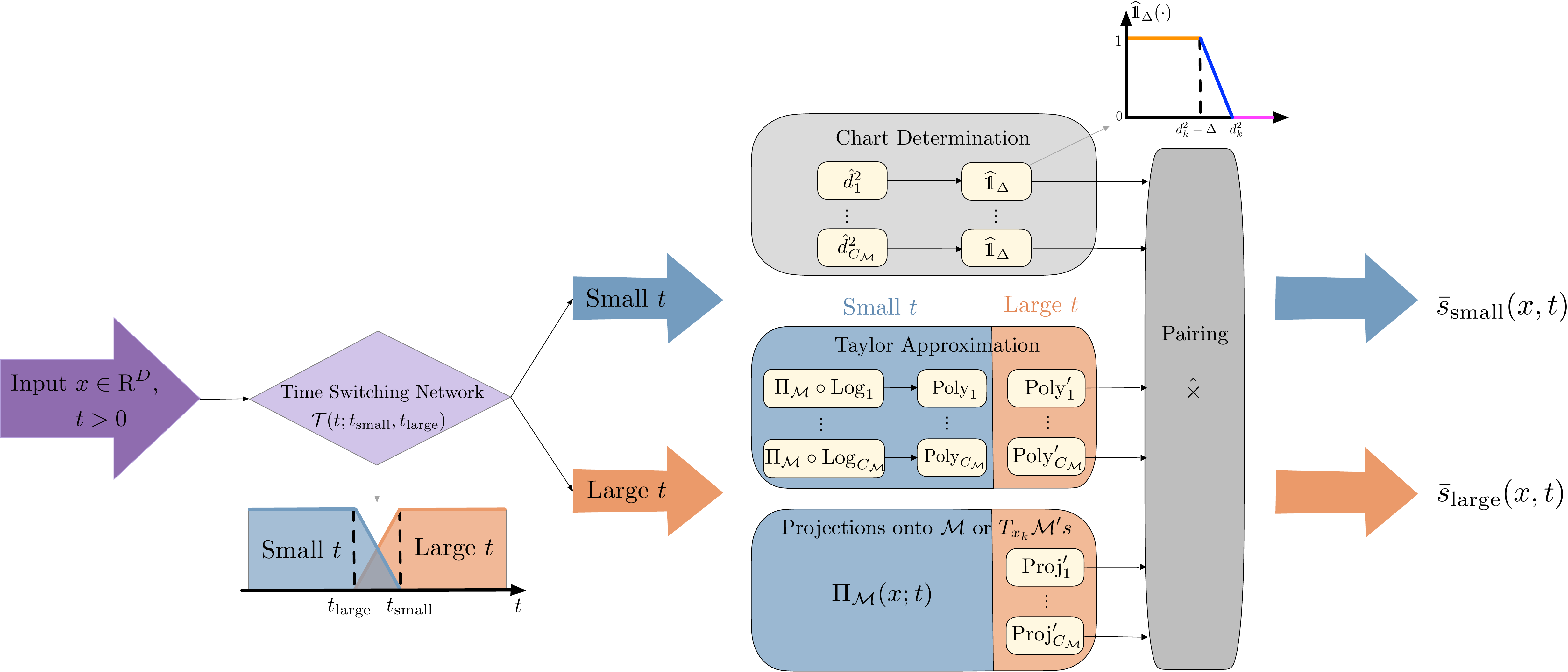}
    \caption{Illustration of the neural network architecture. A time switching network aggregates sub-networks for small noise and large noise respectively. Both regimes involve Taylor approximation of on-support scores, and projection approximation as part of orthogonal scores. These approximators are implemented by neural networks.}
    \label{fig:nn}
\end{figure}

\paragraph{Large Noise} Inspired by Lemma~\ref{lemma:score-decomp-large}, we construct a network consisting of parallel structures for representing the decomposition in each chart. We summarize the result in the following lemma, whenever $t \geq t_{\rm large}$, a threshold for the large noise regime.
\begin{lemma}[Large noise score approximation]\label{lemma:approx-large}
Suppose Assumptions \ref{assump:manifold}-\ref{assump:density} hold. Given an approximation error $\epsilon  \in (0,1)$, there exists a network class $\cF_{\rm large}(L_{\rm large}, W_{\rm large},S_{\rm large},B_{\rm large},\cdot)$  that yields a function $\nnlarge$, such that for any time $t \in [\tlarge,\bigt]$ with $\tlarge=\log \frac{1}{1-\epsilon^{2/\holder}/4}$ and $T = \cO(\log(1/\epsilon) )$, we have
\begin{align*}
\left\|\nnlarge(x,t) - \nabla \log p_t(x) \right\|^2_{L^2(P_t)} = \tilde{\cO} \left( \frac{1}{h_t}\left(\left(\frac{ \epsilon^{2/\holder} }{\reach^2 }\right)^{\degree_0} +\epsilon^{2(\holder+1)/\holder} \right)\right),
\end{align*}
which holds for any $\degree_0>0$.
The network configuration satisfies
\begin{align*}
 &\hspace{0.7in} L_{\rm large} = \tilde{\cO}\left(1\right),  \quad W_{\rm large} = \tilde{\cO}\left(  D^{\degree_0}  \epsilon^{-d/\holder}  \right), \\
&S_{\rm large} = \tilde{\cO}\left( D^{\degree_0}  \epsilon^{-d/\holder} \right), \quad  B_{\rm large} = \cO\left(\epsilon^{-2(\degree_0+\log(1/\epsilon))/\holder}\right).
\end{align*}

\end{lemma}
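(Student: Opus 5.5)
Our approach is to build $\nnlarge$ directly from the weighted decomposition in Lemma~\ref{lemma:score-decomp-large}. We write $\nabla \log p_t(x) = \sum_k w_k(x,t)\big(\alpha_t \EE_{\mu_k}[X_0\mid X_t=x] - x\big)/h_t$; the two projection terms $\projk(x,t)$ cancel. It therefore suffices to approximate, for each chart $k$, the unnormalized quantities
\[
F_k(x,t)=\int e^{-\|x-\alpha_t x_0\|^2/(2h_t)}\,\ud P_{\textrm{data},k}(x_0),\qquad G_k(x,t)=\int \alpha_t x_0\, e^{-\|x-\alpha_t x_0\|^2/(2h_t)}\,\ud P_{\textrm{data},k}(x_0).
\]
The score is then $\big(\sum_k G_k/\sum_k F_k - x\big)/h_t$, which we assemble with product and reciprocal sub-networks.

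Before approximating, we truncate. Since $\|x\|_\infty\le B_\cM$ on $\cM$ and $P_t$ has Gaussian tails, we restrict to $\|x\|_\infty \le \alpha_t B_\cM + C\sqrt{h_t\log(1/\epsilon)}$. The mass outside this box contributes $\tilde\cO(\epsilon^{2(\beta+1)/\beta}/h_t)$ to the $L^2(P_t)$ error, using the crude bound $\|\nabla\log p_t(x)\|\lesssim (\|x\|+1)/h_t$.

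On each chart we pull the integral back via \eqref{globaldensity}: $x_0=\expp_{x_k}(v)$ with density $(\density\rho_k)\circ\expp_{x_k}(v)G_{x_k}(v)$ on $\tball{x_k}(0,r)$. We split $\expp_{x_k}(v) = x_k + V_k v + R_k(v)$, where $V_k\in\RR^{D\times d}$ is the tangent frame and $R_k$ is the curvature remainder with $\|R_k(v)\|\lesssim \|v\|^2/\reach$. The exponent then becomes a Gaussian in the $d$-dimensional coordinate $V_k^\top(x-\alpha_t x_k)$, which is exactly what $\projk(x,t)$ captures, times the factor $\exp(-(\alpha_t^2\|R_k\|^2 - 2\alpha_t\langle x-\alpha_t x_k-\alpha_tV_kv, R_k\rangle)/(2h_t))$. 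Because $t\ge \tlarge$ gives $h_t\gtrsim \epsilon^{2/\beta}$, the curvature correction is a small perturbation. We Taylor-expand this exponential factor to order $\degree_0$ in powers of $\langle x, R_k(v)\rangle/h_t$. Each expansion term is a polynomial of degree at most $\degree_0$ in the $D$ coordinates of $x$ (giving the $D^{\degree_0}$ monomials) whose coefficients are $d$-dimensional integrals against a $\beta$-H\"older density. The truncation remainder is $\big(\epsilon^{2/\beta}/\reach^2\big)^{\degree_0}$ once we localize $v$ to radius $\sim\sqrt{h_t\log(1/\epsilon)}$ or use the chart size. We approximate the remaining $d$-dimensional H\"older part, viewed as a function of the low-dimensional input $(V_k^\top x, t)$, by local Taylor polynomials on a grid of mesh $\epsilon^{1/\beta}$. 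This follows the standard Yarotsky / \cite{chen2022nonparametric} construction with $\tilde\cO(\epsilon^{-d/\beta})$ weights, and the Gaussian smoothing at scale $h_t$ is used so the error is $\epsilon^{(\beta+1)/\beta}$-type. Monomials, products and exponentials are realized by ReLU networks of depth $\tilde\cO(1)$, which explains $W,S=\tilde\cO(D^{\degree_0}\epsilon^{-d/\beta})$. The weight magnitude $B$ arises from the coefficients $h_t^{-\degree_0}\lesssim\epsilon^{-2\degree_0/\beta}$ and from the reciprocal approximation.

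The main obstacle is the final division $\sum_k G_k/\sum_k F_k$, because $\sum_k F_k$ can be tiny: it is of order $p_t(x)$ times Gaussian normalization. Relative errors then blow up, and the reciprocal network needs a lower threshold. We handle this by clipping the denominator at a level $\epsilon^{c}$. On the set where $p_t(x)$ falls below this level, we use the crude score bound together with the small $P_t$-mass of that region; we control that mass by integrating over the truncated box, whose volume is polynomial in $\log(1/\epsilon)$ per coordinate in the relevant directions. This is the step where the $\log(1/\epsilon)$ in the exponent of $B_{\rm large}$ appears, and it yields the $\epsilon^{2(\beta+1)/\beta}/h_t$ term in the bound.
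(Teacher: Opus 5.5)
The proposal has two genuine gaps: the curvature step cannot be made small within your size budget, and the division step does not close as described.

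\textbf{The curvature step.} It only works with charts of radius $r\lesssim\epsilon^{1/\holder}$ (your ``use the chart size'' option), and that choice is incompatible with the rest of your construction.

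By Lemma~\ref{lemma:cross-term} the cross term is at most $2\|x-\projk(x,t)\|\,\|x_k-x_0\|^2/\reach$. At a typical $x$, $\|x-\projk(x,t)\|$ is of order $\sqrt{Dh_t}$, so after dividing by $h_t$ you get roughly $\sqrt{D}\,r^2/(\reach\sqrt{h_t})$. This is small at $h_t\approx\epsilon^{2/\holder}/4$ only if $r\lesssim\epsilon^{1/\holder}$.

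Localizing $v$ to radius $\sqrt{h_t\log(1/\epsilon)}$ is no substitute. The tangential Gaussian localizes $v$ near $V_k^\top(x-\alpha_t x_k)/\alpha_t$, which can sit anywhere in the chart. It does not localize near the chart center, where the bound $\|R_k(v)\|\lesssim\|v\|^2/\reach$ is anchored. Moreover, in this regime $\sqrt{h_t}$ can be of order one, possibly much larger than $\reach$.

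With $r\approx\epsilon^{1/\holder}$ there are $C_\cM\asymp\epsilon^{-d/\holder}$ charts. An additional Yarotsky-type grid approximation with $\tilde\cO(\epsilon^{-d/\holder})$ weights per chart then costs $D^{\degree_0}\epsilon^{-2d/\holder}$. With a few large charts instead, the remainder $(r^2/(\reach\sqrt{h_t}))^{\degree_0}$ is not small.

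The idea you are missing is that $r^2\le 4h_t$ also bounds the remaining quadratic $\alpha_t^2\cD_k/(2h_t)$ by $O(\log(1/\epsilon_0))$ on the selected charts. The paper therefore Taylor-expands the whole exponential: degree $\degree_0$ in $\cT_k$ and degree $S=O(\log(1/\epsilon_0))$ in $\cD_k$. Each $I_k$ then becomes an explicit polynomial $\text{Poly}^k$ in $x$ and $P_k^\top x$ whose coefficients are moments of $\rho_k\pdata\,\ud\vol$ (Lemma~\ref{lemma:Ik_simplify}). The network $\nn_{\rm det}$ keeps only the charts with $\|x-x_k\|\le\thres$, where these polynomials are valid.

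In that construction no regularity of $\pdata$ is used, and $\epsilon^{-d/\holder}$ is just the chart count. Every error except the curvature one costs only $\log(1/\epsilon_0)$. This is why the paper can take $\epsilon_0=h_t^{(d-1)/2}\epsilon^{(\holder+1)/\holder}$ and obtain the $\epsilon^{2(\holder+1)/\holder}$ term.

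Your smoothing argument does not produce that term. Approximating a $\holder$-H\"older object on mesh $\epsilon^{1/\holder}$ gives relative accuracy $O(\epsilon)$ at best, i.e.\ an $\epsilon^2/h_t$ error. This holds whether the object is $\pdata$ itself or its Gaussian smoothing, whose $C^{\holder+m}$ norm grows like $h_t^{-m/2}$ when $h_t\approx\epsilon^{2/\holder}/4$.

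\textbf{The division step.} Your $\sum_kF_k=\int e^{-\|x-\alpha_tx_0\|^2/(2h_t)}\,\ud\dist(x_0)$ contains the normal Gaussian factor. At a typical $X_t$ it is therefore exponentially small in $D$, about $(2\pi h_t)^{d/2}e^{-(D-d)/2}$ when $\sqrt{h_t}\ll\reach$.

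A threshold $\epsilon^c$ with $c$ independent of $D$ thus clips most of the $P_t$-mass. Meanwhile your volume argument gives $P_t(\{\sum_kF_k<\eta\}\cap\text{box})\le\eta\,(2\pi h_t)^{-D/2}{\rm vol}(\text{box})\approx\eta\,(C/\sqrt{h_t})^D$, which forces $\eta=\epsilon^{\Theta(D)}$. Every upstream approximation must then be accurate to $\epsilon^{\Theta(D)}$ in absolute terms, which the grid step cannot deliver within the size budget. The reciprocal network would also need weights of order $\epsilon^{-\Theta(D)}$ rather than the stated $B_{\rm large}$.

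In the paper, the $\log(1/\epsilon)$ in the exponent of $B_{\rm large}$ has a different source: the coefficients of the degree-$S$ expansion of $e^{-\alpha_t^2\cD_k/(2h_t)}$ are of size $h_t^{-S}\le(4\epsilon^{-2/\holder})^{S}$. The paper also truncates to the tube $\cK_t(\epsilon)$ rather than a box, and divides using a lower bound on $s_3$ there.

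On the tube, the normal Gaussian only permits a lower bound of order $\epsilon^{O(D)}h_t^{d/2}$. The bound of order $(\sqrt{h_t}/\alpha_t)^d$ stated in the paper ignores this factor. Such a small lower bound is still affordable, precisely because every step of the paper's construction costs only polylogarithmically in the target accuracy $\epsilon_0$; this is exactly the property your grid-based step lacks.
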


The proof is provided in Appendix \ref{sec:approx-large}. 
We observe that the network size scales with the intrinsic dimension $d$. Moreover, the approximation error depends on the curvature of the manifold. More specifically, a smaller reach implies more rapidly varying local geometry, which increases the approximation error through the curvature-dependent factor $(\epsilon^{2/\beta}/\tau^2)^{\gamma_0}$. In addition, the network architecture comprises parallel sub-networks corresponding to individual charts, and a smaller reach necessitates finer local approximations that collectively enlarge the network. Compared to the approximate rate in Theorem~\ref{thm:approx}, we note that the large noise regime contributes a minor error term, while the dominating error owes to the small noise regime.

\paragraph{Small Noise} Similar to the large noise regime, we identify a threshold $t_{\rm small}$ for the emergence of the small noise regime. The approximation guarantee is presented in the following lemma.

\begin{lemma}[Small noise score approximation]\label{lemma:approx-small}
Suppose Assumptions \ref{assump:manifold}-\ref{assump:density} hold. For any sufficiently small $\epsilon >0$, there exists a network class $\cF_{\rm small}(L_{\rm small},W_{\rm small},S_{\rm small},B_{\rm small},\cdot)$ that yields a function $\nnsmall$,
such that for all time $t \in [t_0, \tsmall]$, with $t_0 = {\rm poly}(n^{-1})$ and $\tsmall$ satisfying
\begin{align*}
    h_{\tsmall} = \min \left\{ \epsilon^{2/\holder},  \frac{\min\{\rratio^2\reach^2,1\}}{\max\{D\log(1/\epsilon),  \sqrt{D}B\}} \right\},  
\end{align*}
we have
    \begin{align*}
        \left\|\nnsmall (x,t)  - \nabla \log p_t(x) \right\|_{L^2(P_t)}^2  =\tilde{\cO} \left( \frac{D^{2\degree +d+2}}{h_t} \left(  \left( \frac{\epsilon^{2/\holder}}{\reach^2} \right)^{\degree}  +\epsilon^2\right) \right) ,
    \end{align*}
    which holds for any $\degree>0$. The configuration of $\cF_{\rm small}$ satisfies 
    \begin{align*}
    L_{\rm small} =\tilde{\cO}\left(\degree^3   \right), \quad
    W_{\rm small} = \tilde{\cO}\left( D^{\degree} \degree^3 \epsilon^{-d/\holder}\right), \quad
        S_{\rm small} = \tilde{\cO}\left(\degree D^{\degree}\epsilon^{-d/\holder} \right), \quad B_{\rm small}= \cO\left( h_{t_0}^{-\degree} \epsilon^{-2} \right).
    \end{align*}
\end{lemma}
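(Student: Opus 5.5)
We will build $\nnsmall$ directly from the decomposition in Lemma~\ref{lemma:score-decomp-small}. We approximate the orthogonal score $s_\perp$ and the on-support score $s_{\cM}$ separately and add them, and then control the error in $L^2(P_t)$ by splitting $\RR^D$ into a good set and its complement.

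\textbf{Truncation.} Define the good set $\cG_t=\{x: \mathrm{dist}(x,\alpha_t\cM)\le C\sqrt{D h_t\log(1/\epsilon)}\}$. The choice of $h_{\tsmall}$ is what makes this work: for $t\le\tsmall$ we have $C\sqrt{Dh_t\log(1/\epsilon)}\le \alpha_t\rratio\reach/2$, so $\cG_t\subset\alpha_t\cK(\cM,\reach)$. Hence $\projm(x,t)$ is well defined and Lipschitz on $\cG_t$, and every $x\in\cG_t$ projects inside a single chart ball $\cB(\alpha_t x_k,\alpha_t\rratio\reach)$, where Assumption~\ref{assump:exp-ball} applies. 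Gaussian concentration gives $P_t(\cG_t^c)\le \mathrm{poly}(\epsilon)$. Off $\cG_t$ we bound the network output by $R$ and the true score by its standard second-moment bound $\EE\|\nabla\log p_t\|^2\lesssim D/h_t$, together with Cauchy--Schwarz on the tail. This tail contribution is absorbed into the $\epsilon^2/h_t$ term.

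\textbf{Orthogonal score and projection.} On $\cG_t$ we write $\projm(x,t)=\alpha_t\,\projm(x/\alpha_t)$. Using a partition of unity $\{\rho_k\}$ subordinate to the atlas, we approximate $\projm(y)=\sum_k\rho_k(y)\,\expp_{x_k}(\logg_{x_k}\circ \Pi_{T_{x_k}}(y)\text{-corrected})$ locally. In each chart, the projection is the smooth map $y\mapsto \expp_{x_k}(v(y))$, where $v(y)\in\RR^d$ solves the first-order condition $\langle y-\expp_{x_k}(v),\partial_i\expp_{x_k}(v)\rangle=0$. We Taylor-expand this map to order $\degree$ in the normal offset $y-\Pi_{T_{x_k}}y$, which has magnitude at most $\sqrt{Dh_t}$, and to order $\beta$ in the $d$ tangent coordinates. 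Derivatives of $\expp$ of order $j$ scale like $\reach^{1-j}$, so the remainder of a degree-$\degree$ expansion is $(\sqrt{h_t}/\reach)^{\degree}\lesssim(\epsilon^{2/\beta}/\reach^2)^{\degree/2}$ per coordinate. This is the source of the $(\epsilon^{2/\holder}/\reach^2)^{\degree}$ term. Monomials of degree $\le\degree$ in $D$ variables number $D^{\degree}$, and each is realized by a product network of depth $\tilde\cO(\degree)$. The $d$-dimensional tangent parts are handled by a Yarotsky-type local Taylor net on an $\epsilon^{1/\beta}$-grid, which costs $\epsilon^{-d/\beta}$ units. This gives $\bar\Pi$ with $\|\bar\Pi-\projm\|_\infty\lesssim \sqrt{D}\,(\ldots)$ on $\cG_t$, and then $\bar s_\perp=-(x-\bar\Pi)/h_t$, where $1/h_t$ is implemented by a reciprocal subnetwork (the origin of $B_{\rm small}\sim h_{t_0}^{-\degree}\epsilon^{-2}$).

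\textbf{On-support score.} We use the formula from Lemma~\ref{lemma:score-decomp-small}, written as a ratio $\nabla_x\log\int e^{-(E_1+E_2)/2h_t}\,\ud\dist = N(x)/Z(x)$. Since $\|x-\projm\|\lesssim\sqrt{Dh_t\log}$ and $\|\projm-\alpha_tx_0\|$ is effectively $\lesssim\sqrt{h_t\log}$ after localization (we truncate $x_0$ to a geodesic ball of that radius, with negligible mass loss), the term $E_2/h_t$ is $\cO(\sqrt{D}\log)$ and has a curvature-controlled second fundamental form expansion. We pull the integral back to the chart of $\projm(x,t)$ through $\expp$. Both integrands then depend on $x$ only through the $d$ tangent coordinates of $\projm$ and the normal offset. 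We Taylor-expand $\density\,G$ to order $\beta$ and the exponential of $E_2$ to order $\degree$, and integrate against the Gaussian in closed form, which yields polynomials approximable as above. Because $Z\gtrsim C_f^{-1}h_t^{d/2}$, the division is stable. The resulting error in $s_{\cM}$ is $\tilde\cO(D^{\degree+d/2+1}h_t^{-1/2}((\epsilon^{2/\holder}/\reach^2)^{\degree/2}+\epsilon))$, and squaring gives the stated rate.

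\textbf{Main obstacle.} The hardest step is the joint polynomial approximation of $E_2$ and the projection with explicit $D$ and $\reach$ dependence. The normal offset lives in $D$ dimensions, so tensor-product Taylor bases blow up as $D^{\degree}$. The approach is to keep the expansion explicit in terms of inner products with the second fundamental form, so that the number of monomials, and hence $W$ and $S$, stays at $D^{\degree}\epsilon^{-d/\beta}$. The constraint $h_t\le\epsilon^{2/\beta}$ then ensures that the remainder of this expansion is at most $(\epsilon^{2/\beta}/\reach^2)^{\degree}$.
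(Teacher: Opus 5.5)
Most of your architecture matches the paper's: truncation to a tube of radius $\sim\sqrt{Dh_t\log(1/\epsilon)}$, the split of Lemma~\ref{lemma:score-decomp-small}, the ratio form of $s_{\cM}$ localized to chart balls of radius $\sim\sqrt{h_t\log(1/\epsilon)}$, a Taylor expansion of the density to order $\holder$, and an order-$\degree$ truncation of the cross-term exponential written as tensor inner products (the source of $D^{\degree}$). The genuine gap is in how you approximate the projection $\projm$.

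\textbf{Why your projection network is not accurate enough.}
\begin{itemize}
\item In $s_\perp=-(x-\projm(x,t))/h_t$, an error $\eta$ in $\projm$ becomes an error $\eta/h_t$.
\item Inside $E_2/h_t=2\langle (x-\projm)/\sqrt{h_t},(\projm-\alpha_t x_0)/\sqrt{h_t}\rangle$, a non-normal error component of size $\eta$ destroys the curvature cancellation. It adds $\eta\sqrt{\log(1/\epsilon)/h_t}$ to the exponent.
\item To get squared $L^2(P_t)$ error of order $h_t^{-1}(\cdots+\epsilon^2)$, you therefore need $\eta\lesssim\sqrt{h_t}\,\epsilon$ (up to logarithmic and $\mathrm{poly}(D)$ factors). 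This must hold uniformly down to $h_{t_0}=\mathrm{poly}(n^{-1})$.
\end{itemize}
Your $\bar\Pi$ does not deliver this, for two reasons.
\begin{itemize}
\item An order-$\holder$ Taylor net in the tangent coordinates on an $\epsilon^{1/\holder}$-grid is only $\cO(\epsilon)$-accurate, whatever $t$ is.
\item The offset $y-\Pi_{T_{x_k}}y$ is not bounded by $\sqrt{Dh_t}$. It also contains the deviation of $\cM$ from $T_{x_k}\cM$, which is of order $r^2/\reach$ with $r=\epsilon^{1/\holder}$. So the remainder of your order-$\degree$ expansion in this offset does not shrink with $h_t$ either.
\end{itemize}
The orthogonal-score error is then of order $\sqrt{D}\,\epsilon/h_t$, i.e.\ squared error $D\epsilon^2/h_t^2$. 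This exceeds the claimed bound by a factor $1/h_t$, which is $\mathrm{poly}(n)$ near $t_0$ and cannot be absorbed into $\tilde{\cO}$.

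\textbf{How the paper handles this.} The paper treats $x\mapsto[x-\projm(x,t)]^{\otimes j}/h_t^{j/2}$ as a $C^\infty$ function on the tube. Lemma~\ref{lemma:nn-proj} approximates it to accuracy $h_t^{d/2}\epsilon$ using a smoothness index $\tilde\holder$ that grows like $\log(1/h_t)$, chosen so that $(h_t^{(j+d)/2}\epsilon)^{-D/\tilde\holder}=\epsilon^{-d/\holder}$. The unit count stays $\tilde{\cO}(\epsilon^{-d/\holder})$. The cost shows up in depth ($\log^3(1/h_{t_0})$) and in weights $h_{t_0}^{-j}$. That is the origin of the $h_{t_0}^{-\degree}$ in $B_{\rm small}$, not a reciprocal network for $1/h_t$. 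Only the chart coordinates $v_k=\logg_k(\projm/\alpha_t)$ may be approximated to plain accuracy $\epsilon$ (Lemma~\ref{lemma:encoder-approx}). This works because, after the shift $u=v-v_k$, the localized integrals are Lipschitz in $v_k$ with constant of order $h_t^{d/2}$ (Lemma~\ref{lemma:f6-lipschitz}), the same order as $s_1$ itself. You need a comparable $h_t$-adaptive approximation of the normal offset.

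\textbf{Smaller points.}
\begin{itemize}
\item The $\reach$-dependence should not rest on ``$j$-th derivatives of $\expp$ scale like $\reach^{1-j}$''. A reach bound controls only second-order geometry, so that scaling is an extra assumption.
\item The paper obtains the $\reach$-dependence solely from Lemma~\ref{lemma:cross-term}, $|E_2|\le 4\|x-\projm(x,t)\|E_1/(\alpha_t\reach)$, which gives $|E_2|/h_t=\tilde{\cO}(\sqrt{Dh_t}/\reach)$. Your stated bound $E_2/h_t=\cO(\sqrt{D}\log)$ would not make the order-$\degree$ truncation small.
\item $\exp(-E_1/2h_t)$ is not a Gaussian in $v$, because $\expp_k$ is nonlinear. ``Closed-form'' integration therefore needs an extra expansion. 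The paper expands $\exp(-E_1/2h_t)$ to order $\cO(\log(1/(h_{t_0}\epsilon)))$ and replaces $\expp_k$ by averaged Taylor polynomials.
\end{itemize}
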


The proof is provided in Appendix \ref{sec:small-approx}. 
Lemma \ref{lemma:approx-small} guarantees that we can find a network class with size depending on the intrinsic dimension $d$ to efficiently approximate the score function in the small noise regime. 
Moreover, both the time threshold $\tsmall$ and the approximation error depend on the manifold curvature. Specifically, a small reach indicates rapidly changing local geometry, which substantially complicates the approximation of the interaction term $E_2$ in Lemma~\ref{lemma:score-decomp-small}, leading to  more stringent requirements on the time $t$ and greater representation difficulty. When $\epsilon$ is sufficiently small, the time threshold becomes $\tsmall = \log \frac{1}{1-\epsilon^{2/\holder}}$, dominated by the approximation error.

Compared to the large noise regime, with a comparable number of nonzero parameters, neural network approximation suffers from an enlarged error. This is due to the finer approximation to the manifold in the small noise regime, instead of using coarse piecewise linear approximations based on tangent spaces. Consequently, when combining the large and small noise regimes, the small noise error is dominating.

The large and small noise regimes overlap in Lemmas~\ref{lemma:approx-large} and \ref{lemma:approx-small} with $t_{\rm large} < t_{\rm small}$, as shown in Figure \ref{fig:nn}. This indicates that when $t \in (t_{\rm large}, t_{\rm small})$, either of the score decompositions can be effectively approximated by neural networks. More importantly, the overlap introduces a ``grace period'' when switching from the large noise regime to the small noise one, where we linearly interpolate between the two regimes. This can be exactly realized by ReLU networks; see details in Section~\ref{sec:proof-sketch}.

\subsection{Proof Sketch of Theorem~\ref{thm:approx}}\label{sec:proof-sketch}

The complete proof of Theorem~\ref{thm:approx} is provided in Appendix \ref{sec:proof-approx}, which boils down to developing neural network approximation theories for the large and small noise regimes, respectively. For either the large or small noise regimes, the proof is constructive, which consists of two major components: \textbf{Step 1. Local polynomial approximation} to the on-support and orthogonal scores characterized in Section~\ref{sec:decomp}, and \textbf{Step 2. Implementing local polynomials by neural networks}. After approximating the large and small noise scores separately, we complete the proof by {\bf Step 3. Introducing a time-switching network} to actively select large and small noise score approximations. 

We focus on the more challenging small noise regime to showcase {\bf Steps 1} and {\bf 2}. The large noise regime replicates similar arguments. Recall from Lemma~\ref{lemma:score-decomp-small} that the on-support score $s_{\cM}$ can be written as $$s_{\cM}(x, t) = \nabla \log \int \exp\left(-\frac{1}{h_t} (E_1(t) + E_2(t))\right) \diff \dist(x_0).$$
Expanding the gradient, we have
\begin{align*}
     s_{\cM}(x,t) 
      = \frac{s_2(x,t)/\sqrt{h_t}}{s_1(x,t)},
\end{align*}
where
\begin{align*}
     s_1(x,t) &= \int_{x_0 \in \cM} \exp \left( - \frac{ E_1(t)+E_2(t)}{2h_t} \right) \ud \dist(x_0) \quad \text{and} \\
     s_2(x,t) & = \int_{x_0 \in \cM} \frac{\alpha_t x_0 - \projm(x,t)}{\sqrt{h_t}}\exp\left( - \frac{E_1(t)+E_2(t) }{2h_t} \right)\ud \dist(x_0).
\end{align*}

It suffices to approximate $s_1$ and $s_2$ separately. Given their similarity, we demonstrate an approximation to $s_1$ for presentation; the same arguments apply to $s_2$.

\noindent $\bullet$ \textbf{Step 1. Local polynomial approximation.}
We consider the atlas in Remark~\ref{remark:exp_atlas} on $\cM$ and adopt the decomposition of $\dist$ into local measures in Remark~\ref{remark:decomp_Q} to cast $s_1$ into
\begin{align} \label{eq:s1-decomp}
\begin{split}
    s_1(x,t)  = \sum_{k=1}^{C_\cM} \int_{\dball(0,r)} &\exp \bigg( - \frac{1}{2h_t} \underbrace{\|\projm(x,t)- \alpha_t \expp_k(v)\|^2}_{E_1(t)} \bigg) \\
    &\cdot \exp \bigg( - \frac{1}{2h_t} \underbrace{2 \langle x - \projm(x,t), \projm(x,t)- \alpha_t \expp_k(v)\rangle}_{E_2(t)}\bigg)F_k(v) \ud v,
\end{split}
\end{align}
where $F_k(v) = \rho_k (\expp_k(v))\cdot p_{\rm data} (\expp_k(v))\cdot G_k(v)$ is a local density-related function according to \eqref{globaldensity}. Here we expand the expressions of $E_1$ and $E_2$ for future reference. A critical advantage of \eqref{eq:s1-decomp} is the reduction of the original integration over $\cM \subset \RR^D$ to local integration over balls in $\RR^d$, manifesting intrinsic dimension dependence.

We further simplify \eqref{eq:s1-decomp} by neglecting minor components for a more efficient approximation. We first show that the magnitude of interaction term $E_2$ can be well-controlled by $E_1$ and the reach of the manifold. In particular, by Lemma~\ref{lemma:cross-term}, it holds that
\begin{align*}
|E_2(t)| \leq \frac{4  \|x-\projm(x,t)\|  }{\alpha_t \reach}E_1(t).
\end{align*}
Therefore, we restrict our attention to $E_1$ being sufficiently small, i.e., $E_1(t) = \tilde{\cO}(h_t)$, so that $s_1$ is not extremely small. Accordingly, within each chart, we consider a subset 
$$\cC_k = \big\{v \in \dball(0, r) : \norm{\alpha_t \expp_k(v) - \projm(x, t)} \leq \Delta\big\}\quad \text{with} \quad \Delta = \cO\big(\sqrt{\alpha_t h_t \log (1/\epsilon)}\big),$$ on which $E_1(t) = \tilde{\cO}(h_t)$. A closer inspection on $\cC_k$ reveals that when the chart center $x_k$ is distant from the given noisy state $x$, $\cC_k$ is empty. To reduce such a redundancy, we identify a selected index set 
\begin{align}\label{eq:def-I(x)}
    \cI(x) = \{k : \norm{x_k - x} \leq L_{\expp_x}r + 2\sqrt{Dh_t \log (1/\epsilon)} + \sqrt{D} Bh_t + \Delta\},
\end{align} where by Lemma~\ref{lemma:localization}, $\cC_k$ asserts an efficient $d$-dimensional representation:
\begin{align}\label{eq:def-vkx}
\cC_k = \dball\big(v_k(x,t), L_{\expp_x}^{-1} \Delta\big),\quad \text{ where } v_k(x,t)=\logg_k(\projm(x, t)/\alpha_t).
\end{align}
To this end, we approximate $s_1$ by
\begin{equation}\label{eq:s1_approx}
\begin{split}
s_1(x, t) \approx \sum_{k\in \cI(x)} \int_{\cC_k} &\exp \bigg( - \frac{1}{2h_t} \underbrace{\|\projm(x,t)- \alpha_t \expp_k(v)\|^2}_{E_1(t)} \bigg) \\
&\cdot \exp \bigg( - \frac{1}{2h_t} \underbrace{2 \langle x - \projm(x,t), \projm(x,t)- \alpha_t \expp_k(v)\rangle}_{E_2(t)}\bigg)F_k(v) \ud v.
\end{split}
\end{equation}
Compared to \eqref{eq:s1-decomp}, we have effectively reduced the integration domain and the number of local charts in \eqref{eq:s1_approx}. The remaining arguments construct polynomials for representing the right-hand side of \eqref{eq:s1_approx}. While the detailed construction is rather technical, we highlight essential building components.
\begin{enumerate}
\item {\bf Local density-related function $F_k$}. We show that $F_k$ is $\beta$-H\"older smooth, inheriting the regularity of the data distribution. Therefore, we use local Taylor polynomials of degree at most $\lfloor \beta \rfloor$ for approximating $F_k$ in Lemma \ref{lemma:Fk-approx-error}; 
\item {\bf Exponential function $\exp(\cdot)$}. Both $E_1$ and $E_2$ are plugged into an exponential function, which is $C^\infty$. We use a Taylor polynomial of degree $\cO(\log (1/\epsilon))$ for approximating the exponential function of $E_1$. A different approach is adopted for $E_2$, where we consider a Taylor polynomial of degree $\gamma$. Note that $E_2$ involves an inner product. When computing powers of $E_2$ in the Taylor polynomial, it is convenient to use tensor inner products. Indeed, for any $j \leq \gamma$, we have $[E_2(t)]^j = \langle [x-\projm(x,t)]^{\otimes j}, [\projm(x,t)-\alpha_t \expp_k(v)]^{\otimes j}\rangle$. Due to the linearity of the tensor inner product, we can extract common factors when summing over $k \in \cI(x)$. Detailed construction of these polynomials are provided in Lemma \ref{lemma:f3-approx-error}.

\item {\bf Exponential map $\expp_k$}. The exponential map $\expp_k$ is also $C^\infty$ on each chart. Different from the previous two components, we use average Taylor polynomials (\citet[Chapter 4.1]{brenner2008mathematical}, see also Definition~\ref{def:avg_poly}) for approximation as detailed in Lemmas \ref{lemma:f5-approx-error} and \ref{lemma:f6-approx-error}. The average Taylor polynomial provides simultaneous accurate approximation to $\expp_k$ and its first-order derivatives. This is critical to ensure that the approximation error of $E_1$ being additionally proportional to $h_t$, leading to a tight approximation error to $s_1$ when $t$ is small.
\end{enumerate}

Substituting the building components into \eqref{eq:s1_approx}, we obtain a polynomial approximation to the integrand, which depends on the noisy state $x$, the projection $\projm(x, t)$, and vector $v$. Integrating against $v$ over the Euclidean ball $\cC_k$ yields another polynomial depending on $x$, $\projm(x, t)$, and center point $v_k(x, t)$, which approximates $s_1$. Note that the integrating tensors in $E_2$ over $\cC_k$ leads to tensors with polynomial entries. We denote them as Tensor-Poly.
The following lemma presents the formal statement.
\begin{lemma}\label{lemma:poly_approx_score}
Suppose Assumptions \ref{assump:manifold}-\ref{assump:density} hold. Consider any time $t \in [t_0, t_{\rm small}]$ with $t_0 = {\rm poly}(n^{-1})$ and $\tsmall = \log \frac{1}{1-\epsilon^{2/\holder}}$. Given a sufficiently small approximation error $\epsilon >0$ and integer $\degree > 0$, consider a polynomial  in $x$, $\projm(x,t)$ and $v_k(x,t)$ defined as
\begin{align*}
    f(x,t) 
     = \sum_{j=0}^{\degree-1} &\frac{1}{ j! h_t^{j/2}}  \bigg \langle [x-\projm(x,t)]^{\otimes j} , \sum_{k \in \cI(x)}  \text{Tensor-Poly}^{k,j}(v_{k}(x,t), h_t, \alpha_t)  \bigg\rangle,
\end{align*}
where $ \text{Tensor-Poly}^{k,j}$ is an order-$j$ tensor with each entry being a polynomial. Then for any $x \in \alpha_t\cK(\cM, 2\sqrt{Dh_t \log(1/\epsilon)}/\alpha_t)$, it holds that
\begin{align*}
    |f(x,t)-s_1(x,t)|  = \tilde{\cO}\left( h_t^{d/2}\max_{j = 0,1,\ldots, \degree} \left\{\frac{\left\| x - \projm(x,t)\right\|^{j}}{(h_t)^{j/2}} \right\}  \left(\frac{D^{(\degree+d)/2}\epsilon^{\degree/\holder}}{\reach^\degree}+\epsilon \right) \right).
\end{align*}

\end{lemma}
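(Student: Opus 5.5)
We prove Lemma~\ref{lemma:poly_approx_score} by a chain of approximations, starting from the exact representation \eqref{eq:s1-decomp}, moving to the localized form \eqref{eq:s1_approx}, and then replacing each factor of the integrand by a polynomial. The error is accounted for term by term, always relative to the natural scale $h_t^{d/2}$, which is the volume of $\cC_k$ up to constants.

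\textbf{Step 1: localization.} Let $\bar s_1$ denote the right-hand side of \eqref{eq:s1_approx}. We bound $|s_1-\bar s_1|$ by splitting the integrand of \eqref{eq:s1-decomp} into two regions.

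On the complement of $\cC_k$ we have $E_1(t)>\Delta^2\gtrsim h_t\log(1/\epsilon)$. By Lemma~\ref{lemma:cross-term}, $|E_2|\le 4\|x-\projm\|E_1/(\alpha_t\tau)$. For $x\in\alpha_t\cK(\cM,2\sqrt{Dh_t\log(1/\epsilon)}/\alpha_t)$ and $t\le t_{\rm small}$, this prefactor is at most $1/2$. Hence $E_1+E_2\ge E_1/2$, and the integrand is $\mathrm{poly}(\epsilon)$ small. Choosing the constant in $\Delta$ large makes this contribution $O(\epsilon h_t^{d/2})$.

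Charts $k\notin\cI(x)$ have empty $\cC_k$, by the triangle inequality and the Lipschitz bound on $\expp_k$. Lemma~\ref{lemma:localization} identifies each remaining $\cC_k$ with the ball $\dball(v_k(x,t),L_{\expp_x}^{-1}\Delta)$. Finally, $|\cI(x)|$ is bounded by a covering number depending only on $d$ and $\tau$, so summing over charts costs only a constant.

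\textbf{Step 2: factorwise polynomial replacement.} On each $\cC_k$ we approximate the integrand $e^{-E_1/2h_t}e^{-E_2/2h_t}F_k(v)$ factor by factor. We use the telescoping identity $abc-\tilde a\tilde b\tilde c=(a-\tilde a)bc+\tilde a(b-\tilde b)c+\tilde a\tilde b(c-\tilde c)$, keeping every replacement uniformly bounded on $\cC_k$.

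\begin{enumerate}
\item \emph{Density factor.} $F_k$ is replaced by its degree-$\lfloor\beta\rfloor$ Taylor polynomial at $v_k$ (Lemma~\ref{lemma:Fk-approx-error}). Since $\mathrm{diam}(\cC_k)\lesssim\sqrt{h_t\log(1/\epsilon)}$ and $h_t\le\epsilon^{2/\beta}$, the error is $\tilde O(\epsilon)$.
\item \emph{Exponential map.} $\expp_k$ is replaced by an averaged Taylor polynomial (Lemmas~\ref{lemma:f5-approx-error}, \ref{lemma:f6-approx-error}). This controls both the values and the first derivatives, so the induced error in $E_1/h_t$ is $\tilde O(\epsilon)$ rather than $\tilde O(\epsilon/h_t)$.
\item \emph{Factor $e^{-E_1/2h_t}$.} It is replaced by a Taylor polynomial of degree $O(\log(1/\epsilon))$. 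Its argument is $\tilde O(1)$ on $\cC_k$, so this error is $\epsilon$.
\item \emph{Factor $e^{-E_2/2h_t}$.} It is replaced by $\sum_{j<\gamma}(-E_2/2h_t)^j/j!$. The remainder is bounded by $|E_2/h_t|^\gamma/\gamma!$ times an exponential of bounded size. We write
\[
\frac{|E_2|}{h_t}\le \frac{\|x-\projm\|}{\sqrt{h_t}}\cdot\frac{\|\projm-\alpha_t\expp_k(v)\|}{\sqrt{h_t}},
\]
which combined with Lemma~\ref{lemma:cross-term} gives a remainder of order $(\|x-\projm\|/\sqrt{h_t})^\gamma\,(E_1/(\tau\sqrt{h_t}))^{\gamma}$-type terms. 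With $E_1\lesssim h_t\log(1/\epsilon)$ and $h_t\le\epsilon^{2/\beta}$, this yields the term $\epsilon^{\gamma/\beta}/\tau^\gamma$. The extra powers of $D$ enter through the worst case $\|x-\projm\|\lesssim\sqrt{Dh_t\log(1/\epsilon)}$ and through tensor-entry counting.
\end{enumerate}

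\textbf{Step 3: integrating out $v$.} We expand $(E_2)^j=\langle(x-\projm)^{\otimes j},(\projm-\alpha_t\expp_k(v))^{\otimes j}\rangle$. The tensor $(x-\projm)^{\otimes j}$ does not depend on $v$ or $k$, so it factors out of both the $v$-integral and the sum over $k$.

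The remaining integrand is a polynomial in $v$, $\projm$ and $h_t,\alpha_t$, integrated over the ball centered at $v_k$. To express $\projm$ through the chart, we use $\projm=\alpha_t\expp_k(v_k)$ together with the polynomial surrogate of $\expp_k$. Integrating monomials over a Euclidean ball gives closed-form polynomials in the center $v_k$ and the radius ($\propto\sqrt{h_t}$). This produces the entries of Tensor-Poly$^{k,j}$, and after reindexing, $f$ has exactly the stated form.

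Collecting the errors, each is bounded by the sup of the integrand error times $\mathrm{vol}(\cC_k)=\tilde O(h_t^{d/2})$. The intermediate powers $j\le\gamma$ of $\|x-\projm\|/\sqrt{h_t}$ multiply these errors, which accounts for the $\max_j$ factor in the statement.

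\textbf{Main obstacle.} The hardest step is the $E_2$ truncation. The ratio $|E_2|/h_t$ need not be small pointwise: it is only $\tilde O(\sqrt D/\tau)$ times a quantity that is small when $h_t$ is small. The argument therefore has to trade the exponent $\gamma$ against $\tau$ and $\epsilon^{1/\beta}$, and the crude Lagrange remainder may lose extra $D$ factors.

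My first attempt is to bound $|E_2|/h_t$ using Lemma~\ref{lemma:cross-term} in the form
\[
\frac{|E_2|}{h_t}\lesssim \frac{\|x-\projm\|}{\tau}\cdot\frac{E_1}{h_t}\lesssim \frac{\sqrt{Dh_t}}{\tau}\log(1/\epsilon),
\]
and then use $\sqrt{h_t}\le\epsilon^{1/\beta}$ to get $(\sqrt D\epsilon^{1/\beta}/\tau)^\gamma$. This matches the target's $D^{\gamma/2}\epsilon^{\gamma/\beta}/\tau^\gamma$, with the factor $D^{d/2}$ coming from the covering and volume bookkeeping.
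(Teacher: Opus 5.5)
\textbf{There is a genuine gap in where you center the polynomial surrogates.} Your localization (Step 1) and your $E_2$ truncation essentially match the paper's Lemmas~\ref{lemma:localization} and~\ref{lemma:f3-approx-error}; the latter bounds $|E_2|/h_t\lesssim \sqrt{Dh_t}\log(1/\epsilon)/\reach$ via Lemma~\ref{lemma:cross-term} and $\sqrt{h_t}\le\epsilon^{1/\holder}$.

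In Step 2, item 1, you Taylor-expand $F_k$ at the moving point $v_k(x,t)$ and take the accuracy from $\mathrm{diam}(\cC_k)\lesssim\sqrt{h_t\log(1/\epsilon)}$. The coefficients are then $\partial^\theta F_k(v_k(x,t))/\theta!$. These are only $(\holder-|\theta|)$-H\"older functions of $v_k(x,t)$, not polynomials. After integrating out $v$, the entries of your Tensor-Poly$^{k,j}$ are therefore not polynomials in $v_k(x,t)$. So your claim in Step 3, that integrating over the ball yields polynomials in the center and that $f$ has exactly the stated form, fails. That polynomial form is part of the lemma, and Lemma~\ref{lemma:nn-poly} relies on it afterwards.

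Item 2 has the same defect if the averaged Taylor polynomial of $\expp_k$ is taken over a ball around $v_k$. If instead it is taken over a chart of size $O_{d,\reach}(1)$, which your count $|\cI(x)|=O_{d,\reach}(1)$ implicitly assumes, the estimate breaks. Lemma~\ref{lemma:bramble} with $p=1$ then gives only a $W^{1,\infty}$ error of order $r^{\holder}$, not $\epsilon$. Likewise, a degree-$\lfloor\holder\rfloor$ Taylor polynomial of $F_k$ at a fixed point is only $O(r^\holder)$-accurate on such a chart.

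\textbf{The missing idea is the paper's chart radius $r=\epsilon^{1/\holder}$, with every expansion taken at the fixed chart center.}
\begin{itemize}
\item The density surrogate $\hat F_k(v)=\sum_{|\theta|\le\lfloor\holder\rfloor}a^k_\theta v^\theta$, with constants $a^k_\theta=\partial^\theta F_k(0)/\theta!$, has error $C_Fd^\holder r^\holder=O(\epsilon)$ on $\dball(0,r)$.
\item The degree-$(\holder+1)$ averaged Taylor polynomial of $\expp_{k,i}$ over $\dball(0,r)$ has $W^{1,\infty}$ error $O(r^\holder)=O(\epsilon)$. This is what gives $|\hat{\cD}_k-\cD_k|\lesssim D\epsilon\|v-v_k\|^2$ in Lemma~\ref{lemma:prop-Dk-hat}.
\end{itemize}
All $x$-dependence then enters only through $v_k(x,t)$, $x-\projm(x,t)$, $h_t$ and $\alpha_t$, and integrating over $\dball(v_k,\vradius)$ produces genuine polynomials.

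This choice also corrects your chart count. Since $h_t\le r^2$, the threshold $\thres$ contains $2\sqrt{Dh_t\log(1/\epsilon)}\lesssim\sqrt{D\log(1/\epsilon)}\,r$. Lemma~\ref{lemma:total-cover-num} then gives $|\cI(x)|\lesssim((\thres+r)/r)^d\lesssim D^{d/2}$ up to logarithms. That, not volume bookkeeping, is the source of the $D^{d/2}$ in $D^{(\degree+d)/2}$.

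A smaller point: to make the tail $O(h_t^{d/2}\epsilon)$, your localization radius must include $(d/2)\log(1/h_t)$ as well as $\log(1/\epsilon)$. Enlarging the constant does not suffice, because $h_t$ can be as small as $h_{t_0}$ independently of $\epsilon$.
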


\noindent $\bullet$ \textbf{Step 2. Implementing local polynomials by neural networks.}
Given the polynomial approximation $f(x, t)$, we use neural networks to implement it, which is represented as follows, 
\begin{align}\label{eq:nn_approx_s1}
    \bar{s}_1(x,t) 
     = \sum_{j=0}^{\degree-1} \frac{1}{ j!} \nn_{\times}^j \left( \nn_{\rm proj}^j(x, h_t, \alpha_t), \sum_{k=1}^{C_\cM}  \nn_{\times}^1\left(\nn_{\rm{det}}(x-x_k) ,\nn_{\rm poly}^{k,j}(\nn_{v_k}(x,t),h_t,\alpha_t) \right)\right).
\end{align}
There are three key subnetworks: 1) $\nn_{\rm proj}^j$ implements $[x-\projm(x,t)]^{\otimes j}$, 2) $\nn_{\rm det}$ is a chart determination network identifying index in $\cI(x)$, and 3) $\nn_{\rm poly}$ implements the Tensor-Poly. In addition, $\nn_{\times}$ is a network for approximating the multiplication operation. In the sequel, we dive deeper into the construction of these three subnetworks, with full technical details deferred to Appendix \ref{sec:small-approx}.

The three subnetworks take $h_t$, $\alpha_t$, and $v_k(x, t)$ as inputs. Because $h_t = 1-e^{-t}$ and $\alpha_t = e^{-t/2}$ are explicit univariate $C^\infty$ functions, we treat them as known quantities in our analysis.  Their neural network approximations would require only logarithmic network size relative to the approximation accuracy (see \cite[Lemma 3.3]{oko2023diffusionmodelsminimaxoptimal}), contributing negligibly to the overall network complexity. The primary challenge lies in approximating $v_k(x,t)$  defined in \eqref{eq:def-vkx}. The following
proposition proves that $v_k(x,t)$ is $C^\infty$ and thereby can also be efficiently approximated by a network $\nn_{v_k}(x,t)$ as shown in Lemma \ref{lemma:encoder-approx}.

\begin{proposition}\label{prop:proj-smooth}
   Fixing any time $t>0$, the function $v_k(\cdot, t) : \RR^D\to \RR^d$ is $C^\infty$ on $\alpha_t\cK( \cM, \reach)$ for every $x \in \alpha_t\cK( \cM, \reach)$ and every $k \in \cI(x)$.
\end{proposition}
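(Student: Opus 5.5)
We prove that $v_k(x,t)=\logg_k(\projm(x,t)/\alpha_t)$ is $C^\infty$ by writing it as a composition of three maps and checking each is $C^\infty$ where it is evaluated: the rescaling $x\mapsto x/\alpha_t$, the nearest-point projection onto $\cM$, and the log map of the $k$-th chart. First, by definition \eqref{eq:def-proj}, $\projm(x,t)=\alpha_t\,\pi_\cM(x/\alpha_t)$, where $\pi_\cM$ is the metric projection onto $\cM$. For $x\in\alpha_t\cK(\cM,\reach)$ we have $x/\alpha_t\in\cK(\cM,\reach)$. Hence
\[
v_k(x,t)=\logg_k\bigl(\pi_\cM(x/\alpha_t)\bigr).
\]
With $t$ fixed, the scaling is linear and so smooth. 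It therefore remains to show two things: $\pi_\cM$ is $C^\infty$ on the open tubular neighborhood of radius $\reach$, and $\logg_k$ is $C^\infty$ on a neighborhood of $\pi_\cM(x/\alpha_t)$ whenever $k\in\cI(x)$.

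\textbf{Smoothness of the projection.} We use the tubular neighborhood theorem together with Federer's characterization of reach. Let $N\cM=\{(y,\nu):y\in\cM,\ \nu\perp T_y\cM\}$ be the normal bundle, and define $\Phi(y,\nu)=y+\nu$. Since $\cM$ is a $C^\infty$ embedded submanifold, $N\cM$ is a smooth $D$-dimensional manifold and $\Phi$ is $C^\infty$. Two facts about reach make $\Phi$ invertible on the right set:
\begin{enumerate}
\item Every point at distance less than $\reach$ from $\cM$ has a unique nearest point $y$. The first-order optimality condition then gives $z-y\perp T_y\cM$. So $\Phi$ restricted to $\{\|\nu\|<\reach\}$ is a bijection onto the open $\reach$-neighborhood, with inverse $z\mapsto(\pi_\cM(z),\,z-\pi_\cM(z))$.
\item The differential $d\Phi$ at $(y,\nu)$ is invertible if and only if $I-\|\nu\|\,S_{\nu/\|\nu\|}$ is invertible, where $S$ is the shape operator. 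The principal curvatures are bounded by $1/\reach$ (Federer / Niyogi--Smale--Weinberger), so this operator is invertible whenever $\|\nu\|<\reach$.
\end{enumerate}
By the inverse function theorem, $\Phi^{-1}$ is then $C^\infty$, and hence so is $\pi_\cM=\mathrm{pr}_1\circ\Phi^{-1}$. This is the main obstacle. The points to handle carefully are that the curvature bound $\|S\|\le 1/\reach$ follows from positive reach, and the boundary case: if $\cK$ is taken to be closed, one works on the open neighborhood, or uses a slightly smaller radius, noting that smoothness is a local property.

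\textbf{Smoothness of the log map.} By Proposition~\ref{prop:inj-radius}, $\expp_k$ is a diffeomorphism from $\tball{x_k}(0,r)$ onto $U_k$ for any $r<\inj(\cM)$, in particular for $r=3\reach<\pi\reach$. Its inverse $\logg_k$ is therefore $C^\infty$ on the open set $U_k$, which is where Assumption~\ref{assump:exp-ball} is used. We still need $\pi_\cM(x/\alpha_t)\in U_k$ for every $k\in\cI(x)$. The plan is to show this from \eqref{eq:def-I(x)} and the localization Lemma~\ref{lemma:localization}: membership in $\cI(x)$ places the projected point within the stated Euclidean distance of $x_k$, and Assumption~\ref{assump:exp-ball} guarantees $\cB(x_k,\rratio\reach)\cap\cM\subset U_k$. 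If the distance bound from $\cI(x)$ is not sharp enough to land in that ball, the fallback is to use the geodesic-versus-Euclidean comparison under reach $\reach$, $d_\cM(y,y')\le 2\reach\arcsin(\|y-y'\|/(2\reach))$, to show the point lies within geodesic radius $3\reach$ of $x_k$, hence in $U_k$.

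Composing the three $C^\infty$ maps gives $v_k(\cdot,t)\in C^\infty$ on the stated domain. Because $\cM$ is compact, the derivatives of all orders are moreover bounded in terms of $\reach$ and $d$, which is what Lemma~\ref{lemma:encoder-approx} will need.
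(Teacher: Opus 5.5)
Your argument has the same structure as the paper's. You write $v_k(\cdot,t)=\logg_k\circ\pi_\cM\circ(\cdot/\alpha_t)$ and use two facts: the nearest-point projection is smooth on the tube, and $\logg_k$ inverts $\expp_{x_k}$ on a ball of radius $3\reach<\pi\reach\le\inj(\cM)$, so it is $C^\infty$ on its image.

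The difference is in the key step. The paper cites \cite{leobacher2020existence} for smoothness of $\projm(\cdot,t)/\alpha_t$. You reprove it from the normal-bundle map $\Phi(y,\nu)=y+\nu$, the bound $1/\reach$ on the second fundamental form, and the inverse function theorem. This makes the proof self-contained and brings out two points the paper glosses over, both of which you handle correctly:
\begin{itemize}
\item The statement must be read on the \emph{open} tube. With the paper's closed balls, $\cK(\cM,\reach)$ contains, e.g., the center of a sphere of radius $\reach$, where the projection is not unique.
\item Smoothness at $x$ only needs $\logg_k$ defined on a neighborhood of $\xstar$, and continuity of the projection supplies that neighborhood.
\end{itemize}

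The membership $\xstar\in U_k$ for $k\in\cI(x)$ is verified in Lemma~\ref{lemma:localization}, not in Lemma~\ref{lemma:clip-error}, which the paper's proof cites. That verification holds only under \eqref{cond:time} and for $x\in\cK_t(\delta)$. For large $t$ the radius $\thres$ blows up and $\cI(x)$ contains every chart. On the sphere, $x=-\alpha_t x_k$ then gives $\xstar=-x_k$, where $\logg_k$ is undefined. So ``any time $t>0$'' has to be read with the small-noise restriction, in your argument as in the paper's.

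Two small corrections.
\begin{itemize}
\item Uniqueness of the nearest point plus the first-order condition only shows that $\Phi$ maps $\{\|\nu\|<\reach\}$ onto the open tube and that $z\mapsto(\pi_\cM(z),z-\pi_\cM(z))$ is a right inverse. Injectivity of $\Phi$ there is a separate fact: a normal vector shorter than the reach projects back to its foot point \cite[Theorem~4.8]{Federer}. You do not need it. $\pi_\cM$ is continuous on the open tube by compactness and uniqueness, and a continuous right inverse of a local diffeomorphism agrees locally with the local inverses, hence is $C^\infty$.
\item Your closing remark is false as stated. Reach controls only the second fundamental form. Higher derivatives of $\pi_\cM$ and $\logg_k$ involve derivatives of the curvature, so they are not bounded in terms of $\reach$ and $d$ alone. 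Even the first derivative blows up at the edge of the tube: for the sphere of radius $\reach$ centered at $0$ in $\RR^{d+1}$, $\|D\pi_\cM(z)\|=\reach/\|z\|$. Uniform bounds hold only on sub-tubes of radius strictly less than $\reach$, such as $\cK_t(\epsilon)$, with constants that depend on the higher-order geometry of $\cM$. This is how Lemma~\ref{lemma:encoder-approx} treats them.
\end{itemize}
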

\begin{proof}
    By Lemma \ref{lemma:clip-error}, function $v_k(x, t)$ is well-defined for $x \in \alpha_t\cK( \cM, \reach)$.
    According to \cite{leobacher2020existence}, $\projm(\cdot,t)/\alpha_t$ is a $C^\infty$ function on $\alpha_t\cK( \cM, \reach)$, given that $\cM$ is a smooth manifold. By the definition of atlas in Definition \ref{def:atlas}, the log map $\logg_k$ is $C^\infty$ as well. Therefore, $v_k(\cdot, t)$ is $C^\infty$ as the composition of two $C^\infty$ functions.
\end{proof}
Now we construct three subnetworks based on the elementary network $\nn_{v_k}(x,t)$.
\begin{enumerate}
\item {\bf Projection network $\nn_{{\rm proj}}$}. As shown in Proposition \ref{prop:proj-smooth}, the projection $\projm(x,t)$ is a $C^\infty$-function of $x \in \alpha_t\cK(\cM, \reach)$. Thus each entry of $[x-\projm(x,t)]^{\otimes j}$ is $C^\infty$, which is approximated by $\nn_{{\rm proj}}^j$ entrywise (Lemma \ref{lemma:nn-proj}).

\item {\bf Chart determination network $\nn_{{\rm det}}$}. We identify relevant charts by evaluating the Euclidean distance $\|x-x_k\|$ between the noisy state and the center of each chart. Therefore, we use a neural network to approximate an indicator function $\mathds{1}\{\norm{x - x_k}^2 \leq \thres^2\}$ with $\thres$ given in \eqref{eq:def-I(x)}. This is plausible since the squared Euclidean distance can be well approximated by neural networks. See full details in Lemma~\ref{lemma:nn-chart}.

\item {\bf Tensor-Poly network $\nn_{{\rm poly}}$}. We implement each entry in Tensor-Poly$^{k, j}$ by a neural network. Aggregating these entries yields $\nn_{{\rm poly}}^{k,j}$ (Lemma \ref{lemma:nn-poly}).
\end{enumerate}

Network $\bar{s}_1$ in \eqref{eq:nn_approx_s1} approximates $s_1$. Similarly, we construct a network $\bar{s}_2$ to approximate $s_2$. Then we have $\bar{s}_{\cM} = \bar{s}_2(x,t)/(\sqrt{h_t}\bar{s}_1(x,t))$ as the approximation to the on-support score, where the division is realized by an extra network (Lemma \ref{lemma:approx-small-sm}. Combining $\bar{s}_{\cM}$ with an additive projection network yields the score estimator $\nnsmall$ in the small noise regime. We repeat {\bf Steps 1} and {\bf 2} to construct $\nnlarge$ for the large noise regime in Appendix \ref{sec:approx-large}.

\noindent $\bullet$ {\bf Step 3. Constructing a time-switching network.} To incorporate the large and small noise approximation $\bar{s}_{\rm small}$ and $\bar{s}_{\rm large}$ into a single score network, we implement a time-switching network consisting of two simple ReLU functions. For any $t \in [t_0, T]$, we define two switching function as
\begin{align*}
{\rm SW}_{\rm small}(t) &= \frac{1}{t_{\rm small} -t_{\rm large}}\relu\big( (t_{\rm small} -t_{\rm large}) - \relu(t-t_{\rm large}) + \relu(t-t_{\rm small}) \big), \\
{\rm SW}_{\rm large}(t) &= \frac{1}{t_{\rm small} -t_{\rm large}}\relu\big(  \relu(t-t_{\rm large}) - \relu(t-t_{\rm small}) \big).
\end{align*}
Recall that $t_{\rm small}$ and $t_{\rm large}$ are the thresholds for the large and small noise regimes. The shape of the switching functions is depicted in Figure~\ref{fig:nn}. Using the switching functions, we construct the score network as
\begin{align*}
\tilde{s}(x, t) = {\rm SW}_{\rm small}(t) \cdot  \nnsmall(x,t) + {\rm SW}_{\rm large}(t) \cdot \nnlarge(x,t).
\end{align*}
It can be seen that ${\rm SW}_{\rm small}(t) + {\rm SW}_{\rm large}(t) = 1$ for any $t \in [t_0, T]$. Outside the overlapping interval $[t_{\rm large}, t_{\rm small}]$, one of the switching functions is identically zero. Only within the overlapping interval, does $\tilde{s}(x, t)$ become a convex combination of the large and small noise score networks. It is straightforward to implement the multiplication operation in $\tilde{s}$. This completes the construction of a neural network that approximates the ground-truth score function.

\section{Sample Complexity of Score and Distribution Estimation}\label{sec:stat_rate}
In this section, we provide sample complexity for score function estimation using the network class $\cF$, and further establish distribution estimation guarantees using the estimated score function.

\subsection{Score Estimation}
Recall that we denote $\hat{s}$ as the empirical risk minimizer of denoising score matching in \eqref{eq:erm_score}. By choosing the score network $\cF$ as in Theorem~\ref{thm:approx}, the following theorem establishes the generalization property of $\hat{s}$.

\begin{theorem}\label{thm:generalization}[Score estimation error bound]
     Suppose Assumptions \ref{assump:manifold}-\ref{assump:density} hold. 
     Let early-stopping time $t_0 = n^{-c}$ with some constant $c>0$ and terminal time $\bigt = \cO(\log n)$.
     We choose the network class $\cF = \cF(L,W,S,B,R)$ as in Theorem \ref{thm:approx} with $\epsilon= n^{-\holder/(d+2\beta)}$, and $R= \cO\left( (n^{c} \log n)^{1/2}\right)$. For a sufficiently large $n > \reach^{-(d+2\holder)}$, it holds that
    \begin{align*}
        \frac{1}{{\bigt}-t_0} \EE_{x_0}\left[ \int_{t_0}^{\bigt} \left\| \hat{s}(x,t) - \nabla \log p_t(x) \right\|^2_{L^2(p_t)} \ud t \right] 
        \lesssim D^{ 2\degree+d+2}  n^{-\frac{2\holder}{d+2\holder}},
    \end{align*}
    where $\degree = \lceil \holder (1+\log(\reach^{d+2\holder})/\log n)^{-1}\rceil$.
\end{theorem}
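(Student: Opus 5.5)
We follow the standard bias--variance route for denoising score matching, as in \cite{oko2023diffusionmodelsminimaxoptimal, pmlr-v202-chen23o}: bound the excess population risk of the empirical minimizer $\hat{s}$ by the best approximation error within $\cF$ plus a statistical term controlled by the covering number of $\cF$.

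\textbf{Reduction to excess risk.} Since $\cL(s)-\cL(\nabla\log p_t)=\cR(s)$, the left-hand side equals $\EE[\cL(\hat{s})]-\cL^*$. We then split
\[
\EE\cL(\hat{s})-\cL^* \le \bigl(\EE\cL(\hat s)-2\EE\hat\cL(\hat s)+\cL^*\bigr)+2\bigl(\EE\hat\cL(\hat s)-\cL^*\bigr),
\]
and bound the second bracket by $2\inf_{s\in\cF}\cR(s)$, using that $\hat s$ minimizes $\hat\cL$.

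\textbf{Approximation term.} For the approximation term we invoke Theorem~\ref{thm:approx} with $\epsilon=n^{-\beta/(d+2\beta)}$. Integrating $\frac{1}{h_t}D^{2\gamma+d+2}\epsilon^2$ over $[t_0,T]$ and dividing by $T-t_0$ gives
\[
\frac{1}{T-t_0}\int_{t_0}^T h_t^{-1}\,\ud t \lesssim \frac{\log(1/t_0)+T}{T}=\cO(1),
\]
because $\log(1/t_0)=c\log n$ and $T\asymp\log n$. The resulting bias is therefore $D^{2\gamma+d+2}n^{-2\beta/(d+2\beta)}$ up to logarithmic factors. Substituting $\epsilon$ into the formula for $\gamma$ gives $\gamma=\lceil\beta(1+\log(\tau^{d+2\beta})/\log n)^{-1}\rceil$, as claimed. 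The condition $n>\tau^{-(d+2\beta)}$ keeps the denominator positive, so $\gamma$ is finite. We must also check that the clipping level $R=\cO(\sqrt{n^c\log n})$ does not destroy the approximation: the true score satisfies $\|\nabla\log p_t(x)\|\lesssim (\|x\|+\sqrt{D})/h_t$, and with Gaussian tails the truncation error is $\cO(n^{-1})$.

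\textbf{Variance term.} For the statistical term we use a uniform deviation bound over the loss class $\{\ell(\cdot;s): s\in\cF\}$. This class is unbounded because $X_t$ is Gaussian, so we first truncate to $\|X_t\|_\infty\lesssim \sqrt{\log n}$ and pay a tail cost of order $1/n$. On the truncated event the loss is bounded by roughly $R^2+D\log n/h_{t_0}\lesssim n^{c}\,\mathrm{polylog}(n)$, and it is Lipschitz in $s$. A Bernstein-type ratio inequality (the factor $2$ above) then gives
\[
\frac{\mathrm{bound}\cdot\log\mathcal N(\delta)}{n},
\]
with $\log\mathcal N(\delta)\lesssim SL\log(LWB/\delta)$ by the standard ReLU covering bound. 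Plugging in $S=\tilde\cO(\gamma D^\gamma\epsilon^{-d/\beta})$, $L=\tilde\cO(\gamma^3)$ and $\log B=\mathrm{polylog}(n)$ gives a variance term of order $D^\gamma\epsilon^{-d/\beta}/n = D^\gamma n^{-2\beta/(d+2\beta)}$, up to polylogarithmic factors.

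\textbf{Main obstacle.} The loss bound $n^{c}$ coming from $h_{t_0}^{-1}$ and from $R^2$ would spoil the rate if it entered multiplicatively. To avoid this we will treat the time integral by a time-localized argument: split $[t_0,T]$ dyadically and bound each piece by $\sup_x\|s\|^2\lesssim (\log n)/h_t$, so that the weighted loss $h_t\|\cdot\|^2$ stays polylogarithmic. The $h_t^{-1}$ factor is then integrated exactly as in the bias computation, costing only $\log n$. Combining the bias and variance bounds and absorbing logarithmic factors gives the stated rate $D^{2\gamma+d+2}n^{-2\beta/(d+2\beta)}$.
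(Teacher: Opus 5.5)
Your proposal follows essentially the same route as the paper: an ERM comparison against the approximant $\bar s$ of Theorem~\ref{thm:approx}, a Bernstein-type ratio bound using $\mathrm{Var}\le C_\ell\,\cR(s)$ (your factor $2$ plays the role of the paper's $a=1/2$), and the standard ReLU covering bound. The one point to state explicitly is that the envelope $\|s(\cdot,t)\|_\infty\lesssim\sqrt{\log n/h_t}$ does not follow from the uniform clip $R$ and must be imposed on the class, as the paper does with $\tilde\cF$, which still contains $\bar s$. Once it is imposed, the dyadic time split is unnecessary: since $\frac{1}{T-t_0}\int_{t_0}^{T}h_t^{-1}\,dt=O(1)$ when $\log(1/t_0)\lesssim T$, the whole time-averaged loss is already bounded by $O(C_R^2+D)$, as in Lemma~\ref{lemma:var-hi}.
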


The proof is provided in Appendix \ref{sec:proof-generalization}. 
Theorem \ref{thm:generalization} demonstrates the efficiency of neural networks in learning the score function and their adaptivity to the low-dimensional data structure. We provide two remarks.

\paragraph{Dependence on Dimension} The estimation error of $\hat{s}$ converges at a rate $n^{-\frac{2\beta}{d + 2\beta}}$, depending exponentially only on the intrinsic dimension $d$ and polynomially on the ambient dimension $D$. Familiar readers may find that this estimation error rate matches the mimimax rate of density estimation for $\beta$-H\"{o}lder densities \citep{tsybakov2008introduction}. An intriguing fact about score function estimation is that it inherits the regularity of the clean data distribution, as the score function itself does not directly verify regularity conditions.

\paragraph{Sample Size Depending on Curvature} 
The requirement on the sample size $n > \reach^{-(d+2\beta)}$ is nontrivial when the reach is small, i.e., $\reach < 1$. In the case of $\reach \geq 1$, Theorem~\ref{thm:generalization} holds for any sample size $n$. Consistent with previous discussions, a small $\reach$ induces complicated local structures on the manifold, making it fundamentally challenging for learning. Therefore, the sample size needs to scale with the reach to ensure a plausible estimation.

\subsection{Distribution Estimation}

We transfer the score estimation guarantee to distribution estimation guarantee, since the backward process \eqref{eq:backward_practice} is solely driven by the learned score function:
\begin{align*}
\diff  \hat{Y}_t & = \left[\frac{1}{2} \hat{Y}_t + \hat{s}(\hat{Y}_{t}, T-t)\right] \diff t + \diff \overline{B}_t, \quad \hat{Y}_{0} \sim {\sf N}(0, I_D).
\end{align*}
The backward process is terminated at an early-stopping time $T - t_0$. We denote the estimated distribution $\hat{P}$ as the distribution of $\hat{Y}_{T-t_0}$. The following result bound the deviation of $\hat{P}$ to $\dist$ in Wasserstein distance.
\begin{theorem}[Distribution Recovery Results]\label{thm:distribution}
    Suppose Assumptions \ref{assump:manifold}-\ref{assump:density} hold. Let $t_0 =  n^{-\frac{2(\holder+1)}{d+2\holder}}$, $\bigt= \log n$ and $\degree = \lceil \holder (1+\log(\reach^{d+2\holder})/\log n)^{-1}\rceil$. Then for $n > \reach^{-(d+2\holder)}$, it holds that
    \begin{align*}
        \EE[W_1(\hat{P},P_{\rm data})] \lesssim D^{ \degree+d/2+1}  n^{-\frac{\holder+1}{d+2\holder}}  .
    \end{align*}
\end{theorem}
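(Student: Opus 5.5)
We will bound $W_1(\hat{P},\dist)$ by a triangle inequality through the intermediate laws $P_{t_0}$ (the forward marginal at the early-stopping time) and $\tilde{P}$, the law at time $T-t_0$ of the backward process driven by the true score but initialized at ${\sf N}(0,I_D)$. This splits the error into three terms:
\begin{align*}
W_1(\hat{P},\dist) \le W_1(\dist,P_{t_0}) + W_1(P_{t_0},\tilde{P}) + W_1(\tilde{P},\hat{P}).
\end{align*}

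\textbf{Early-stopping error.} We couple $X_0\sim\dist$ with $X_{t_0}=\alpha_{t_0}X_0+\sqrt{h_{t_0}}Z$. This gives $W_1(\dist,P_{t_0})\le (1-\alpha_{t_0})\EE\|X_0\|+\sqrt{h_{t_0}}\,\EE\|Z\|\lesssim t_0\sqrt{D}B_\cM+\sqrt{Dt_0}$. With $t_0=n^{-2(\beta+1)/(d+2\beta)}$, we get $\sqrt{t_0}=n^{-(\beta+1)/(d+2\beta)}$, so this term is $\lesssim \sqrt{D}\,n^{-(\beta+1)/(d+2\beta)}$, which is of the target order.

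\textbf{Initialization and score errors.} The standard route, following \citet{pmlr-v202-chen23o} and \citet{oko2023diffusionmodelsminimaxoptimal}, is to bound TV or KL via Girsanov and then convert to $W_1$. Because $\hat{P}$ and $P_{t_0}$ have effectively bounded support, we truncate to a ball of radius $R_0=\tilde{\cO}(\sqrt{D})$, where Gaussian tails make the remainder $\cO(n^{-1})$. This yields $W_1\lesssim R_0\,{\rm TV}$ plus negligible terms.

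For the initialization mismatch, data processing and Pinsker give ${\rm TV}(P_{t_0},\tilde{P})\le {\rm TV}(P_T,{\sf N}(0,I))\lesssim e^{-T}\sqrt{D}$, which is $\lesssim \sqrt{D}/n$ when $T=\log n$.

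For the score error, Girsanov gives ${\rm KL}\le \frac12\int_{t_0}^T\|\hat{s}-\nabla\log p_t\|^2_{L^2(P_t)}\,\ud t$, and Theorem~\ref{thm:generalization} bounds this by $(T-t_0)D^{2\gamma+d+2}n^{-2\beta/(d+2\beta)}$. However, the resulting TV bound $D^{\gamma+d/2+1}n^{-\beta/(d+2\beta)}$ misses the target rate by a factor $n^{1/(d+2\beta)}$.

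To gain the extra power, we split the time interval at a threshold $t_1$ in the spirit of \citet{oko2023diffusionmodelsminimaxoptimal}. On $[t_0,t_1]$ with $t_1$ small, the processes move only $\cO(\sqrt{t_1})$ in $W_1$, so the score error there is paid in $W_1$ directly rather than in TV. Concretely, we would run a time-localized version of Theorem~\ref{thm:generalization}: the approximation error from Theorem~\ref{thm:approx} carries the factor $1/h_t$, and the variance term scales with the interval length. This makes the error over $[t_0,t_1]$ contribute roughly $\sqrt{t_1}$ times the rate, while the error over $[t_1,T]$ is converted via TV times an effective radius $\sqrt{h_{t_1}}$-related contraction. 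Balancing $t_1$ should produce $n^{-(\beta+1)/(d+2\beta)}$ with the factor $D^{\gamma+d/2+1}$, since we take the square root of $D^{2\gamma+d+2}$. We expect this refined, time-stratified estimation step to be the main obstacle. It requires redoing the covering-number and generalization argument with a per-interval network class, together with a careful use of the $1/h_t$ weighting in the approximation bound.

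The remaining steps are routine. We substitute $\epsilon=n^{-\beta/(d+2\beta)}$ and note that the condition $n>\tau^{-(d+2\beta)}$ gives the stated $\gamma$. Summing the three contributions then yields the claim.
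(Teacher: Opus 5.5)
Your decomposition and early-stopping bound match the paper. You also correctly see that plain Girsanov plus Pinsker loses a factor $n^{1/(d+2\beta)}$. The gap is the step meant to recover that factor: as described, it cannot work.

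\textbf{Why the time split does not help as stated.} In the localization argument, score error on a block $[t_k,t_{k+1}]$ is paid in $W_1$ at roughly $\sqrt{D h_{t_{k+1}}}$ (up to logarithms) times the square root of the integrated error on that block. The reason is that the two processes can be coupled to agree at $t_{k+1}$, and the backward dynamics from $t_{k+1}$ to $t_0$ only moves mass by about $\sqrt{D h_{t_{k+1}}}$. So the weight is $\sqrt{h_t}$ at the time $t$ where the error enters, not $\sqrt{h_{t_1}}$. A TV discrepancy created at a time $t\ge t_1$ can put the two processes at distant points of the inflated manifold, and the backward flow has no general contraction that shrinks this to scale $\sqrt{h_{t_1}}$.

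Suppose your only input is Theorems~\ref{thm:approx} and~\ref{thm:generalization}, i.e.\ an error of order $D^{2\gamma+d+2}\epsilon^2/h_t$ uniformly in $t$. Then the blocks with $t\gtrsim 1$ alone already give a bound of order $D^{\gamma+d/2+1}n^{-\beta/(d+2\beta)}$, whatever $t_1$ is, so there is nothing to balance. As a sanity check, if the whole $[t_1,T]$ error really were weighted by $\sqrt{h_{t_1}}$, you could let $t_1\downarrow t_0$ and beat the minimax lower bound.

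\textbf{The missing idea.} The score is easier in the large-noise regime. In the paper's bound, the factor $n^{-2/(d+2\beta)}=\epsilon^{2/\beta}=h_{t_{\rm small}}$ under the first square root is exactly the weight at the small/large-noise threshold. The second term, $n^{-(\beta+1)/(d+2\beta)}=\epsilon^{(\beta+1)/\beta}$, corresponds to Lemma~\ref{lemma:approx-large}, whose error is $\frac{1}{h_t}\big((\epsilon^{2/\beta}/\tau^2)^{\gamma_0}+\epsilon^{2(\beta+1)/\beta}\big)$ rather than $\epsilon^2/h_t$. The paper does not rederive the block-wise $W_1$ bookkeeping; it imports it from Theorem D.7 of \citet{oko2023diffusionmodelsminimaxoptimal}.

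\textbf{What your route would still need.} The improvement on $[t_{\rm small},T]$ must hold for the estimation error, not only the approximation error. For a single empirical risk minimizer over the class of Theorem~\ref{thm:approx} (size $\tilde{\cO}(D^{\gamma}\epsilon^{-d/\beta})$, the same as in Lemma~\ref{lemma:approx-large}), the available generalization bound gives only $n^{-2\beta/(d+2\beta)}$ on any time window. So the per-interval network class you mention is needed for a different reason than the $1/h_t$ weighting. You need time-localized estimators whose large-noise networks shrink with $h_t$, with size of order $h_t^{-d/2}$ up to polylogarithmic and $D$-dependent factors. Then $\sum_k \sqrt{h_{t_{k+1}}}\,\big(h_{t_k}^{-d/2}/n\big)^{1/2}$ over $t_k\ge t_{\rm small}$ is dominated by $t_k\approx t_{\rm small}$ and is of order $n^{-(\beta+1)/(d+2\beta)}$ for $d>2$.

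\textbf{Two smaller points.}
\begin{itemize}
\item The Girsanov bound in terms of $\|\hat s-\nabla\log p_t\|_{L^2(P_t)}$ needs the true-score process started at $P_T$ as reference. Order the triangle inequality as the paper does: compare estimated dynamics from ${\sf N}(0,I_D)$ and from $P_T$ by data processing, then apply Girsanov against the true process. Do not compare against the true-score process started from ${\sf N}(0,I_D)$, whose marginals are not $P_t$.
\item Pinsker gives $\mathrm{TV}(P_T,{\sf N}(0,I_D))\lesssim \sqrt{D}B_\cM e^{-T/2}$, not $e^{-T}\sqrt{D}$. With $T=\log n$ this is of order $n^{-1/2}$, which lies within the target rate only for $d\ge 2$.
\end{itemize}
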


The proof is given in Appendix \ref{sec:dist}.
The distribution recovery rate matches the minimax optimal rate for learning a $\holder$-H\"older distribution supported on  $[0,1]^d$, which is a special case of $d$-dimensional manifold.
\begin{proposition}[Theorem 3 in \cite{niles2022minimax}] Let $\cP^{\beta}_d$ be a class of distributions satisfying Assumptions \ref{assump:manifold} and \ref{assump:density} with $\cM = [0, 1]^d$. Then for any $\holder>0$ and $d\geq 2$, we have $$ \inf_{\hat{P}_n}\sup_{p \in \cP^{\beta}_d} \EE\left[W_1\left(\hat{P}_n, P \right) \right]  \gtrsim n^{-\frac{\holder+1}{d+2 \holder}},$$
where the infimum is taken over all estimators $\hat{P}_n$ based on $n$ observations.
\end{proposition}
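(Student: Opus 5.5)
The plan is to prove the proposition with Assouad's lemma applied to a hypercube of local bump perturbations of the uniform density on $\cM=[0,1]^d$. This cube trivially satisfies Assumption~\ref{assump:manifold}, with flat geometry and a single chart in which the exponential map is the identity. Fix a bandwidth $h=1/m$ and partition $[0,1]^d$ into $m^d$ cubes $Q_j$ of side $h$ with centers $z_j$. Choose a $C^\infty$ function $\psi$ supported in $(-1/2,1/2)^d$ with $\int\psi=0$, and a $1$-Lipschitz $C^\infty$ function $\phi$ supported in the same cube with $\kappa:=\int\phi\psi>0$; taking $\phi=\psi/\lip(\psi)$ works. For $\omega\in\{\pm1\}^{m^d}$ set
\begin{align*}
p_\omega(x)=1+c_0h^{\holder}\sum_j\omega_j\,\psi\big((x-z_j)/h\big).
\end{align*}
Since $\int\psi=0$, each $p_\omega$ is a probability density. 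For $c_0$ small, $1/2\le p_\omega\le 3/2$. Its $\cH^\holder$ norm is bounded uniformly in $h$ because the bumps have disjoint supports and $h^{\holder}\partial^u\psi(\cdot/h)$ scales correctly. Hence every $P_\omega$ lies in $\cP^\holder_d$.

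\textbf{Separation in $W_1$.} I will use Kantorovich duality with the test function $f_\omega(x)=-\sum_j\omega_j h\,\phi((x-z_j)/h)$. The bumps vanish on the cell boundaries, so $f_\omega$ is $1$-Lipschitz. Moreover $\int h\phi_j\,dP_\omega=\omega_j c_0\kappa h^{\holder+1+d}$ and $\int h\phi_j\,d\mathrm{Unif}=0$, where $\phi_j=\phi((\cdot-z_j)/h)$. For an arbitrary estimator $\hat P_n$ this gives
\begin{align*}
W_1(\hat P_n,P_\omega)\ \ge\ \sum_j \big(c_0\kappa h^{\holder+1+d}-\omega_j a_j\big),\qquad a_j:=\int h\phi_j\,d\hat P_n .
\end{align*}
Define $\hat\omega_j=\mathrm{sign}(a_j)$. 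Whenever $\hat\omega_j\neq\omega_j$, the $j$-th summand is at least $c_0\kappa h^{\holder+1+d}$. Whenever $\hat\omega_j=\omega_j$, the summand $c_0\kappa h^{\holder+1+d}-|a_j|$ may be negative, and that is a problem for an additive Assouad loss. To fix it, I will first reduce to estimators with $|a_j|\le c_0\kappa h^{\holder+1+d}$ for every $j$, by replacing $\hat P_n$ with its $W_1$-projection onto the convex hull of $\{P_\omega\}$. This costs at most a factor $2$ by the triangle inequality, and for measures in the hull one computes $a_j=c_0\kappa h^{\holder+1+d}\bar\omega_j$ with $\bar\omega_j\in[-1,1]$, so every summand is nonnegative. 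After the reduction,
\begin{align*}
W_1(\hat P_n,P_\omega)\gtrsim h^{\holder+1+d}\sum_j\mathds{1}\{\hat\omega_j\ne\omega_j\},
\end{align*}
which is exactly the separable Hamming-type loss Assouad's lemma needs.

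\textbf{Information bound and conclusion.} For $\omega,\omega'$ differing in one coordinate, $\chi^2(P_\omega,P_{\omega'})\lesssim\int(p_\omega-p_{\omega'})^2\lesssim h^{2\holder+d}$, using $p_{\omega'}\ge1/2$. Hence $\mathrm{KL}(P_\omega^{\otimes n},P_{\omega'}^{\otimes n})\lesssim nh^{2\holder+d}$. Choosing $h\asymp n^{-1/(d+2\holder)}$ keeps this below a constant, so the affinity between neighboring product measures is bounded below. Assouad's lemma then gives
\begin{align*}
\sup_\omega\EE W_1(\hat P_n,P_\omega)\gtrsim m^d\,h^{\holder+1+d}=h^{\holder+1}=n^{-\frac{\holder+1}{d+2\holder}}.
\end{align*}

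I expect the main obstacle to be the reduction step, that is, turning the non-additive $W_1$ loss into a coordinatewise loss. The projection-onto-the-hull argument above is what I would try first. The role of $d\ge2$ is in comparing with the parametric-type term $n^{-1/2}$ coming from the mean: for $d\ge2$ one has $(\holder+1)/(d+2\holder)\le 1/2$, so the bump rate is the dominant, nontrivial one. The proof should note this, together with the requirement that $m^d$ be large.
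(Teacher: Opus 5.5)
Your argument is correct. The bump hypercube stays in the class, the test function built from the same bumps is $1$-Lipschitz, the reduction to a Hamming-type loss holds, and the $\chi^2$/Assouad step yields $h^{\beta+1}\asymp n^{-(\beta+1)/(d+2\beta)}$.

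The paper does not prove this proposition; it quotes it from \cite{niles2022minimax}. So there is no in-paper argument to compare with, and your hypercube/Assouad construction is the standard way to obtain such a lower bound.

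The projection of $\hat P_n$ onto the convex hull of $\{P_\omega\}$ is valid but unnecessary, and it raises a minor measurable-selection issue. For every sign vector $\sigma$, the function $\sum_j \sigma_j h\phi_j$ is $1$-Lipschitz for the same reason $f_\omega$ is. Optimizing over $\sigma$ gives directly
\[
W_1(\hat P_n,P_\omega)\ \ge\ \sum_j \bigl|a_j-\omega_j c_0\kappa h^{\beta+1+d}\bigr|\ \ge\ c_0\kappa h^{\beta+1+d}\sum_j \mathbf{1}\{\mathrm{sign}(a_j)\neq\omega_j\}.
\]
This is already coordinatewise separable, with no constraint on $|a_j|$ needed.

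Your reading of the hypothesis $d\ge 2$ is right: the lower-bound argument never uses it and goes through verbatim for $d=1$. The hypothesis only ensures that this rate is not dominated by the $n^{-1/2}$ parametric term, so it matters for sharpness, not validity.
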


Our theory shows that neural networks efficiently learn the score function, which in turn enables optimal distribution recovery. It validates that the generated samples will faithfully reflect both the geometric structure of the manifold and the distribution of the original data.

\section{Conclusion}\label{sec:conclusion}
In this paper, we have provided a theoretical framework for understanding diffusion models for low-dimensional data. By analyzing the score function's structure and its dependence on the curvature, we have derived approximation and estimation guarantees. Our results demonstrate that diffusion models circumvent the curse of dimensionality by adapting to the underlying structures of the data, with convergence rates governed by intrinsic dimension and manifold curvature. These insights not only explain why diffusion models excel in practice, but also highlight the importance of tailoring architectures to data structure.

\bibliography{ref,ref2}

@article{li2026scoreslearngeometryrate,
      title={When Scores Learn Geometry: Rate Separations under the Manifold Hypothesis}, 
      author={Xiang Li and Zebang Shen and Ya-Ping Hsieh and Niao He},
      year={2026},
      journal={arXiv preprint arXiv:2509.24912}
}

@article{chakraborty2026generalization,
  title={Generalization Properties of Score-matching Diffusion Models for Intrinsically Low-dimensional Data},
  author={Chakraborty, Saptarshi and Berthet, Quentin and Bartlett, Peter L},
  journal={arXiv preprint arXiv:2603.03700},
  year={2026}
}

@inproceedings{li2024towards,
  title={Towards non-asymptotic convergence for diffusion-based generative models},
  author={Li, Gen and Wei, Yuting and Chen, Yuxin and Chi, Yuejie},
  booktitle={The Twelfth International Conference on Learning Representations},
  year={2024}
}

@article{li2024adapting,
  title={Adapting to unknown low-dimensional structures in score-based diffusion models},
  author={Li, Gen and Yan, Yuling},
  journal={Advances in Neural Information Processing Systems},
  volume={37},
  pages={126297--126331},
  year={2024}
}

@article{dou2024optimal,
  title={From optimal score matching to optimal sampling},
  author={Dou, Zehao and Kotekal, Subhodh and Xu, Zhehao and Zhou, Harrison H},
  journal={arXiv preprint arXiv:2409.07032},
  year={2024}
}

@article{huang2024denoising,
  title={Denoising diffusion probabilistic models are optimally adaptive to unknown low dimensionality},
  author={Huang, Zhihan and Wei, Yuting and Chen, Yuxin},
  journal={arXiv preprint arXiv:2410.18784},
  year={2024}
}

@article{pidstrigach2022score,
  title={Score-based generative models detect manifolds},
  author={Pidstrigach, Jakiw},
  journal={Advances in Neural Information Processing Systems},
  volume={35},
  pages={35852--35865},
  year={2022}
}

@article{farghly2025diffusion,
  title={Diffusion models and the manifold hypothesis: Log-domain smoothing is geometry adaptive},
  author={Farghly, Tyler and Potaptchik, Peter and Howard, Samuel and Deligiannidis, George and Pidstrigach, Jakiw},
  journal={arXiv preprint arXiv:2510.02305},
  year={2025}
}

@article{guo2024diffusion,
  title={Diffusion models in bioinformatics and computational biology},
  author={Guo, Zhiye and Liu, Jian and Wang, Yanli and Chen, Mengrui and Wang, Duolin and Xu, Dong and Cheng, Jianlin},
  journal={Nature reviews bioengineering},
  volume={2},
  number={2},
  pages={136--154},
  year={2024},
  publisher={Nature Publishing Group UK London}
}

@article{watson2023novo,
  title={De novo design of protein structure and function with RFdiffusion},
  author={Watson, Joseph L and Juergens, David and Bennett, Nathaniel R and Trippe, Brian L and Yim, Jason and Eisenach, Helen E and Ahern, Woody and Borst, Andrew J and Ragotte, Robert J and Milles, Lukas F and others},
  journal={Nature},
  volume={620},
  number={7976},
  pages={1089--1100},
  year={2023},
  publisher={Nature Publishing Group UK London}
}

@article{austin2021structured,
  title={Structured denoising diffusion models in discrete state-spaces},
  author={Austin, Jacob and Johnson, Daniel D and Ho, Jonathan and Tarlow, Daniel and Van Den Berg, Rianne},
  journal={Advances in neural information processing systems},
  volume={34},
  pages={17981--17993},
  year={2021}
}

@article{nie2025large,
  title={Large language diffusion models},
  author={Nie, Shen and Zhu, Fengqi and You, Zebin and Zhang, Xiaolu and Ou, Jingyang and Hu, Jun and Zhou, Jun and Lin, Yankai and Wen, Ji-Rong and Li, Chongxuan},
  journal={arXiv preprint arXiv:2502.09992},
  year={2025}
}

@article{lou2023discrete,
  title={Discrete diffusion modeling by estimating the ratios of the data distribution},
  author={Lou, Aaron and Meng, Chenlin and Ermon, Stefano},
  journal={arXiv preprint arXiv:2310.16834},
  year={2023}
}

@article{yang2023diffsound,
  title={Diffsound: Discrete diffusion model for text-to-sound generation},
  author={Yang, Dongchao and Yu, Jianwei and Wang, Helin and Wang, Wen and Weng, Chao and Zou, Yuexian and Yu, Dong},
  journal={IEEE/ACM Transactions on Audio, Speech, and Language Processing},
  volume={31},
  pages={1720--1733},
  year={2023},
  publisher={IEEE}
}

@article{kong2020diffwave,
  title={Diffwave: A versatile diffusion model for audio synthesis},
  author={Kong, Zhifeng and Ping, Wei and Huang, Jiaji and Zhao, Kexin and Catanzaro, Bryan},
  journal={arXiv preprint arXiv:2009.09761},
  year={2020}
}

@article{dhariwal2021diffusion,
  title={Diffusion models beat gans on image synthesis},
  author={Dhariwal, Prafulla and Nichol, Alexander},
  journal={Advances in neural information processing systems},
  volume={34},
  pages={8780--8794},
  year={2021}
}

@article{huang2025diffusion,
  title={Diffusion model-based image editing: A survey},
  author={Huang, Yi and Huang, Jiancheng and Liu, Yifan and Yan, Mingfu and Lv, Jiaxi and Liu, Jianzhuang and Xiong, Wei and Zhang, He and Cao, Liangliang and Chen, Shifeng},
  journal={IEEE Transactions on Pattern Analysis and Machine Intelligence},
  year={2025},
  publisher={IEEE}
}

@article{chan2024tutorial,
  title={Tutorial on diffusion models for imaging and vision},
  author={Chan, Stanley},
  journal={Foundations and Trends in Computer Graphics and Vision},
  volume={16},
  number={4},
  pages={322--471},
  year={2024},
  publisher={Emerald Publishing Limited}
}

@article{song2020improved,
  title={Improved techniques for training score-based generative models},
  author={Song, Yang and Ermon, Stefano},
  journal={Advances in neural information processing systems},
  volume={33},
  pages={12438--12448},
  year={2020}
}

@book{brenner2008mathematical,
  title={The mathematical theory of finite element methods},
  author={Brenner, Susanne C},
  year={2008},
  publisher={Springer}
}

@inproceedings{
suh2023approximation,
title={Approximation and non-parametric estimation of functions over high-dimensional spheres via deep Re{LU} networks},
author={Namjoon Suh and Tian-Yi Zhou and Xiaoming Huo},
booktitle={The Eleventh International Conference on Learning Representations },
year={2023},
}

@article{niles2022minimax,
  title={Minimax estimation of smooth densities in Wasserstein distance},
  author={Niles-Weed, Jonathan and Berthet, Quentin},
  journal={The Annals of Statistics},
  year={2022},
  publisher={Institute of Mathematical Statistics}
}

@book{lee2018introduction,
  title={Introduction to Riemannian manifolds},
  author={Lee, John M},
  year={2018},
  publisher={Springer}
}

@article{10.1093/imaiai/iaad018,
    author = {Wang, Jie and Chen, Minshuo and Zhao, Tuo and Liao, Wenjing and Xie, Yao},
    title = {A manifold two-sample test study: integral probability metric with neural networks},
    journal = {Information and Inference: A Journal of the IMA},
    year = {2023},
    month = {06},
    doi = {10.1093/imaiai/iaad018},
}

@article{kaplan2020scaling,
  title={Scaling Laws for Neural Language Models},
  author={Kaplan, Jared and McCandlish, Sam and Henighan, Tom and Brown, Tom B and Chess, Benjamin and Child, Rewon and Gray, Scott and Radford, Alec and Wu, Jeffrey and Amodei, Dario},
  journal={arXiv preprint arXiv:2001.08361},
  year={2020}
}

@article{openai2020scaling,
  title={Scaling Laws for Autoregressive Generative Modeling},
  author={Tom Henighan and Jared Kaplan and Mor Katz and Mark Chen and Christopher Hesse and Jacob Jackson and Heewoo Jun and Tom B. Brown and Prafulla Dhariwal and Scott Gray and Chris Hallacy and Benjamin Mann and Alec Radford and Aditya Ramesh and Nick Ryder and Daniel M. Ziegler and John Schulman and Dario Amodei and Sam McCandlish},
  journal={arXiv preprint arXiv:2010.14701},
  year={2020}
}

@article{saharia2022photorealistic,
  title={Photorealistic text-to-image diffusion models with deep language understanding},
  author={Saharia, Chitwan and Chan, William and Saxena, Saurabh and Li, Lala and Whang, Jay and Denton, Emily L and Ghasemipour, Kamyar and Gontijo Lopes, Raphael and Karagol Ayan, Burcu and Salimans, Tim and others},
  journal={Advances in neural information processing systems},
  volume={35},
  pages={36479--36494},
  year={2022}
}

@article{aithal2024understanding,
  title={Understanding hallucinations in diffusion models through mode interpolation},
  author={Aithal, Sumukh K and Maini, Pratyush and Lipton, Zachary and Kolter, J Zico},
  journal={Advances in Neural Information Processing Systems},
  volume={37},
  pages={134614--134644},
  year={2024}
}

@inproceedings{lu2024handrefiner,
  title={Handrefiner: Refining malformed hands in generated images by diffusion-based conditional inpainting},
  author={Lu, Wenquan and Xu, Yufei and Zhang, Jing and Wang, Chaoyue and Tao, Dacheng},
  booktitle={Proceedings of the 32nd ACM International Conference on Multimedia},
  pages={7085--7093},
  year={2024}
}

@article{lu2025towards,
  title={Towards understanding text hallucination of diffusion models via local generation bias},
  author={Lu, Rui and Wang, Runzhe and Lyu, Kaifeng and Jiang, Xitai and Huang, Gao and Wang, Mengdi},
  journal={arXiv preprint arXiv:2503.03595},
  year={2025}
}

@article{ho2020denoising,
  title={Denoising diffusion probabilistic models},
  author={Ho, Jonathan and Jain, Ajay and Abbeel, Pieter},
  journal={Advances in neural information processing systems},
  volume={33},
  pages={6840--6851},
  year={2020}
}

@article{anderson1982reverse,
  title={Reverse-time diffusion equation models},
  author={Anderson, Brian DO},
  journal={Stochastic Processes and their Applications},
  volume={12},
  number={3},
  pages={313--326},
  year={1982},
  publisher={Elsevier}
}

@article{haussmann1986time,
  title={Time reversal of diffusions},
  author={Haussmann, Ulrich G and Pardoux, Etienne},
  journal={The Annals of Probability},
  pages={1188--1205},
  year={1986},
  publisher={JSTOR}
}

@article{croitoru2023diffusion,
  title={Diffusion models in vision: A survey},
  author={Croitoru, Florinel-Alin and Hondru, Vlad and Ionescu, Radu Tudor and Shah, Mubarak},
  journal={IEEE Transactions on Pattern Analysis and Machine Intelligence},
  volume={45},
  number={9},
  pages={10850--10869},
  year={2023},
  publisher={IEEE}
}

@article{yang2023diffusion,
  title={Diffusion models: A comprehensive survey of methods and applications},
  author={Yang, Ling and Zhang, Zhilong and Song, Yang and Hong, Shenda and Xu, Runsheng and Zhao, Yue and Zhang, Wentao and Cui, Bin and Yang, Ming-Hsuan},
  journal={ACM Computing Surveys},
  volume={56},
  number={4},
  pages={1--39},
  year={2023},
  publisher={ACM New York, NY, USA}
}

@inproceedings{rombach2022high,
  title={High-resolution image synthesis with latent diffusion models},
  author={Rombach, Robin and Blattmann, Andreas and Lorenz, Dominik and Esser, Patrick and Ommer, Bj{\"o}rn},
  booktitle={Proceedings of the IEEE/CVF conference on computer vision and pattern recognition},
  pages={10684--10695},
  year={2022}
}

@article{tang2024score,
  title={Score-based Diffusion Models via Stochastic Differential Equations--a Technical Tutorial},
  author={Tang, Wenpin and Zhao, Hanyang},
  journal={arXiv preprint arXiv:2402.07487},
  year={2024}
}

@article{chen2024opportunities,
  title={Opportunities and challenges of diffusion models for generative AI},
  author={Chen, Minshuo and Mei, Song and Fan, Jianqing and Wang, Mengdi},
  journal={National Science Review},
  volume={11},
  number={12},
  pages={nwae348},
  year={2024},
  publisher={Oxford University Press}
}

@article{song2019generative,
  title={Generative modeling by estimating gradients of the data distribution},
  author={Song, Yang and Ermon, Stefano},
  journal={Advances in neural information processing systems},
  volume={32},
  year={2019}
}

@article{de2022convergence,
  title={Convergence of denoising diffusion models under the manifold hypothesis},
  author={De Bortoli, Valentin},
  journal={arXiv preprint arXiv:2208.05314},
  year={2022}
}

@article{azangulov2024convergence,
  title={Convergence of diffusion models under the manifold hypothesis in high-dimensions},
  author={Azangulov, Iskander and Deligiannidis, George and Rousseau, Judith},
  journal={arXiv preprint arXiv:2409.18804},
  year={2024}
}

@article{yakovlev2025generalization,
  title={Generalization error bound for denoising score matching under relaxed manifold assumption},
  author={Yakovlev, Konstantin and Puchkin, Nikita},
  journal={arXiv preprint arXiv:2502.13662},
  year={2025}
}

@InProceedings{pmlr-v238-tang24a,
  title = 	 {Adaptivity of Diffusion Models to Manifold Structures},
  author =       {Tang, Rong and Yang, Yun},
  booktitle = 	 {Proceedings of The 27th International Conference on Artificial Intelligence and Statistics},
  year = 	 {2024},
  series = 	 {Proceedings of Machine Learning Research},
  publisher =    {PMLR},
}

@article{block2020generative,
  title={Generative modeling with denoising auto-encoders and langevin sampling},
  author={Block, Adam and Mroueh, Youssef and Rakhlin, Alexander},
  journal={arXiv preprint arXiv:2002.00107},
  year={2020}
}

@article{chen2022sampling,
  title={Sampling is as easy as learning the score: theory for diffusion models with minimal data assumptions},
  author={Chen, Sitan and Chewi, Sinho and Li, Jerry and Li, Yuanzhi and Salim, Adil and Zhang, Anru R},
  journal={arXiv preprint arXiv:2209.11215},
  year={2022}
}

@inproceedings{lee2023convergence,
  title={Convergence of score-based generative modeling for general data distributions},
  author={Lee, Holden and Lu, Jianfeng and Tan, Yixin},
  booktitle={International Conference on Algorithmic Learning Theory},
  year={2023},
  organization={PMLR}
}

@article{chen2023probability,
  title={The probability flow ode is provably fast},
  author={Chen, Sitan and Chewi, Sinho and Lee, Holden and Li, Yuanzhi and Lu, Jianfeng and Salim, Adil},
  journal={Advances in Neural Information Processing Systems},
  volume={36},
  pages={68552--68575},
  year={2023}
}

@article{benton2023nearly,
  title={Nearly $ d $-linear convergence bounds for diffusion models via stochastic localization},
  author={Benton, Joe and De Bortoli, Valentin and Doucet, Arnaud and Deligiannidis, George},
  journal={arXiv preprint arXiv:2308.03686},
  year={2023}
}

@InProceedings{pmlr-v202-chen23o,
  title = 	 {Score Approximation, Estimation and Distribution Recovery of Diffusion Models on Low-Dimensional Data},
  author =       {Chen, Minshuo and Huang, Kaixuan and Zhao, Tuo and Wang, Mengdi},
  booktitle = 	 {Proceedings of the 40th International Conference on Machine Learning},
  pages = 	 {4672--4712},
  year = 	 {2023}
}

@misc{oko2023diffusionmodelsminimaxoptimal,
      title={Diffusion Models are Minimax Optimal Distribution Estimators}, 
      author={Kazusato Oko and Shunta Akiyama and Taiji Suzuki},
      year={2023},
      eprint={2303.01861},
      archivePrefix={arXiv},
}

@misc{chen2022distribution,
      title={Distribution Approximation and Statistical Estimation Guarantees of Generative Adversarial Networks}, 
      author={Minshuo Chen and Wenjing Liao and Hongyuan Zha and Tuo Zhao},
      year={2022},
      eprint={2002.03938},
      archivePrefix={arXiv},
      primaryClass={id='cs.LG' full_name='Machine Learning' is_active=True alt_name=None in_archive='cs' is_general=False description='Papers on all aspects of machine learning research (supervised, unsupervised, reinforcement learning, bandit problems, and so on) including also robustness, explanation, fairness, and methodology. cs.LG is also an appropriate primary category for applications of machine learning methods.'}
}

@article{mei2025deep,
  title={Deep networks as denoising algorithms: Sample-efficient learning of diffusion models in high-dimensional graphical models},
  author={Mei, Song and Wu, Yuchen},
  journal={IEEE Transactions on Information Theory},
  year={2025},
  publisher={IEEE}
}

@article{chen2022nonparametric,
  title={Nonparametric regression on low-dimensional manifolds using deep ReLU networks: Function approximation and statistical recovery},
  author={Chen, Minshuo and Jiang, Haoming and Liao, Wenjing and Zhao, Tuo},
  journal={Information and Inference: A Journal of the IMA},
  volume={11},
  number={4},
  pages={1203--1253},
  year={2022},
  publisher={Oxford University Press}
}

@misc{leobacher2020existence,
      title={Existence, Uniqueness and Regularity of the Projection onto Differentiable Manifolds}, 
      author={Gunther Leobacher and Alexander Steinicke},
      year={2020},
      eprint={1811.10578},
      archivePrefix={arXiv},
      primaryClass={math.DG}
}

@article{hyvarinen2005estimation,
  title={Estimation of non-normalized statistical models by score matching.},
  author={Hyv{\"a}rinen, Aapo and Dayan, Peter},
  journal={Journal of Machine Learning Research},
  volume={6},
  number={4},
  year={2005}
}

@inproceedings{song2020sliced,
  title={Sliced score matching: A scalable approach to density and score estimation},
  author={Song, Yang and Garg, Sahaj and Shi, Jiaxin and Ermon, Stefano},
  booktitle={Uncertainty in Artificial Intelligence},
  pages={574--584},
  year={2020},
  organization={PMLR}
}

@article{AL,
	title        = {Nonasymptotic rates for manifold, tangent space and curvature estimation},
	author       = {Eddie Aamari and Cl{\'e}ment Levrard},
	year         = 2019,
	journal      = {The Annals of Statistics}
}

@article{article,
	title        = {Starshaped sets},
	author       = {Hansen, G. and Herburt, Irmina and Martini, H. and Moszyńska, M.},
	year         = 2020,
	month        = 12,
	journal      = {Aequationes mathematicae},
	volume       = 94,
	doi          = {10.1007/s00010-020-00720-7}
}

@book{Conway:1987:SLG:39091,
	title        = {Sphere-packings, Lattices, and Groups},
	author       = {Conway, J. H. and Sloane, N. J. A. and Bannai, E.},
	year         = 1987,
	publisher    = {Springer-Verlag},
	address      = {Berlin, Heidelberg},
	isbn         = {0-387-96617-X}
}

@book{EG,
	title        = {Measure theory and fine properties of functions},
	author       = {Evans, L. C. and R. F. Gariepy},
	year         = 1992,
	publisher      = {Chapman and Hall/CRC},
    address={Boca Raton, FL}
}

@article{Federer,
	title        = {Curvature measures},
	author       = {Herbert Federer},
	year         = 1959,
	journal      = {Transactions of the AMS},
	pages        = {418--494}
}

@book{flaherty2013riemannian,
	title        = {Riemannian Geometry},
	author       = {Flaherty, F. and do Carmo, M.P.},
	year         = 2013,
	publisher    = {Birkh{\"a}user Boston},
	series       = {Mathematics: Theory \& Applications},
}

@article{goodfellow2014generative,
	title        = {Generative adversarial nets},
	author       = {Goodfellow, Ian and Pouget-Abadie, Jean and Mirza, Mehdi and Xu, Bing and Warde-Farley, David and Ozair, Sherjil and Courville, Aaron and Bengio, Yoshua},
	year         = 2014,
	booktitle    = {Advances in neural information processing systems},
	pages        = {2672--2680},
	eprint       = {1406.2661},
	archiveprefix = {arXiv},
	primaryclass = {stat.ML}
}

@article{kingma2013auto,
	title        = {Auto-Encoding Variational Bayes. ICLR 2014 2014},
	author       = {Kingma, DP and Welling, M},
	year         = 2013,
	journal      = {arXiv preprint arXiv:1312.6114}
}

@article{niyogi2008finding,
	title        = {Finding the homology of submanifolds with high confidence from random samples},
	author       = {Niyogi, Partha and Smale, Stephen and Weinberger, Shmuel},
	year         = 2008,
	journal      = {Discrete \& Computational Geometry},
	publisher    = {Springer},
	volume       = 39,
	number       = {1-3},
	pages        = {419--441}
}

@article{roweis2000nonlinear,
	title        = {Nonlinear dimensionality reduction by locally linear embedding},
	author       = {Roweis, Sam T and Saul, Lawrence K},
	year         = 2000,
	journal      = {science},
	publisher    = {American Association for the Advancement of Science},
	volume       = 290,
	number       = 5500,
	pages        = {2323--2326}
}

@book{S,
	title        = {Optimal Transport for Applied Mathematicians},
	author       = {Filippo Santambrogo}
}

@article{tenenbaum2000global,
	title        = {A global geometric framework for nonlinear dimensionality reduction},
	author       = {Tenenbaum, Joshua B and De Silva, Vin and Langford, John C},
	year         = 2000,
	journal      = {science},
	publisher    = {American Association for the Advancement of Science},
	volume       = 290,
	number       = 5500,
	pages        = {2319--2323}
}

@book{tsybakov2008introduction,
	title        = {Introduction to nonparametric estimation},
	author       = {Tsybakov, Alexandre B},
	year         = 2008,
	publisher    = {Springer Science \& Business Media}
}

@book{tu2010introduction,
	title        = {An Introduction to Manifolds},
	author       = {Tu, L.W.},
	year         = 2010,
	publisher    = {Springer New York},
	series       = {Universitext},
	isbn         = 9781441973993,
	lccn         = 2010936466
}

@book{wasserman2006all,
	title        = {All of nonparametric statistics},
	author       = {Wasserman, Larry},
	year         = 2006,
	publisher    = {Springer Science \& Business Media}
}

@article{yarotsky2017error,
	title        = {Error bounds for approximations with deep ReLU networks},
	author       = {Yarotsky, Dmitry},
	year         = 2017,
	journal      = {Neural Networks},
	publisher    = {Elsevier},
	volume       = 94,
	pages        = {103--114}
}

@article{laurent2000adaptive,
  title={Adaptive estimation of a quadratic functional by model selection},
  author={Laurent, Beatrice and Massart, Pascal},
  journal={Annals of statistics},
  pages={1302--1338},
  year={2000},
  publisher={JSTOR}
}

@inproceedings{esser2024scaling,
  title={Scaling rectified flow transformers for high-resolution image synthesis},
  author={Esser, Patrick and Kulal, Sumith and Blattmann, Andreas and Entezari, Rahim and M{\"u}ller, Jonas and Saini, Harry and Levi, Yam and Lorenz, Dominik and Sauer, Axel and Boesel, Frederic},
  booktitle={Forty-first International Conference on Machine Learning},
  year={2024}
}

@inproceedings{songdenoising,
  title={Denoising Diffusion Implicit Models},
  author={Song, Jiaming and Meng, Chenlin and Ermon, Stefano},
  booktitle={International Conference on Learning Representations},
  year={2021}
}

@article{kim2021soft,
  title={Soft truncation: A universal training technique of score-based diffusion model for high precision score estimation},
  author={Kim, Dongjun and Shin, Seungjae and Song, Kyungwoo and Kang, Wanmo and Moon, Il-Chul},
  journal={arXiv preprint arXiv:2106.05527},
  year={2021}
}

@article{schmidt2017nonparametric,
  title={Nonparametric regression using deep neural networks with ReLU activation function},
  author={Schmidt-Hieber, Johannes},
  journal={The Annals of Statistics},
  volume={48},
  number={4},
  pages={1875--1897},
  year={2020},
  publisher={Institute of Mathematical Statistics}
}

@inproceedings{suzuki2018adaptivity,
  title={Adaptivity of deep Re{LU} network for learning in Besov and mixed smooth Besov spaces: optimal rate and curse of dimensionality},
  author={Taiji Suzuki},
  booktitle={International Conference on Learning Representations},
  year={2019}}
\bibliographystyle{plainnat}

\newpage
\appendix
\section{Proofs in Section~\ref{sec:decomp}}
We present the proofs of score decomposition for both large noise and small noise regimes.

\subsection{Proof of Lemma \ref{lemma:score-decomp-large}}\label{pf:score_decomp_large}
Lemma \ref{lemma:score-decomp-large} decomposes the 
score function via projections of the input $x\in \RR^D$ onto tangent spaces of $\cM$.  
The proof manipulates on the marginal density function $p_t$ via the conditional transition kernel in the forward process. Conditioned on the initial clean data $x_0 \in \cM$, at time $t$, the marginal distribution of the noisy state is Gaussian ${\sf N}(\alpha_tx_0, h_tI_D)$. Therefore, the marginal density function $p_t$ satisfies the following display, 
\begin{align*}
     p_t(x) =  (2 \pi h_t)^{-D/2}\int_{x_0 \in \cM} \exp \left( -\frac{\|x - \alpha_t x_0\|^2}{2h_t} \right) \ud \dist(x_0) .
\end{align*}
We consider the atlas on $\cM$ in Remark \ref{remark:exp_atlas}. Then we use the associated partition of unity $\{\rho_k\}_{k=1}^{C_\cM}$ to rewrite $p_t$ and $\nabla p_t$ as follows:
\begin{align}\label{eq:pt}
    p_t(x) =  (2 \pi h_t)^{-D/2}\sum_{k=1}^{C_\cM} \int_{x_0 \in U_k} \exp \left( -\frac{\|x - \alpha_t x_0\|^2}{2h_t} \right) \rho_k(x_0) \pdata(x_0) \ud \vol(x_0) 
\end{align}
and 
\begin{align*}
    \nabla p_t(x) &=  (2 \pi h_t)^{-D/2}\int_{x_0 \in \cM} -\frac{x-\alpha_tx_0}{h_t}\exp \left( -\frac{\|x - \alpha_t x_0\|^2}{2h_t} \right) \ud \dist(x_0) \notag \\
    & =  (2 \pi h_t)^{-D/2}\sum_{k=1}^{C_\cM} \int_{x_0 \in U_k}-\frac{x-\alpha_tx_0}{h_t} \exp \left( -\frac{\|x - \alpha_t x_0\|^2}{2h_t} \right) \rho_k(x_0) \pdata(x_0) \ud \vol(x_0).
\end{align*}
Utilizing the projection $\projk$ onto the $k$-th tangent space, we have 
\begin{align}
    \nabla p_t(x)
    & =  (2 \pi h_t)^{-D/2}\sum_{k=1}^{C_\cM} \int_{x_0 \in U_k}-\frac{x-\projk(x,t)}{h_t} \exp \left( -\frac{\|x - \alpha_t x_0\|^2}{2h_t} \right) \rho_k(x_0) \pdata(x_0) \ud \vol(x_0)\notag \\
    &\quad   +(2 \pi h_t)^{-D/2}\sum_{k=1}^{C_\cM} \int_{x_0 \in U_k} \frac{\alpha_t x_0- \projk(x,t)}{h_t}\exp \left( -\frac{\|x - \alpha_t x_0\|^2}{2h_t} \right) \rho_k(x_0) \pdata(x_0) \ud \vol(x_0). \label{eq:grad-pt}
\end{align}
Substituting \eqref{eq:pt} and \eqref{eq:grad-pt} into $\nabla \log p_t = \nabla p_t / p_t$ gives rise to 
\begin{align}\label{eq:sdecomp}
     \nabla \log p_t(x) &= \sum_{k=1}^{C_\cM} w_k(x,t) \left[-\frac{x-\projk(x,t)}{h_t} + \frac{\alpha_t y_k(x,t) - \projk(x,t)}{h_t} \right],
\end{align}
where 
\begin{align*}
    w_k(x,t) =& \frac{\int_{x_0 \in U_k} \exp \left( -\frac{\|x - \alpha_t x_0\|^2}{2h_t} \right) \rho_k(x_0) \pdata(x_0) \ud \vol(x_0)}{\sum_{j=1}^{C_\cM} \int_{x_0 \in U_j} \exp \left( -\frac{\|x - \alpha_t x_0\|^2}{2h_t} \right) \rho_j(x_0) \pdata(x_0) \ud \vol(x_0)} \\
    =& \frac{\int \exp \left( -\frac{\|x - \alpha_t x_0\|^2}{2h_t} \right) \ud P_{\textrm{data}, k}(x_0)}{\sum_{j=1}^{C_\cM} \int \exp \left( -\frac{\|x - \alpha_t x_0\|^2}{2h_t} \right) \ud P_{\textrm{data}, j}(x_0)}
\end{align*}
and
\begin{align*}
    y_k(x,t) &= \frac{\int_{x_0 \in U_k} x_0 \exp \left( -\frac{\|x - \alpha_t x_0\|^2}{2h_t} \right) \rho_k(x_0) \pdata(x_0) \ud \vol(x_0)}{\int_{x_0 \in U_k} \exp \left( -\frac{\|x - \alpha_t x_0\|^2}{2h_t} \right) \rho_k(x_0) \pdata(x_0) \ud \vol(x_0)}\\
    &= \frac{\int_{x_0 \in U_k} x_0 \exp \left( -\frac{\|x - \alpha_t x_0\|^2}{2h_t} \right) \ud P_{\textrm{data}, k}(x_0)}{\int_{x_0 \in U_k} \exp \left( -\frac{\|x - \alpha_t x_0\|^2}{2h_t} \right) \ud P_{\textrm{data}, k}(x_0)}.
\end{align*}
Moreover, we replace $P_{\textrm{data}, k}$ by its normalized version $\mu_k$ defined in Remark \ref{remark:decomp_Q}. This yields
\begin{align*}
    y_k(x,t) 
    = \frac{\int_{x_0 \in U_k} x_0 \exp \left( -\frac{\|x - \alpha_t x_0\|^2}{2h_t} \right) \ud \mu_k (x_0)}{\int_{x_0 \in U_k} \exp \left( -\frac{\|x - \alpha_t x_0\|^2}{2h_t} \right) \ud \mu_k(x_0)}
     = \EE_{X_0 \sim \mu_k}[X_0 | X_t = x].
\end{align*}
Substituting $y_k(x, t) = \EE_{X_0 \sim \mu_k}[X_0 | X_t = x]$ into \eqref{eq:sdecomp} completes the proof.

\subsection{Proof of Lemma \ref{lemma:score-decomp-small}}\label{pf:score_decomp_small}
For any fixed time $t>0$ and noisy state $x \in \alpha_t \cK(\cM, \reach)$, we represent the score function $\nabla \log p_t(x)$ as
\begin{align}
    \nabla \log p_t(x) = \frac{\nabla p_t(x)}{p_t(x)}  &=  \frac{(2 \pi h_t)^{-D/2}\int_{x_0 \in \cM} - \frac{x-\alpha_tx_0}{h_t} \exp \left( -\frac{\|x - \alpha_t x_0\|^2}{2h_t} \right) \ud \dist(x_0)}{(2 \pi h_t)^{-D/2}\int_{x_0 \in \cM} \exp \left( -\frac{\|x - \alpha_t x_0\|^2}{2h_t} \right) \ud \dist(x_0)} \notag \\
    &= \frac{\int_{x_0 \in \cM} \left(- \frac{x-\projm(x,t)}{h_t}  - \frac{\projm(x,t)-\alpha_tx_0}{h_t}   \right)\exp \left( -\frac{\|x - \alpha_t x_0\|^2}{2h_t} \right) \ud \dist(x_0)}{\int_{x_0 \in \cM} \exp \left( -\frac{\|x - \alpha_t x_0\|^2}{2h_t} \right) \ud \dist(x_0)} \notag \\
    &= - \frac{x-\projm(x,t)}{h_t} + \frac{\int_{x_0 \in \cM} \left(  \frac{\alpha_t x_0 - \projm(x,t)}{h_t}   \right)\exp \left( -\frac{\|x - \alpha_t x_0\|^2}{2h_t} \right) \ud \dist(x_0)}{\int_{x_0 \in \cM} \exp \left( -\frac{\|x - \alpha_t x_0\|^2}{2h_t} \right) \ud \dist(x_0)} \notag\\
    & =  \underbrace{-\frac{x -\Pi_{\cM}(x, t) }{h_t}}_{s_{\perp}:~\text{orthogonal score}}  + \underbrace{\frac{\alpha_t \EE[X_0 | X_t = x] - \projm(x, t)}{h_t}}_{s_{\cM}:~\text{on-support score}} , \label{eq:score-m-decomp}
\end{align}
where in the last equality \eqref{eq:score-m-decomp}, we invoke
\begin{align*}
    \EE[X_0 | X_t = x] = \frac{\int_{x_0 \in \cM}   x_0 \exp \left( -\frac{\|x - \alpha_t x_0\|^2}{2h_t} \right) \ud \dist(x_0)}{\int_{x_0 \in \cM} \exp \left( -\frac{\|x - \alpha_t x_0\|^2}{2h_t} \right) \ud \dist(x_0)}.
\end{align*}
Furthermore, we decompose $p_t$ by
\begin{align*}
p_t(x) & = \int_{x_0 \in \cM} \frac{1}{(2 \pi h_t)^{D/2}}\exp \left( -\frac{\|x-\alpha_tx_0\|^2}{2h_t} \right) \ud \dist(x_0)\\
& = \frac{1}{(2 \pi h_t)^{D/2}} \int_{x_0 \in \cM} \exp \left(  - \frac{ \|x - \projm(x,t) + \projm(x,t) - \alpha_t x_0\|^2 }{2h_t} \right) \ud \dist(x_0)\\
& = \frac{1}{(2 \pi h_t)^{D/2}} \exp \left( -\frac{\|x-\projm(x,t)\|^2}{2h_t} \right) \\ & \quad \cdot \int_{x_0 \in \cM} \exp \left( - \frac{ \|\projm(x,t) - \alpha_t x_0\|^2 + 2 \langle x - \projm(x,t), \projm(x,t) - \alpha_t x_0\rangle }{2h_t} \right) \ud \dist(x_0).
\end{align*}
Here in the last equality, we expand the squared distance and observe that $\projm(x,t)$ is independent of the integrand $x_0$. Taking logarithm and gradient with respect to $x$ on $p_t$ gives rise to
\begin{equation}\label{eq:nabla_log_pt}
\begin{split}
\nabla \log p_t(x) & = \underbrace{\nabla_x  \left( - \frac{ \|x - \projm(x,t)\|^2}{2h_t} \right)}_{(\spadesuit)} + g(x, t),
\end{split}
\end{equation}
where
\begin{align*}
    g(x, t) = \nabla_x \log \int_{x_0 \in \cM}  \exp \left(  \frac{ -\|\projm(x,t) - \alpha_t x_0\|^2 + 2 \langle x - \projm(x,t), \projm(x,t) - \alpha_t x_0\rangle }{2h_t} \right) \ud \dist(x_0).
\end{align*} 
For term $(\spadesuit)$,  we use the standard identity for the squared distance to a set with positive reach \citep{leobacher2020existence} on $\alpha_t\cK(\cM,\reach)$,
\begin{align*}
(\spadesuit) = -\frac{x- \projm(x,t) }{h_t}.
\end{align*}
Substituting $(\spadesuit)$ into \eqref{eq:nabla_log_pt}, we derive
\begin{align*}
    \nabla \log p_t(x) = -\frac{x- \projm(x,t) }{h_t} + g(x,t).
\end{align*}
Comparing with \eqref{eq:score-m-decomp}, we conclude that $g(x, t) = s_\cM(x,t)$, which completes the proof.

\section{Proofs in Section~\ref{sec:approx}}\label{sec:proof-approx}

With the goal of proving Theorem~\ref{thm:approx}, we present the analysis for small noise first in Appendix~\ref{sec:small-approx} and large noise in Appendix~\ref{sec:approx-large}. In Appendix~\ref{sec:approx_thm_proof}, we combine the results in large and small noise regimes to complete the proof of Theorem~\ref{thm:approx}. To ease the presentation, we frequently defer technical lemmas to Appendix~\ref{appendix:lemmas4B}.

\subsection{Small Noise: Proof of Lemma~\ref{lemma:approx-small}}\label{sec:small-approx}

The small noise regime is determined by the following conditions on time $t$,
\begin{align}\label{cond:time}
    h_t &\leq \min \left \{  \epsilon^{2/\holder},\frac{\rratio\reach}{16\sqrt{D}B}, \frac{\min\{\reach,\reach^2\}}{\min\{\reach,\reach^2\}+ 256 \log(1/\epsilon) \max\{D, 16L_\expp^2L_\logg^2 \}/\eta^2 } \right\},
\end{align}
where $\epsilon\in(0,1)$ denotes the desired approximation error and $\eta \in (0,1]$ is specified in Assumption \ref{assump:exp-ball}. The conditions in \eqref{cond:time} identify the small time regime where the variance $h_t$ of the added noise is smaller than the target accuracy $\epsilon$, while simultaneously ensuring that the projections of input points onto the manifold are well-defined. Notably, in the limit of small approximation error $\epsilon \to 0$, the constraints in \eqref{cond:time} are dominated by the condition $h_t \leq  \epsilon^{2/\holder}$.
Conditions in \eqref{cond:time}  imply the following two inequalities:
\begin{align}\label{eq:time-implications}
    2\sqrt{Dh_t\log(1/\epsilon)} \leq \frac{\eta\alpha_t}{8}\min\{\sqrt{\reach}, \reach\} \quad \text{and} \quad \sqrt{D}Bh_t \leq 
\frac{\eta\reach}{16}.
\end{align}
These bounds are used later to ensure that projections from a truncated region onto $\cM$ are well-defined.

Our goal is to construct a network $\nnsmall(x,t)$ that approximates the score function $\nabla\log p_t(x)$ in the $L^2(P_t)$ norm. For a given time $t>0$, we define the truncated region 
$$\cK_t(\delta) = \cK(\alpha_t\cM, 2\sqrt{D h_t \log(1/\delta)}).$$  Our approach involves a dual treatment of $\nnsmall$ depending on the spatial domain: within the region $\cK_t(\delta)$, we design $\nnsmall$ to achieve pointwise approximation in terms of the $L_\infty$ norm, while outside this region, we only require $\nnsmall$ to remain bounded. 
The approximation error can then be decomposed as
\begin{align}\label{eq:truncation}
\begin{split}
    \norm{\nnsmall(\cdot,t) - \nabla \log p_t}_{L^2(P_t)}^2 &\leq \sup_{x \in \cK_t(\delta)}  \norm{\nnsmall(x,t) - \nabla \log p_t(x)}^2  \\
&\quad+ \underbrace{\int_{\RR^D \setminus \cK_t(\delta)} \norm{\nnsmall(x,t)-\nabla \log p_t(x)}^2 \ud P_t(x)}_{=~\tilde{\cO}(\delta^2 / h_t) \text{ by Lemma~\ref{lemma:lowp-bound}}}.
\end{split}
\end{align}
By Lemma \ref{lemma:truncate-x}, we show that $\cK_t(\delta)$ is a high probability region, i.e., $\PP[X_t \in \cK_t(\delta)] \geq 1 - \delta^D$ for $\delta \in (0, e^{-2})$. Accordingly, the second error term on the complement of $\cK_t(\delta)$ only contributes a minor error, which scales as $\tilde{\cO}(\delta^2 / h_t)$ by Lemma~\ref{lemma:lowp-bound}. In the sequel, we focus on bounding the first error term.

In the small noise regime, the radius of $\cK_t(\delta)$ satisfies $2\sqrt{D h_t \log(1/\delta)} \leq \alpha_t \reach$ once we set $\delta=\epsilon$ in \eqref{eq:time-implications}, and thereby the projection of $x \in \cK_t(\delta)$ onto $\alpha_t \cM$ is well-defined \citep{leobacher2020existence}. In this case, Lemma \ref{lemma:score-decomp-small} decomposes the score function $\nabla \log p_t(x)$ into the on-support score $s_\cM(x,t)$ and the orthogonal score $s_\bot(x,t)$. Our proof focuses primarily on approximating the more complicated on-support score $s_\cM(x,t)$. Nonetheless, the orthogonal score $s_{\bot}$ can be approximated using the same components developed for $s_{\cM}$.

We represent the on-support score $s_{\cM}$ by writing it as a fraction:
\begin{align}\label{eq:def-sM}
     s_{\cM}(x,t) 
      = \frac{s_2(x,t)/\sqrt{h_t}}{s_1(x,t)},
\end{align}
where we denote the projection $\xstar = \projm(x,t)/\alpha_t$ and define
\begin{align}\label{eq:def-s1}
     s_1(x,t) &= \int_{x_0 \in \cM} \exp \left( - \frac{ \|\alpha_t \xstar- \alpha_t x_0\|^2 + 2 \langle x - \alpha_t \xstar , \alpha_t \xstar - \alpha_t x_0\rangle }{2h_t} \right) \ud \dist(x_0) 
\end{align}
and
\begin{align*}
     s_2(x,t) = \int_{x_0 \in \cM} -\frac{\alpha_t}{\sqrt{h_t}}(\xstar-x_0)\exp \left( - \frac{ \|\alpha_t \xstar- \alpha_t x_0\|^2 + 2 \langle x - \alpha_t \xstar , \alpha_t \xstar - \alpha_t x_0\rangle }{2h_t} \right) \ud \dist(x_0).
\end{align*}

We approximate $s_1$ and $s_2$ separately, and the approximation to $s_1$ consists of two steps: 1) constructing local polynomials to approximate $s_1$ (Appendix~\ref{subsec:local-poly}), and 2) implementing local polynomials using neural networks (Appendix~\ref{subsec:nn}). The approximation error of $s_1$ is provided in Lemma~\ref{lemma:s1-approx} (Appendix~\ref{sec:approx-error-analysis}). The approximation of $s_2$ follows a similar procedure and is postponed to Lemma~\ref{lemma:s2-approx} (Appendix~\ref{sec:s2-error-analysis}).

\subsubsection{Local Polynomial Construction}\label{subsec:local-poly}

By Remark~\ref{remark:exp_atlas}, we equip $\cM$ with an atlas $\{(U_k, \logg_k) \}_{k=1}^{C_\cM}$, where $U_k = \expp_k(\dball(0,r))$. We choose the radius $r$ to satisfy
\begin{align}\label{eq:cond-r}
    r < \min\{3\reach, \rratio L_\logg \reach, \rratio \reach/(4 L_\expp)  \}.
\end{align}
Here $L_\expp = \max_{k} L_{\expp_k} $ for $L_{\expp_k}>0$ being the Lipschitz constant of $\expp_k$, and $L_\logg = \max_{k} L_{\logg_k}$ for $L_{\logg_k} >0$ being the Lipschitz constant of $\logg_k$. A precise value of the radius $r$ will be set later according to the approximation accuracy.
The number of charts in the atlas is bounded in Lemma \ref{lemma:total-cover-num}, which reads
\begin{equation}\label{eq:CM-bound}
    C_\cM \leq \frac{L_{\logg}^d T_d}{r^d}\int_\cM \diff \mu_\cM,
\end{equation}
with $T_d$ the thickness of the charts, defined as the average number of charts that contain a point on $\cM$ \citep{Conway:1987:SLG:39091}.

Using the decomposition of $\dist$ in \eqref{globaldensity} with respect to the atlas $\{(U_k, \logg_k)\}_{k=1}^{C_{\cM}}$, we have
\begin{align*}
    s_1(x,t) & = \sum_{k=1}^{C_\cM} \int_{\dball(0,r)} \exp \left( - \frac{ \|\alpha_t \xstar- \alpha_t \expp_k(v)\|^2 + 2 \langle x - \alpha_t \xstar , \alpha_t \xstar - \alpha_t \expp_k(v)\rangle }{2h_t} \right) F_k(v)\ud v,
\end{align*}
where $F_k(v) = \rho_k (\expp_k(v))\cdot p_{\rm data} (\expp_k(v))\cdot G_k(v) $. Now let $\vradius > 0$ and $\thres > 0$ be two truncation radii that will be chosen later. Given an arbitrary $x\in\cK_t(\delta)$, we define the low-dimensional representation $v_k(x,t)$ and the index set $\cI(x)$ as
\begin{equation*}
v_k(x,t) = \logg_k(\xstar) = \logg_k (\projm(x,t)/\alpha_t)\quad \text{and} \quad \cI(x) = \{k \leq C_{\cM} : \|x_k-x\| \leq \thres\}.   
\end{equation*}
For any $x \in \cK_t(\delta)$ and $u, v \in \RR^d$, we also introduce the shorthands:
\begin{align*}
\cT_k(x, v, t) & = \inner{x - \alpha_t x_t^*}{\expp_k(v_k(x, t)) - \expp_k(v)} \quad \text{and} \\
\cD_k(u, v) & = \norm{\expp_k(u) - \expp_k(v)}^2.
\end{align*}
Then we define $f_1$ as
\begin{align}\label{eq:f-localization}
    f_1(x, t) = \sum_{k \in \cI(x)} \int_{\cB^d(v_k(x, t), \Delta(t))} \exp\left(-\frac{\alpha_t}{h_t} \cT_k(x, v, t)\right) \exp\left(-\frac{\alpha_t^2}{2h_t} \cD_k(v_k(x, t), v)\right) F_k(v) \diff v.
\end{align} 
Here, we substitute $u = v_k(x,t)$ into $\cD_k(u, v, t)$. Function $f_1$ resembles a similar formula to $s_1$. Yet, function $f_1$ only integrates over a small ball of radius $\vradius$. Moreover, $f_1$ neglects many charts with a center $x_k$ that is distant from the given noisy state $x$. Thanks to Lemma~\ref{lemma:localization}, we show that $f_1$ can still approximate $s_1$ as long as $\vradius$ and $\thres$ are set suitably. Specifically, for any given $\epsilon_1 \in (0, 1)$, we choose
\begin{align*}
\vradius & = 2 L_{\logg} \sqrt{(h_t/\alpha_t^2)\big(\log(1/\epsilon_1)+(d/2)\log(1/h_t)\big)} \quad \text{and} \\
\thres & = r L_\expp + 2\sqrt{D h_t \log(1/\delta)} +\sqrt{D}B h_t + L_\expp \vradius.
\end{align*}
Then for any $t < t_{\rm small}$, it holds that
\begin{align}\label{eq:f1_error}
\sup_{x \in \cK_t(\delta)} |f_1(x, t) - s_1(x, t)| = \cO(h_t^{d/2} \epsilon_1).
\end{align}
We remark that function $f_1$ possesses simplicity and efficiency for developing a neural network approximation, since it involves less charts.

In order to implement $f_1$ using neural networks, we first construct a polynomial approximation to $f_1$, which boils down to constructing polynomial approximation to components in $f_1$.

\noindent $\bullet$ {\bf 1. Local density-related function $F_k$.}
Recall that $F_k(v) = (\rho_k \cdot p_{\rm data}) \circ \expp_k(v)\cdot G_k(v) $ for $v \in \dball(0,r)$. 
We claim that $F_k$ is a non-negative  $\holder$-H\"older function on $\dball(0, r)$ with a bounded H\"older norm $C_F>0$. To see this, we observe that $p_{\rm data} \in \cH^\holder (\cM)$, while the partition of unity $\rho_k$, the exponential map $\expp_k$, and the square root of the determinant of the Jacobian matrix $G_k$ are all $C^\infty$. It is clear that $C_F$ depends on the bounds on the derivatives of $\rho_k$, $\expp_k$, and $G_k$. Accordingly, we adopt Taylor polynomials of degree $\lfloor \holder \rfloor$ to approximate $F_k$. Let $\theta = [\theta_1, \dots, \theta_d]^\top \in \NN^d$ be a multi-index, and we define
\begin{equation}\label{eq:def-Fk-hat}
    \hat{F}_k(v) = \sum_{| {\theta}|\leq \lfloor \holder \rfloor} a_{\theta}^k v^{\theta} \quad \text{with}\quad a_{\theta}^k = \frac{1}{\theta_1! \cdots \theta_d!} \partial^\theta F_k (0).
\end{equation}
Invoking Theorem 3 in \cite{chen2022nonparametric}, we have
\begin{equation}\label{eq:Fk-approx-error}
    \sup_{v\in \dball(0, r)} |\hat{F}_k(v) -  {F}_k(v)| \leq  C_F d^\holder r^{\holder}.
\end{equation}
This indicates that $\hat{F}_k(v)$ can uniformly approximate $F_k(v)$ up to an arbitrarily small approximation error as long as the radius $r$ is small enough. Therefore, we define $f_2$ by replacing $F_k$ in $f_1$ by $\hat{F}_k$:
\begin{align*}
f_2(x, t) = \sum_{k \in \cI(x)} \int_{\cB^d(v_k(x, t), \Delta(t))} \exp\left(-\frac{\alpha_t}{h_t} \cT_k(x, v, t)\right) \cdot \exp\left(-\frac{\alpha_t^2}{2h_t} \cD_k(v_k(x, t), v)\right) \cdot \hat{F}_k(v) \diff v.
\end{align*}
The approximation error of $f_2$ to $f_1$ is established in Lemma \ref{lemma:Fk-approx-error}. In particular, for any approximation error $\epsilon_2 < (\min\{3\reach, \rratio L_\logg \reach, \rratio \reach/(4 L_\expp)  \})^\holder$, we set the radius of charts to be $$r = \epsilon_2^{1/\holder}.$$ 
This choice of $r$ validates the condition in \eqref{eq:cond-r}. Then for any $x \in \cK_t(\delta)$, we have
\begin{align}\label{eq:f2-f1_error}
|f_2(x,t)-f_1(x,t)| = \cO(h_t^{d/2} |\cI(x)| \epsilon_2).
\end{align}

\noindent $\bullet$ {\bf 2. Exponential function $\exp(\cdot)$.} We approximate the two exponential functions in $f_2$ by their Taylor series expanded around $0$, i.e., $\exp(-a) = \sum_{j=0}^\infty \frac{(-1)^j}{j!} a^j$ for any $a \in \RR$. We truncate the degree at $\gamma$ and $\gamma'$, respectively, which gives rise to
\begin{align*}
f_3(x, t) = \sum_{k \in \cI(x)} \int_{\cB^d(v_k(x, t), \Delta(t))} \left[ \sum_{j=0}^{\degree-1} \frac{(-1)^j}{j!} \frac{(\cT_k(x, v, t))^j}{(h_t/\alpha_t)^j} \right] \cdot \left[ \sum_{l=0}^{\gamma'-1}  \frac{(-1)^l}{2^l l!} \frac{(\cD_k(v_k(x, t), v))^l}{(h_t/\alpha_t^2)^l} \right] \hat{F}_k(v) \diff v.
\end{align*}
Controlling the approximation error of $f_3$ to $f_2$ is relatively simple, as the exponential function is $C^\infty$. Specifically, for any given $\epsilon_3 \in (0, 1)$, we set $\dgM ={\cO} (\log(1/\epsilon_3)+ \log(1/\epsilon_1) + \log(1/h_t))$; recall that $\epsilon_1$ is the approximation error of $f_1$ in \eqref{eq:f1_error}. Then by Lemma \ref{lemma:f3-approx-error}, for any $x \in \cK_t(\delta)$, it holds that
\begin{align}\label{eq:f3-f2_error}
|f_3(x,t)-f_2(x,t)| =\tilde{\cO}\left(h_t^{d/2}\left( \frac{1}{\degree !} \left(  \frac{16 L_\logg^2\sqrt{D h_t \log(1/\delta)} (\log(1/\epsilon_1)+d\log(1/h_t)/2)}{\alpha_t \reach}  \right)^{\degree}  +\epsilon_3 \right)\right).
\end{align}
We leave $\gamma$ unspecified, which will be chosen later depending on the curvature of the manifold.

\noindent $\bullet$ {\bf 3. Exponential map $\expp_k$.} Different from the previous two components, we use average Taylor polynomials to approximate the exponential map $\expp_k$ in $\cT_k$ and $\cD_k$. The use of average Taylor polynomials allows us to simultaneously approximate $\expp_k$ and all its first-order derivatives. For attentive readers, we provide an introduction to average Taylor polynomials in Appendix \ref{sec:avg-taylor-poly}.

It suffices to approximate each coordinate in $\expp_k$. For the $i$-th coordinate map $\expp_{k, i}$ with $ i \in \{1,\ldots,D\}$, we approximate it by an average Taylor polynomial $\hat{\expp}_{k,i}^{S}$ of degree $S$ averaged over $\dball(0,r)$. Let $\theta = [\theta_1, \dots, \theta_d]^\top\! \in \NN^d$ be a multi-index again. We define 
\begin{align}\label{eq:exp-approx-form}
	\hat{\expp}_{k,i}^{S}(v)\overset{(i)}{=}\int_{\dball(0,r)} \left( \sum_{|\theta|<S} \frac{\partial^{\theta}\expp_{k,i}(z)}{|\theta|!} (v-z)^{\theta} \right)\phi(z)dz \overset{(ii)}{=} \sum_{|\theta|<S} c^{k,i}_{\theta} v^\theta,
\end{align}
where equality $(i)$ follows the averaged Taylor polynomial in Definition~\ref{def:avg_poly}, with $\phi(z)$ an arbitrary cut-off function, and quality $(ii)$ implies that the averaged Taylor polynomial can be written as a sum of monomials as established in Lemma \ref{lemma:atp-form}. Now we instantiate $S = \beta + 1$ and define
\begin{align*}
\hat{\Delta}_{\expp_{k, i}}(u, v) = \hat{\expp}_{k, i}^{\beta+1}(u) - \hat{\expp}_{k, i}^{\beta+1}(v)
\end{align*}
as the coordinate-wise difference. Then we approximate $\cD_k(v_k(x, t), v)$ by 
\begin{align*}
\hat{\cD}_k(v_k(x, t), v) = \sum_{i=1}^D \left(\hat{\Delta}_{\expp_{k, i}}(v_k(x, t), v)\right)^2.
\end{align*}
Lemma~\ref{lemma:prop-Dk-hat} proves that $\hat{\cD}_k$ well approximates $\cD_k$ with respect to the Sobolev $W^{1, \infty}$ norm. Substituting $\hat{\cD}_k$ into $f_3$ leads to
\begin{align*}
f_4(x, t) = \sum_{k \in \cI(x)} \int_{\cB^d(v_k(x, t), \Delta(t))} \left[ \sum_{j=0}^{\degree-1} \frac{(-1)^j}{j!} \frac{(\cT_k(x, v, t))^j}{(h_t/\alpha_t)^j} \right] \cdot \left[ \sum_{l=0}^{\gamma'-1}  \frac{(-1)^l}{2^l l!} \frac{(\hat{\cD}_k(v_k(x, t), v))^l}{(h_t/\alpha_t^2)^l} \right] \hat{F}_k(v) \diff v.
\end{align*}
Exploiting the approximation error of $\hat{\cD}_k$ to $\cD_k$ in Lemma~\ref{lemma:prop-Dk-hat}, we show in Lemma~\ref{lemma:f5-approx-error} that  for any $x \in \cK_t(\delta)$, it holds that
\begin{align}\label{eq:f4-f3_error}
|f_4(x,t)-f_3(x,t)| = \tilde{\cO}\big(h_t^{d/2}|\cI(x)| \epsilon_2\big).
\end{align}

Next, we approximate $\cT_k(x, v, t) = \inner{x - \alpha_t x^*_t}{\expp_k(v_k(x, t)) - \expp_k(v)}$ using $\hat{\Delta}_{\expp_{k, i}}$. We collect $\hat{\Delta}_{\expp_{k, i}}$ for $i = 1, \dots, D$ into a $D$-dimensional vector:
\begin{align*}
\hat{\Delta}_{\expp_k}(v_k(x, t), v) = \left[\hat{\Delta}_{\expp_{k, 1}}(v_k(x, t), v), \dots, \hat{\Delta}_{\expp_{k, D}}(v_k(x, t), v)\right]^\top.
\end{align*}
Then $\cT_{k}$ is realized by
\begin{align}
\hat{\cT}_k(x, v, t) = \inner{x - \alpha_t x_t^*}{\hat{\Delta}_{\expp_k}(v_k(x, t), v)}. \nonumber
\end{align}
Note that $f_4$ involves powers of $\cT_k$. We adopt the tensor notation to write 
\begin{align*}
\left(\hat{\cT}_k(x, v, t)\right)^j = \inner{[x - \alpha_t \xstar]^{\otimes j}}{[\hat{\Delta}_{\expp_k}(v_k(x, t), v)]^{\otimes j}}.
\end{align*}
Substituting $(\hat{\cT}_k)^j$ into $f_4$, we derive
\begin{align*}
& \quad~ f_5(x, t) \nonumber \\
& = \sum_{k \in \cI(x)} \int_{\cB^d(v_k(x, t), \Delta(t))} \left[ \sum_{j=0}^{\degree-1} \frac{(-1)^j}{j!} \frac{(\hat{\cT}_k(x, v, t))^j}{(h_t/\alpha_t)^j} \right] \cdot \left[ \sum_{l=0}^{\gamma'-1}  \frac{(-1)^l}{2^l l!} \frac{(\hat{\cD}_k(v_k(x, t), v))^l}{(h_t/\alpha_t^2)^l} \right] \hat{F}_k(v) \diff v  \\
& = \sum_{k \in \cI(x)} \sum_{j=0}^{\gamma-1} \frac{(-1)^j}{j!(h_t)^{j/2}} \left\langle [x - \alpha_t x_t^*]^{\otimes j},  \text{Tensor-Poly}^{k,j}(v_k(x, t), h_t, \alpha_t)\right\rangle,
\end{align*}
where
\begin{align*}
& \text{Tensor-Poly}^{k,j}(v_k(x, t),  h_t, \alpha_t) \\
& \qquad = \int_{\cB^d(v_k(x, t), \Delta(t))}  \frac{[\hat{\Delta}_{\expp_k}(v_k(x, t), v)]^{\otimes j}}{(h_t/\alpha_t^2)^{j/2}} \cdot \left[ \sum_{l=0}^{\gamma'-1}  \frac{(-1)^l}{2^l l!} \frac{(\hat{\cD}_k(v_k(x, t), v))^l}{(h_t/\alpha_t^2)^l} \right] \hat{F}_k(v) \diff v.
\end{align*}
Expressing the chart determination via indicator functions and rearranging terms yields
\begin{align}\label{eq:f5_poly}
 f_5(x, t) 
 =  \sum_{j=0}^{\gamma-1} \frac{1}{j!} \left\langle  \frac{(-1)^j}{h_t^{j/2}}[x - \alpha_t x_t^*]^{\otimes j},  \sum_{k=1}^{C_{\cM}} \ind(k \in \cI(x))\cdot \text{Tensor-Poly}^{k,j}(v_k(x, t), h_t, \alpha_t)\right\rangle.
\end{align}

Notably, the integration in $\text{Tensor-Poly}^{k,j}$ is performed over polynomials within a Euclidean ball centered at $v_k(x,t)$, thus $\text{Tensor-Poly}^{k,j}$ is still a polynomial with respect to $v_k(x,t)$. 
By Lemma \ref{lemma:f6-approx-error}, we establish the approximation error: for any $x \in \cK_t(\delta)$, we have 
\begin{align}\label{eq:f5-f4_error}
    |f_5(x,t)-f_4(x,t)| = \tilde{\cO} \left ( D^{\degree/2} h_t^{d/2} \max_{ j  = 0,1,\ldots, \degree} \left\{\frac{\left\| x - \alpha_t \xstar\right\|^{j}}{(h_t)^{j/2}} \right\} \epsilon_2 \right).
\end{align} 
Finally, we can conclude the result in Lemma \ref{lemma:poly_approx_score} by summing up the approximation errors in \eqref{eq:f1_error}, \eqref{eq:f2-f1_error}, \eqref{eq:f3-f2_error}, \eqref{eq:f4-f3_error} and \eqref{eq:f5-f4_error}. Letting $\epsilon_1 =\epsilon_2=\epsilon_3=\delta= \epsilon$, for any $x\in\cK_t(\epsilon)$, we have 
    \begin{align}\label{eq:approx-pre-result}
        |f_5(x,t) - s_1(x,t)| &\lesssim
         D^{\degree/2} h_t^{d/2} |\cI(x)|\max_{j = 0,1,\ldots, \degree} \left\{\frac{\left\| x - \alpha_t \xstar\right\|^{j}}{(h_t)^{j/2}} \right\}  \notag  \\
        & \quad \cdot \left( \frac{1}{\degree !} \left(  \frac{\sqrt{ \log(1/\epsilon)} (\log(1/\epsilon)+d\log(1/h_{t})/2)}{\alpha_t \reach}  \right)^{\degree}  \epsilon^{\degree/\holder}+\epsilon\right),
     \end{align}
where $\lesssim$ hides the dependence on $d,\holder,T_d, C_F$, the upper bounds on the derivatives of exponential maps, and logarithmic factors. 
Applying Lemma \ref{lemma:total-cover-num} on the region $\{z\in\cM: \|z-x\|\leq \thres+r\}$, we can bound the number of selected charts $|\cI(x)|$ by 
\begin{equation*}
       |\cI(x)| \leq \frac{L_{\logg}^d T_d}{r^d}\int_{z\in\cM} \ind(\|z-x\|\leq \thres+r)\diff \mu_\cM(z).
\end{equation*}
Recall our choice of $\thres$:
\begin{align*}
    \thres = L_\expp r + 2\sqrt{D h_t \log(1/\epsilon)} +\sqrt{D}B h_t + 2L_\expp L_\logg \sqrt{(h_t/\alpha_t^2)(\log(1/\epsilon+d\log(1/h_t)/2))}.
\end{align*}
Under the condition in \eqref{cond:time} that $h_t \leq \epsilon^{2/\holder} = r^2$, we have $\thres \lesssim \sqrt{D} r$, where $\lesssim $ hides logarithmic factors and dependence on $d$, $B$ and  the upper bounds on the derivatives of exponential maps. This further yields
\begin{equation*}
       |\cI(x)| \lesssim \frac{ \left( \thres+r\right)^d}{r^d} \lesssim D^{d/2}.
\end{equation*}
Therefore, plugging $|\cI(x)|$ in \eqref{eq:approx-pre-result} yields 
\begin{align*}
        |f_5(x,t) - s_1(x,t)| &\lesssim D^{(\degree+d)/2}h_t^{d/2} \max_{j = 0,1,\ldots, \degree} \left\{\frac{\left\| x - \alpha_t \xstar\right\|^{j}}{(h_t)^{j/2}} \right\}  \left( \frac{(\log(1/\epsilon)+d\log(1/h_t)/2)^{2\degree}}{\reach^{\degree}} \epsilon^{\degree/\holder}  +\epsilon\right).
    \end{align*}

\subsubsection{Neural Network Implementation}\label{subsec:nn}

We construct our neural network based on implementing function $f_5$ in \eqref{eq:f5_poly}. We start with developing two elementary networks: 1) $\nn_{v_k}$ approximates the low-dimensional representation $v_k$, and 2) $\nn_{\times}$ realizes multiplication. We then construct three key networks: 1) projection network $\nn_{\rm proj}$, 2) chart determination network $\nn_{\rm chart}$, and 3) Tensor-Poly network $\nn_{\rm poly}$ to approximate key components in $f_5$.
Accordingly, the constructed network takes the form:
\begin{align}
     \bar{s}_1(x,t) 
     = \sum_{j=0}^{\degree-1} \frac{1}{ j!} \nn_{\times}^j \left( \nn_{\rm proj}^j(x,h_t,\alpha_t), \sum_{k=1}^{C_\cM}  \nn_{\times}^1\left(\nn_{\rm{det}}(x-x_k) ,\nn_{\rm {poly}}^{k,j}(\nn_{v_k}(x,t),h_t,\alpha_t) \right)\right).\label{eq:s1-nn}
\end{align}
In the sequel, we present a detailed construction of the two elementary networks followed by the three key networks.

\noindent {\bf Elementary Network Construction}. We first approximate the low-dimensional representation $v_k(x, t)$ of a noisy state $x$ at time $t$. By definition, 
$ v_k(x,t)$ projects $x$ onto the shrunk manifold $\alpha_t\cM$ first, then maps the projection to the low-dimensional space $\RR^d$. Thanks to Proposition \ref{prop:proj-smooth}, $v_k$ is $C^\infty$ in $x$ and thereby can be efficiently approximated by a network. By Lemma \ref{lemma:encoder-approx}, for any approximation error $\epsilon_v\in(0,1)$, there exists a network $\nn_{v_k} = [\nn_{v_k,1},\nn_{v_k,2},\ldots,\nn_{v_k,d}]^\top$ satisfying
\begin{align}\label{eq:error-nn-v}
    \sup_{x\in\cK_t(\delta)}\left\| \nn_{v_k}(x,t)- v_k(x,t) \right\| \leq \epsilon_v,
\end{align}
with each coordinate map $\nn_{v_k,i} \in \cF(L_v,W_v, S_v,B_v, \cdot)$ for
$$L_v=\cO(\log(1/\epsilon_v)), ~~ W_v=\cO\left(\epsilon_v^{-d/\holder}\log(1/\epsilon_v) \right),~~ S_v=\cO\left(\epsilon_v^{-d/\holder}\log(1/\epsilon_v) \right),~~ B_v=\cO(\max\{B_{\cM},\reach^2\}).$$

We then implement the multiplication operation. Invoking Lemma \ref{lemma:nn-prod}, we construct a network $\nn_{\times}^j \in \cF(L_{\times}, W_{\times}, S_{\times}, B_{\times}, \cdot)$ that realizes the $j$-th order tensor inner product, i.e., given $\epsilon_{\times}\in (0,1)$, for any $u_1,u_2 \in [-2\sqrt{D\log(1/\epsilon)},2\sqrt{D\log(1/\epsilon)}]^D$, it holds that
\begin{align}\label{eq:error-nn-prod}
        \left| \nn_{\times}^j\left(u_1^{\otimes j}, u_2^{\otimes j} \right) - \left\langle u_1^{\otimes j}, u_2^{\otimes j} \right\rangle \right| \leq D^j \epsilon_{\times}.
\end{align}
The corresponding network configuration is
\begin{align*}
       L_{\times} = \cO(\log(D/\epsilon_{\times} )
    ), \quad W_{\times}=\cO(D^j),\quad S_{\times} = \cO(D^j\log(D/\epsilon_{\times} )),\quad \text{and}\quad B_{\times}=\cO(D^{2} \log(1/\epsilon)). 
\end{align*}

\noindent {\bf Key Networks Construction}. It remains to construct three key networks. We begin with the projection network.

\noindent $\bullet$ {\bf 1. Projection network $\nn_{\rm proj}$}.
We construct a network $\nn_{\rm proj}^j$ that implements each entry in the tensor $[x - \alpha_t \xstar]^{\otimes j}$. As a function of input $x$, the tensor $[x - \alpha_t \xstar]^{\otimes j}$ is $C^\infty$ for fixed time $t$, since the projection $\alpha_t x_t^* = \projm(x, t)$ is $C^\infty$ for $x \in \cK_t(\epsilon)$ due to   
Proposition \ref{prop:proj-smooth} and restrictions to the small noise regime \eqref{eq:time-implications}. Therefore, each entry of $[x - \alpha_t \xstar]^{\otimes j}$ can be approximated by a network as follows.
Lemma \ref{lemma:nn-proj} establishes approximation guarantees for $ [x - \alpha_t \xstar]^{\otimes j}$. Given an order $j$ and $\epsilon_{\rm proj}\in (0,1)$, there exists $\nn_{\rm proj}^j$ such that for any  time $t \in [t_0, t_{\rm small}]$,
\begin{align}\label{eq:error-nn-proj}
      \sup_{ x \in \cK_t(\epsilon)}\left\| \nn_{\rm proj}^j(x, h_t, \alpha_t) - \frac{[x - \alpha_t \xstar]^{\otimes j}}{h_t^{j/2}} \right\|_\infty \leq h_{t}^{d/2}\epsilon_{\rm proj}.
  \end{align}
Here $\nn_{\rm proj}^j$ is a network in $\cF(L_{\rm proj}, W_{\rm proj}, S_{\rm proj}, B_{\rm proj}, \cdot)$, where 
  \begin{align*}
      & L_{\rm proj} = \cO \left(\log^3\big(1/(h_{t_0}^{j/2}\epsilon_{\rm proj})\big)\right), \quad W_{\rm proj} = \cO\left(\epsilon_{\rm proj}^{-d/\holder} \log\big(1/(h_{t_0}^{j/2}\epsilon_{\rm proj})\big)  + \log^4(1/h_{t_0})\right), \\
      & S_{\rm proj} = \cO\left(\epsilon_{\rm proj}^{-d/\holder} \log(1/(h_{t_0}^{j/2}\epsilon_{\rm proj}))  + \log^4(1/h_{t_0})\right), \quad \text{and} \quad B = h_{t_0}^{-j}.
  \end{align*}

\noindent $\bullet$ {\bf 2. Chart determination network $\nn_{\rm{det}}$}.  The chart  determination network $\nn_{\rm{det}}$ identifies the index set $\cI(x) =\{k: \norm{x - x_k}\leq \thres\}$. For an index $k \in \cI(x)$, it must verify the condition $\mathds{1}\{\norm{x - x_k}^2 \leq \thres^2\} = 1$. As a result, $\nn_{\rm det}$ implements the indicator function by (i) approximating the squared Euclidean distance using a network $\hat{d}$, and (ii) implementing the indicator function using another network $\hat{\mathds{1}}$. The composition $\nn_{\rm det} = \hat{\mathds{1}}\circ \hat{d}$ yields the desired approximation to the index set.
The detailed network construction and approximation guarantee are provided in Lemma \ref{lemma:nn-chart}. Given $\epsilon_{\rm{det}} \in (0,1)$, $x \in \cK_t(\epsilon)$, and any $k \in [C_{\cM}]$, it holds that 
\begin{align}\label{eq:error-nn-det}
     \nn_{\rm{det}}(x - x_k) =
\begin{cases}
1 & \text{if}~\norm{x - x_k}^2 \leq (1-\epsilon_{\rm det})\thres^2 \\
0 & \text{if}~\norm{x - x_k}^2 \geq \thres^2,
\end{cases}
\end{align}
Here $\nn_{\rm{det}}$ is a network in $\cF(L_{\rm{det}}, W_{\rm{det}}, S_{\rm{det}}, B_{\rm{det}}, \cdot)$, where 
\begin{align*}
    &L_{\rm{det}} = \cO(\log(D/\epsilon)+\log(1/\epsilon_{\rm{det}})), \quad W_{\rm{det}} = \cO(D), \\
    &S_{\rm{det}} = \cO(\log(D/\epsilon)+\log(1/\epsilon_{\rm{det}})),  \quad  \text{and} \quad
    B_{\rm{det}} = \cO(\log(D/\epsilon)+\log(1/\epsilon_{\rm{det}})). 
\end{align*}

\noindent $\bullet$ {\bf 3. Tensor-Poly network $\nn_{\rm {poly}}$}. We implement the entries of $\text{Tensor-Poly}^{k,j}$ in \eqref{eq:f5_poly} by a single network $\nn_{\rm {poly}}^{k,j}$. Since each entry of $\text{Tensor-Poly}^{k,j}$ is a polynomial with a $d$-dimensional input and degree at most $(\holder+1)(j+\dgM+1)$, where we set $\dgM ={\cO} \left(\log(1/(h_{t_0}\epsilon))\right)$, it can be efficiently approximated by a neural network. By Lemma \ref{lemma:nn-poly}, given $\epsilon_{\rm {poly}}\in (0,1)$, there exists a network $\nn_{\rm {poly}}^{k,j}$ such that for any $t \in [t_0, t_{\rm small}]$, it holds that
\begin{align}\label{eq:error-nn-poly}
\sup_{v \in \dball(0,r)}\Big\| \nn_{\rm {poly}}^{k,j}(v,h_t,\alpha_t) - \text{Tensor-Poly}^{k,j}(v,h_t,\alpha_t) \Big\|_\infty \leq  h_{t_0}^{ d/2}\epsilon_{\rm {poly}}.
  \end{align}
Here each entry of $\nn_{\rm {poly}}^{k,j}$ is a network in $\cF(L_{\rm poly}, W_{\rm poly}, S_{\rm poly}, B_{\rm poly}, \cdot)$ with
  \begin{align*}
      &L_{\rm poly} = \cO(\log(1/(h_{t_0}\epsilon_{\rm {poly}}) ), \quad W_{\rm poly} = \cO\left((\degree+\dgM+1)^d \right),
     \\ &S_{\rm poly} =\cO\left((\degree+\dgM+1)^d\log(1/(h_{t_0}\epsilon_{\rm {poly}})) \right), ~~  \text{and}  ~~B_{\rm poly} =\cO(1)
\end{align*}

\subsubsection{Bounding Approximation Error of $\bar{s}_1$}\label{sec:approx-error-analysis}
We state the approximation guarantee of the network implementation $\bar{s}_1$ in the following lemma.
\begin{lemma}[Approximation error of $s_1(x,t)$]\label{lemma:s1-approx}
    Given $\epsilon \in \big(0,(\min\{1,3\reach, \rratio L_\logg \reach, \rratio \reach/(4 L_\expp)  \})^\holder \big)$ and time $t\geq t_0$ satisfying \eqref{cond:time}, let $r = \epsilon^{1/\holder}$.
    Then for any fixed $\degree>0$, there exists a network $\bar{s}_1 \in \cF(L_1,W_1,S_1,B_1,\cdot)$  with 
    \begin{align*}
        & L_1 =\cO\left(\degree^3 \log^3(1/(h_{t_0}\epsilon))  \right),\quad
        W_1 = \cO \left(D^{\degree} \vee \epsilon^{-d/\holder} \degree^3\log^3(1/(h_{t_0}\epsilon))  \right), \\
        &S_1 = \cO\left( \epsilon^{-d/\holder} \left(\degree\log(1/(h_{t_0}\epsilon))  + \log^4(1/h_{t_0}) \right) +  \degree D^{\degree}\log(1/\epsilon ) \right),  \quad
        B_1 = \cO\left( h_{t_0}^{-\degree}\right),
    \end{align*}
    such that for all $x\in\cK_t(\epsilon)$, we have
    \begin{align*}
        |\bar{s}_1(x,t) - s_1(x,t)| \lesssim D^{\degree+d/2}h_t^{d/2} \max_{j = 0,1,\ldots, \degree} \left\{\frac{\left\| x - \alpha_t \xstar\right\|^{j}}{(h_t)^{j/2}} \right\} \left( \frac{(\log(1/\epsilon)+d\log(1/h_t)/2)^{2\degree}}{\reach^\degree} \epsilon^{\degree/\holder}  +\epsilon\right).
    \end{align*}
    Here $\lesssim $ hides logarithmic factors and dependence on $d$, $\holder$, $B$, $C_F$ and  the upper bounds on the derivatives of exponential maps. 
\end{lemma}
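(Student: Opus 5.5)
The plan is to split the error by the triangle inequality,
\[
|\bar{s}_1(x,t) - s_1(x,t)| \leq |f_5(x,t) - s_1(x,t)| + |\bar{s}_1(x,t) - f_5(x,t)| .
\]
The first term is already controlled by the bound derived at the end of Appendix~\ref{subsec:local-poly}, obtained by summing \eqref{eq:f1_error}--\eqref{eq:f5-f4_error} with $\epsilon_1=\epsilon_2=\epsilon_3=\delta=\epsilon$ and $|\cI(x)|\lesssim D^{d/2}$. That bound is $D^{(\degree+d)/2}h_t^{d/2}\max_j\{\|x-\alpha_t\xstar\|^j/h_t^{j/2}\}\big(\tau^{-\degree}(\log(1/\epsilon)+d\log(1/h_t)/2)^{2\degree}\epsilon^{\degree/\holder}+\epsilon\big)$. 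It is dominated by the claimed right-hand side, which carries the larger prefactor $D^{\degree+d/2}$. So the work is to show that the network implementation error $|\bar{s}_1-f_5|$ is of the same order, with at most an extra $D^{\degree/2}$ factor.

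For the second term I compare \eqref{eq:s1-nn} with \eqref{eq:f5_poly} summand by summand, for each $j<\degree$, through a telescoping chain. First, replacing $\ind(k\in\cI(x))$ by $\nn_{\rm det}(x-x_k)$ is handled by \eqref{eq:error-nn-det}. The two differ only for $k$ with $(1-\epsilon_{\rm det})\thres^2<\|x-x_k\|^2\le \thres^2$, or $\|x-x_k\|$ slightly beyond $\thres$. For such $k$ I argue, as in Lemma~\ref{lemma:localization}, that the Tensor-Poly contribution is still $\cO(h_t^{d/2}\epsilon)$. The reason is that $\thres$ was chosen with slack, and enlarging it by a factor $(1-\epsilon_{\rm det})^{-1/2}$ keeps the localization valid. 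The number of such $k$ is again $\lesssim D^{d/2}$.

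Second, replacing $v_k(x,t)$ by $\nn_{v_k}(x,t)$ inside Tensor-Poly$^{k,j}$ costs a Lipschitz constant times $\epsilon_v$. Tensor-Poly is a polynomial of degree $\cO((\holder+1)(j+\dgM))$ on a bounded ball, with coefficients involving $(h_t/\alpha_t^2)^{-j/2-l}$ that are compensated by the $\vradius$-scaled integration domain. I bound its gradient by $\mathrm{poly}(1/h_{t_0})$ and choose $\epsilon_v=\epsilon\, h_{t_0}^{c\degree}$, which changes sizes only logarithmically. Third, \eqref{eq:error-nn-poly} swaps Tensor-Poly for $\nn_{\rm poly}$ at cost $h_{t_0}^{d/2}\epsilon_{\rm poly}$ per entry. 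Fourth, \eqref{eq:error-nn-proj} swaps the projection tensor at cost $h_t^{d/2}\epsilon_{\rm proj}$ per entry. Fifth, \eqref{eq:error-nn-prod} accounts for the multiplications, at cost $D^j\epsilon_\times$.

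For the tensor inner product, the entrywise errors aggregate over the $D^j$ entries. I would bound this via $\|a\otimes\cdot\|$ Cauchy--Schwarz: each swap contributes $D^{j/2}$ times the entrywise error times the norm of the other factor. The factor $\|[x-\alpha_t\xstar]^{\otimes j}\|/h_t^{j/2}$ produces the $\max_j\{\|x-\alpha_t\xstar\|^j/h_t^{j/2}\}$ term, and the Tensor-Poly norm is $\lesssim h_t^{d/2}$, since it integrates over a ball of volume $\vradius^d\asymp h_t^{d/2}$. This is where the extra $D^{\degree/2}$, making $D^{\degree+d/2}$, appears. Setting all $\epsilon_\star=\epsilon\cdot h_{t_0}^{\mathcal{O}(\degree)}$ keeps everything below $D^{\degree+d/2}h_t^{d/2}\max_j\{\cdot\}\epsilon$.

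The configuration follows by composing and parallelizing the subnetworks, using the standard ReLU composition and parallelization rules. Depth is the maximum of $L_{\rm proj}=\cO(\log^3)$ and the product/poly depths, giving $\cO(\degree^3\log^3(1/(h_{t_0}\epsilon)))$ after the $\epsilon_\star$ choices. Width and sparsity are sums over $j<\degree$ and $k\le C_\cM$, with $C_\cM\lesssim r^{-d}=\epsilon^{-d/\holder}$ by \eqref{eq:CM-bound}, plus $D^j$ from $\nn_\times$. The weight bound $h_{t_0}^{-\degree}$ comes from $B_{\rm proj}$.

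The main obstacle is the second step, the Lipschitz control of Tensor-Poly in $v$. Its coefficients blow up like negative powers of $h_t$, and the stability of $\nn_{\rm det}$ near the threshold compounds the difficulty. I would handle both by rescaling $v=v_k+\vradius w$, so that Tensor-Poly becomes a polynomial in $w$ with $\mathcal{O}(1)$ coefficients, before differentiating.
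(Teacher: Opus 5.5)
Your overall route is the paper's: $s_1\to f_5$ by Lemma~\ref{lemma:poly_approx_score}, then a chain of network substitutions. The paper groups these as $f_5\to f_6$, which replaces $v_k$ by $\nn_{v_k}$ through a Lipschitz bound (Lemma~\ref{lemma:f7-approx-error}), and then $f_6\to\bar{s}_1$ via \eqref{eq:error-nn-v}--\eqref{eq:error-nn-poly}. The gap is in the accuracy bookkeeping. You take $\epsilon_v=\epsilon\, h_{t_0}^{c\degree}$, and in the end all $\epsilon_\star=\epsilon\, h_{t_0}^{\cO(\degree)}$, and you assert this changes sizes only logarithmically. That holds for $\nn_\times$, $\nn_{\rm poly}$ and $\nn_{\rm det}$, whose sizes depend on their accuracy only through $\log(1/\epsilon_\star)$. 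It fails for $\nn_{v_k}$ and $\nn_{\rm proj}^j$: by \eqref{eq:error-nn-v} and Lemma~\ref{lemma:nn-proj}, their width and sparsity scale like $\epsilon_\star^{-d/\holder}$. Your choice therefore turns the $\epsilon^{-d/\holder}$ term in $W_1$ and $S_1$ into $\epsilon^{-d/\holder}h_{t_0}^{-c\degree d/\holder}$, which contradicts the claimed configuration. Since $t_0=\mathrm{poly}(n^{-1})$, this is a polynomial factor in $n$ and would wipe out the rate in Theorem~\ref{thm:generalization}. The paper instead uses $\epsilon_v=\epsilon_{\rm proj}=\epsilon_{\rm det}=\epsilon_{\rm poly}=\epsilon$ and $\epsilon_\times=h_t^{d/2}\epsilon$, so $h_t$-dependent accuracy is demanded only where it costs a logarithm.

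The missing idea is why $\epsilon_v=\epsilon$ suffices. Lemmas~\ref{lemma:f6-lipschitz} and~\ref{lemma:f7-approx-error} show that the Lipschitz constant of $\text{Tensor-Poly}^{k,j}$ in $v_k$ carries the factor $(h_t/\alpha_t^2)^{d/2}$. That is the same $h_t^{d/2}$ scale as the target error, so no negative power of $h_{t_0}$ has to be compensated. Your rescaling remark is the right tool, but you need to carry it to this conclusion rather than settle for a $\mathrm{poly}(1/h_{t_0})$ gradient bound. After $u=v-v_k$, the variable $v_k$ enters only through $\hat{\Delta}_{\expp_k}(v_k,u)$ and $\hat{F}_k(u+v_k)$, and $\partial_{v_k}\hat{\Delta}_{\expp_k}=\cO(\|u\|)$. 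Hence on $\|u\|\le\vradius$ every normalized factor ($\hat{\Delta}_{\expp_k}/\sqrt{h_t/\alpha_t^2}$ and $\hat{\cD}_k/(h_t/\alpha_t^2)$) and its $v_k$-derivative is polylogarithmic, while the domain contributes $\vradius^d\asymp h_t^{d/2}$.

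One caution: set $y=\alpha_t^2\hat{\cD}_k/h_t\le A\asymp\log(1/\epsilon)+\tfrac{d}{2}\log(1/h_t)$. Bound the truncated series $\sum_{l<\dgM}(-y/2)^l/l!$ and its derivative through their closeness to $e^{-y/2}$ and $-\tfrac12 e^{-y/2}$, which is valid because $\dgM\gtrsim e^2A$. Do not bound them term by term: the termwise bound is $e^{A/2}$, again a polynomial factor in $1/(\epsilon h_t)$, which would force the same expensive $\epsilon_v$.

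For $\nn_{\rm proj}^j$ no extra accuracy is needed either. The $h_t^{d/2}$ factor comes from $\|\text{Tensor-Poly}^{k,j}\|\lesssim h_t^{d/2}$, as you note, or is already built into \eqref{eq:error-nn-proj}. Finally, your explicit handling of the $\nn_{\rm det}$ transition annulus by enlarging $\thres$ is sound. The paper's small-noise proof omits this step; it appears only in Lemma~\ref{lemma:s3-large-time}.
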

\begin{proof}[Proof of Lemma \ref{lemma:s1-approx}]
First, by Lemma \ref{lemma:poly_approx_score}, $s_1$ can be approximated by $f_5$ in \eqref{eq:f5_poly} with the following approximation error
\begin{align*}
        |f_5(x,t) - s_1(x,t)| &\lesssim D^{(\degree+d)/2}h_t^{d/2} \max_{j = 0,1,\ldots, \degree} \left\{\frac{\left\| x - \alpha_t \xstar\right\|^{j}}{(h_t)^{j/2}} \right\}  \left( \frac{(\log(1/\epsilon)+d\log(1/h_t)/2)^{2\degree}}{\reach^{\degree}} \epsilon^{\degree/\holder}  +\epsilon\right),
    \end{align*}
which holds for all $x\in\cK_t(\epsilon)$. 
Replacing the low-dimensional representation $v_k(x,t)$ by its network implementation $\nn_{v_k}$ in $f_5(x,t)$ \eqref{eq:f5_poly} gives rise to
\begin{align}\label{eq:f6}
f_6(x, t) 
  = \sum_{k \in \cI(x)} \sum_{j=0}^{\gamma-1} \frac{(-1)^j}{j!(h_t)^{j/2}} \left\langle [x - \alpha_t x_t^*]^{\otimes j},  \text{Tensor-Poly}^{k,j}(\nn_{v_k}(x,t), h_t, \alpha_t)\right\rangle.
\end{align}
Lemma \ref{lemma:f7-approx-error} bounds the error of $f_6$ approximating $f_5$:
\begin{align*}
    |f_6(x,t)-f_5(x,t)| &\lesssim D^{\degree+d/2} h_t^{d/2}  (\log(1/\epsilon)+d\log(1/h_t)/2)^{2\degree}\epsilon_v.
\end{align*}
This error is controlled by the approximation cost $\epsilon_v$ given in \eqref{eq:error-nn-v}, together with the Lipschitz property of $f_5(x,t)$ with respect to $v_k(x,t)$ (Lemma \ref{lemma:f6-lipschitz}). Now we combine the results in \eqref{eq:error-nn-v}-\eqref{eq:error-nn-poly}. Taking $\epsilon_v=\epsilon_\text{proj} = \epsilon_{\rm det}=\epsilon_\text{poly} = \epsilon$ and $\epsilon_\times = h_t^{d/2}\epsilon$, we have
    \begin{align*}
        |\bar{s}_1(x,t) - f_6(x,t) |\lesssim h_t^{d/2} D^\degree\left( \log(1/\epsilon) +d \log(1/h_t)/2 \right)^{\degree} \epsilon + D\degree h_t^{d/2}\epsilon.
    \end{align*}
   Here $\bar{s}_1$ defined in \eqref{eq:s1-nn} can be written as a feedforward network $\bar{s}_1 \in \cF(L_1,W_1,S_1,B_1,\cdot)$, where 
    \begin{align*}
        & L_1 =\cO\left(\degree^3 \log^3(1/(h_{t_0}\epsilon))  \right),\quad
        W_1 = \cO \left(D^{\degree} \vee \epsilon^{-d/\holder} \degree^3\log^3(1/(h_{t_0}\epsilon))  \right), \\
        &S_1 = \cO\left( \epsilon^{-d/\holder} \left(\degree\log(1/(h_{t_0}\epsilon))  + \log^4(1/h_{t_0}) \right) +  \degree D^{\degree}\log(1/\epsilon ) \right),  \quad
        B_1 = \cO\left( h_{t_0}^{-\degree}\right).
    \end{align*}
    Here $\cO(\cdot)$ hides dependence on $d$, $\holder$, $B$, $C_F$, and the upper bounds on the derivatives of exponential maps.
 Therefore, using the triangle inequality, we finally obtain
    \begin{align*}
        |\bar{s}_1(x,t) - s_1(x,t)| \lesssim D^{\degree+d/2}h_t^{d/2} \max_{j = 0,1,\ldots, \degree} \left\{\frac{\left\| x - \alpha_t \xstar\right\|^{j}}{(h_t)^{j/2}} \right\} \left( \frac{(\log(1/\epsilon)+d\log(1/h_t)/2)^{2\degree}}{\reach^\degree} \epsilon^{\degree/\holder}  +\epsilon\right).
    \end{align*}
    The proof is complete.
\end{proof}

\subsubsection{Bounding Approximation Error of $\bar{s}_2$}\label{sec:s2-error-analysis}

It remains to construct a network $\bar{s}_{2} =\{\bar{s}_{2,i} \}_{i=1}^D$ that  approximates each entry of $s_2$:
\begin{align*}
     s_2(x,t) = \int_{x_0 \in \cM} -\frac{\alpha_t}{\sqrt{h_t}}(\xstar-x_0)\exp \left( - \frac{ \|\alpha_t \xstar- \alpha_t x_0\|^2 + 2 \langle x - \alpha_t \xstar , \alpha_t \xstar - \alpha_t x_0\rangle }{2h_t} \right) \ud \dist(x_0).
\end{align*}
The construction follow the same approximation procedure as that to $s_1$. We provide the approximation guarantees as follows. 

\begin{lemma}[Approximation result for $s_2(x,t)$]\label{lemma:s2-approx}
    Given $\epsilon \in \big(0,(\min\{1,3\reach, \rratio L_\logg \reach, \rratio \reach/(4 L_\expp)  \})^\holder \big)$, and time $t\geq t_0$ satisfying \eqref{cond:time}, let $r = \epsilon^{1/\holder}$.
    Then for any fixed $\degree>0$, there exists a network $\bar{s}_{2} =\{\bar{s}_{2,i} \}_{i=1}^D$ such that for all $x\in\cK_t(\epsilon)$, we have
    \begin{align*}
        \|\bar{s}_2(x,t) - s_2(x,t)\| \lesssim D^{\degree+(d+1)/2}h_t^{d/2} \max_{j = 0,1,\ldots, \degree} \left\{\frac{\left\| x - \alpha_t \xstar\right\|^{j}}{(h_t)^{j/2}} \right\} \left( \frac{(\log(1/\epsilon)+d\log(1/h_t)/2)^{2\degree}}{\reach^\degree} \epsilon^{\degree/\holder}  +\epsilon\right).
    \end{align*}
    Here each entry $\bar{s}_{2,i} $ is a network in $ \cF(L_2, W_2, S_2, B_2,\cdot)$ with 
    \begin{align*}
        & L_2 =\cO\left(\degree^3 \log^3(1/(h_{t_0}\epsilon))  \right),\quad
        W_2 = \cO \left(D^{\degree} \vee \epsilon^{-d/\holder} \degree^3\log^3(1/(h_{t_0}\epsilon))  \right), \\
        &S_2 = \cO\left( \epsilon^{-d/\holder} \left(\degree\log(1/(h_{t_0}\epsilon))  + \log^4(1/h_{t_0}) \right) +  \degree D^{\degree}\log(1/\epsilon ) \right)  \quad
        B_2 = \cO\left( h_{t_0}^{-\degree}\right).
    \end{align*}
\end{lemma}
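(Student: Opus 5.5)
The plan is to follow the construction for $s_1$ in Lemma~\ref{lemma:s1-approx} coordinate by coordinate. The only new ingredient is the extra factor $-\frac{\alpha_t}{\sqrt{h_t}}(\xstar - x_0)_i$ in the integrand. Fix $i\in\{1,\dots,D\}$. Using the atlas and the decomposition \eqref{globaldensity}, we write
\[
s_{2,i}(x,t)=\sum_{k=1}^{C_\cM}\int_{\dball(0,r)}\frac{\alpha_t}{\sqrt{h_t}}\bigl(\expp_{k,i}(v)-\expp_{k,i}(v_k(x,t))\bigr)\exp\Bigl(-\tfrac{\alpha_t}{h_t}\cT_k-\tfrac{\alpha_t^2}{2h_t}\cD_k\Bigr)F_k(v)\,\ud v .
\]
The new factor $\frac{\alpha_t}{\sqrt{h_t}}(\expp_{k,i}(v)-\expp_{k,i}(v_k))$ is, up to sign, the $i$-th entry of $\Delta_{\expp_k}(v_k,v)/(h_t/\alpha_t^2)^{1/2}$. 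This is exactly the type of object already appearing in Tensor-Poly$^{k,j}$, with one extra tensor order.

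The steps then mirror $f_1,\dots,f_5$. (i) Localize to $\cI(x)$ and the balls $\cB^d(v_k,\Delta(t))$ as in Lemma~\ref{lemma:localization}. On the discarded region the extra factor is at most $\frac{\alpha_t}{\sqrt{h_t}}\,\mathrm{diam}(\cM)\lesssim \sqrt{D}B_\cM h_t^{-1/2}$. This is absorbed by shrinking $\epsilon_1$ by a polynomial factor in $1/h_t$, which costs only logarithmic changes in $\Delta(t)$ and $\thres$. On the retained ball, $\|\Delta_{\expp_k}\|\le L_\expp\Delta(t)$, so the factor is only $\tilde{\cO}(1)$. (ii) Replace $F_k$ by $\hat F_k$, and truncate both exponentials at degrees $\gamma$ and $\gamma'$, exactly as in Lemmas~\ref{lemma:Fk-approx-error} and \ref{lemma:f3-approx-error}. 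Each error is multiplied by the $\tilde{\cO}(1)$ bound on the extra factor. (iii) Replace $\expp_{k,i}$ in the extra factor, and in $\cT_k,\cD_k$, by the averaged Taylor polynomial $\hat\expp^{\beta+1}_{k,i}$. The extra factor then has error $\frac{\alpha_t}{\sqrt{h_t}}\cdot\cO(r^{\beta}\|v-v_k\|)$ by the $W^{1,\infty}$ property (Lemma~\ref{lemma:prop-Dk-hat}). Since $\|v-v_k\|\le\Delta(t)=\tilde{\cO}(\sqrt{h_t})$, this is $\tilde{\cO}(\epsilon_2)$; this cancellation of $h_t^{-1/2}$ is the reason averaged Taylor polynomials are used. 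The result is
\[
f_5^{(i)}(x,t)=\sum_{j=0}^{\gamma-1}\frac{1}{j!}\Bigl\langle \tfrac{(-1)^j}{h_t^{j/2}}[x-\alpha_t\xstar]^{\otimes j},\ \sum_{k}\ind(k\in\cI(x))\,\text{Tensor-Poly}^{k,j,i}(v_k,h_t,\alpha_t)\Bigr\rangle ,
\]
where $\text{Tensor-Poly}^{k,j,i}$ is $\text{Tensor-Poly}^{k,j}$ with the integrand multiplied by $\alpha_t h_t^{-1/2}\hat\Delta_{\expp_{k,i}}(v_k,v)$. It is still a polynomial in $v_k$, of degree raised by $\beta+1$.

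For the network, we reuse the architecture \eqref{eq:s1-nn} and replace only $\nn_{\rm poly}^{k,j}$ by a network implementing the entries of $\text{Tensor-Poly}^{k,j,i}$ (Lemma~\ref{lemma:nn-poly}). The components $\nn_{v_k},\nn_{\rm proj}^j,\nn_{\rm det},\nn_\times^j$ are shared, so each $\bar s_{2,i}$ has the same order of $L,W,S,B$ as $\bar s_1$, giving $L_2,W_2,S_2,B_2$. The error analysis for replacing $v_k$ by $\nn_{v_k}$, and for the network-level errors, follows Lemmas~\ref{lemma:f7-approx-error} and \ref{lemma:f6-lipschitz} verbatim, with the extra bounded factor.

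Each coordinate thus satisfies the bound of Lemma~\ref{lemma:s1-approx}. Summing squares over $i$ produces the extra $\sqrt{D}$, which is the factor $D^{\degree+(d+1)/2}$. The main obstacle is step (iii) together with the tail in step (i): we must check that the $h_t^{-1/2}$ prefactor never survives. In the tail this relies on the Gaussian decay beating $h_t^{-1/2}$; on the ball it relies on the first-order accuracy of $\hat\expp$ near $v_k$. If a clean cancellation fails, the fallback is to bound $\|v-v_k\|/\sqrt{h_t}$ by $\tilde{\cO}(1)$ directly on $\cB^d(v_k,\Delta(t))$, which still loses only logarithmic factors.
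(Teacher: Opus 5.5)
Your proposal is correct and follows the paper's argument. The paper also handles $s_2$ coordinatewise by rerunning the $s_1$ construction. It absorbs the extra $h_t^{-1/2}$ prefactor in the tail by enlarging the truncation radius, replacing $d\log(1/h_t)/2$ with $(d+1)\log(1/h_t)/2$ in $\Delta(t)$, which is your shrinking of $\epsilon_1$ by a factor $h_t^{1/2}$. It notes that the polynomial degrees increase while the network sizes keep the same order, and it gets the extra $\sqrt{D}$ from $\|\bar s_2 - s_2\|\le\sqrt{D}\max_i|\bar s_{2,i}-s_{2,i}|$. Your step (iii) supplies a detail the paper leaves implicit in its ``carry through'' claim: on the retained ball, the first-order accuracy of the averaged Taylor polynomial cancels the $h_t^{-1/2}$.
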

\begin{proof}[Proof of Lemma \ref{lemma:s2-approx}]
Compared to $s_1(x,t)$, the main difference in $s_2(x,t)$ is the extra $-\alpha_t(\xstar-x_0)/\sqrt{h_t}$ in the integrand. It suffices to slightly enlarge the truncation radius to to cancel out the scale of this extra term,
$$\vradius= 2L_\logg \sqrt{(h_t/\alpha_t^2)(\log(1/\epsilon)+(d+1)\log(1/h_t)/2)},$$ 
then the analysis in Lemma \ref{lemma:s1-approx} can be carried through. Consequently, the degrees of the polynomial approximators with respect to each entry of $s_2$ are increased by $1$, while this does not affect the order of approximation error and network size. Therefore, for $i =1,\ldots,D$ and any $x\in\cK_t(\epsilon)$, we have
\begin{align*}
        |\bar{s}_{2,i}(x,t) - s_{2,i}(x,t)| &\lesssim D^{\degree+d/2}h_t^{d/2} \max_{j = 0,1,\ldots, \degree} \left\{\frac{\left\| x - \alpha_t \xstar\right\|^{j}}{(h_t)^{j/2}} \right\} \left( \frac{(\log(1/\epsilon)+d\log(1/h_t)/2)^{2\degree}}{\reach^\degree} \epsilon^{\degree/\holder}  +\epsilon\right).
    \end{align*}
The network configuration of $\bar{s}_{2,i}$ is in the same order as $\bar{s}_1$ given in Lemma \ref{lemma:s1-approx}. Finally, we can conclude the proof by applying $ \|\bar{s}_{2,i}(x,t) - s_{2,i}(x,t)\| \leq \sqrt{D} \max_{i=1,\ldots,D} |\bar{s}_{2,i}(x,t) - s_{2,i}(x,t)|. $
\end{proof}

\subsubsection{Constructing Network Approximation to $s_{\cM}(x,t)$ and $\nabla\log p_t(x)$}\label{sec:sM-approx}

In this step, we construct a neural network to approximate the on-manifold score $s_\cM$ in \eqref{eq:def-sM}, using networks $\bar{s}_1$ (Lemma \ref{lemma:s1-approx}) and $\bar{s}_2$ (Lemma \ref{lemma:s2-approx}).
The formal result is summarized as follows.
\begin{lemma}\label{lemma:approx-small-sm}
Given $\epsilon \in \big(0,(\min\{1,3\reach, \rratio L_\logg \reach, \rratio \reach/(4 L_\expp)  \})^\holder \big)$ and time $t\geq t_0$ satisfying \eqref{cond:time}, let $r = \epsilon^{1/\holder}$.  Then there exists a network $\bar{s}_\cM \in \cF(L,W,S,B,\cdot)$  with 
    \begin{align*}
        & L =\cO\left(\degree^3 \log^3(1/(h_{t_0}\epsilon))  \right),\quad
        W = \cO \left(D^{\degree} \vee \epsilon^{-d/\holder} \degree^3\log^3(1/(h_{t_0}\epsilon))  \right), \\
        &S = \cO\left( \epsilon^{-d/\holder} \left(\degree\log(1/(h_{t_0}\epsilon))  + \log^4(1/h_{t_0}) \right) +  \degree D^{\degree}\log(1/\epsilon ) \right),  \quad
        B = \cO\left( h_{t_0}^{-\degree}\right),
    \end{align*}
    such that for all $x\in\cK_t(\epsilon)$, we have
    \begin{align*}
    \left\| \bar{s}_\cM (x,t)- s_\cM (x,t) \right\|  
    &\lesssim \frac{ D^{\degree+d/2+1}}{\sqrt{h_t}} \exp \left( \frac{5(d+1) \log(1/h_t) \|x- \alpha_t\xstar \|}{\reach \alpha_t} \right) \\
    &\quad\cdot \max_{j = 0,1,\ldots, \degree} \left\{\frac{\left\| x - \alpha_t \xstar\right\|^{j}}{(h_t)^{j/2}} \right\} \left(  \frac{(\log(1/\epsilon)+d\log(1/h_t)/2)^{2\degree}}{\reach^{\degree}} \epsilon^{\degree/\holder}  +\epsilon\right).
\end{align*}
     Here $\lesssim$ hides dependence on $d, \holder,B, C_f$ and the upper bounds on the derivatives of exponential maps, and $\degree>0$ is some constant to be chosen.
\end{lemma}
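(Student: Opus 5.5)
The plan is to realize $s_\cM = (s_2/\sqrt{h_t})/s_1$ by feeding $\bar{s}_1$ and $\bar{s}_2$ into a ReLU approximate-division network. The error of the quotient is then controlled by a lower bound on $s_1$. Concretely, I would set
\[
\bar{s}_\cM(x,t) = \frac{1}{\sqrt{h_t}}\,\nn_{\times}\big(\bar{s}_2(x,t), \nn_{\rm rec}(\bar{s}_1(x,t))\big).
\]
Here $\nn_{\rm rec}$ approximates $u\mapsto 1/u$ on an interval $[c_{\min}, c_{\max}]$ and is clipped below at $c_{\min}$ so the output stays bounded. The factor $1/\sqrt{h_t}$ is absorbed into the weight bound $B=\cO(h_{t_0}^{-\degree})$. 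Reciprocal networks of logarithmic size in the accuracy and in $c_{\max}/c_{\min}$ are standard (e.g.\ as in \cite{oko2023diffusionmodelsminimaxoptimal}), so the configuration of $\bar{s}_\cM$ is dominated by Lemmas~\ref{lemma:s1-approx} and \ref{lemma:s2-approx}.

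\textbf{Step 1: lower bound on $s_1$.} This is the main step. I would restrict the integral in \eqref{eq:def-s1} to $x_0$ with $\|\alpha_t x_t^* - \alpha_t x_0\|\le \sqrt{h_t}$.
\begin{itemize}
\item On this set, $E_1\le h_t$.
\item By Lemma~\ref{lemma:cross-term}, $|E_2|\le 4\|x-\alpha_t x_t^*\|E_1/(\alpha_t\reach)$.
\item The density bound $\density\ge C_f^{-1}$ holds.
\item The chart volume comparison gives $\dist$-mass at least $c\,(\sqrt{h_t}/\alpha_t)^d$, using $G_k$ bounded below on small balls.
\end{itemize}
Together these give $s_1 \gtrsim h_t^{d/2}\exp(-C\|x-\alpha_t x_t^*\|/(\alpha_t\reach))$. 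The stated bound carries the exponent $5(d+1)\log(1/h_t)\|x-\alpha_t x_t^*\|/(\reach\alpha_t)$, which is larger than this crude estimate. I expect that factor to arise from controlling $E_2$ over the full localization ball of radius $\vradius$, where $E_1\lesssim h_t(d+1)\log(1/h_t)$, rather than over the $\sqrt{h_t}$ ball. Using that ball is harmless and matches the stated form.

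\textbf{Step 2: bounds on $s_2$ and $s_1$.} Similarly, $\|s_2\|/s_1 = \|\alpha_t(\mathbb{E}[X_0\mid X_t=x]-x_t^*)\|/\sqrt{h_t}$. By the localization of Lemma~\ref{lemma:localization}, up to a negligible tail, this is at most $\cO(\sqrt{\log(1/h_t)})$. Also $s_1\lesssim h_t^{d/2}e^{\cO(\cdot)}$.

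\textbf{Step 3: the error decomposition.} Write
\[
\Big\|\frac{\bar s_2}{\bar s_1}-\frac{s_2}{s_1}\Big\| \le \frac{\|\bar s_2-s_2\|}{\bar s_1} + \frac{\|s_2\|}{s_1}\cdot\frac{|\bar s_1-s_1|}{\bar s_1}.
\]
I would set $c_{\min}$ to half the lower bound from Step 1. Whenever the error of Lemma~\ref{lemma:s1-approx} is below $c_{\min}$, we get $\bar s_1\ge s_1/2$, so the clipping is inactive. If the error is not below $c_{\min}$, the claimed bound already exceeds the trivial bound on both terms, so the inequality holds vacuously after adjusting constants.

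Substituting the error bounds of Lemmas~\ref{lemma:s1-approx} and \ref{lemma:s2-approx} (each carrying $h_t^{d/2}$) divides out the $h_t^{d/2}$ in the lower bound. This leaves the exponential factor, $D^{\degree+d/2+1/2}$, and the common bracket; one extra $\sqrt D$ and the $\sqrt{\log}$ from Step 2 go into the hidden constants, giving $D^{\degree+d/2+1}$. The division and multiplication network errors are set to $h_t^{d/2}\epsilon$, which keeps them of lower order. Multiplying by $1/\sqrt{h_t}$ yields the claim.

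The obstacle I expect is Step 1. The lower bound must be uniform over $x\in\cK_t(\epsilon)$, including points with $\|x-\alpha_t x_t^*\|$ up to $2\sqrt{Dh_t\log(1/\epsilon)}$. The exponential factor must be tracked carefully so that it only enters through $\|x-\alpha_t x_t^*\|/(\reach\alpha_t)$. Those points remain in the $L^2$ analysis that follows.
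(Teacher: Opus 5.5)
Your proposal is correct and is essentially the paper's proof. Like you, the paper forms $\bar s_2/(\sqrt{h_t}\,\bar s_1)$ and uses the same triangle-inequality split (Lemma~\ref{lemma:final-approx-error}). It lower-bounds $s_1$ by restricting to a ball of radius of order $\sqrt{h_t}/\alpha_t$ around $\xstar$ with Lemma~\ref{lemma:cross-term} (Lemma~\ref{lemma:s1-lb}), and implements the quotient with the division network of \cite{oko2023diffusionmodelsminimaxoptimal}; your clipping of $\bar s_1$ at $c_{\min}$ makes explicit a step the paper leaves implicit.

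One correction to your speculation: the $5(d+1)\log(1/h_t)$ exponent does not come from the $s_1$ lower bound. It comes from the absolute upper bound on $\|s_2\|$ in Lemma~\ref{lemma:s2-ub}, where $E_2$ is controlled on a ball of radius $\sqrt{2(d+1)h_t\log(1/h_t)}/\alpha_t$. Your posterior-mean bound on $\|s_2\|/s_1$ avoids that factor, and since the factor is at least $1$, your estimate implies the stated one.
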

\begin{proof}[Proof of Lemma~\ref{lemma:approx-small-sm}]
The idea is to first probe the approximation ability of the ratio \begin{align}\label{eq:nn-sm}
\bar{s}_{\cM}(x, t) = \frac{\bar{s}_2(x,t)/\sqrt{h_t}}{\bar{s}_1(x,t)}
\end{align}
to the targeted score $s_{\cM}$. Then we implement \eqref{eq:nn-sm} using a neural network by realizing the division operation.

We invoke Lemma \ref{lemma:final-approx-error} with approximation errors to $s_1$ and $s_2$ given in Lemma \ref{lemma:s1-approx} and \ref{lemma:s2-approx}, respectively. This yields the following error bound:
\begin{align*}
     \left\| s_{\cM}(x,t) -  \frac{\bar{s}_2(x,t)/\sqrt{h_t}}{\bar{s}_1(x,t)} \right\|  \lesssim &  \frac{ D^{\degree+d/2+1}}{\sqrt{h_t}} \exp \left( \frac{5(d+1) \log(1/h_t) \|x- \alpha_t\xstar \|}{\reach \alpha_t} \right) \notag\\
    \cdot &\max_{j = 0,1,\ldots, \degree} \left\{\frac{\left\| x - \alpha_t \xstar\right\|^{j}}{(h_t)^{j/2}} \right\} \left(  \frac{(\log(1/\epsilon)+d\log(1/h_t)/2)^{2\degree}}{\reach^{\degree}} \epsilon^{\degree/\holder}  +\epsilon\right).
\end{align*}
Then by Lemma F.7 in \cite{oko2023diffusionmodelsminimaxoptimal}, there exists a feedforward network $\bar{\phi}$ with no more than $\cO(\log^2(1/\epsilon))$ layers, width bounded by $\cO(\log^3(1/\epsilon))$, at most non-zero $\cO(\log^4(1/\epsilon))$ neurons and weight parameters bounded by $\cO(\epsilon^{-2})$, such that
\begin{align*}
    \left\| \bar{\phi}\left(\bar{s}_1(x,t), \bar{s}_2(x,t)\right) -  \frac{\bar{s}_2(x,t)/\sqrt{h_t}}{\bar{s}_1(x,t)} \right\| \leq \epsilon.
\end{align*}
Taking $\bar{s}_\cM = \bar{\phi}(\bar{s}_1, \bar{s}_2)$ leads to
\begin{align*}
    \left\| \bar{s}_\cM (x,t)- s_\cM (x,t) \right\|  &\leq \left\| \bar{\phi}\left(\bar{s}_1(x,t), \bar{s}_2(x,t)\right) -  \frac{\bar{s}_2(x,t)/\sqrt{h_t}}{\bar{s}_1(x,t)} \right\| + \left\|   \frac{\bar{s}_2(x,t)/\sqrt{h_t}}{\bar{s}_1(x,t)} -s_{\cM}(x,t) \right\| \\
    &  \lesssim   \frac{ D^{\degree+d/2+1}}{\sqrt{h_t}} \exp \left( \frac{5(d+1) \log(1/h_t) \|x- \alpha_t\xstar \|}{\reach \alpha_t} \right) \\
    &\quad\cdot \max_{j = 0,1,\ldots, \degree} \left\{\frac{\left\| x - \alpha_t \xstar\right\|^{j}}{(h_t)^{j/2}} \right\} \left(  \frac{(\log(1/\epsilon)+d\log(1/h_t)/2)^{2\degree}}{\reach^{\degree}} \epsilon^{\degree/\holder}  +\epsilon\right).
\end{align*}
Combined with the network size of $\bar{s}_1,\bar{s}_2$ given in Lemma \ref{lemma:s1-approx} and \ref{lemma:s2-approx}, we deduce that
$\bar{s}_\cM \in \cF(L,W,S,B,\cdot)$  with 
\begin{align*}
        & L =\cO\left(\degree^3 \log^3(1/(h_{t_0}\epsilon))  \right),\quad
        W = \cO \left(D^{\degree} \vee \epsilon^{-d/\holder} \degree^3\log^3(1/(h_{t_0}\epsilon))  \right), \\
        &S = \cO\left( \epsilon^{-d/\holder} \left(\degree\log(1/(h_{t_0}\epsilon))  + \log^4(1/h_{t_0}) \right) +  \degree D^{\degree}\log(1/\epsilon ) \right),  \quad \text{and} \quad
        B = \cO\left( h_{t_0}^{-\degree}\right).
    \end{align*}
The proof is complete.
\end{proof}

To this end, we are ready to prove the score approximation theory for the small noise regime (Lemma \ref{lemma:approx-small}). Recall the score decomposition in Lemma \ref{lemma:score-decomp-small}:
\begin{align*}
    \nabla\log p_t(x) = s_\cM(x,t) - \frac{x-\projm(x,t)}{h_t}.
\end{align*}
Adopting $\bar{s}_\cM$ given in Lemma \ref{lemma:approx-small-sm} and the projection network given in Lemma \ref{lemma:nn-proj}, we can construct a network $\nnsmall \in \cF(L,W,S,B,\cdot)$, where 
\begin{align*}
        & L =\cO\left(\degree^3 \log^3(1/(h_{t_0}\epsilon))  \right),\quad
        W = \cO \left(D^{\degree} \vee \epsilon^{-d/\holder} \degree^3\log^3(1/(h_{t_0}\epsilon))  \right), \\
        &S = \cO\left( \epsilon^{-d/\holder} \left(\degree\log(1/(h_{t_0}\epsilon))  + \log^4(1/h_{t_0}) \right) +  \degree D^{\degree}\log(1/\epsilon ) \right),  \quad
        B = \cO\left( h_{t_0}^{-\degree}\right).
    \end{align*}
such that for any $t\geq t_0$ satisfying \eqref{cond:time} and $x\in\cK_t(\epsilon)$,
\begin{align*}
    \left\| \nnsmall (x,t)- \nabla\log p_t(x)\right\|  
    &\lesssim \frac{ D^{\degree+d/2+1}}{\sqrt{h_t}} \exp \left( \frac{5(d+1) \log(1/h_t) \|x- \alpha_t\xstar \|}{\reach \alpha_t} \right) \\
    &\quad\cdot \max_{j = 0,1,\ldots, \degree} \left\{\frac{\left\| x - \alpha_t \xstar\right\|^{j}}{(h_t)^{j/2}} \right\} \left(  \frac{(\log(1/\epsilon)+d\log(1/h_t)/2)^{2\degree}}{\reach^{\degree}} \epsilon^{\degree/\holder}  +\epsilon\right),
\end{align*}
The remaining part repeats the argument in the beginning of Appendix \ref{sec:small-approx} with more details.
Recall that we decompose the $L^2$ approximation error of $\nnsmall$ as as
    \begin{align*}
        \left\|\nnsmall(x,t) - \nabla \log p_t(x) \right\|^2_{L^2(P_t)} = \left( \int_{x\in \cK_t(\epsilon)} + \int_{x\in\RR^D \setminus \cK_t(\epsilon)} \right) \left\|\nnsmall(x,t) - \nabla \log p_t(x) \right\|^2 p_t(x) \ud x.
    \end{align*}
We can bound the integral within $\cK_t(\epsilon)$ via the approximation error of $\nnsmall$:
    \begin{align*}
       \int_{x\in \cK_t(\epsilon)} \left\|\nnsmall(x,t) - \nabla \log p_t(x) \right\|^2 p_t(x) \ud x 
  &\lesssim  \frac{D^{2\degree+d+2}}{h_t}\left( \frac{\epsilon^{2\degree/\holder}}{\reach^{2\degree}} +\epsilon^2\right).
    \end{align*}
    For the integral over $\RR^D\setminus \cK_t(\epsilon)$, we have
    \begin{align*}
      \int_{x\in\RR^D \setminus \cK_t(\epsilon)}  \left\|\nnsmall(x,t) - \nabla \log p_t(x) \right\|^2 p_t(x) \ud x &\leq 2 \int_{x\in\RR^D \setminus \cK_t(\epsilon)} \left( \left\|\nnsmall(x,t)  \right\|^2 + \left\|\nabla \log p_t(x) \right\|^2 \right) p_t(x) \ud x.
\end{align*}
By Lemma \ref{lemma:truncate-x}, we 
   can bound the probability mass outside the truncation region $\cK_t(\epsilon)$, which yields,
    \begin{align*}
         \int_{x\in\RR^D \setminus \cK_t(\epsilon)} \left\|\nnsmall(x,t)  \right\|^2 p_t(x) \ud x \leq \left\|\nnsmall(x,t)  \right\|_\infty^2 \PP\left(x \notin \cK_t(\epsilon) \right) \leq  \left\|\nnsmall(x,t)  \right\|_\infty^2 \epsilon^D.
    \end{align*}
    Since $\|\nabla p_t(x)\| = \cO(\sqrt{\log(1/\epsilon)/h_t})$ for $x \in \cK_t(\epsilon)$, it suffices to clip $\nnsmall(x,t)$ by $R = \cO(\sqrt{\log(1/\epsilon)/h_t})$ via  a feedforward layer as constructed in Lemma F.5 of \cite{oko2023diffusionmodelsminimaxoptimal}. 
Moreover, we have
\begin{align}\label{eq:score-tail-bound}
      \int_{x\in\RR^D \setminus \cK_t(\epsilon)}  \left\|\nabla \log p_t(x) \right\|^2  p_t(x) \ud x \leq \frac{1}{h_t} \int_{x\in\RR^D \setminus \cK_t(\epsilon)}  \frac{2\|x-\alpha_t\xstar\|^2+2DB^2}{h_t}  p_t(x) \ud x  \lesssim \frac{\epsilon^2}{h_t},
\end{align}
for sufficiently large $D >0$. Finally, we can conclude that the network $\nnsmall(x,t)$ has the following $L^2$ approximation error for any $t \in [t_0,t_{\rm small}]$, 
\begin{align*}
        \left\|\nnsmall(x,t) - \nabla \log p_t(x) \right\|^2_{L^2(P_t)} & \lesssim  \frac{D^{2\degree+d+2}}{h_t}\left( \frac{\epsilon^{2\degree/\holder}}{\reach^{2\degree}} +\epsilon^2\right).
    \end{align*}

\subsection{Large Noise: Proof of Lemma~\ref{lemma:approx-large}}\label{sec:approx-large}

Score approximation is less challenging in the large noise regime, where clean data is corrupted by Gaussian noise with large variance $h_t$. The resulting score function is smoother than the small noise counterpart.
The large noise regime is characterized by
\begin{align}\label{cond:time-large}
    h_t \geq  \epsilon^{2/\holder}/4,
\end{align}
where $\epsilon>0$ is the approximation error we aim to achieve. We will choose $\epsilon$ later so that the large noise and small noise regimes overlap. Our goal is to construct a network $\nnlarge(x,t)$ approximating the score function $\nabla \log p_t(x)$ in this regime. By the same truncation argument in \eqref{eq:truncation}, it suffices to construct $\nnlarge(x,t)$ for a pointwise approximation to $\nabla \log p_t$ for any $x\in\cK_t(\epsilon)$.

In this regime, the precise projection $\projm(x,t)$ is no longer necessary. Instead, it suffices to decompose the score into components with respect to each local tangent space of $\cM$ and orthogonal components perpendicular to these tangent spaces. The tangent space projections $\projk$ have explicit forms of linear operations and thus avoid the complex approximation of $\projm$. Lemma \ref{lemma:score-decomp-large} provides the following score decomposition:
\begin{align*}
    \nabla \log p_t(x) 
 = \sum_{k=1}^{C_\cM} w_k(x, t)  \bigg( \frac{\alpha_t \EE_{X_0 \sim \mu_k}[X_0 | X_t = x] - \projk(x, t)}{h_t} -\frac{x-\projk(x, t)}{h_t} \bigg),
\end{align*}
where 
\begin{align*}
    w_k(x, t) =  \frac{\int_{x_0 \in U_k} \exp \left( -\frac{\|x - \alpha_t x_0\|^2}{2h_t} \right) \rho_k(x_0) \pdata(x_0) \ud \vol(x_0)}{\sum_{j=1}^{C_\cM} \int_{x_0 \in U_j} \exp \left( -\frac{\|x - \alpha_t x_0\|^2}{2h_t} \right) \rho_j(x_0) \pdata(x_0) \ud \vol(x_0)} 
\end{align*}
We firstly focus on the approximation of the denominator in the weighting function $w_k(x, t)$. For notational simplcity, we denote
\begin{align}\label{eq:def-s3}
    s_3(x,t) := \sum_{j=1}^{C_\cM} \int_{x_0 \in U_j} \exp \left( -\frac{\|x - \alpha_t x_0\|^2}{2h_t} \right) \rho_j(x_0) \pdata(x_0) \ud \vol(x_0).
\end{align}
The approximation of $s_3$ follows a similar process to Section \ref{sec:small-approx}: we first approximate the components of $s_3$ by local polynomials, then implement the local polynomials by neural networks. We defer technical lemmas to Appendix \ref{sec:proof-large}. For the numerator, we can take a more direct approach to merge the on-support and orthogonal components together (see details in Section \ref{sec:large-score-approx}).

\subsubsection{Local Polynomial Construction}

Recall that we construct the atlas as $\{(U_k,\logg_k)\}_{k=1}^{C_\cM}$, where $U_k = \expp_k(\dball(0,r))$. For the target approximation error $\epsilon>0$, we choose the radius $r = \epsilon^{1/\holder}$, which matches the choice in the small noise regime.
We use the same notation $\thres$ for the chart determination threshold, but set $\thres = 2 \sqrt{h_t \log(1/\aerror)} + h_tB + L_\expp \epsilon$, where $\aerror>0$ is some hyperparameter that we will instantiate later. We define $g_1$ as 
\begin{align*}
        g_1(x,t) = \sum_{k: \| x - x_k\| \leq \thres} \int_{x_0 \in U_k} \exp \left( -\frac{\|x - \alpha_t x_0\|^2}{2h_t} \right) \rho_k(x_0) p_{\rm data}(x_0) \ud \vol(x_0).
\end{align*}
Lemma \ref{lemma:large-truncation} demonstrates that $g_1(x,t)$ well approximates $s_3(x,t)$ while involves much less charts. For any time $t$ satisfying \eqref{cond:time-large}, it holds that
\begin{align}\label{eq:g1-approx}
    \left\|g_1(\cdot,t)-s_3(\cdot,t)\right\|_{L^\infty} \leq \aerror.
\end{align}
Recall we denote the projection onto the $k$-th tangent space as   $\projk(x,t) := \argmin_{y \in \alpha_t \cdot T_{x_k}\cM} \| y-x\|$, which inspires the following decomposition,
\begin{align*}
        \|x - \alpha_t x_0\|^2 = \|x - \projk(x,t)\|^2 + \cT_k(x, x_0, t)  + \alpha_t^2 \cD_k(x,x_0, t),
    \end{align*}    
where we reload the notations from Section \ref{sec:small-approx} as
\begin{align*}
    \cT_k(x, x_0, t) &= \langle x - \projk(x,t), \projk(x,t) - \alpha_t x_0 \rangle \quad \text{and} \\
    \cD_k(x,x_0, t) &= \|\projk(x,t)/\alpha_t - x_0\|^2.
\end{align*}
Utilizing the decomposition, we can rewrite $g_1$ as 
\begin{align*}
    g_1(x,t) = \sum_{k: \| x - x_k\| \leq \thres}  \exp \left( -\frac{\|x - \projk(x,t)\|^2}{2h_t} \right) I_k(x,t),
\end{align*}
where we define 
\begin{align}\label{eq:def-Ik}
    I_k(x,t):=  \int_{x_0 \in U_k} \exp \left( -\frac{1}{h_t} \cT_k(x, x_0, t)-\frac{\alpha_t^2}{2h_t} \cD_k(x,x_0, t)\right) \rho_k(x_0) p_{\rm data}(x_0) \ud \vol(x_0).
\end{align}
Next, we replace the exponential functions in $I_k$ by polynomials, which yields 
\begin{align}\label{def:Ik-poly-approximator}
    \text{Poly}^k(x, t) :=& \int_{x_0 \in U_k} \sum_{l=0}^{\degree_0 -1} \sum_{j=0}^{S-1}  \frac{(-1)^{l+j}\alpha_t^{l+2j}}{2^j h_t^{l+j} l!j!} \cT_k^l(x,x_0,t)  \cD_k^{j}(x,x_0,t) \rho_k(x_0) p_{\rm data}(x_0) \ud \vol(x_0).
    \end{align}
The degree $\degree_0>0$ will be instantiated later. Lemma \ref{lemma:large-poly} establishes the approximation error of $\text{Poly}^k$ to $I_k$ with $S= 4 e^2 \log(1/\aerror) + 4e^2(B+L_\expp+1)^2$. For any $x\in \cK_t(\epsilon)$, it holds that
\begin{align}\label{eq:large-poly-error}
    \left|\text{Poly}^k(x,t)-I_k(x,t)\right| =\cO\left( \frac{ (\sqrt{\log(1/\aerror) }+ B)^{\degree_0} }{\reach^{\degree_0} } \epsilon^{(d+\degree_0)/\holder}+ \epsilon^{d/\holder} \aerror \right).
\end{align}
Lemma \ref{lemma:Ik_simplify} further shows that $\text{Poly}^k$ is a polynomial with inputs $h_t$, $\alpha_t$, $x \in \RR^D$ and  $P_k^\top x \in \RR^d$, where $P_k$ is a matrix in $\RR^{D\times d}$. In addition, $\text{Poly}^k$ has a degree up to  $\degree_0$ with respect to $x$ and degree up to $S$ with respect to $P_k^\top x$. For notational clarity, we write $\text{Poly}^k(x,t)$ as $\text{Poly}^k(x,h_t, \alpha_t)$.
Substituting $\text{Poly}^k$ for $I_k$ in $g_1$ and expressing the chart determination via indicator functions yields
\begin{align}\label{def:large-g2}
    g_2(x,t) = \sum_{k=1}^{C_\cM} \ind( \| x - x_k\| \leq \thres)\cdot \exp \left( -\frac{\|x - \projk(x,t)\|^2}{2h_t} \right) \text{Poly}^k(x,h_t, \alpha_t).
\end{align}

\subsubsection{Neural Network Implementation}\label{sec:nn-large}

We construct neural networks to implement function $g_2$ in \eqref{def:large-g2}. The construction involves three key networks: (1) chart determination network $\nn_{\rm det}$, (2) polynomial network $\nn_{\rm poly}$ and (3) weighting network $\nn_{\rm exp}$. We also utilize the elementary network $\nnprod$ used in Section \ref{subsec:nn} to realize multiplication.  Accordingly, the constructed network takes the form:
\begin{align}\label{eq:def-s3-nn}
        \bar{s}_3(x;t) = \sum_{k=1}^{C_\cM} \nnprod \left(\nn_{\rm{det}}(x-x_k), \nnprod \left(\nn^k_{\rm exp}(x,h_t, \alpha_t), \nn_{\rm poly}^k(x, h_t, \alpha_t) \right) \right).
    \end{align}
In the sequel, we provide detailed implementations of the three key networks:

\noindent $\bullet$ {\bf 1. Chart determination network $\nn_{\rm {det}}$}. We implement  $\nn_{\rm {det}}(x-x_k)$ to  approximate $ \ind( \| x - x_k\| \leq \thres)$, following the same construction as the chart determination network described in Section~\ref{subsec:nn} but using a different $\thres$. By Lemma \ref{lemma:nn-chart}, for $\epsilon_{\rm det} \in (0,1)$, $x \in \cK_t(\epsilon)$, and any $k \in [C_{\cM}]$, 
we have
\begin{align*}
     \nn_{\rm{det}}(x - x_k) =
\begin{cases}
1 & \text{if}~\norm{x - x_k}^2 \leq (1-\epsilon_{\rm det})\thres^2 \\
0 & \text{if}~\norm{x - x_k}^2 \geq \thres^2,
\end{cases}
\end{align*}
Here $\nn_{\rm{det}}$ is a network in $\cF(L_{\rm{det}}, W_{\rm{det}}, S_{\rm{det}}, B_{\rm{det}}, \cdot)$, where 
\begin{align*}
    &L_{\rm{det}} = \cO(\log(D/\epsilon)+\log(1/\epsilon_{\rm{det}})), \quad W_{\rm{det}} = \cO(D), \\
    &S_{\rm{det}} = \cO(\log(D/\epsilon)+\log(1/\epsilon_{\rm{det}})),  \quad  \text{and} \quad
    B_{\rm{det}} = \cO(\log(D/\epsilon)+\log(1/\epsilon_{\rm{det}})). 
\end{align*}

\noindent $\bullet$ {\bf 2. Polynomial network $\nn_{\rm {poly}}$}.
We proceed to implement $\text{Poly}^k$ by a network. Applying Theorem 3 in \cite{chen2022nonparametric} and Lemmas F.6 and F.7 in \cite{oko2023diffusionmodelsminimaxoptimal}, for $\epsilon_{\rm poly}\in(0,1)$ and each $k=1,\ldots,C_\cM$, there exists a network $\nn_{\rm {poly}}^k$ such
    \begin{align*}
        \left|  \nn_{\rm {poly}}^k(x,h_t,\alpha_t) - \text{Poly}^k(x,h_t,\alpha_t) \right| \leq \epsilon_{\rm poly},
    \end{align*}
    which holds for all $x \in \cK_t(\epsilon)$ and $h_t, \alpha_t \in [\epsilon_{\rm poly},1]$.
Here $\nn_{\rm{poly}}^k$ is a network in $\cF(L_{\rm{poly}}, W_{\rm{poly}}, S_{\rm{poly}}, B_{\rm{poly}}, \cdot)$ with 
\begin{align*}
    &L_{\rm{poly}} = \cO\left(\log^2(1/\epsilon_{\rm poly} )\right), \quad W_{\rm{poly}} = \cO\left(D^{\degree_0} S^d \log^3(1/\epsilon_{\rm poly} ) \right), \\
    &S_{\rm{poly}} = \cO\left(D^{\degree_0} S^d\log^4(1/\epsilon_{\rm poly})\right),  \quad  \text{and} \quad
    B_{\rm{poly}} = \cO\left(h_t^{-(S+\degree_0)}\right). 
\end{align*}
    
\noindent $\bullet$ {\bf 3. Network $\nn_{\rm exp}$}. 
It remains to implement $\exp(-\|x-\projk(x,t)\|^2/(2h_t))$ in \eqref{def:large-g2} by networks. By Lemma F.6, F.7 and F.12 in \cite{oko2023diffusionmodelsminimaxoptimal}, for $\epsilon_{\rm exp}\in(0,1)$, there exists a network $\nn_{\rm exp}^k(x,h_t,\alpha_t)$ such that for any $x\in\cK_t(\epsilon)$,
    \begin{align*}
        \left|  \nn_{\rm exp}^k(x,h_t,\alpha_t) -  \exp \left( -\frac{\|x - \projk(x,t)\|^2}{2h_t} \right) \right| \leq \epsilon_{\rm exp}.
    \end{align*}
    where  $\nn_{\rm exp}^k$ has (i) no more than $c_{\rm exp}\log^2(1/\epsilon_{\rm exp} )$ layers with width bounded by $c_{\rm exp}\log^3(1/\epsilon_{\rm exp} )$, and (ii) at most $c_{\rm exp} \log^4(1/\epsilon_{\rm exp}) $ neurons and weight parameters, where the constant $c_{\rm exp}$ depends on $B$ and $D$ at most polynomially.
    $x \in \cK_t(\epsilon)$ and $h_t, \alpha_t \in [\epsilon_{\rm poly},1]$.
Here $\nn_{\rm{exp}}^k$ is a network in $\cF(L_{\rm{exp}}, W_{\rm{exp}}, S_{\rm{exp}}, B_{\rm{exp}}, \cdot)$ with 
\begin{align*}
    &L_{\rm{exp}} = \cO\left(\log^2(1/\epsilon_{\rm exp} )\right), \quad W_{\rm{exp}} = \cO\left(\log^3(1/\epsilon_{\rm exp}) \right), \\
    &S_{\rm{exp}} = \cO\left(\log^4(1/\epsilon_{\rm exp})\right),  \quad  \text{and} \quad
    B_{\rm{exp}} = \cO\left(\log(1/\epsilon)\right). 
\end{align*}

The next lemma shows that  $s_3(x,t)$ defined in \eqref{eq:def-s3} can be well approximated by a network. 
\begin{lemma}\label{lemma:s3-large-time}
    Given $\epsilon, \aerror\in (0,1)$ and $\degree_0>0$, there exists a network $\bar{s}_3 \in \cF(L_3,W_3,S_3,B_3,\cdot)$ with 
    \begin{align*}
 L_3 = \tilde{\cO}(1), \quad W_3 = \tilde{\cO}(D^{\degree_0}\epsilon^{-d/\holder}), \quad
        S_3 = \tilde{\cO}(D^{\degree_0}\epsilon^{-d/\holder}), \quad  B_3= \tilde{\cO}(\epsilon^{-2(\log(1/\aerror)+\degree_0)/\holder}) ,
\end{align*}
    such that for any time $t>0$ satisfying $h_t \geq \epsilon^{2/\holder}/4$ and $x\in\cK_t(\epsilon)$,
    \begin{align*}
        \left|\bar{s}_3(x;t) - s_3(x,t)  \right| =  \tilde{\cO}\left( \frac{ (\sqrt{\log(1/\aerror) }+ B)^{\degree_0} }{\reach^{\degree_0} } h_t^{d/2}\epsilon^{\degree_0/\holder}+ \aerror \right) .
    \end{align*}
\end{lemma}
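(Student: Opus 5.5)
We take $\bar{s}_3$ to be exactly the network in \eqref{eq:def-s3-nn} and split its error against $s_3$ through the chain $s_3 \to g_1 \to g_2 \to \bar{s}_3$ by the triangle inequality. The first link is \eqref{eq:g1-approx}: Lemma~\ref{lemma:large-truncation} gives $|g_1 - s_3| \le \aerror$ uniformly, since $h_t \ge \epsilon^{2/\holder}/4$.

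For the second link, $g_1 - g_2$ is a sum over the selected charts $\{k:\|x-x_k\|\le\thres\}$ of $\exp(-\|x-\projk(x,t)\|^2/(2h_t))\,(I_k - \text{Poly}^k)$. The exponential prefactor is at most one. Each difference $|I_k-\text{Poly}^k|$ is controlled by \eqref{eq:large-poly-error}. We then count the selected charts as in the small-noise argument, applying Lemma~\ref{lemma:total-cover-num} to the region $\{z\in\cM:\|z-x\|\le \thres + r\}$. With $r=\epsilon^{1/\holder}$ and $\thres \lesssim \sqrt{h_t\log(1/\aerror)} + h_t B + \epsilon$, this yields
\[
\#\{k:\|x-x_k\|\le\thres\} \lesssim \bigl((\thres+r)/r\bigr)^d \lesssim h_t^{d/2}\epsilon^{-d/\holder},
\]
up to logarithmic factors, using $h_t \gtrsim \epsilon^{2/\holder}$. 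Multiplying this count by the bound in \eqref{eq:large-poly-error} cancels the factor $\epsilon^{d/\holder}$ there. The result is
\[
|g_1-g_2| = \tilde{\cO}\bigl(\tau^{-\degree_0}(\sqrt{\log(1/\aerror)}+B)^{\degree_0} h_t^{d/2}\epsilon^{\degree_0/\holder} + h_t^{d/2}\aerror\bigr),
\]
which is the claimed form since $h_t\le 1$. A point to check here is that the chart indicator in $g_2$ matches the one in $g_1$; it does by definition.

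For the third link we replace each factor by its network. The indicator is replaced by $\nn_{\rm det}$. It is exact except on the thin shell $(1-\epsilon_{\rm det})\thres^2 < \|x-x_k\|^2 < \thres^2$. We handle this shell by showing that $g_1$ restricted to such charts is itself $\cO(\aerror)$, via the same Gaussian tail estimate as in Lemma~\ref{lemma:large-truncation}, possibly after enlarging $\thres$ by a constant factor. The exponential is replaced by $\nn_{\rm exp}^k$ with error $\epsilon_{\rm exp}$, the polynomial by $\nn_{\rm poly}^k$ with error $\epsilon_{\rm poly}$, and each multiplication by $\nnprod$ with error $\epsilon_\times$. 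Since each factor is bounded on $\cK_t(\epsilon)$ — the exponential by one, and $\text{Poly}^k$ by $\cO(\epsilon^{d/\holder})$ plus its approximation error — the product errors propagate additively. Summing over at most $C_\cM = \tilde{\cO}(\epsilon^{-d/\holder})$ charts, we choose $\epsilon_{\rm exp},\epsilon_{\rm poly},\epsilon_\times \asymp \aerror\,\epsilon^{d/\holder}$ (polynomially small) so the total is $\cO(\aerror)$. This costs only logarithmic factors in size.

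The configuration follows by parallelizing $C_\cM$ copies of the subnetworks and adding their outputs. Depth is the maximum over subnetworks, which is polylogarithmic, so $L_3 = \tilde{\cO}(1)$. Width and sparsity are $C_\cM$ times $(D^{\degree_0}S^d + D)$ times polylogarithmic factors, with $S$ logarithmic, giving $\tilde{\cO}(D^{\degree_0}\epsilon^{-d/\holder})$. The weight bound is dominated by $B_{\rm poly} = h_t^{-(S+\degree_0)}$. Using $h_t \ge \epsilon^{2/\holder}/4$ and $S \asymp \log(1/\aerror)$ gives $B_3 = \tilde{\cO}(\epsilon^{-2(\log(1/\aerror)+\degree_0)/\holder})$.

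The main obstacle is the chart-counting step. The count must scale like $h_t^{d/2}\epsilon^{-d/\holder}$, not merely like $C_\cM$, for the $h_t^{d/2}$ prefactor in the claimed bound to appear. This requires the threshold $\thres$ not to be dominated by $L_\expp\epsilon$ in a way that breaks the ratio, which holds because $\epsilon \lesssim \sqrt{h_t}$ in this regime.
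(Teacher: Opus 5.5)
Your proposal is correct and follows the same skeleton as the paper. The shared steps are: truncation to the charts with $\|x-x_k\|\le\thres$ via Lemma~\ref{lemma:large-truncation}; the per-chart bound \eqref{eq:large-poly-error}; an $\nn_{\rm det}$ whose transition shell is killed by the same Gaussian tail estimate (the paper fixes $\epsilon_{\rm det}=(2-\sqrt{2})/4$ rather than enlarging $\thres$); network accuracies of order $\aerror/C_\cM$ per chart; and a configuration obtained by running $C_\cM$ subnetworks in parallel.

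The one substantive difference is how the factor $h_t^{d/2}$ arises. The paper applies $h_t\ge\epsilon^{2/\holder}$ to \eqref{eq:large-poly-error}, trading each chart's $\epsilon^{d/\holder}$ for $h_t^{d/2}$. It never accounts for how many charts are summed, although that number is of order $(\sqrt{h_t}/r)^d\gg 1$ (up to logarithms) once $h_t\gg\epsilon^{2/\holder}$. Your explicit count $N:=\#\{k:\|x-x_k\|\le\thres\}\lesssim h_t^{d/2}\epsilon^{-d/\holder}$, obtained from Lemma~\ref{lemma:total-cover-num} on a local ball, is exactly what makes $N\epsilon^{d/\holder}\lesssim h_t^{d/2}$ legitimate. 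Your version therefore actually justifies the stated bound, and for that link it even gives $h_t^{d/2}\aerror$ instead of $\aerror$.

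Two small bookkeeping points. First, because you insert $\nn_{\rm det}$ after the polynomial step, the shell terms you must control are $\exp(\cdot)\,\text{Poly}^k$ rather than $\exp(\cdot)\,I_k$. This is fine, since the two differ by per-chart errors you have already counted; the paper sidesteps the issue by inserting $\nn_{\rm det}$ first. Second, the last term of $\thres$ should be $L_\expp\epsilon^{1/\holder}=L_\expp r$, as Lemma~\ref{lemma:large-truncation} requires. The fact your count needs is then $r\le 2\sqrt{h_t}$, not $\epsilon\lesssim\sqrt{h_t}$; the latter is not implied by $h_t\ge\epsilon^{2/\holder}/4$ when $\holder<1$.
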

\begin{proof}[Proof of Lemma \ref{lemma:s3-large-time}]
Recall that $s_3(x,t)$ is a summation over all the charts, i.e.,
\begin{align*}
    s_3(x,t) = \sum_{k=1}^{C_\cM} \int_{x_0 \in U_k} \exp \left( -\frac{\|x - \alpha_t x_0\|^2}{2h_t} \right) \rho_k(x_0) p_{\rm data}(x_0) \ud \vol(x_0). 
\end{align*}
Notably, \eqref{eq:g1-approx} suggests that it suffices to focus on the charts satisfying $\|x-x_k\| \leq \thres$. We implement the chart determination by network $\nn_{\rm{det}}$. According to Lemma \ref{lemma:nn-chart}, we have
\begin{align*}
    &\quad~\bigg|   \sum_{k=1}^{C_\cM} \nn_{\rm{det}}(x-x_k)  \int_{x_0 \in U_k} \exp \left( -\frac{\|x - \alpha_t x_0\|^2}{2h_t} \right) \rho_k(x_0) p_{\rm data}(x_0) \ud \vol(x_0)  - s_3(x,t) \bigg| \\
    &\leq \left| \sum_{k:  (1-\epsilon_{\rm{det}})\thres<\|x_k-x\| < \thres} \int_{x_0 \in U_k} \exp \left( -\frac{\|x - \alpha_t x_0\|^2}{2h_t} \right) \rho_k(x_0) p_{\rm data}(x_0) \ud \vol(x_0) \right| . 
\end{align*}
Adopting the analysis in Lemma \ref{lemma:large-truncation}, we can derive $ \|x - \alpha_t x_0\| \geq (2- 4\epsilon_{\rm{det}})\sqrt{h_t \log(1/\aerror)}$
for $k$-th chart satisfying $ \|x_k-x\| > (1-\epsilon_{\rm{det}})\thres$ and $x_0 \in U_k$. Taking $\epsilon_{\rm{det}} = (2-\sqrt{2})/4$ yields
\begin{align*}
\bigg|   \sum_{k=1}^{C_\cM} \nn_{\rm{det}}(x-x_k)  \int_{x_0 \in U_k} \exp \left( -\frac{\|x - \alpha_t x_0\|^2}{2h_t} \right) \rho_k(x_0) p_{\rm data}(x_0) \ud \vol(x_0)  - s_3(x,t) \bigg|\leq 2\aerror. 
\end{align*} 
By the definition of $I_k$ in \eqref{eq:def-Ik}, we can rewrite the above inequality as
\begin{align}\label{eq:s3-1}
\bigg|   \sum_{k=1}^{C_\cM} \nn_{\rm{det}}(x-x_k) \exp \left( -\frac{\|x - \projk(x,t)\|^2}{2h_t} \right) I_k(x,t)  - s_3(x,t) \bigg|\leq 2\aerror. 
\end{align} 
Moreover, we apply $h_t \geq \epsilon^{2/\holder}$ to \eqref{eq:large-poly-error}, which yields
\begin{align}\label{eq:s3-2}
\begin{split}
    &\quad~\left|\sum_{k=1}^{C_\cM} \nn_{\rm{det}}(x-x_k)   \exp \left( -\frac{\|x - \alpha_t x_k\|^2}{2h_t} \right) \text{Poly}^k(x,h_t,\alpha_t) -I_k(x,h_t,\alpha_t) \right|\\
     & =  \tilde{\cO}\left( \frac{ (\sqrt{\log(1/\aerror) }+ B)^{\degree_0} }{\reach^{\degree_0} } h_t^{d/2}\epsilon^{\degree_0/\holder}+ \aerror \right).
\end{split}
\end{align}
Next,  Lemma \ref{lemma:nn-prod} derives the following approximation error for network implementation:
\begin{align*}
    &\quad\left|\nnprod \left(\nn^k_{\rm exp}(x,\hat{h}_t, \hat{\alpha}_t), \nn_{\rm poly}^k(x, \hat{h}_t, \hat{\alpha}_t) \right)  -  \exp \left( -\frac{\|x - \projk(x,t)\|^2}{2h_t} \right) \text{Poly}^k(x,h_t,\alpha_t)  \right| \notag\\
    &\leq \epsilon_{\rm prod} + 4 \max\{\epsilon_{\rm poly},\epsilon_{\rm exp}\}, 
\end{align*}
where $\epsilon_{\rm prod} \in (0,1)$ is the target approximation error for the multiplication network $\nnprod(\cdot,\cdot)$. Taking  $\epsilon_{\rm prod} = \aerror/(4 C_\cM)$ and $\epsilon_{\rm poly} = \epsilon_{\rm exp} = \aerror/(32 C_\cM)$ gives
\begin{align}\label{eq:s3-3}
    \left|\bar{s}_3(x,t) - \sum_{k=1}^{C_\cM} \nn_{\rm{det}}(x-x_k)   \exp \left( -\frac{\|x - \projk(x,t)\|^2}{2h_t} \right) \text{Poly}^k(x,h_t,\alpha_t)  \right| \leq \aerror.
\end{align}
Now we combine \eqref{eq:s3-1}, \eqref{eq:s3-2} and \eqref{eq:s3-3} to derive
    \begin{align*}
        \left|\bar{s}_3(x,t) -  s_3(x,t) \right|  &\leq \left|\bar{s}_3(x,t) - \sum_{k=1}^{C_\cM} \nn_{\rm{det}}(x-x_k)   \exp \left( -\frac{\|x - \projk(x,t)\|^2}{2h_t} \right) \text{Poly}^k(x,h_t,\alpha_t)  \right|\\
        &\quad+ \sum_{k=1}^{C_\cM} \nn_{\rm{det}}(x-x_k)   \exp \left( -\frac{\|x - \projk(x,t)\|^2}{2h_t} \right) \left|\text{Poly}^k(x,h_t,\alpha_t) -I_k(x,h_t,\alpha_t) \right|\\
        &\quad+ \left| \sum_{k=1}^{C_\cM} \nn_{\rm{det}}(x-x_k)   \exp \left( -\frac{\|x - \projk(x,t)\|^2}{2h_t} \right) I_k(x,h_t,\alpha_t) -  s_3(x,t) \right|\\
        &=  \tilde{\cO}\left( \frac{ (\sqrt{\log(1/\aerror) }+ B)^{\degree_0} }{\reach^{\degree_0} } h_t^{d/2}\epsilon^{\degree_0/\holder}+ \aerror \right).
    \end{align*}
According to the network implementation details in Section \ref{sec:nn-large} and Lemmas F.1-3 in \cite{oko2023diffusionmodelsminimaxoptimal}, $\bar{s}_3$ is
a network in  $\cF(L_3,W_3,S_3,B_3,\cdot)$ with 
    \begin{align*}
 L_3 = \tilde{\cO}(1), \quad W_3 = \tilde{\cO}(C_\cM D^{\degree_0}), \quad
        S_3 = \tilde{\cO}(C_\cM D^{\degree_0}), \quad  B_3= \tilde{\cO}(\epsilon^{-2(\log(1/\aerror)+\degree_0)/\holder}) ,
\end{align*}
where $\tilde{\cO}$ hides the logarithmic factors and dependence  depends on $d$, $B$, $\holder$ and $L_\expp$. Since we take the partition accuracy $r=\epsilon^{1/\holder}$, we have $C_\cM = \cO(r^{-d}) = \cO(\epsilon^{-d/\holder})$.
\end{proof}

\subsubsection{Constructing Network Approximation to $\nabla\log p_t(x)$}\label{sec:large-score-approx}

Recall the score function $\nabla\log p_t(x)$ is formulated as
\begin{align*}
    \nabla \log p_t(x) 
 = \sum_{k=1}^{C_\cM} w_k(x, t)  \bigg( \frac{\alpha_t \EE_{X_0 \sim \mu_k}[X_0 | X_t = x] - \projk(x, t)}{h_t} -\frac{x-\projk(x, t)}{h_t} \bigg).
\end{align*}
Lemma \ref{lemma:s3-large-time} establishes the approximation results for the denominator of $w_k(x, t)$. Furthermore,  we take a more direct approach to approximate the numerator, merge the on-support and orthogonal components together. Specifically, we rewrite $\nabla\log p_t(x)$ as
\begin{align*}
    \nabla \log p_t(x) &= \sum_{k=1}^{C_\cM} w_k(x, t)  \bigg( -\frac{x-\alpha_t \EE_{X_0 \sim \mu_k}[X_0 | X_t = x]} {h_t}  \bigg)\\
    &=\sum_{k=1}^{C_\cM}\frac{\int_{x_0 \in U_k} -\frac{x - \alpha_t x_0}{h_t}\exp \left( -\frac{\|x - \alpha_t x_0\|^2}{2h_t} \right) \rho_k(x_0) \pdata(x_0) \ud \vol(x_0)}{\sum_{j=1}^{C_\cM} \int_{x_0 \in U_j} \exp \left( -\frac{\|x - \alpha_t x_0\|^2}{2h_t} \right) \rho_j(x_0) \pdata(x_0) \ud \vol(x_0)} 
\end{align*}
Now we denote
\begin{align*}
    s_4(x,t) := \sum_{k=1}^{C_\cM}\int_{x_0 \in U_k} -\frac{x - \alpha_t x_0}{\sqrt{h_t}}\exp \left( -\frac{\|x - \alpha_t x_0\|^2}{2h_t} \right) \rho_k(x_0) \pdata(x_0) \ud \vol(x_0).
\end{align*}
Therefore, we have $\nabla\log p_t(x) =  s_4(x,t)/(\sqrt{h_t}s_3(x,t))$.  Note that $s_4(x,t) $ differs from $s_3(x,t) $ with an extra term $(x-\alpha_t x_0)/\sqrt{h_t}$ inside the integral. Then following the analysis in Lemma \ref{lemma:s3-large-time}, we can construct a network $\bar{s}_4$ to approximate $s_4(x,t)$ with slightly larger network size. Consequently, there exists a network $\bar{s}_4 \in \cF(L_4,W_4,S_4,B_4,\cdot)$ such that
\begin{align*}
     \left\|\bar{s}_4(x,h_t,\alpha_t) -s_4(x,t) \right\| =  \tilde{\cO}\left( \frac{ (\sqrt{\log(1/\aerror) }+ B)^{\degree_0} }{\reach^{\degree_0} } h_t^{d/2}\epsilon^{\degree_0/\holder}+ \aerror \right),
\end{align*}
where we take
       \begin{align*}
 L_4 = \tilde{\cO}(1), \quad W_4 = \tilde{\cO}(D^{\degree_0+1}\epsilon^{-d/\holder}), \quad
        S_4 = \tilde{\cO}(D^{\degree_0+1}\epsilon^{-d/\holder}), \quad  B_4 = \tilde{\cO}(\epsilon^{-2(\log(1/\aerror)+\degree_0)/\holder}).
\end{align*}

Finally, we are ready to prove the score approximation theory for the large noise regime (Lemma \ref{lemma:approx-large}). 
First, we establish the approximation error of $\bar{s}_4/(\sqrt{h_t}\bar{s}_3)$ to  the score function $\nabla \log p_t(x)$:
\begin{align*}
        \sqrt{h_t}\left\| \nabla \log p_t(x) -  \frac{\bar{s}_4(x,t)/\sqrt{h_t}}{\bar{s}_3(x,t)} \right\| &=  \left\| \frac{s_4(x,t)}{s_3(x,t)} -  \frac{\bar{s}_4(x,t)}{\bar{s}_3(x,t)} \right\|\notag \\
        &\leq  \left\| \frac{s_4(x,t)}{s_3(x,t)} -  \frac{s_4(x,t) }{ \bar{s}_3(x,t)}\right\| 
    + \left\|  \frac{s_4(x,t) - \bar{s}_4(x,t)}{\bar{s}_3(x,t)} \right\|  \notag\\
        &\leq \frac{\|\sqrt{h_t}\nabla \log p_t(x) \| \cdot \left| \bar{s}_3(x,t)-s_3(x,t)\right|}{\| \bar{s}_3(x,t)\|}+ \frac{\left\| s_4(x,t) - \bar{s}_4(x,t) \right\| }{\| \bar{s}_3(x,t)\|}.
    \end{align*}
For any $x\in\cK_t(\epsilon)$, we can derive the bound for $\nabla \log p_t(x)$,
\begin{align*}
    \| \nabla \log p_t(x) \| \leq  \sup_{x_0\in \cM} \left\| \frac{x-\alpha_t x_0}{h_t} \right\| \leq \frac{1}{\sqrt{h_t}}\left( 2\sqrt{D\log(1/\epsilon)} + \alpha_t B/\sqrt{h_t} \right),
\end{align*}
as well as the lower bound on $s_3(x,t)$,
\begin{align*}
    s_3(x,t) 
    \geq \int_{x_0 \in \cM: \|x-\alpha_t x_0 \|\leq \sqrt{h_t}} \exp \left( -\frac{\|x - \alpha_t x_0\|^2}{2h_t} \right) \ud \dist(x_0)\gtrsim C_f^{-1} e^{-1/2}\left(\frac{\sqrt{h_t}}{\alpha_t} \right)^d.
\end{align*}
Combining all the pieces together, we obtain
\begin{align*}
        \sqrt{h_t}\left\| \nabla \log p_t(x) -  \frac{\bar{s}_4(x,t)/\sqrt{h_t}}{\bar{s}_3(x,t)} \right\| =  \tilde{\cO} \left( \left(\frac{ \sqrt{\log(1/\aerror) }\epsilon^{1/\holder} }{\reach }\right)^{\degree_0} \frac{h_t^{d/2}}{h_t^{(d+1)/2}}+ \frac{\aerror}{h_t^{d/2}} \right).
    \end{align*}
    Taking $\aerror = h_t^{(d-1)/2}\epsilon^{(\holder+1)/\holder}$  yields
    \begin{align*}
        \sqrt{h_t}\left\| \nabla \log p_t(x) -  \frac{\bar{s}_4(x,t)/\sqrt{h_t}}{\bar{s}_3(x,t)} \right\|=  \tilde{\cO} \left( \frac{1}{\sqrt{h_t}}\left(\left(\frac{ \sqrt{\log(1/\epsilon) }\epsilon^{1/\holder} }{\reach }\right)^{\degree_0} +\epsilon^{(\holder+1)/\holder} \right)\right).
    \end{align*}
    Furthermore, by Lemma F.7 in \cite{oko2023diffusionmodelsminimaxoptimal}, there exists a feedforward network $\bar{\phi}$ with no more than $\cO(\log^2(1/\epsilon))$ layers, width bounded by $\cO(\log^3(1/\epsilon))$, at most non-zero $\cO(\log^4(1/\epsilon))$ neurons and weight parameters bounded by $\cO(\epsilon^{-2(\holder+1)/\holder})$, such that
\begin{align*}
    \left\| \bar{\phi}\left(\bar{s}_3(x,t), \bar{s}_4(x,t)\right) -  \frac{\bar{s}_4(x,t)/\sqrt{h_t}}{\bar{s}_3(x,t)} \right\| \leq \aerror^{(\holder+1)/\holder}.
\end{align*}
Now let $\nnlarge = \bar{\phi}\left(\bar{s}_3(x,t), \bar{s}_4(x,t)\right)$, so that for any $x \in \cK_t(\epsilon)$
\begin{align*}
    \left\|\nnlarge (x,t)- \nabla \log p_t(x) \right\|  &\leq \left\| \bar{\phi}\left(\bar{s}_3(x,t), \bar{s}_4(x,t)\right) -  \frac{\bar{s}_4(x,t)/\sqrt{h_t}}{\bar{s}_3(x,t)} \right\| + \left\|   \frac{\bar{s}_4(x,t)/\sqrt{h_t}}{\bar{s}_3(x,t)} -\nabla \log p_t(x) \right\| \\
    & =  \tilde{\cO} \left( \frac{1}{\sqrt{h_t}}\left(\left(\frac{ \sqrt{\log(1/\epsilon) }\epsilon^{1/\holder} }{\reach }\right)^{\degree_0} +\epsilon^{(\holder+1)/\holder} \right)\right).
\end{align*}
Combined with the network size of $\bar{s}_3$ and $\bar{s}_4$, we can conclude
$\nnlarge\in \cF(L_{\rm large},W_{\rm large},S_{\rm large},B_{\rm large},\cdot)$  with 
\begin{align*}
 L_{\rm large} = \tilde{\cO}\left(1 \right), ~ W_{\rm large} = \tilde{\cO}\left(  D^{\degree_0+1}\epsilon^{-d/\holder}  \right), ~
        S_{\rm large} =\tilde{\cO}\left(   D^{\degree_0+1} \epsilon^{-d/\holder} \right), ~B_{\rm large}= \tilde{\cO}\left(\epsilon^{-2(\degree_0+\log(1/\epsilon))/\holder}\right). 
\end{align*}

Finally, we derive the $L^2$ approximation error of $\nnlarge$, which can be written as
    \begin{align*}
        \left\|\nnlarge(x,t) - \nabla \log p_t(x) \right\|^2_{L^2(p_t)} &= \int_{x \in \RR^D}  \left\|\nnlarge(x,t) - \nabla \log p_t(x) \right\|^2 p_t(x) \ud x \\
        &\leq \left( \int_{x\in \cK_t(\epsilon)} + \int_{x\in\RR^D \setminus \cK_t(\epsilon)} \right) \left\|\nnlarge(x,t) - \nabla \log p_t(x) \right\|^2 p_t(x) \ud x.
    \end{align*}
    Here we can bound the integral within $\cK_t(\epsilon)$ via the approximation error of $\nnlarge$. 
    \begin{align*}
     \int_{x\in \cK_t(\epsilon)} \left\|\nnlarge(x,t) - \nabla \log p_t(x) \right\|^2 p_t(x) \ud x =  \tilde{\cO} \left( \frac{1}{h_t}\left(\left(\frac{ {\log(1/\epsilon) }\epsilon^{2/\holder} }{\reach^2 }\right)^{\degree_0} +\epsilon^{2(\holder+1)/\holder} \right)\right).
    \end{align*}
    For the integral over $\RR^D\setminus \cK_t(\epsilon)$, we have
\begin{align*}
      \int_{x\in\RR^D \setminus \cK_t(\epsilon)}  \left\|\nnlarge(x,t) - \nabla \log p_t(x) \right\|^2 p_t(x) \ud x &\leq 2 \int_{x\in\RR^D \setminus \cK_t(\epsilon)} \left( \left\|\nnlarge(x,t)  \right\|^2 + \left\|\nabla \log p_t(x) \right\|^2 \right) p_t(x) \ud x.
\end{align*}
Similar to the small noise case (Appendix \ref{sec:sM-approx}), we clip $\nnlarge(x,t)$ by $R =\cO(\sqrt{\log(1/\epsilon)/h_t})$ via  a feedforward layer as constructed in Lemma F.5 of \cite{oko2023diffusionmodelsminimaxoptimal}. Them Lemma \ref{lemma:truncate-x} yields
    \begin{align*}
         \int_{x\in\RR^D \setminus \cK_t(\epsilon)} \left\|\nnlarge(x,t)  \right\|^2 p_t(x) \ud x \leq \epsilon^{2(\holder+1)/\holder} \log(1/\epsilon)/h_t.
    \end{align*}
    Moreover, \eqref{eq:score-tail-bound} shows 
\begin{align*}
      \int_{x\in\RR^D \setminus \cK_t(\epsilon)}  \left\|\nabla \log p_t(x) \right\|^2  p_t(x) \ud x   \lesssim \frac{\epsilon^{2(\holder+1)/\holder}}{h_t}.
\end{align*}
Therefore, we can conclude that the network $\nnlarge(x,t)$ has the following $L^2$ approximation error for any $t \in [t_{\rm large},\bigt]$, 
\begin{align*}
        \left\|\nnlarge(x,t) - \nabla \log p_t(x) \right\|^2_{L^2(p_t)} = \tilde{\cO} \left( \frac{1}{h_t}\left(\left(\frac{ \epsilon^{2/\holder} }{\reach^2 }\right)^{\degree_0} +\epsilon^{2(\holder+1)/\holder} \right)\right).
    \end{align*}

\subsection{Proof of Theorem~\ref{thm:approx}}\label{sec:approx_thm_proof}

We prove Theorem \ref{thm:approx} by constructing a single network that well approximates the score function $\nabla \log p_t(x)$ for any time $t \in [t_0, \bigt]$. First, we impose the following conditions on the approximation error $\epsilon>0$ so that the small noise and large noise regimes overlap:
\begin{align}\label{cond:epsilon}
    \epsilon \leq \min\left\{\frac{ \sqrt{\eta \reach}}{rD^{1/4}\sqrt{B}}, \frac{\eta\min\{\reach, \sqrt{\reach}\}}{16\max\{\sqrt{D}, 4L_\expp L_\logg\}},1 \right\}^\holder.
\end{align}
This requirement on $\epsilon$ implies that the small noise regime conditions in \eqref{cond:time} are dominated by $h_t\leq \epsilon^{2/\holder}$. Therefore, given $t_{\rm small} = \log\frac{1}{1-\epsilon^{2/\holder}}$ and $t_{\rm large} = \log\frac{1}{1-\epsilon^{2/\holder}/4}$, the small noise regime is exactly $[t_0, t_{\rm small}]$, which overlaps with the large noise regime $[t_{\rm large},\bigt]$ in $[t_{\rm large}, t_{\rm small}]$.

Next, we construct the time switching network to incorporate networks $\nnsmall$ in Lemma \ref{lemma:approx-small} and $\nnlarge$ in Lemma \ref{lemma:approx-large}.
For any $t \in [t_0, T]$, we define two switching function as
\begin{align*}
{\rm SW}_{\rm small}(t) &= \frac{1}{t_{\rm small} -t_{\rm large}}\relu\big( (t_{\rm small} -t_{\rm large}) - \relu(t-t_{\rm large}) + \relu(t-t_{\rm small}) \big), \\
{\rm SW}_{\rm large}(t) &= \frac{1}{t_{\rm small} -t_{\rm large}}\relu\big(  \relu(t-t_{\rm large}) - \relu(t-t_{\rm small}) \big).
\end{align*}
Here ${\rm SW}_{\rm small}(t), {\rm SW}_{\rm large}(t) \in [0,1]$, ${\rm SW}_{\rm small}(t) = 0$ for all $t \geq t_{\rm small}$, ${\rm SW}_{\rm large}(t) =0$ for all $t \leq t_{\rm large}$, and ${\rm SW}_{\rm small}(t)+ {\rm SW}_{\rm large}(t)=1$ for all $t$. Moreover, we note that
\begin{align*}
    {\rm SW}_{\rm small}(t), {\rm SW}_{\rm large}(t)  \in \cF\left(3,2,8, \max\{t_{\rm small},(t_{\rm small}-t_{\rm large})^{-1}\}, \cdot\right).
\end{align*}
Now we use the switching functions to construct the score network as
\begin{align*}
\tilde{s}(x, t) = {\rm SW}_{\rm small}(t) \cdot  \nnsmall(x,t) + {\rm SW}_{\rm large}(t) \cdot \nnlarge(x,t).
\end{align*}
Here we set $\degree_0 = \degree$ in Lemma \ref{lemma:approx-large}. We will instantiate the choice of $\degree>0$ later.
Utilizing Lemma \ref{lemma:nn-prod}, we can easily implement multiplication and aggregation by feedforward networks. Thereby, the construction $\tilde{s}(x, t)$ yields a network $\bar{s} \in \cF(L,W,S,B,\cdot)$ with
\begin{align*}
L =\tilde{\cO}\left(\degree^3   \right), \quad
    W = \tilde{\cO}\left(D^{\degree} \degree^3 \epsilon^{-d/\holder}\right), \quad
        S = \tilde{\cO}\left(\degree D^{\degree}\epsilon^{-d/\holder} \right), \quad B= \cO\left( h_{t_0}^{-\degree} \epsilon^{-2\log(1/\epsilon)/\holder} \right).
    \end{align*}
    Next, we derive the $L^2$ error of $\bar{s}(x,t)$ approximating the score function $\nabla \log p_t(x)$ when $t \in [t_0,\bigt]$.
    \begin{align*}
        \left\|\bar{s}(x,t) - \nabla \log p_t(x) \right\|^2_{L^2(p_t)} &\leq \ind\left( t_0 \leq t \leq t_{\rm small} \right) \left\|\nnsmall- \nabla \log p_t(x) \right\|^2_{L^2(p_t)} \\
        &\quad+  \ind\left( t_{\rm large} < t \leq \bigt \right) \left\|\nnlarge- \nabla \log p_t(x) \right\|^2_{L^2(p_t)}.
    \end{align*}
Applying the approximation errors in   Lemmas \ref{lemma:approx-small} and \ref{lemma:approx-large}, 
    \begin{align*}
        \left\|\nnsmall (x,t)  - \nabla \log p_t(x) \right\|_{L^2(P_t)}^2  =\tilde{\cO} \left( \frac{D^{2\degree+d+2}}{h_t} \left(  \left( \frac{D\epsilon^{2/\holder}}{\reach^2} \right)^{\degree}  +\epsilon^2\right) \right), \quad \text{for any } t\in [t_0, t_{\rm small}],
    \end{align*}
    and 
    \begin{align*}
        \left\|\nnlarge(x,t) - \nabla \log p_t(x) \right\|^2_{L^2(P_t)} = \tilde{\cO} \left( \frac{1}{h_t}\left(\left(\frac{ \epsilon^{2/\holder} }{\reach^2 }\right)^{\degree} +\epsilon^{2(\holder+1)/\holder} \right)\right),\quad \text{for any } t\in [t_{\rm large}, \bigt],
\end{align*}
we can derive
   \begin{align*}
        \left\|\bar{s}(x,t) - \nabla \log p_t(x) \right\|^2_{L^2(p_t)} 
        =\tilde{\cO} \left( \frac{D^{2\degree+d+2}}{h_t} \left(  \left( \frac{\epsilon^{2/\holder}}{\reach^2} \right)^{\degree}  +\epsilon^2\right) \right), \quad \text{for any } t\in [t_0, \bigt].
    \end{align*}

In particular, when $\epsilon< \reach^\holder$, if we take $\degree = \lceil \holder (1+\log(\reach^\holder)/\log(1/\epsilon))^{-1}\rceil$ so that $(\epsilon^{1/\holder}/\reach)^{\degree} \leq \epsilon$, we will have
    \begin{align*}
        \left\|\bar{s}(x,t) - \nabla \log p_t(x) \right\|^2_{L^2(p_t)}=\tilde{\cO} \left( \frac{1}{h_t} D^{2\degree+d+2}\epsilon^2\right).
    \end{align*}

\section{Proofs in Section \ref{sec:stat_rate}}

\subsection{Proof of Theorem \ref{thm:generalization}}\label{sec:proof-generalization}
In this proof, we view the training samples as random quantities and slightly abuse the notation to denote $\cD = \{X_{i}\}_{i=1}^n$, the estimated score function is the empirical risk minimizer defined by
\begin{align*}
    \hat{s} = \argmin_{s\in\cF} \hat{\cL}(s), \quad \text{where }  \hat{\cL}(s) := \frac{1}{n}\sum_{i=1}^n \ell(X_{i};s).
\end{align*}
The population risk is denoted as $\cL(s) = \EE_{X 
\sim \dist}[\ell(X;s)]$. According to Theorem \ref{thm:approx},  $\hat{s}$ can be taken so that $\|\hat{s}(\cdot,t)\|_\infty \lesssim 1/\sqrt{h_t}$. Thereby we limit $\cF$ into $\tilde{\cF}$:
\begin{align}\label{eq:def-F-limit}
    \tilde{\cF}(L,W,S,B,C_R) := \left\{s \in \cF(L,W,S,B,R): \|s(\cdot,t)\|_\infty\leq \frac{C_R}{\sqrt{h_t}} \right \}.
\end{align}

Recall from Section~\ref{sec:pre} that $\cL(s)$ is equivalent (up to a constant) to
\begin{align*}
    \cR(s) = \frac{1}{T-t_0} \int_{t_0}^T \EE_{X \sim \dist} \left[\EE_{X_t|X}\left[ \left\| s(X_t,t) - \nabla\log p_t(X_t)\right\|^2\right] \right]\ud t.
\end{align*}
Similarly, we denote the empirical version of $\cR(s)$ as
\begin{align*}
    \hat{\cR}(s) = \frac{1}{T-t_0} \int_{t_0}^T \frac{1}{n} \sum_{i=1}^n \EE_{X_t|X_i} \left[\left\| s(X_t,t) - \nabla\log p_t(X_t)\right\|^2 \right]\ud t.
\end{align*}
It is convenient to recenter the empirical loss $\ell$ by defining $\lscale$ as
\begin{align*}
    \lscale(x;s) = \ell(x;s) -  \ell(x;s^*) \quad\text{with} \quad s^*(x, t) = \nabla \log p_t(x).
\end{align*}
Then we define the centered empirical risk as
\begin{align*}
\hat{\cL}_0(s) = \frac{1}{n}\sum_{i=1}^n \lscale(X_i; s).
\end{align*}
Notably, minimizing $\hat{\cL}_0(s) $ is equivalent to  minimizing $\hat{\cL}(s)$. Moreover, the population centered risk $\cL_0(s) = \EE_{X \sim \dist} [\lscale(X; s)]$ is simply $\cR(s)$, i.e., $\cL_0(s) = \cR(s)$.

We denote $\EE_{\cD}$ as the expectation over the randomness in the training samples $\cD$. To bound the expected generalization error $\EE_{\cD}[\cR(\hat{s})]$, we consider the following decomposition
\begin{align*}
    \EE_{\cD}[\cR(\hat{s})] = \underbrace{\EE_{\cD}[\cR(\bar{s})]}_{\mathrm{(I)}} + \underbrace{\EE_{\cD}[\cR(\hat{s})] - \EE_{\cD}[\cR(\bar{s})] }_{\mathrm{(II)}}.
\end{align*}
Here, network $\bar{s}$ is given in Theorem \ref{thm:approx} as a constructed approximator to the ground truth score function. For any $\epsilon >0 $ satisfting \eqref{cond:epsilon}, the approximation guarantee in Theorem \ref{thm:approx} gives rise to
\begin{align*}
        ({\rm I}) \lesssim  D^{ 2\degree+d+2} \epsilon^2, \quad \quad \text{with} \quad \degree = \left\lceil \frac{\holder \log \frac{1}{\epsilon}}{\log \frac{1}{\epsilon} + \holder \log \reach}\right\rceil.
    \end{align*}  
We will choose $\epsilon$ later to optimally balance the error terms. The second term $\mathrm{(II)}$, measuring the difference in generalization errors of $\hat{s}$ and $\bar{s}$, can be further decomposed as
\begin{align*}
   (\textrm{II}) &\overset{(i)}{=} \EE_{\cD}\left[\cL_0(\hat{s}) - \cL_0(\bar{s})\right] \\
   & = \underbrace{\EE_{\cD}\left[\cL_0(\hat{s}) - \hat{\cL}_0(\hat{s})\right]}_{(\textrm{II-A})} + \underbrace{\EE_{\cD}\left[\hat{\cL}_0(\hat{s})-\hat{\cL}_0(\bar{s}) \right]}_{(\textrm{II-B})} + \EE_{\cD}\left[\hat{\cL}_0(\bar{s}) -  \cL_0(\bar{s})\right] \\
   & \overset{(ii)}{=} \underbrace{\EE_{\cD}\left[\cL_0(\hat{s}) - \hat{\cL}_0(\hat{s})\right]}_{(\textrm{II-A})} + \underbrace{\EE_{\cD}\left[\hat{\cL}_0(\hat{s})-\hat{\cL}_0(\bar{s}) \right]}_{(\textrm{II-B})},
\end{align*}
where equality $(i)$ invokes the identity $\cR(s) = \cL_0(s)$, and equality $(ii)$ follows since $\bar{s}$ is independent of $\cD$. By the definition of $\hat{s}$, we have
\begin{align*}
    \hat{\cL}_0(\hat{s})-\hat{\cL}_0(\bar{s}) =\hat{\cL}(\hat{s})-\hat{\cL}(\bar{s}) \leq 0.
\end{align*}
Therefore, we have $(\textrm{II-B}) \leq 0$ and it remains to bound $(\textrm{II-A})$. We introduce a set of i.i.d. ghost samples $\bar{\cD} = \{\bar{X}_i\}_{i=1}^n$ following the same distribution but independent of $\cD$. For an arbitrary $a \in (0, 1)$, we have
\begin{align*}
   (\textrm{II-A}) & = \EE_{\cD}\left[  \cL_0(\hat{s}) - \hat{\cL}_0(\hat{s}) - a \cR(\hat{s}) \right] + a \EE_{\cD}[\cR(\hat{s})] \\
   &=  \EE_{\cD}\left[ \EE_{X\sim \dist} [\lscale(X;\hat{s})] -\frac{1}{n}\sum_{i=1}^n \lscale(X_{i};\hat{s}) - a \cR(\hat{s}) \right] + a \EE_{\cD}[\cR(\hat{s})] \\
   & \overset{(i)}{=} \EE_{\cD, \bar{\cD}}\left[\frac{1}{n}\sum_{i=1}^n [\lscale(\bar{X}_i; \hat{s}) - \lscale(X_{i};\hat{s})] - a \cR(\hat{s}) \right] + a \EE_{\cD}[\cR(\hat{s})] \\
   &\overset{(i)}{\leq} \underbrace{\EE_{\cD,\bar{\cD}} \left[\sup_{s\in\tilde{\cF}} \frac{1}{n}\sum_{i=1}^n \left[\lscale(\bar{X}_i;s)-\lscale(X_{i};s) \right] - a \cR(s) \right]}_{(\spadesuit)} + a \EE_{\cD}[\cR(\hat{s})],
\end{align*}
where equality $(i)$ holds since $\bar{\cD}$ is independent of $\cD$, and inequality $(ii)$ holds as $\hat{s} \in \tilde{\cF}$. We temporally neglect the $a \EE_{\cD}[\cR(\hat{s})]$ term and bound $(\spadesuit)$. Denote $\cG = \{\ell_0(\cdot; s): s \in \tilde{\cF}\}$ be a function class induced by the score network $\tilde{\cF}$. For any $\delta > 0$, we construct a covering on $\cG$ with respect to the $L^\infty$ norm. That is, we select a collection of representative score networks $s_j$ for $j = 1, \dots, \cN(\delta, \cG, \norm{\cdot}_\infty)$, such that for any $s \in \tilde{\cF}$, there exists an $s_j$ with $\norm{\lscale(\cdot;s)-\lscale(\cdot;s_j)}_\infty=\norm{\ell(\cdot; s) - \ell(\cdot; s_j)}_\infty \leq \delta$. The count $\cN(\delta, \cG, \norm{\cdot}_\infty)$ is known as the covering number of $\cG$. Using the covering of $\cG$, we can replace the supremum in $(\spadesuit)$ by a maximum over a finite set.

For any $s \in \tilde{\cF}$ and its close representation $s_j$ in the covering, we have
\begin{align*}
    |\cR(s) - R(s_{j}) |= |\cL_0(s) - \cL_0(s_{j}) |= \left|\EE_{\dist}[\lscale(X; s) - \lscale(X; s_{j})] \right| \leq \delta.
\end{align*}
Therefore, we have
\begin{align}
(\spadesuit) \leq \EE_{\cD,\bar{\cD}} \left[\max_{j=1, \dots, \cN(\delta, \cG, \norm{\cdot}_\infty)}  \frac{1}{n}\sum_{i=1}^n \left[\ell_0(\bar{X}_i;s_j)-\ell_0(X_{i};s_j) \right] -a \cR(s_j)\right] +(a+2)\delta. \notag 
\end{align}
To further simplify the notation, we denote a variable $h_i(s) = \lscale(\bar{X}_i;s)- \lscale(X_i;s)$ for an arbitrary fixed $s\in\tilde{\cF}$. Note that $\EE[h_i]=0$ and
\begin{align*}
    \Var[h_i(s)]  &\leq \EE\left[\left|\lscale(\bar{X}_i;s)- \lscale(X_i;s)\right|^2\right] \\
    &=  \EE\left[\left|\ell(\bar{X}_i;s)- \ell(\bar{X}_i;s^*)-[\ell(X_i;s)-\ell(X_i;s^*)]\right|^2\right] \\
    &\leq  2\EE\left[\left|\ell(\bar{X}_i;s)- \ell(\bar{X}_i;s^*)\right|^2\right] + 2\EE\left[\left|\ell(X_i;s) -\ell(X_i;s^*)\right|^2\right].
\end{align*}
Applying Lemma \ref{lemma:var-hi} with $C_\ell = 16(C_fC_R^2+D)$,  we have
\begin{align}\label{eq:variance_bound}
     \Var[h_i(s)] \leq C_\ell \cR(s).
\end{align}
and 
\begin{align*}
    |h_i(s)| \leq |\ell(\bar{X}_i;s)|+| \ell(\bar{X}_i;s^*)|+|\ell(X_i;s)|+|\ell(X_i;s^*)| \leq C_\ell.
\end{align*}
Using \eqref{eq:variance_bound}, we deduce
\begin{align}\label{eq:spade_bound}
(\spadesuit) \leq \EE_{\cD,\bar{\cD}} \left[ \max_{j=1, \dots, \cN(\delta, \cG, \norm{\cdot}_\infty)} \frac{1}{n}\sum_{i=1}^n \big(h_i(s_j) - \frac{a}{C_\ell} \Var[h_i(s_j)]\big) \right] + (a+2)\delta.
\end{align}
For a fixed $s_j$ and any $\lambda \in (0, 3n/C_\ell)$, by Taylor expansion, we have
\begin{align*}
    \EE \left[ \exp\left( \frac{\lambda}{n}h_i(s_j) \right) \right] &=  \EE \left[ 1 + \frac{\lambda}{n}h_i(s_j) + \sum_{k=2}^\infty\frac{(\lambda/n)^k h_i(s_j)^k}{k!} \right] \\
    &\leq \EE \left[ 1 + \frac{\lambda}{n}h_i(s_j) + \frac{h_i^2(s_j) (\lambda/n)^2}{2} \sum_{k=2}^\infty\frac{(\lambda/n)^{k-2} C_\ell^{k-2}}{3^{k-2}} \right].
\end{align*}
where the last inequality follows from $|h_i(s)| \leq C_{\ell}$. Summing up the geometric series, we obtain
\begin{align}\label{eq:E_exp_bound}
    \EE \left[ \exp\left( \frac{\lambda}{n}h_i(s_j) \right) \right] 
    &\leq \EE \left[ 1 + \frac{\lambda}{n}h_i(s_j) +  \frac{ 3\lambda^2 h_i^2(s_j)}{6n^2-2\lambda n C_\ell} \right] \nonumber \\
    &= 1+  \frac{3 \lambda^2 \EE [h_i^2(s_j)]}{6n^2-2\lambda n C_\ell}  \nonumber \\
    &\leq \exp\left(  \frac{3 \lambda^2 \Var [h_i(s_j)]}{6n^2-2\lambda n C_\ell}  \right),
\end{align}
where the equality follows from $\EE[h_i(s_j)] = 0$ and the last inequality invokes the fact $1+x \leq \exp(x)$. Now consider the first term in the right-hand side of \eqref{eq:spade_bound}. We have
\begin{align*}
    & \quad \exp\left( \lambda \EE_{\cD,\bar{\cD}} \left[ \max_{j=1, \dots, \cN(\delta, \cG, \norm{\cdot}_\infty)} \frac{1}{n}\sum_{i=1}^n \big(h_i(s_j) - \frac{a}{C_\ell} \Var[h_i(s_j)]\big) \right]\right) \\
    &\overset{(i)}{\leq} \EE_{\cD,\bar{\cD}} \left[\exp\left( \max_{j=1, \dots, \cN(\delta, \cG, \norm{\cdot}_\infty)}  \frac{\lambda}{n}\sum_{i=1}^n \big(h_i(s_j) - \frac{a}{C_\ell}\Var[h_i(s_j)] \big) \right)\right] \\
    & \leq \sum_{j=1, \ldots, \cN(\delta, \cG, \norm{\cdot}_\infty)} \EE_{\cD,\bar{\cD}} \left[  \exp\left( \frac{\lambda}{n}\sum_{i=1}^n \big(h_i(s_j) - \frac{a}{C_\ell} \Var[h_i(s_j)] \big)  \right) \right]\\
    &\overset{(ii)}{\leq}  \sum_{j=1, \dots, \cN(\delta, \cG, \norm{\cdot}_\infty)}  \exp\left( \sum_{i=1}^n \left(  \frac{3 \lambda^2}{6n^2-2\lambda n C_\ell}  -\frac{a \lambda}{nC_\ell} \right) \Var[h_i(s_j)]  \right),
\end{align*}
where $(i)$ utilizes Jensen's inequality and $(ii)$ invokes \eqref{eq:E_exp_bound}. We choose $\lambda = \frac{6 a n}{(2a+3)C_\ell} < \frac{3n}{C_\ell}$ so that $\frac{3 \lambda^2}{6n^2-2\lambda n C_\ell} =  \frac{a\lambda}{n C_\ell}$, which leads to 
\begin{align*}
     \exp\left( \lambda \EE_{\cD,\bar{\cD}} \left[ \max_{j=1, \dots, \cN(\delta, \cG, \norm{\cdot}_\infty)} \frac{1}{n}\sum_{i=1}^n \big(h_i(s_j) - \frac{a}{C_\ell} \Var[h_i(s_j)]\big) \right]\right) \leq \cN(\delta, \cG, \norm{\cdot}_\infty)
\end{align*}
This further implies     
\begin{align*}
(\spadesuit) \leq \frac{1}{\lambda} \log \cN(\delta) + (a+2)\delta.
\end{align*}
Bounding the covering number as \cite[Lemma C.2]{oko2023diffusionmodelsminimaxoptimal}, we obtain
\begin{align*}
(\spadesuit) \lesssim \frac{(2a+3)C_\ell}{6 a n}  SL\log(\delta^{-1}LW(B\vee 1)n) + (a+2)\delta.
\end{align*}
Substituting $(\spadesuit)$ into $(\textrm{II-A})$, we have
\begin{align*}
(\textrm{II-A}) - a\EE_\cD[\cR(\hat{s})] \lesssim \frac{(2a+3)C_\ell}{6 a n}  SL\log(\delta^{-1}LW(B\vee 1)n) + (a+2)\delta.
\end{align*}
Combining the bounds of $(\textrm{II-A})$ and $(\textrm{II-B})$ yields
\begin{align*}
(\textrm{II}) - a \EE_{\cD}[\cR(\hat{s})] \lesssim \frac{(2a+3)C_\ell}{3 a n}  SL\log(\delta^{-1}LW(B\vee 1)n) + (a+2)\delta.
\end{align*}
Further combining $(\textrm{I})$ and $(\textrm{II})$ gives rise to
\begin{align*}
   \EE_{\cD}[\cR(\hat{s})] - a \EE_{\cD}[\cR(\hat{s})] & = (\textrm{I})+(\textrm{II}) - a \EE_{\cD}[\cR(\hat{s})] \\
   & \lesssim D^{ 2\degree+d+2} \epsilon^2 + \frac{(2a+3)C_\ell}{6 a n}  SL\log(\delta^{-1}LW(B\vee 1)n) + (a+2)\delta .
\end{align*}
We set $a=1/2$ and derive
\begin{align}\label{eq:E_R_shat}
\EE_{\cD}[\cR(\hat{s})] \lesssim D^{ 2\degree+d+2} \epsilon^2 + \frac{C_\ell}{n}  SL\log(\delta^{-1}LW(B\vee 1)n) + \delta .
\end{align}
Plugging the network configuration from Theorem~\ref{thm:approx} into \eqref{eq:E_R_shat} gives rise to
\begin{align*}
         \EE_{\cD}[\cR(\hat{s})]
        &\lesssim D^{ 2\degree+d+2} \epsilon^2 + \frac{\degree^4 D^{\degree+1}\epsilon^{-d/\holder}}{n}\left(
        \log \left( \degree^3 D^{\degree} /\delta\right) +\log(1/(t_0\epsilon))  \right) + \delta.
\end{align*}
We take $\delta=1/n$, $\epsilon = n^{-\holder/(d+2\holder)}$ and $t_0 = n^{-c}$ for some constant $c>0$. We simplify the bound on $\EE_{\cD}[\cR(\hat{s})]$ as
\begin{align*}
\EE_{\cD}[\cR(\hat{s})]
&\lesssim D^{ 2\degree+d+2}  n^{-\frac{2\holder}{d+2\holder}}  + \degree^4 D^{\degree+1} n^{-\frac{2\holder}{d+2\holder}} \lesssim D^{ 2\degree+d+2}n^{-\frac{2\holder}{d+2\holder}} .
\end{align*}
Moreover, we can rewrite $\degree$ as 
\begin{align*}
    \degree = \left\lceil \frac{\holder \log \frac{1}{\epsilon}}{\log \frac{1}{\epsilon} + \holder \log \reach}\right\rceil = \left\lceil \frac{\holder }{1+ (d+2\holder)\log\reach/\log n}\right\rceil.
\end{align*}
The proof is complete.

\begin{lemma}\label{lemma:var-hi}
    Suppose Assumption \ref{assump:density} holds. Take $t_0 = e^{-T}$. Then for any $s\in \tilde{\cF}$, with $\tilde{\cF}$ defined in \eqref{eq:def-F-limit}, we have
    \begin{align*}
        \sup_{x\in\cM} \ell(x;s) \leq 4(C_fC_R^2+D),
    \end{align*}
    and 
    \begin{align*}
        \EE_{X_0\sim P_{\rm data}}|\ell(X_0;s)-\ell(X_0;s^*)|^2 = 16(C_f C_R^2+D) \cdot \cR(s),
    \end{align*}
    where $s^*(x, t) = \nabla \log p_t(x)$ is the ground-truth score function.
\end{lemma}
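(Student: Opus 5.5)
The plan is to bound the loss $\ell(x;s)$ pointwise first, and then reduce the second moment of $\ell(X_0;s)-\ell(X_0;s^*)$ to the excess risk $\cR(s)$. Throughout, write $X_t=\alpha_t x+\sqrt{h_t}Z$ with $Z\sim{\sf N}(0,I_D)$, so that $(X_t-\alpha_t x)/h_t = Z/\sqrt{h_t}$. For the sup bound, expand
\begin{align*}
\ell(x;s)=\frac{1}{T-t_0}\int_{t_0}^T \EE_Z\Big\|s(X_t,t)+\frac{Z}{\sqrt{h_t}}\Big\|^2\ud t \le \frac{2}{T-t_0}\int_{t_0}^T\Big(\frac{C_R^2}{h_t}+\frac{D}{h_t}\Big)\ud t,
\end{align*}
using $\|s(\cdot,t)\|_\infty\le C_R/\sqrt{h_t}$ for $s\in\tilde\cF$ and $\EE\|Z\|^2=D$. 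Since $h_t=1-e^{-t}\ge t/2$ for small $t$ and $h_t\ge c$ for $t\ge1$, we get $\int_{t_0}^T h_t^{-1}\ud t\lesssim \log(1/t_0)+T$. The choice $t_0=e^{-T}$ gives $\log(1/t_0)=T$, so the time average of $1/h_t$ is $\cO(1)$. This yields $\sup_x\ell\le 4(C_fC_R^2+D)$ up to the absolute constant, which I absorb into the stated one; the factor $C_f$ only enters to make the constant uniform. The same bound holds for $s^*$, because $s^*=\nabla\log p_t$ satisfies $\EE_{X_t|x}\|s^*+Z/\sqrt{h_t}\|^2\le \EE_{X_t|x}\|Z/\sqrt{h_t}\|^2$ after conditioning. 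Concretely, $s^*(X_t)=-\EE[Z\mid X_t]/\sqrt{h_t}$, and the conditional expectation contracts in $L^2$.

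For the variance bound, use $a^2-b^2=(a-b)(a+b)$ inside the time integral:
\begin{align*}
\ell(x;s)-\ell(x;s^*)=\frac{1}{T-t_0}\int_{t_0}^T\EE_Z\Big\langle s-s^*,\; s+s^*+\frac{2Z}{\sqrt{h_t}}\Big\rangle\ud t.
\end{align*}
Apply Cauchy--Schwarz twice, first in $Z$ and then in $t$ with the normalized measure $\ud t/(T-t_0)$. Squaring gives
\begin{align*}
|\ell(x;s)-\ell(x;s^*)|^2 \le A(x)\cdot B(x),
\end{align*}
where $A(x)=\frac{1}{T-t_0}\int\EE_{X_t|x}\|s-s^*\|^2\ud t$ and $B(x)=\frac{1}{T-t_0}\int\EE_Z\|s+s^*+2Z/\sqrt{h_t}\|^2\ud t$. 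The factor $B(x)$ is bounded uniformly by $C\,(C_R^2+D)$ times the time average of $1/h_t$. Here I use $\|s\|\le C_R/\sqrt{h_t}$, $\EE\|Z\|^2=D$, and, for the $s^*$ term, again $\EE_{X_t|x}\|s^*\|^2$, which I bound by $D/h_t$ pointwise in $x$ via the conditional Jensen argument above. Taking expectation over $X_0\sim\dist$ gives $\EE A(X_0)=\cR(s)$, and therefore
\begin{align*}
\EE|\ell-\ell^*|^2\le 16(C_fC_R^2+D)\,\cR(s).
\end{align*}
The statement's ``$=$'' is really a ``$\le$'', which is all that \eqref{eq:variance_bound} needs.

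The main obstacle is the $s^*$ term inside $B(x)$. The control $\EE_{X_t|x}\|s^*\|^2\lesssim D/h_t$ must hold for each fixed $x\in\cM$, not merely on average. The bound $\|\EE[Z\mid X_t]\|^2\le\EE[\|Z\|^2\mid X_t]$ helps only after averaging over $X_t$ drawn from the marginal, not from the conditional law given $x$. If that pointwise route fails, I would fall back on two facts. First, $\|s^*(y,t)\|\le \sup_{x_0\in\cM}\|y-\alpha_tx_0\|/h_t$. Second, under $X_t\mid x$ one has $\|X_t-\alpha_tx_0\|\le\sqrt{h_t}\|Z\|+\alpha_t\,\mathrm{diam}(\cM)$. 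Together these give $\EE\|s^*\|^2\lesssim (D+B_\cM^2D/h_t)/h_t$. The extra $1/h_t$ factor then has to be absorbed, either by the density lower bound $C_f^{-1}$ (a localization argument showing the posterior concentrates within $\cO(\sqrt{h_t})$ of $x$), or by accepting a slightly larger constant.
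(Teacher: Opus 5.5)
Your skeleton is the same as the paper's. The sup bound comes from $\|s(\cdot,t)\|_\infty\le C_R/\sqrt{h_t}$ together with $\frac{1}{T-t_0}\int_{t_0}^T h_t^{-1}\ud t=\frac{1}{T-t_0}\log\frac{e^T-1}{e^{t_0}-1}$, which is about $2$ for $t_0=e^{-T}$. The second bound comes from $\|a\|^2-\|b\|^2=\langle a-b,a+b\rangle$ plus Cauchy--Schwarz, with $\EE A(X_0)=\cR(s)$. You are also right that $C_f$ is superfluous for $s\in\tilde{\cF}$ and that ``$=$'' should read ``$\le$''. The gap is the step you flag yourself: $B(x)$ must be bounded uniformly in $x\in\cM$, which requires $\EE_{X_t\mid X_0=x}\|s^*(X_t,t)\|^2\lesssim 1/h_t$ for each fixed $x$. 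Since $s^*(X_t,t)+Z/\sqrt{h_t}=-\alpha_t\bigl(x-\EE[X_0\mid X_t]\bigr)/h_t$, this is equivalent to a uniform-in-$x$ bound $\EE_{X_t\mid X_0=x}\|x-\EE[X_0\mid X_t]\|^2\lesssim h_t/\alpha_t^2$ on the conditional error of the posterior mean. Your contraction argument gives this only after averaging over $x\sim\dist$: it says the MMSE is at most that of the estimator $X_t/\alpha_t$. For fixed $x$ it does not control how far the posterior mean is biased away from $x$. So both your claim $\ell(x;s^*)\le\ell(x;0)$ and the $s^*$-contribution to $B(x)$ are unproved as written.

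Neither fallback closes this. The crude bound $\EE_{X_t\mid x}\|s^*\|^2\lesssim D/h_t+DB_\cM^2/h_t^2$ leads to $\frac{1}{T-t_0}\int_{t_0}^T h_t^{-2}\ud t\asymp 1/(t_0T)=e^T/T$, which is polynomial in $n$ when $T\asymp\log n$. That is not a slightly larger constant: it makes $C_\ell$ in \eqref{eq:variance_bound} grow with $n$ and breaks the rate in Theorem~\ref{thm:generalization}. What you actually need is the localization estimate you only name. Uniformly over $x_0\in\cM$, the posterior of $X_0$ given $X_t=\alpha_tx_0+\sqrt{h_t}Z$ must stay within $\cO(\sqrt{Dh_t})$ of $x_0$ in conditional $L^2$. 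Proving this uses $p_{\rm data}\ge C_f^{-1}$, a volume lower bound for small balls on $\cM$, and the reach to control how the normal part of the noise interacts with curvature (in the spirit of Lemmas~\ref{lemma:cross-term} and~\ref{lemma:s1-lb}); the diameter bound handles $h_t$ bounded away from $0$. For comparison, the paper bounds the $s^*$ term by the same $C_fC_R^2/h_t$ as in \eqref{eq:s-l2-bound}, whose first step is $\EE_{X_t\mid x_0}\|f(X_t)\|^2\le C_f\,\EE_{X_t}\|f(X_t)\|^2$. Applied to $s^*$, together with $\EE_{X_t}\|s^*\|^2\le D/h_t$, that would turn your averaged bound into a pointwise one. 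But as a general inequality it needs the likelihood ratio between the ${\sf N}(\alpha_tx_0,h_tI_D)$ density and $p_t$ to be at most $C_f$, whereas near $\alpha_tx_0$ that ratio is of order $(\alpha_t^2/h_t)^{d/2}/p_{\rm data}(x_0)$. So the obstacle you identified is real and is not resolved by the paper's route either. The argument is complete only once a uniform conditional second-moment bound for $s^*$ is actually proved.
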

\begin{proof} 
We first derive the uniform bound on $\ell(\cdot;s)$. Since $s\in\tilde{\cF}$, 
we have $\|s(\cdot,t)\|_\infty \leq C_R/\sqrt{h_t}$. Then Assumption \ref{assump:density} guarantees
    \begin{align}\label{eq:s-l2-bound}
        \EE_{X_t|x_0} \|s(X_t,t)\|^2 \leq C_f \EE_{X_t} \|s(X_t,t)\|^2\leq C_f C_R^2/h_t.
    \end{align}
    By the definition of the loss $\ell$, we have
    \begin{align*}
        \ell(x_0;s) &= \frac{1}{T-t_0}\int_{t_0}^T \EE_{X_t|X_0=x_0 }\left\|s(X_t,t)-\nabla \log p_t(X_t|X_0) \right\|^2 \ud t\\
        &\leq \frac{1}{T-t_0}\int_{t_0}^T  \EE_{X_t \sim N(\alpha_t x_0,h_tI_D) } \left(2\left\|s(X_t,t) \right\|^2 + 2\left\| \frac{X_t-\alpha_t x_0}{h_t}\right\|^2 \right)\ud t.
    \end{align*}
    Using the $L^2$-bound on $s$ gives
    \begin{align*}
        \ell(x_0;s) 
        &\leq \frac{1}{T-t_0}\int_{t_0}^T   \left(\frac{2C_f C_R^2}{h_t} + 2\EE_{z \sim N(0,I_D)} \frac{\|z\|^2}{h_t} \right)\ud t = \frac{2(C_f C_R^2+D)}{T-t_0} \log(\frac{e^T-1}{e^{t_0}-1}).
    \end{align*}
    Letting $t_0 = e^{-T}$, we further have $\frac{1}{T-t_0} \log\left( \frac{e^T-1}{e^{t_0}-1}\right) \leq 2$. This yields
    \begin{align*}
    \sup_{x\in\cM} \ell(x;s) \leq 4(C_fC_R^2+D).
    \end{align*}

Next, we bound the difference between $s$ and $s^*$ given the same input $x_0\in\cM$.
    \begin{align*}
        \ell(x_0;s)-\ell(x_0;s^*)  &= \frac{1}{T-t_0}\int_{t_0}^T \EE_{X_t|x_0}[\|s(X_t,t) - \nabla \log p_t(x|x_0) \|^2 - \|s^*(X_t,t) - \nabla \log p_t(x|x_0) \|^2 ] \ud t\\
        &=\frac{1}{T-t_0}\int_{t_0}^T \EE_{X_t|x_0}\left[(s(X_t,t) - s^*(X_t,t))^\top (s(X_t,t) + s^*(X_t,t)- 2\nabla \log p_t(x|x_0) )\right]\ud t.
    \end{align*}
    By Cauchy-Schwartz inequality, we have
    \begin{align*}
        |\ell(x_0;s)-\ell(x_0;s^*)| \leq& \sqrt{\frac{1}{T-t_0}\int_{t_0}^T\EE_{X_t|x_0} \|s(X_t,t) - s^*(X_t,t)\|^2 \ud t} \\
        &\cdot \sqrt{\frac{1}{T-t_0}\int_{t_0}^T\EE_{X_t|x_0} \|s(X_t,t) + s^*(X_t,t)- 2\nabla \log p_t(x|x_0) \|^2 \ud t} .
    \end{align*}
    Utilizing the $L^2$-bounds of $s$ in \eqref{eq:s-l2-bound}, we can get
    \begin{align*}
        &\frac{1}{T-t_0}\int_{t_0}^T\EE_{X_t|x_0} \|s(X_t,t) + s^*(X_t,t)- 2\nabla \log p_t(x|x_0) \|^2 \ud t \\
        &\leq \frac{1}{T-t_0}\int_{t_0}^T \left( \frac{8C_f C_R^2}{h_t} + 2 \EE_{X_t|x_0} \left\| \frac{2(X_t -\alpha_t x_0)}{h_t}  \right\|^2\right) .
    \end{align*}
    Recall $X_t |x_0 \sim N(\alpha_t x_0, h_tI_D)$. This gives
    \begin{align*}
         &\frac{1}{T-t_0}\int_{t_0}^T\EE_{X_t|x_0} \|s(X_t,t) + s^*(X_t,t)- 2\nabla \log p_t(x|x_0) \|^2 \ud t \\&\leq \frac{1}{T-t_0}\int_{t_0}^T \left( \frac{8C_f C_R^2}{h_t} + 2 \EE_{z \sim N(0,I_D)} \left\| \frac{2\sqrt{h_t}z}{h_t}  \right\|^2\right) \ud t\\
         &= \frac{8}{T-t_0}\int_{t_0}^T \left( \frac{C_f C_R^2}{h_t} +  \EE_{z \sim N(0,I_D)} \frac{\|z\|^2}{h_t}  \right) \ud t\\
         & = \frac{8}{T-t_0}\int_{t_0}^T \frac{C_R^2+D}{h_t} \ud t\\
         &=\frac{8(C_f C_R^2+D)}{T-t_0} \log\left( \frac{e^T-1}{e^{t_0}-1}\right).
    \end{align*}
    Letting $t_0 = e^{-T}$, we further have $\frac{1}{T-t_0} \log\left( \frac{e^T-1}{e^{t_0}-1}\right) \leq 2$. This gives rise to
    \begin{align*}
        |\ell(x_0;s)-\ell(x_0;s^*)|^2 \leq 16\frac{C_f C_R^2+D}{T-t_0}\int_{t_0}^T\EE_{X_t|x_0} \|s(X_t,t) - s^*(X_t,t)\|^2 \ud t .
    \end{align*}
    Finally, taking expectation over $X_0\sim P_{\rm data}$ yields
    \begin{align*}
        \EE_{X_0\sim P_{\rm data}}|\ell(X_0;s)-\ell(X_0;s^*)|^2 &\leq 16\frac{C_f C_R^2+D}{T-t_0}\int_{t_0}^T \EE_{X_0\sim P_{\rm data}}\EE_{X_t|X_0} \|s(X_t,t) - \nabla\log p_t(X_t)\|^2 \ud t \\
        &= 16(C_f C_R^2+D) \cR(s).
    \end{align*}
    The proof is complete.
\end{proof}

\subsection{Proof of Theorem~\ref{thm:distribution}}\label{sec:dist}

Built upon the score estimation results in Theorem \ref{thm:generalization}, we aim to establish the convergence rate of the estimated distribution $\hat{P} = \mathrm{Law}(\hat{Y}_{\bigt-t_0})$ in $W_1$-distance.
Firstly, we introduce $\{\bar{Y}_t\}_{t=0}^{\bigt-t_0}$ that replaces $\hat{Y}_0 \sim N(0,I_D)$ by $\Bar{Y}_0 \sim p_{\bigt}$, i.e.
\begin{align*}
\diff  \Bar{Y}_t & = \left[\frac{1}{2} \Bar{Y}_t + \hat{s}(\Bar{Y}_{t}, T-t)\right] \diff t + \diff \overline{B}_t, \quad \Bar{Y}_0 \sim p_{\bigt}.
\end{align*}
By the triangle inequality, we have the following decomposition,
\begin{align*}
    \EE[W_1(\hat{P},P_{\rm data})] = \EE[W_1(\hat{Y}_{\bigt-t_0},X_0)] &\leq \EE[W_1(\hat{Y}_{\bigt-t_0},\Bar{Y}_{\bigt-t_0})] +  \EE[W_1(\Bar{Y}_{\bigt-t_0},Y_{\bigt-t_0})] +\EE[W_1(Y_{\bigt-t_0},X_0)]\\
    &= \EE[W_1(\hat{Y}_{\bigt-t_0},\Bar{Y}_{\bigt-t_0})] +  \EE[W_1(\Bar{Y}_{\bigt-t_0},Y_{\bigt-t_0})] +\EE[W_1(X_{t_0},X_0)].
\end{align*}
Let $X \sim P_{\rm data}$ and $Z \sim N(0,I_D)$. Then we have
\begin{align*}
    W_1(X_{t_0},X_0) \leq \EE \| \alpha_t X + \sqrt{h_t} Z - X \| \leq (1-\alpha_t) \EE \|X\| + \sqrt{h_t} \EE \|Z\| \leq 
    \sqrt{D}(B_{\cM}+1)\sqrt{h_t} \lesssim \sqrt{t_0},
\end{align*}
where we apply Assumption \ref{assump:manifold} and $h_t = 1 - \alpha_t^2$. Moreover, combining Theorem D.7 in \cite{oko2023diffusionmodelsminimaxoptimal} with Lemmas  \ref{lemma:approx-large}-\ref{lemma:approx-small} and Theorem \ref{thm:generalization}, we have $ \EE[W_1(\hat{Y}_{\bigt-t_0},\Bar{Y}_{\bigt-t_0})] \lesssim \exp(-\bigt)$, and 
\begin{align*}
    \EE[W_1(\Bar{Y}_{\bigt-t_0},Y_{\bigt-t_0})] \lesssim  \sqrt{n^{-\frac{2}{d+2\holder}} D^{2\degree+d+2}  n^{-\frac{2\holder}{d+2\holder}} } + n^{-\frac{\holder+1}{d+2\holder}} \lesssim D^{ \degree + d/2 +1}  n^{-\frac{\holder+1}{d+2\holder}}.
\end{align*}
Now combining all the pieces together, we obtain
\begin{align*}
    \EE[W_1(\hat{P},P_{\rm data})]  \lesssim \sqrt{t_0} + \exp(-\bigt) + D^{ \degree + d/2 +1}  n^{-\frac{\holder+1}{d+2\holder}}.
\end{align*}
It suffices to take $t_0 =  n^{-\frac{2(\holder+1)}{d+2\holder}}$ and $\bigt= \log n$ to conclude the proof.
\section{Omitted Lemmas in Appendix~\ref{sec:proof-approx}}\label{appendix:lemmas4B}
We present formal statements and proofs of supporting lemmas in Appendix~\ref{sec:proof-approx}.

\subsection{Lemma Statements and Proofs in Appendix~\ref{subsec:local-poly}}

\begin{lemma}\label{lemma:lowp-bound}
For any fixed time $t>0$ and $\delta \in (0,1)$, take $\cK_t(\delta) = \cK(\alpha_t\cM, 2\sqrt{D h_t \log(1/\delta)}).$ Let $\bar{s}(x,t)$ be any function such that $ \norm{\bar{s}(\cdot,t) }_{L^\infty(\cK_t(\delta))} =\cO\left(\sqrt{\log(1/\delta)/h_t}\right)$. Then we have
    \begin{align*}
        \int_{\RR^D \backslash \cK_t(\delta)} \norm{\bar{s}(x,t)-\nabla \log p_t(x)}^2 \ud P_t(x) =\cO \left(  \frac{D\log(1/\delta)}{h_t}\delta^2\right).
    \end{align*}
\end{lemma}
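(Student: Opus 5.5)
We split the integrand with $\norm{a-b}^2\le 2\norm{a}^2+2\norm{b}^2$ and bound the two resulting tail integrals over $\RR^D\setminus\cK_t(\delta)$ separately. The network term is easy, but only if we read the hypothesis as a uniform bound $\norm{\bar s(\cdot,t)}_\infty=\cO(\sqrt{\log(1/\delta)/h_t})$ on all of $\RR^D$. This is the clipped regime used in the paper, where the clipping layer makes the bound global. Under that reading,
\[
\int_{\RR^D\setminus\cK_t(\delta)}\norm{\bar s}^2\,\ud P_t \lesssim \frac{\log(1/\delta)}{h_t}\,\PP[X_t\notin\cK_t(\delta)].
\]

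\textbf{Tail probability.} We bound $\PP[X_t\notin\cK_t(\delta)]$ by conditioning on $X_0=x_0\in\cM$. If $X_t\notin\cK_t(\delta)$, then $\norm{X_t-\alpha_tx_0}>2\sqrt{Dh_t\log(1/\delta)}$. Writing $X_t-\alpha_tx_0=\sqrt{h_t}Z$ with $Z\sim{\sf N}(0,I_D)$, this event is $\norm{Z}^2>4D\log(1/\delta)$. The Laurent--Massart bound (Lemma~\ref{lemma:truncate-x}) gives probability at most $\delta^D\le\delta^2$ when $D\ge2$ and $\delta<e^{-2}$; the case $D=1$ follows from the standard Gaussian tail. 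This settles the first term.

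\textbf{Score term.} Write $\nabla\log p_t(x)=-\EE[(x-\alpha_tX_0)\mid X_t=x]/h_t$. By conditional Jensen,
\[
\norm{\nabla\log p_t(x)}^2\le \EE\big[\norm{x-\alpha_tX_0}^2\mid X_t=x\big]/h_t^2.
\]
Integrating against $P_t$ over the complement and using the tower property gives
\[
\int_{\RR^D\setminus\cK_t(\delta)}\norm{\nabla\log p_t}^2\,\ud P_t \le \frac{1}{h_t}\,\EE\big[\norm{Z}^2\,\ind\{X_t\notin\cK_t(\delta)\}\big].
\]
The event $\{X_t\notin\cK_t(\delta)\}$ is contained in $\{\norm{Z}^2>4D\log(1/\delta)\}$, since for fixed $X_0$ it depends only on $Z$. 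So it suffices to bound the truncated chi-square moment
\[
\EE\big[\norm{Z}^2\,\ind\{\norm{Z}^2>4D\log(1/\delta)\}\big].
\]

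\textbf{Main obstacle.} The delicate step is this truncated second moment. We would compute it by the layer-cake formula,
\[
\EE\big[\norm{Z}^2\ind\{\norm{Z}^2>u\}\big]=u\,\PP[\norm{Z}^2>u]+\int_u^\infty\PP[\norm{Z}^2>s]\,\ud s,
\]
with $u=4D\log(1/\delta)$. We then apply the chi-square tail $\PP[\norm{Z}^2>s]\le e^{-s/4}$ for $s\ge 4D$, which keeps the constants honest. This yields a bound of order $D\log(1/\delta)\,\delta^D+4\delta^D\lesssim D\log(1/\delta)\delta^2$.

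Summing the two contributions gives the claimed $\cO\big(D\log(1/\delta)\delta^2/h_t\big)$.
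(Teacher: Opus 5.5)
Your argument is correct. It agrees with the paper on the split $\norm{a-b}^2\le 2\norm{a}^2+2\norm{b}^2$ and on the network term, where the tail probability comes from Lemma~\ref{lemma:truncate-x}. You are also right that the hypothesis must be read as a global sup bound: a bound on $\cK_t(\delta)$ alone says nothing about the complement, and the paper's proof silently passes from $\norm{\bar s}_{L^\infty(\cK_t(\delta))}$ to $\norm{\bar s}_\infty$.

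The genuine difference is the score term. The paper uses the pointwise bound $\norm{\nabla\log p_t(x)}\le\sup_{x_0\in\cM}\norm{x-\alpha_t x_0}/h_t$ and integrates $(\norm{x}^2+DB^2)/h_t^2$ over the tail. The bounded-support part then produces a term $DB^2\delta^D/h_t^2$, which is $\cO(D\delta^2/h_t)$ only when $\delta^{D-2}\lesssim h_t$ (cf.\ the ``sufficiently large $D$'' caveat at \eqref{eq:score-tail-bound}). Your Tweedie/conditional-Jensen/tower argument instead reduces the score term exactly to $h_t^{-1}\EE[\norm{Z}^2\ind\{\norm{Z}^2>4D\log(1/\delta)\}]$. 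This quantity does not depend on $B$ or on where $\alpha_t\cM$ sits, so your bound is uniform in $t$ and $B$. The price is a truncated $\chi^2$ moment, which the layer-cake formula handles cleanly.

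Two small repairs are needed.

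\emph{The $\chi^2$ tail bound.} The bound $\PP[\norm{Z}^2>s]\le e^{-s/4}$ is false just above $s=4D$ when $D$ is large: $\PP[\norm{Z}^2>4D]=e^{-(c+o(1))D}$ with $c=\tfrac12(3-\log 4)\approx 0.81<1$. You only need it for $s\ge u=4D\log(1/\delta)\ge 8D$. There it does follow from Laurent--Massart with parameter $s/4$, since $D+\sqrt{Ds}\le s/2$ once $s\ge 8D$. This gives $\EE[\norm{Z}^2\ind\{\norm{Z}^2>u\}]\le 4(D\log(1/\delta)+1)\delta^D$.

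\emph{The range of $\delta$ and $D$.} Like the paper, you are really using $\delta<e^{-2}$. You also use $D\ge 2$ to pass from $\delta^D$ to $\delta^2$ in the score term; for $D=1$, use the Gaussian tail there as well, as you did for the probability term. Some restriction on $\delta$ is unavoidable. As $\delta\to1$, the right-hand side vanishes. Meanwhile the hypothesis forces $\bar s\to0$, and $\cK_t(\delta)$ shrinks to the $P_t$-null set $\alpha_t\cM$. So the left-hand side tends to $\EE\norm{\nabla\log p_t(X_t)}^2>0$.
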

\begin{proof}[Proof of \ref{lemma:lowp-bound}]
    We begin by applying the the inequality $\norm{a-b}^2 \leq 2(\norm{a}^2 +\norm{b}^2)$ to bound the integral:
    \begin{align*}
      \int_{\RR^D \backslash \cK_t(\delta)} \norm{\bar{s}(x,t)-\nabla \log p_t(x)}^2 \ud P_t(x)  &\leq 2 \int_{x\in\RR^D \setminus \cK_t(\delta)} \left( \left\|\bar{s}(x,t)  \right\|^2 + \left\|\nabla \log p_t(x) \right\|^2 \right)\ud P_t(x).
\end{align*}
Lemma \ref{lemma:truncate-x} bounds the probability mass outside the truncation region. This yields $\cK_t(\delta)$, which yields,
    \begin{align*}
         \int_{\RR^D \setminus \cK_t(\delta)} \left\|\bar{s}(x,t)  \right\|^2 \ud P_t(x) \leq\norm{\bar{s}(\cdot,t) }_{L^\infty(\cK_t(\delta))}  \PP\left(x \notin \cK_t(\delta) \right) \leq  \left\|\bar{s}(x,t)  \right\|_\infty^2 \delta^D.
    \end{align*}
    Utilizing $\norm{\bar{s}(\cdot,t) }_{L^\infty(\cK_t(\delta))}= \cO \left(\sqrt{\log(1/\delta)/h_t} \right)$, we have
    \begin{align}\label{eq:lowp-eq1}
         \int_{\RR^D \backslash \cK_t(\delta)} \left\|\bar{s}(x,t) \right\|^2 \ud P_t(x) = \cO \left( \frac{\log(1/\delta)}{h_t}\delta^D\right).
    \end{align}
    Recall the marginal density function $p_t$ satisfies
    \begin{align*}
     p_t(x) =  (2 \pi h_t)^{-D/2}\int_{x_0 \in \cM} \exp \left( -\frac{\|x - \alpha_t x_0\|^2}{2h_t} \right) \ud \dist(x_0) .
\end{align*}
and 
\begin{align*}
    \nabla p_t(x) &=  (2 \pi h_t)^{-D/2}\int_{x_0 \in \cM} -\frac{x-\alpha_tx_0}{h_t}\exp \left( -\frac{\|x - \alpha_t x_0\|^2}{2h_t} \right) \ud \dist(x_0).
\end{align*}
Then we can bound $ \left\|\nabla \log p_t(x) \right\|^2  $ as follows:
\begin{align}\label{eq:lowp-eq2}
      \int_{\RR^D \setminus \cK_t(\delta)}  \left\|\nabla \log p_t(x) \right\|^2 \ud P_t(x) \leq \frac{2}{h_t} \int_{\RR^D \setminus \cK_t(\delta)}  \frac{\|x\|^2+DB^2}{h_t}  \ud P_t(x) = \cO\left( \frac{D\delta^2}{h_t} \right).
\end{align}
Combing \eqref{eq:lowp-eq1} and \eqref{eq:lowp-eq2} concludes the proof.
\end{proof}

\begin{lemma}\label{lemma:total-cover-num}
     Suppose Assumption \ref{assump:exp-ball} holds. Let the radius $r < \min\{3\reach ,\eta L_\logg \reach)\}$. Consider the atlas $\{(U_k, \logg_k) \}_{k=1}^{C_\cM}$ of $\cM$ given in Remark \ref{remark:exp_atlas}, where $U_k = \expp_k( \dball(0,r))$. Then the total number of charts $C_{\cM}$ satisfies
    \begin{align*}
         C_\cM \leq \frac{L_{\logg}^d T_d}{r^d}\int_\cM \diff \mu_\cM.
    \end{align*}
    Here $L_\logg >0$ is the upper bound for the Lipschitz constants of $\logg_k$'s, and $T_d$ is the thickness of the charts. 
\end{lemma}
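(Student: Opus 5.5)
The plan is a volume-counting argument. The key fact is that the thickness $T_d$ records how many charts cover a typical point, on average. So summing chart volumes overcounts the total volume of $\cM$ by exactly that factor. Concretely, $T_d$ is the average covering multiplicity, so that
\begin{align*}
\sum_{k=1}^{C_\cM} \mu_\cM(U_k) = \int_\cM \sum_{k=1}^{C_\cM} \ind(z \in U_k) \diff \mu_\cM(z) \leq T_d \int_\cM \diff \mu_\cM .
\end{align*}
Given this identity, it suffices to show that every chart has volume bounded below uniformly:
\begin{align*}
\mu_\cM(U_k) \geq r^d / L_\logg^d \quad \text{for each } k.
\end{align*}
Dividing then gives $C_\cM \leq L_\logg^d T_d r^{-d} \int_\cM \diff \mu_\cM$.

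The lower bound on $\mu_\cM(U_k)$ comes from the Lipschitz property of $\logg_k$ on $U_k = \expp_k(\dball(0,r))$. The condition $r < 3\reach$ together with Proposition~\ref{prop:inj-radius} ensures $r < \pi \reach \leq \inj(\cM)$. Hence $\expp_k$ is a diffeomorphism from $\dball(0,r)$ onto $U_k$, and $\logg_k$ is its inverse. Assumption~\ref{assump:exp-ball} guarantees that $\logg_k$ is well defined on the Euclidean neighborhood $\cB(x_k,\eta\reach)\cap\cM$. The condition $r < \eta L_\logg \reach$ is what allows the Lipschitz constant $L_\logg$ to be used on the whole relevant region.

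Since $\logg_k$ is $L_\logg$-Lipschitz, the $d$-dimensional Hausdorff measure satisfies $\mathcal{H}^d(\logg_k(U_k)) \leq L_\logg^d\, \mathcal{H}^d(U_k)$. This is the standard fact that Lipschitz maps scale Hausdorff measure by at most the $d$-th power of the constant \citep{EG}. Because $\logg_k(U_k) = \dball(0,r)$, we get $\mu_\cM(U_k) \geq L_\logg^{-d}\,\omega_d r^d$, with $\omega_d$ the unit-ball volume. I absorb $\omega_d$ into $T_d$, or equivalently into the $d$-dependent constant that the paper suppresses.

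The main obstacle is the first step: making the covering-multiplicity inequality precise. The notion of thickness from \cite{Conway:1987:SLG:39091} is defined for sphere coverings of Euclidean space. To use it here, I would take $T_d$ as defined directly on $\cM$, namely $T_d := \big(\sum_k \mu_\cM(U_k)\big)/\mu_\cM(\cM)$, the average number of charts containing a point. Under that definition the inequality holds with equality, and the remainder of the proof is the Lipschitz volume comparison above.
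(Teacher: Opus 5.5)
Your argument is correct, but it takes a different route from the paper. The paper first proves the inclusion $\cB(x_k, L_\logg^{-1} r)\cap\cM \subseteq U_k$, using the Lipschitz bound only against the center: $\|\logg_k(z)\| = \|\logg_k(z)-\logg_k(x_k)\|\le L_\logg\|z-x_k\|\le r$. Assumption~\ref{assump:exp-ball} together with $r<\eta L_\logg\reach$ is exactly what guarantees that $\logg_k$ is defined on $\cB(x_k,L_\logg^{-1}r)\cap\cM$. The paper then bounds $C_\cM$ by the covering number of $\cM$ by Euclidean balls of radius $L_\logg^{-1}r$, quoting \citet{chen2022nonparametric}.

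You instead lower-bound $\mu_\cM(U_k)\ge \omega_d r^d/L_\logg^d$ via the Lipschitz scaling of $\mathcal{H}^d$, and you read the thickness as the average multiplicity of the charts themselves, so the counting step becomes an identity. This is self-contained: it is essentially the volume argument behind the cited covering bound, run directly on the $U_k$. It also needs neither Assumption~\ref{assump:exp-ball} nor $r<\eta L_\logg\reach$, since $\logg_k$ is already defined on $U_k$ because $r<\pi\reach\le \inj(\cM)$. So your sentence assigning a role to that condition is a misattribution, though a harmless one.

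Your route has three costs.
\begin{enumerate}
\item You need $\logg_k$ to be $L_\logg$-Lipschitz between arbitrary pairs of points of $U_k$, not only against $x_k$.
\item You pick up a factor $1/\omega_d$. This is actually favorable for $d\le 12$, where $\omega_d>1$, and is otherwise a $d$-dependent constant of the kind the paper already suppresses.
\item Your $T_d$ is the multiplicity of whatever atlas is given, so the lemma becomes a statement about that atlas. The paper's passage to Euclidean ball coverings is what lets one import a dimension-only thickness bound (as in \cite{Conway:1987:SLG:39091}) when $T_d$ is later treated as a constant.
\end{enumerate}
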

\begin{proof}[Proof of Lemma~\ref{lemma:total-cover-num}]
    First, we show $\cB(x_k, L_\logg^{-1}r) \cap \cM \subseteq U_k$, $k=1,\ldots,C_\cM$. For any $z \in \cB(x_k, L_\logg^{-1}r) \cap \cM$, we have 
\begin{equation}\label{eq:set-inclusion}
    \| \logg_k(z)  \| = \| \logg_k(z) - \logg_k(x_k)  \| \leq L_\logg \|z-x_k\| \leq r.
\end{equation}
Notably, $\logg_k$ is well-defined on $ \cB(x_k, L_\logg^{-1}r)$ according to Assumption \ref{assump:exp-ball}. Since the exponential map $\expp_k$ is a diffeomorphism on $\dball(0,r)$, \eqref{eq:set-inclusion} implies that $z\in U_k$ and thus we can conclude $\cB(x_k, L_\logg^{-1}r) \cap \cM \subseteq U_k$. Therefore, we can bound $C_\cM$ with the covering number of a $\ell^2$-covering in $\RR^D$ with radius $L_\logg^{-1}r$. Applying the covering number in \citet{chen2022nonparametric} with radius $L_\logg^{-1}r$, we have
\begin{equation*}
    C_\cM \leq \frac{L_{\logg}^d T_d}{r^d}\int_\cM \diff \mu_\cM,
\end{equation*}
where $T_d$ is the thickness of the charts.
\end{proof}

\begin{lemma}\label{lemma:localization}
    For any fixed time $t>0$ satisfying \eqref{cond:time}, let $x \in \cK(\alpha_t\cM, 2\sqrt{D h_t \log(1/\delta)})$, $\vradius= 2 L_\logg \sqrt{(h_t/\alpha_t^2)(\log(1/\epsilon_1)+d\log(1/h_t)/2)}$, and $\thres = L_\expp r + 2\sqrt{D h_t \log(1/\delta)} +\sqrt{D}B h_t + L_\expp \vradius $. For any $\epsilon_1>0$, we have
    \begin{align*}
        |f_1(x,t)-s_1(x,t)| \leq 2  h_t^{d/2} \epsilon_1.
    \end{align*}
\end{lemma}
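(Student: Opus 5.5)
The difference $s_1-f_1$ consists of the contributions to $s_1$ that $f_1$ throws away. These come from two sources: charts with $k\notin\cI(x)$, and, for $k\in\cI(x)$, the part of $\dball(0,r)$ outside $\cB^d(v_k(x,t),\vradius)$. The plan is to show that on every discarded piece $E_1(t)=\|\alpha_t\xstar-\alpha_t\expp_k(v)\|^2$ is large, at least $4h_t(\log(1/\epsilon_1)+(d/2)\log(1/h_t))$ up to constants. We then absorb the interaction term $E_2$ into $E_1$ and bound the Gaussian factor by $h_t^{d}\epsilon_1^{2}$-type quantities. Integrating against $F_k$, which is bounded with total mass at most one over the charts since $\sum_k\rho_k=1$, gives the claimed $2h_t^{d/2}\epsilon_1$.

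\textbf{Step 1 (interaction term).} By Lemma~\ref{lemma:cross-term},
\begin{align*}
|E_2|\le \frac{4\|x-\alpha_t\xstar\|}{\alpha_t\reach}E_1 .
\end{align*}
For $x\in\cK_t(\delta)$ we have $\|x-\alpha_t\xstar\|\le 2\sqrt{Dh_t\log(1/\delta)}$. By \eqref{eq:time-implications} this is at most $\eta\alpha_t\reach/8$, so $|E_2|\le E_1/2$ and the integrand is bounded by $\exp(-E_1/(4h_t))F_k(v)$. The price is a factor of two in the exponent, and this is why $\vradius$ carries the constant $2$.

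\textbf{Step 2 (within a selected chart).} Take $k\in\cI(x)$ and $v\in\dball(0,r)$ with $\|v-v_k(x,t)\|>\vradius$. Since $\expp_k(v_k)=\xstar$ and $\logg_k$ is $L_\logg$-Lipschitz on the relevant ball (Assumption~\ref{assump:exp-ball}, using \eqref{eq:cond-r}), we get
\begin{align*}
\|\xstar-\expp_k(v)\|\ge L_\logg^{-1}\|v_k-v\|>L_\logg^{-1}\vradius .
\end{align*}
Hence $E_1\ge\alpha_t^2\vradius^2/L_\logg^2=4h_t(\log(1/\epsilon_1)+(d/2)\log(1/h_t))$ and $\exp(-E_1/(4h_t))\le \epsilon_1h_t^{d/2}$. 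One must first check that $\xstar$ actually lies in $U_k$ so that $v_k$ is defined; this is where Lemma~\ref{lemma:clip-error} and the definition of $\cI(x)$ enter.

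\textbf{Step 3 (discarded charts).} Take $k\notin\cI(x)$, so $\|x-x_k\|>\thres$, and any $x_0=\expp_k(v)\in U_k$. Then $\|x_0-x_k\|\le L_\expp r$. Combining $\|x-\alpha_t\xstar\|\le 2\sqrt{Dh_t\log(1/\delta)}$ with $\|\alpha_t\xstar-\xstar\|\le (1-\alpha_t)\sqrt D B\le\sqrt D Bh_t$ and the triangle inequality gives
\begin{align*}
\|\xstar-x_0\|\ge \thres-L_\expp r-2\sqrt{Dh_t\log(1/\delta)}-\sqrt D Bh_t = L_\expp\vradius\ge\vradius/L_\logg ,
\end{align*}
where the last inequality uses $L_\expp L_\logg\ge1$. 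The same exponential bound then applies to these charts as well.

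\textbf{Step 4 (summing).} Summing over all discarded regions, $|s_1-f_1|\le\epsilon_1h_t^{d/2}\sum_k\int F_k\le\epsilon_1h_t^{d/2}$. The constant $2$ in the statement leaves slack for the $\alpha_t$ factors.

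The main obstacle is Step 3. There the triangle-inequality bookkeeping must exactly match the chosen $\thres$, including the $\alpha_t$ versus $1$ rescaling between $x$ and the manifold and the requirement that $L_\expp L_\logg\ge 1$. Step 2 needs the corresponding care, namely the Lipschitz property of $\logg_k$ at points near the boundary of the chart.
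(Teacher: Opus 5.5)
Your argument is correct. It rests on the same ingredients as the paper's proof: absorbing $E_2$ into $E_1$, the Lipschitz property of $\logg_k$, the triangle inequality encoded in $\thres$, and $\sum_k\int F_k=1$. The difference is that you assemble them in one pass rather than three. The paper's three steps are:
\begin{enumerate}
\item Truncate in ambient space, for every chart, to $\{v:\|\expp_k(v)-\xstar\|\le\xradius\}$ with $\xradius=L_\expp\vradius$ (Lemma~\ref{lemma:clip-error}, error $h_t^{d/2}\epsilon_1$).
\item Show that charts with $\|x_k-x\|>\thres$ contribute exactly zero to this truncated integral, because $\rho_k$ vanishes on the truncated region.
\item Replace the ambient sets by the balls $\dball(v_k(x,t),\vradius)$ via the sandwich $V_k^l\subseteq V_k\subseteq V_k^u$ and an annulus estimate (Lemma~\ref{lemma:trans-region-error}, another $h_t^{d/2}\epsilon_1$).
\end{enumerate}
Your Step~3 is the contrapositive of the paper's support argument (the same triangle inequality), used to feed the Gaussian tail rather than to show vanishing. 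Your Step~2 is the annulus estimate applied to the entire complement. Bounding the whole discarded region at once removes the intermediate truncated integral and the set sandwich. It yields $h_t^{d/2}\epsilon_1$ instead of $2h_t^{d/2}\epsilon_1$, so no slack for $\alpha_t$ factors is needed. The paper's modular version costs a factor of two but packages the tail estimates as stand-alone lemmas with bounded weights $g_k$, which it reuses (e.g.\ in Lemma~\ref{lemma:s2-ub}).

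Two justifications need tightening.

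First, Lemma~\ref{lemma:cross-term} is stated only for $x_0$ in the geodesic ball of radius $\inj(\cM)$ about $\xstar$, and its proof uses $\|\xstar-x_0\|\le\reach/2$. In your Step~3, however, $x_0\in U_k$ may be arbitrarily far from $\xstar$, even on another component of $\cM$. For $\|\xstar-x_0\|>\rratio\reach$, argue as in Lemma~\ref{lemma:clip-error}. By Cauchy--Schwarz, $|E_2|\le 2\alpha_t\|x-\alpha_t\xstar\|\,\|\xstar-x_0\|<\tfrac12 E_1$, since $\|x-\alpha_t\xstar\|\le\alpha_t\rratio\reach/4<\alpha_t\|\xstar-x_0\|/4$. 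Thus $|E_2|\le E_1/2$ holds on all of $\cM$, and your Step~1 conclusion stands.

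Second, for $k\in\cI(x)$ the point $\xstar$ need not lie in $U_k=\expp_k(\dball(0,r))$, since $\thres>L_\expp r$. What you actually need is $\|\xstar-x_k\|<\rratio\reach$. Then Assumption~\ref{assump:exp-ball} makes $v_k(x,t)=\logg_k(\xstar)$ well defined with $\expp_k(v_k(x,t))=\xstar$. This is also what makes the integrands of $f_1$ and $s_1$ coincide, and what legitimizes the Lipschitz step in your Step~2. The bound follows from
\[
\|\xstar-x_k\|\le\|\xstar-x\|+\thres\le 2\big(2\sqrt{Dh_t\log(1/\delta)}+\sqrt D Bh_t\big)+L_\expp r+L_\expp\vradius,
\]
together with \eqref{eq:cond-r} and the smallness conditions \eqref{cond:time}, not from Lemma~\ref{lemma:clip-error}.
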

\begin{proof}
Firstly, we clip the integral region by a truncation radius $\xradius>0$. The resulting truncated integral is
\begin{align*}
     s_{\text{trunc}}(x,t) =\sum_{k=1}^{C_\cM}  \int_{\{v:\|\expp_k(v) -\xstar\|\leq \xradius\}} &\exp \left( - \frac{ \|\alpha_t \xstar- \alpha_t \expp_k(v)\|^2  }{2h_t} \right) \\
    &\cdot \exp \left( - \frac{  \langle x - \alpha_t \xstar , \alpha_t \xstar - \alpha_t \expp_k(v)\rangle }{h_t} \right) \cdot F_k(v)\ud v.
\end{align*}
Lemma \ref{lemma:clip-error} derives an upper bound for the clipping error. Take   
\begin{align*}
    \xradius = 2 L_\expp L_\logg \sqrt{(h_t/\alpha_t^2)(\log(1/\epsilon_1)+d\log(1/h_t)/2)},
\end{align*} 
and then we have $| s_{\text{trunc}}(x,t) -s_1(x,t)| \leq h_t^{d/2}\epsilon_1$.

The function $s_{\text{trunc}}(x,t)$ calculates the sum of truncated integrals over all the charts of $\cM$. However, it is sufficient to concentrate on those charts that are in proximity to $x$, out of which the partition of unity $\rho_k(\xstar) =0$ so that it does not contribute to the integrals in $s_{\text{trunc}}(x,t)$. Specifically, given an input $ x\in \cK(\alpha_t\cM, 2\sqrt{D h_t \log(1/\delta)})$, we select charts  whose center $x_k$ satisfies 
\begin{equation*}
    \|x_k - x\| \leq  L_\expp r + 2\sqrt{D h_t \log(1/\delta)} +\sqrt{D}B h_t + \xradius = \thres.
\end{equation*} 

For the rest of the charts satisfying $ \|x_k - x\| >  \thres$, 
taking $x_0 \in \cM$ satisfying $\|x_0-\xstar\| \leq \xradius$, we have
\begin{equation}\label{eq:outside-charts}
\begin{split}
    \|x_0-x_k\| &\geq \|x - x_k\| -\|x - x_0\| \\
     &\geq \|x - x_k\| - \|x-\alpha_t\xstar\| - \|\alpha_t \xstar - \xstar\|  - \| \xstar- x_0\| \\
    &> \thres -2 \sqrt{D h_t \log(1/\delta)} - \sqrt{D}B h_t - \xradius \\
     &> L_\expp r.
\end{split}    
\end{equation}
This indicates that $x_0 \notin U_k = \expp_k(\cB_{T_{x_k} \cM}(0,r))$, and thus $\rho_k(x_0) = 0$ as well as $F_k(x_0) = 0$. Hence the integrals given in $f_1(x,t)$ are equal to zero on these charts, so that we can restrict the choice of $k$ and only compute integrals on the selected charts.

Furthermore, we show that $\xstar \in \cM$ has a unique preimage under the exponential map with respect to the selected charts. To be specific, for the $k$-th chart satisfying  $ \|x_k - x\| \leq \thres$, $\logg_k(\xstar) = \expp_k^{-1}(\xstar)$ is well-defined. By Assumption \ref{assump:exp-ball}, it suffices to verify that
\begin{align*}
    \|\xstar -x_k \| 
    \leq  \|\xstar -x \| + \|x -x_k \|
    \leq 2 \sqrt{D h_t \log(1/\delta)}+ \sqrt{D}B h_t +  \thres < \eta \reach .
\end{align*}
This holds for $L_\expp r < \rratio \reach /4$, and time $t>0$ such that $2 \sqrt{D h_t \log(1/\delta)} + \sqrt{D}B h_t \leq \rratio \reach/4$ and $\xradius \leq \rratio\reach/4$. By \eqref{cond:time}, these conditions are guaranteed to hold if we specify $\delta=\epsilon$ to be the ultimate approximation error and $\epsilon_1=\epsilon$. 
Thereby it is valid to define $v_k(x,t) = \logg_k(\xstar)$ on the selected charts where  $ \|x_k - x\| \leq \thres$.

Next, we further shrink each integral region into an $l_2$-balls in $\RR^d$, on which the integral is easier to compute. This would give the formulation of $f_1$. Intuitively, as a result of the smoothness of the exponential map $\expp_k$, any set $V_k = \{ v\in \RR^d : \|\expp_k(v) - \expp_k(v_k(x,t))\| \leq \xradius\}$ has a smooth boundary and thus there exists two $l_2$-balls with small difference in radius such that one of them is included in $V_k$ and the other one contains $V_k$. One can verify that
\begin{equation}\label{eq:set-contain}
    V_k^l \subseteq V_k \subseteq V_k^u,
\end{equation}
where $ V_k^l : =\{ v\in \RR^d : \|v - v_k(x,t)\| \leq \vradius \} $ with $\vradius = (L_\expp)^{-1}\xradius$, and $ V_k^u = \{ v\in \RR^d : \|v - v_k(x,t)\| \leq L_\logg\xradius \} $. As shown in Lemma \ref{lemma:trans-region-error}, we have $|f_1(x,t)-s_{\text{trunc}}(x,t)| \leq h_t^{d/2}\epsilon_1$. Finally, we concludes the proof by adding together the approximation errors,
\begin{align*}
    |f_1(x,t)-s_1(x,t)| \leq |f_1(x,t)-s_{\text{trunc}}(x,t)| + |s_{\text{trunc}}(x,t)-s_1(x,t)| \leq 2 h_t^{d/2}\epsilon_1.
\end{align*}
The proof is complete.
\end{proof}

\begin{lemma}\label{lemma:clip-error}
    For a given $\epsilon_1>0$ and $t>0$ satisfying \eqref{cond:time}, let $\{g_k\}_{k=1}^{C_\cM}$ be a series of  functions satisfying $\|g_k\|_\infty \leq C_g$ for some constant $C_g>0$, and $\xradius = c_0 \sqrt{(h_t/\alpha_t^2)(\log(1/\epsilon_1)+d\log(1/h_t)/2)}$ for some constant $c_0 \geq 2$. Then for any $ x\in \cK(\alpha_t\cM, 2\sqrt{D h_t \log(1/\delta)})$, we have 
\begin{align*}
    \sum_{k=1}^{C_\cM}  \int_{\{v: \|\expp_k(v) -\xstar\|> \xradius\} } &g_k(v) \exp \left( - \frac{ \|\alpha_t \xstar- \alpha_t \expp_k(v)\|^2  }{2h_t} \right) \\
    &\cdot \exp \left( - \frac{  \langle x - \alpha_t \xstar , \alpha_t \xstar - \alpha_t \expp_k(v)\rangle }{h_t} \right) F_k(v)\ud v  \leq  C_g h_t^{d/2}\epsilon_1 .
\end{align*}
\end{lemma}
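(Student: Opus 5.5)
We must bound the tail integral in Lemma~\ref{lemma:clip-error}, where the integrand is restricted to points $x_0=\expp_k(v)$ at distance more than $\xradius$ from $\xstar$. The plan is to first absorb the cross term into the Gaussian factor using the reach, then undo the chart decomposition to obtain a single integral over $\cM$, and finally bound a Gaussian tail on the $d$-dimensional manifold.

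\textbf{Step 1 (cross term).} Lemma~\ref{lemma:cross-term} gives
\[
|E_2|\le \frac{4\|x-\alpha_t\xstar\|}{\alpha_t\reach}E_1,
\qquad
E_1=\alpha_t^2\|\xstar-x_0\|^2 .
\]
Since $x\in\cK(\alpha_t\cM,2\sqrt{Dh_t\log(1/\delta)})$, we have $\|x-\alpha_t\xstar\|\le 2\sqrt{Dh_t\log(1/\delta)}$. By \eqref{eq:time-implications} this is at most $\eta\alpha_t\reach/8$, so the factor $4\|x-\alpha_t\xstar\|/(\alpha_t\reach)$ is at most $1/2$. Both exponentials together are therefore bounded by $\exp\bigl(-\alpha_t^2\|\xstar-x_0\|^2/(4h_t)\bigr)$.

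\textbf{Step 2 (reassemble the charts).} Bound $|g_k|\le C_g$. Then use \eqref{globaldensity} in reverse: $\sum_k F_k(v)\,\diff v$ pulls back to $p_{\rm data}\,\diff\mu_\cM\le C_f\,\diff\mu_\cM$. The left-hand side is thus at most
\[
C_gC_f\int_{\{x_0\in\cM:\ \|x_0-\xstar\|>\xradius\}}\exp\!\left(-\frac{\alpha_t^2\|x_0-\xstar\|^2}{4h_t}\right)\diff\mu_\cM(x_0).
\]

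\textbf{Step 3 (tail integral).} Write the integral with the layer-cake formula in $\rho=\|x_0-\xstar\|$. This needs a volume growth bound $\mu_\cM(\cB(\xstar,\rho)\cap\cM)\le c_d\rho^d$ for $\rho\lesssim\reach$, which follows from the bounded distortion of $\logg_{\xstar}$ (Lipschitz constant $L_\logg$). For $\rho\gtrsim\reach$ we instead use $\mu_\cM(\cM)$. The integral then becomes
\[
\lesssim \int_{\xradius}^\infty \rho^{d-1}e^{-\alpha_t^2\rho^2/(4h_t)}\diff\rho .
\]
Substituting $u=\alpha_t\rho/\sqrt{h_t}$ gives $(h_t/\alpha_t^2)^{d/2}\int_{u_0}^\infty u^{d-1}e^{-u^2/4}\diff u$, where $u_0^2=c_0^2(\log(1/\epsilon_1)+\tfrac d2\log(1/h_t))$. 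The Gaussian tail estimate $\int_{u_0}^\infty u^{d-1}e^{-u^2/4}\diff u\lesssim u_0^{d-2}e^{-u_0^2/4}$ applies. With $c_0\ge2$, we have $e^{-u_0^2/4}\le \epsilon_1 h_t^{d/2}$, and this spare factor $h_t^{d/2}$ absorbs the polynomial prefactor $u_0^{d-2}$ and the constants. Since $\alpha_t\le1$ and $\alpha_t\gtrsim1$ in this regime, the result is $C_gh_t^{d/2}\epsilon_1$.

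\textbf{Main obstacle.} The hardest part is keeping the constants clean so the bound is exactly $C_gh_t^{d/2}\epsilon_1$. This means absorbing $C_f$, $c_d$, $\mu_\cM(\cM)$ and $u_0^{d-2}$ into the surplus $h_t^{d/2}$ factor, using the smallness condition \eqref{cond:time}; if that surplus is not enough, a slightly larger $c_0$ suffices. The volume bound also needs care at scales beyond the injectivity radius. There we simply bound the contribution by $\mu_\cM(\cM)e^{-\alpha_t^2\reach^2/(4h_t)}$, which is negligible under \eqref{cond:time}.
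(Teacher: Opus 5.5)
\textbf{Gap in Step 3.} Your reduction of both exponentials to $\exp\bigl(-\alpha_t^2\|x_t^*-x_0\|^2/(4h_t)\bigr)$ is what the paper does. Step 3, however, replaces a one-line argument with a layer-cake computation whose last step is not justified.

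After that computation you hold a bound of order $C_gC_fc_d\,(h_t/\alpha_t^2)^{d/2}u_0^{d-2}\cdot\epsilon_1h_t^{d/2}$. There is also a far-field term $C_gC_f\mu_{\mathcal{M}}(\mathcal{M})e^{-\alpha_t^2\tau^2/(4h_t)}$. To reach $C_gh_t^{d/2}\epsilon_1$ you need $C_fc_d(h_t/\alpha_t^2)^{d/2}u_0^{d-2}\le 1$ and a matching bound on the far-field term. None of this is available:
\begin{itemize}
\item Condition \eqref{cond:time} says nothing about $C_f$, $c_d$ or $\mu_{\mathcal{M}}(\mathcal{M})$.
\item $\epsilon_1>0$ is arbitrary, so $u_0^2\ge c_0^2\log(1/\epsilon_1)$ is unbounded at fixed $h_t$. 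For $d\ge 3$ the factor $u_0^{d-2}$ therefore cannot be absorbed.
\item The far-field term, as you bound it, falls below $\epsilon_1h_t^{d/2}$ only if $\tau\gtrsim\Delta_0(t)$, which again depends on $\epsilon_1$.
\item Enlarging $c_0$ is not allowed, since the lemma must hold for every given $c_0\ge 2$.
\end{itemize}

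The obstacle is self-inflicted. The factor $h_t^{d/2}$ in the target is not meant to come from a volume computation; it is built into $\Delta_0(t)$ through the term $\tfrac d2\log(1/h_t)$. The paper bounds the Gaussian pointwise on the tail region, $\exp\bigl(-\alpha_t^2\|x_t^*-x_0\|^2/(4h_t)\bigr)\le\exp\bigl(-\alpha_t^2\Delta_0(t)^2/(4h_t)\bigr)$. It also keeps $p_{\rm data}$ instead of replacing it by $C_f$, so that $\sum_k\int F_k(v)\,\mathrm{d}v=\int p_{\rm data}\,\mathrm{d}\mu_{\mathcal{M}}=1$. This gives $C_g\exp\bigl(-\tfrac{c_0^2}{4}(\log(1/\epsilon_1)+\tfrac d2\log(1/h_t))\bigr)\le C_g\epsilon_1h_t^{d/2}$ for every $c_0\ge 2$. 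There are no constants to absorb and no volume-growth or injectivity-radius issues.

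\textbf{Smaller point in Step 1.} You apply Lemma~\ref{lemma:cross-term} to every $x_0\in\mathcal{M}$. That lemma is stated only for $x_0$ in the image under $\mathbf{Exp}_{x_t^*}$ of the injectivity ball, and its proof (via Niyogi--Smale--Weinberger) really uses $\|x_t^*-x_0\|\lesssim\tau$. The paper therefore splits cases:
\begin{itemize}
\item If $\|x_t^*-x_0\|>\eta\tau$, Cauchy--Schwarz gives $E_1+E_2\ge\alpha_t\|x_t^*-x_0\|\bigl(\alpha_t\|x_t^*-x_0\|-2\|x-\alpha_tx_t^*\|\bigr)\ge E_1/2$, since $\|x-\alpha_tx_t^*\|\le\alpha_t\eta\tau/4$.
\item Only when $\|x_t^*-x_0\|\le\eta\tau$ is the curvature bound (Lemma~\ref{lemma:exp-cross-term}) invoked.
\end{itemize}
Alternatively, Federer's global estimate $\mathrm{dist}(x_0-x_t^*,T_{x_t^*}\mathcal{M})\le\|x_0-x_t^*\|^2/(2\tau)$ would justify your Step 1 for all $x_0$ at once. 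With this patch, and the pointwise-plus-unit-mass argument in place of Step 3, your proof closes.
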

\begin{proof}
For notational simplicity, we denote
\begin{align*}
    (\clubsuit) := \sum_{k=1}^{C_\cM}  \int_{\{v: \|\expp_k(v) -\xstar\|> \xradius\}} &g_k(v) \exp \left( - \frac{ \|\alpha_t \xstar- \alpha_t \expp_k(v)\|^2  }{2h_t} \right) \\
    &\cdot \exp \left( - \frac{  \langle x - \alpha_t \xstar , \alpha_t \xstar - \alpha_t \expp_k(v)\rangle }{h_t} \right) F_k(v)\ud v.
\end{align*}
Since $g_k$ is bounded by $C_g>0$, we have
    \begin{align*}
    (\clubsuit)
    \leq \sum_{k=1}^{C_\cM}  \int_{\{v: \|\expp_k(v) -\xstar\|> \xradius\}} & C_g \exp \left( - \frac{ \|\alpha_t \xstar- \alpha_t \expp_k(v)\|^2  }{2h_t} \right) \\
    &\cdot \exp \left( - \frac{  \langle x - \alpha_t \xstar , \alpha_t \xstar - \alpha_t \expp_k(v)\rangle }{h_t} \right)F_k(v)\ud v.
    \end{align*}
    By the Cauchy-Schwartz inequality, we have
    \begin{align*}
        &\exp \left( - \frac{ \|\alpha_t \xstar- \alpha_t \expp_k(v)\|^2 +2 \langle x - \alpha_t \xstar , \alpha_t \xstar - \alpha_t \expp_k(v)\rangle  }{2h_t} \right)  \\
        \leq &  \exp \left(  \frac{ \|\alpha_t \xstar- \alpha_t \expp_k(v)\|\cdot( 2 \| x - \alpha_t \xstar\|-\| \alpha_t \xstar - \alpha_t \expp_k(v)\| )}{2h_t} \right) .
    \end{align*}
    If $ \| \xstar - \expp_k(v)\| > \rratio\reach $,  we have $ \| \alpha_t \xstar - \alpha_t \expp_k(v)\| > 4 \| x - \alpha_t \xstar\|$. This is because the projection distance $ \| x - \alpha_t \xstar\| $ can be bounded as follows, 
    \begin{equation*}
        \| x - \alpha_t \xstar\| \leq 2 \sqrt{D h_t \log(1/\delta)} \leq \alpha_t \rratio \reach/4 <  \alpha_t \| \xstar - \expp_k(v)\| /4,
    \end{equation*}
    which holds for time $t$ satisfying $h_t \leq \reach^2/(64 D \log(1/\delta)/\rratio^2+\reach^2)$. We further obtain that
     \begin{align*}
        2 \| x - \alpha_t \xstar\|-\| \alpha_t \xstar - \alpha_t \expp_k(v)\|
        &\leq   -\frac{1}{2}\| \alpha_t \xstar - \alpha_t \expp_k(v)\| ,
    \end{align*}
    which yields
    \begin{align*}
    \exp \left( - \frac{ \|\alpha_t \xstar- \alpha_t \expp_k(v)\|^2 +2 \langle x - \alpha_t \xstar , \alpha_t \xstar - \alpha_t \expp_k(v)\rangle  }{2h_t} \right)       
        \leq \exp \left(  -\frac{ \|\alpha_t \xstar- \alpha_t \expp_k(v)\|^2}{4h_t} \right).
    \end{align*}
    Now we consider the case when $ \| \xstar - \expp_k(v)\| \leq \rratio \reach$. By Assumption \ref{assump:exp-ball}, $\expp_k(v)$ has a unique preimage under $\expp_{\xstar}$. 
    By Lemma \ref{lemma:exp-cross-term}, the cross-term can be upper bounded as follows:
    \begin{align*}
         \exp \left(- \frac{  \langle x - \alpha_t \xstar, \alpha_t  \xstar - \alpha_t \expp_k(v)\rangle }{h_t} \right) \leq  \exp \left(  \frac{ \|\alpha_t \xstar -  \alpha_t\expp_k(v)\|^2}{4h_t} \right).
    \end{align*}
    This implies that
    \begin{align*}
        &\exp \left( - \frac{ \|\alpha_t \xstar- \alpha_t \expp_k(v)\|^2 +2 \langle x - \alpha_t \xstar , \alpha_t \xstar - \alpha_t \expp_k(v)\rangle  }{2h_t} \right) \leq   \exp \left(  -\frac{ \|\alpha_t \xstar- \alpha_t \expp_k(v)\|^2 }{4h_t} \right) .
    \end{align*}
    Therefore, combining the two cases, we can derive the following bound for $(\clubsuit)$,
    \begin{align*}
    (\clubsuit)
    &\leq  \sum_{k=1}^{C_\cM}  \int_{\|\expp_k(v) -\xstar\|> \xradius}  \exp \left(  -\frac{ \|\alpha_t \xstar- \alpha_t \expp_k(v)\|^2}{4h_t} \right) F_k(v)\ud v \notag\\
    &\leq  \sum_{k=1}^{C_\cM}  \int \exp \left(  -\frac{ \alpha_t^2 \xradius^2 }{4h_t} \right) F_k(v)\ud v.
    \end{align*}
    Notice that 
    \begin{equation}\label{eq:sum-Fk}
        \sum_{k=1}^{C_\cM}  \int F_k(v)\ud v =  \sum_{k=1}^{C_\cM}  \int \rho_k (\expp_k(v)) p_{\rm data} (\expp_k(v)) G_k(v) \ud v =  \sum_{k=1}^{C_\cM}  \int_{x_0 \in \cM}  p_{\rm data} (x_0) \ud x_0 = 1,
    \end{equation}
    we can conclude the proof by substituting $\xradius = c_0  \sqrt{(h_t/\alpha_t^2)(\log(1/\epsilon_1)+d\log(1/h_t)/2)}$, which gives
    $  (\clubsuit)  \leq  \exp \left(  -\frac{ \alpha_t^2 \xradius^2 }{4h_t} \right) \leq h_t^{d/2}\epsilon_1.$
\end{proof}

\begin{lemma}\label{lemma:trans-region-error}
    Given any $\epsilon_1>0$ and any time $t$ satisfying $h_t \leq \reach^2/(256 D\log(1/\delta)+\reach^2)$, let $\vradius = 2 L_\logg \sqrt{(h_t/\alpha_t^2)(\log(1/\epsilon_1)+d\log(1/h_t)/2)}$, and $\{g_k\}_{k=1}^{C_\cM}$ be a series of  functions on $\RR^d$ where $|g_k| \leq C_g$ for some constant $C_g>0$. Then for any $x \in \cK(\alpha_t\cM, 2\sqrt{D h_t \log(1/\delta)})$, we have
\begin{align*}
   \sum_{k\in\cI(x)}\int_{\Omega_k} &g_k(v) \exp \left( - \frac{  \langle x - \alpha_t \xstar, \alpha_t  \expp_k(v_k(x,t)) - \alpha_t \expp_k(v)\rangle }{h_t} \right) \\
     &\hspace{-0in} \cdot \exp \left( - \frac{ \|\alpha_t \expp_k(v_k(x,t))- \alpha_t \expp_k(v)\|^2  }{2h_t} \right)  F_k(v) \ud v \leq  C_g h_t^{d/2} \epsilon_1.
\end{align*}
Here we denote $\Omega_k = \{v \in \RR^d: \vradius <\|v -v_k(x,t)\|\leq L_\expp L_\logg \vradius\}$.
\end{lemma}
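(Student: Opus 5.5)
The plan is to show that on the annulus $\Omega_k$ the combined exponent in the integrand is a Gaussian-type factor with effective radius at least $\vradius$, so the integral is of the same order as the tail bound in Lemma~\ref{lemma:clip-error}. The proof mirrors that lemma.

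First, bound $|g_k|\le C_g$ and pull it out. Write $y=\expp_k(v_k(x,t))=\xstar$; this is valid for $k\in\cI(x)$ by the proof of Lemma~\ref{lemma:localization}. Then the integrand becomes
\[
\exp\!\Big(-\tfrac{\alpha_t^2}{2h_t}\big(\|\xstar-\expp_k(v)\|^2+\tfrac{2}{\alpha_t}\langle x-\alpha_t\xstar,\xstar-\expp_k(v)\rangle\big)\Big)F_k(v).
\]
For $v\in\Omega_k\subset\dball(v_k,L_\expp L_\logg\vradius)$, the time condition $h_t\le \reach^2/(256D\log(1/\delta)+\reach^2)$ keeps $\|\expp_k(v)-\xstar\|\le L_\expp^2L_\logg\vradius$ well inside the reach scale. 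Lemma~\ref{lemma:exp-cross-term}, as invoked in the proof of Lemma~\ref{lemma:clip-error}, then applies. It gives $-\langle x-\alpha_t\xstar,\alpha_t\xstar-\alpha_t\expp_k(v)\rangle/h_t\le \alpha_t^2\|\xstar-\expp_k(v)\|^2/(4h_t)$. The whole exponential is therefore at most $\exp(-\alpha_t^2\|\xstar-\expp_k(v)\|^2/(4h_t))$.

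Next, convert the lower bound $\|v-v_k\|>\vradius$ into a bound on manifold distance. By the Lipschitz property of $\logg_k$ on the chart, $\|\xstar-\expp_k(v)\|\ge L_\logg^{-1}\|v_k-v\|>L_\logg^{-1}\vradius$. With $\vradius=2L_\logg\sqrt{(h_t/\alpha_t^2)(\log(1/\epsilon_1)+\tfrac d2\log(1/h_t))}$, this yields
\[
\frac{\alpha_t^2\|\xstar-\expp_k(v)\|^2}{4h_t}>\log(1/\epsilon_1)+\tfrac d2\log(1/h_t),
\]
so the exponential factor is below $h_t^{d/2}\epsilon_1$ uniformly on $\Omega_k$. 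Finally, I will sum over $k\in\cI(x)$ and use $F_k\ge0$ together with $\sum_k\int F_k\,dv=1$ from \eqref{eq:sum-Fk}. This gives the bound $C_g h_t^{d/2}\epsilon_1$.

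The main obstacle is the cross-term step. I need to check that every $\expp_k(v)$ with $v\in\Omega_k$ lies in the neighborhood of $\xstar$ where Lemma~\ref{lemma:exp-cross-term} holds, i.e.\ within $\rratio\reach$ of $\xstar$ so that $\logg_{\xstar}$ is defined by Assumption~\ref{assump:exp-ball}. I also need $\|x-\alpha_t\xstar\|\le 2\sqrt{Dh_t\log(1/\delta)}$ to be small relative to $\alpha_t\reach$. Both follow from the stated time restriction, since $\vradius=\tilde\cO(\sqrt{h_t})$. The stated condition $h_t\le\reach^2/(256D\log(1/\delta)+\reach^2)$ gives $\|x-\alpha_t\xstar\|\le\alpha_t\reach/8$. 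If the constant in Lemma~\ref{lemma:exp-cross-term} turns out to need $\rratio$, I would fall back on the full condition \eqref{cond:time}. In that fallback, the case $\|\xstar-\expp_k(v)\|>\rratio\reach$ is handled exactly as the first case in Lemma~\ref{lemma:clip-error}.
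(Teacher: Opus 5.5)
Your proposal is correct and matches the paper's proof step for step. Both bound $g_k$ by $C_g$, then use Lemma~\ref{lemma:exp-cross-term} to absorb the cross term into $\exp(-\alpha_t^2\|\xstar-\expp_k(v)\|^2/(4h_t))$, then use the Lipschitz property of $\logg_k$ to turn $\|v-v_k(x,t)\|>\vradius$ into $\|\xstar-\expp_k(v)\|>L_\logg^{-1}\vradius$, and finish with $\sum_k\int F_k\,\ud v=1$ from \eqref{eq:sum-Fk}; your added check that $\expp_k(v)$ stays in the range where the cross-term lemma applies is a point the paper passes over silently.
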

\begin{proof}
    First, we apply the upper bound for $g_k$ and the same arguments in Lemma \ref{lemma:exp-cross-term},
    \begin{align*}
   &\sum_{k\in\cI(x)}\int_{\Omega_k } g_k(v) \exp \left( - \frac{  \langle x - \alpha_t \xstar, \alpha_t  \expp_k(v_k(x,t)) - \alpha_t \expp_k(v)\rangle }{h_t} \right)\\
     &\hspace{1in}\cdot \exp \left( - \frac{ \|\alpha_t \expp_k(v_k(x,t))- \alpha_t \expp_k(v)\|^2  }{2h_t}  \right) F_k(v) \ud v \\
      \leq &  C_g\sum_{k\in\cI(x)}\int_{\Omega_k} \exp \left( - \frac{ \|\alpha_t \expp_k(v_k(x,t))- \alpha_t \expp_k(v)\|^2  }{4h_t} \right)F_k(v) \ud v.
    \end{align*}
    Notice that the Lipschitz property of  $\logg_k(\cdot)$ indicates
    \begin{equation}\label{eq:log-lip}
        \|v_k(x,t) - v \| = \| \logg_k(\xstar) - \logg_k(x_0)  \| \leq L_{\logg} \|\xstar - x_0\|.
    \end{equation}
    Thereby this implies  
    \begin{align*}
        \|\expp_k(v)- \expp_k(v_k(x,t))\|= \|x_0-\xstar\|> ( L_\logg)^{-1}\vradius,
    \end{align*}
    for any $v\in \Omega_k$. It further gives rise to
    \begin{align*}
   \sum_{k\in\cI(x)}&\int_{ \Omega_k} \exp \left( - \frac{ \|\alpha_t \expp_k(v_k(x,t))- \alpha_t \expp_k(v)\|^2  }{4h_t} \right)F_k(v) \ud v \\
    \leq \sum_{k\in\cI(x)}&\int  \exp \left( - \frac{ \alpha_t^2 (L_\logg)^{-2}\vradius^2  }{4h_t} \right) F_k(v)\ud v.
    \end{align*}
    Next, we apply \eqref{eq:sum-Fk} again to bound the above integral:
    \begin{align*}
    \sum_{k\in\cI(x)}\int_{ \Omega_k} \exp \left( - \frac{ \|\alpha_t \expp_k(v_k(x,t))- \alpha_t \expp_k(v)\|^2  }{4h_t} \right)F_k(v) \ud v 
    &\leq  \exp \left( - \frac{ \alpha_t^2 (L_\logg)^{-2}\vradius^2  }{4h_t} \right)\\
    & \leq h_t^{d/2}\epsilon_1,
    \end{align*}
    where the last inequality uses $\vradius = 2 L_\logg \sqrt{(h_t/\alpha_t^2)(\log(1/\epsilon_1)+d\log(1/h_t)/2)}$.
\end{proof}

\begin{lemma}[Upper bound for the cross term]\label{lemma:cross-term}
    For $x\in \alpha_t\cK(\cM,\reach)$, let $\xstar = \projm(x,t)/\alpha_t$. For any $x_0 \in \expp_{\xstar}(\cB_{T_{\xstar} \cM}(0, \inj(\cM)))$ and $t>0$, we have 
\begin{align*}
    \big\| -  \langle x - \alpha_t\xstar,  \xstar - x_0\rangle \big\| \leq \frac{2}{\reach} \|x - \alpha_t\xstar\| \cdot \|\xstar -  x_0\|^2. 
\end{align*}
\end{lemma}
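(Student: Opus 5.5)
The idea is to use the reach only through the normal direction of $x - \alpha_t\xstar$. Set $u = x - \alpha_t\xstar$. Since $\alpha_t\xstar = \projm(x,t)$ is the metric projection of $x$ onto $\alpha_t\cM$ and lies inside the reach neighbourhood, the first-order optimality condition gives $u \perp T_{\xstar}\cM$. Rescaling by $\alpha_t$ does not change tangent spaces, because $\alpha_t\cM$ has the same tangent directions as $\cM$ at the corresponding point. Thus $u$ is a normal vector at $\xstar$. The claim reduces to bounding the normal component of the chord $\xstar - x_0$ by a quadratic in its length:
\begin{align*}
\big|\langle u, \xstar - x_0\rangle\big| = \big|\langle u, \Pi_{N}(\xstar - x_0)\rangle\big| \le \|u\|\,\big\|\Pi_{N}(x_0 - \xstar)\big\|,
\end{align*}
where $\Pi_N$ denotes the orthogonal projection onto the normal space $N_{\xstar}\cM = (T_{\xstar}\cM)^\perp$.

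The key step is the classical reach inequality of \cite{Federer} (see also \cite{niyogi2008finding} and \cite{AL}). For any $p, q \in \cM$ it states
\begin{align*}
\|\Pi_{N_p\cM}(q-p)\| \le \frac{\|q-p\|^2}{2\reach}.
\end{align*}
Applying it with $p = \xstar$ and $q = x_0$ gives the bound with constant $1/(2\reach)$, which is stronger than the stated $2/\reach$. If one prefers to stay closer to the paper's tools, there is a second route. Write $x_0 = \expp_{\xstar}(v)$, which is legitimate since $x_0$ is in the injectivity domain. Then use the geodesic $\gamma(s)=\expp_{\xstar}(sv/\|v\|)$, whose acceleration is normal with $\|\gamma''\|\le 1/\reach$ by the reach bound on the second fundamental form. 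Taylor's formula gives $\|\Pi_N(x_0-\xstar)\|\le \|v\|^2/(2\reach)$. Finally, convert $\|v\| = d_\cM(\xstar,x_0)$ back to the Euclidean distance using $d_\cM(p,q)\le 2\|p-q\|$. That comparison holds when $\|p-q\|\le \reach/2$ (e.g. from \cite{niyogi2008finding}); otherwise one falls back to the global inequality above. This route yields a constant of at most $2/\reach$, consistent with the statement.

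I expect the main obstacle to be the geodesic-versus-chord comparison on the second route, since far points inside the injectivity radius need not satisfy $\|x_0-\xstar\|\le\reach/2$. I would therefore lead with Federer's global inequality, which needs no locality and holds for all $x_0\in\cM$. The only remaining care is justifying $u\perp T_{\xstar}\cM$ for the scaled manifold, which follows from differentiating $y\mapsto\|x-\alpha_t y\|^2$ along curves in $\cM$ through $\xstar$ at its minimizer.
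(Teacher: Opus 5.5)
Your proposal is correct, and your second route is essentially the paper's proof. Along a unit-speed geodesic $\gamma$ of length $\ell$ from $\xstar$ to $x_0$, the paper writes $x_0-\xstar=\int_0^{\ell}[\gamma'(s)-\gamma'(0)]\,ds+\ell\gamma'(0)$ and drops the last term because $x-\alpha_t\xstar$ is normal at $\xstar$. It then uses $\|\gamma''\|\le 1/\reach$ (Proposition~6.1 of \cite{niyogi2008finding}) to get $\ell^2\|x-\alpha_t\xstar\|/(2\reach)$. Finally it converts via $\ell\le\reach-\reach\sqrt{1-2\|\xstar-x_0\|/\reach}\le 2\|\xstar-x_0\|$ (their Proposition~6.3), which is where the constant $2/\reach$ comes from.

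Your primary route avoids geodesics entirely by invoking Federer's global inequality (\cite{Federer}, Theorem~4.18). You can make it self-contained. For a unit normal $\nu$ at $p$ and $r<\reach$, $p$ is the nearest point of $\cM$ to $p+r\nu$, so $\|q-p-r\nu\|^2\ge r^2$ for all $q\in\cM$, i.e.\ $\langle q-p,\nu\rangle\le\|q-p\|^2/(2r)$. Now let $r\uparrow\reach$ and take $\nu$ along $\Pi_N(q-p)$. This buys a constant four times smaller, no dependence on the exponential map or the injectivity-radius hypothesis, and a uniform treatment of every $x_0\in\cM$.

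The obstacle you flagged is genuine for the geodesic route. The square root in the paper's comparison is only defined for $\|\xstar-x_0\|\le\reach/2$. The paper leaves this restriction implicit, although Lemma~\ref{lemma:s1-lb} later invokes the bound for $\|\expp_k(v)-\xstar\|$ up to $3\reach$. The obstacle is milder than you feared, though. When $\|\xstar-x_0\|\ge\reach/2$ one has $\|\xstar-x_0\|\le 2\|\xstar-x_0\|^2/\reach$, so plain Cauchy--Schwarz already gives the stated bound there, with no appeal to Federer needed.

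Your explicit first-order-condition argument for $x-\alpha_t\xstar\perp T_{\xstar}\cM$ also supplies a step the paper only asserts.
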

\begin{proof}
    Consider a curve $\gamma: (-a,a) \to \cM$ from some constant $a>0$, such that $\gamma(0) = \xstar$, $\gamma(\tau) = x_0$ and $\gamma'(0) \in T_{\xstar}\cM$. Moreover, $\|\gamma'(s)\| =1$ for any $s\in(-a,a)$.
    Then we can derive the cross term as following:
    \begin{align*}
        -  \langle x - \alpha_t\xstar,  \xstar - x_0\rangle & = \langle x - \alpha_t\xstar,  \gamma(\tau) - \gamma(0)\rangle \\
         & = \left\langle x - \alpha_t\xstar, \int_0^\tau  \gamma'(s) \ud s \right\rangle \\
         & = \left\langle x - \alpha_t\xstar, \int_0^\tau  [\gamma'(s)-\gamma'(0)] \ud s + \tau \gamma'(0) \right\rangle.
    \end{align*}
    Since $ x - \alpha_t\xstar$ is perpendicular to $T_{\xstar} \cM$, we have $\langle x - \alpha_t\xstar, \gamma'(0) \rangle =0 $. It follows that
    \begin{align*}
         \big\|- \langle x - \alpha_t\xstar,  \xstar - x_0\rangle \big\|  
        &= \left\|  \Big\langle x - \alpha_t\xstar, \int_0^\tau  [\gamma'(s)-\gamma'(0)] \ud s  \Big\rangle \right\| \\
        &\leq \left\|  x - \alpha_t\xstar\right\|\cdot \left|  \int_0^\tau  [\gamma'(s)-\gamma'(0)] \ud s  \right|,
    \end{align*}
    where the last inequality uses Cauchy-Schwartz inequality.
    Moreover, by Proposition 6.1 in \cite{niyogi2008finding}, we have $\|\gamma''(s)\| \leq 1/\reach$ where $\reach$ is the reach of manifold $\cM$. Then we arrive at
    \begin{align*}
         \big\|- \langle x - \alpha_t\xstar,  \xstar - x_0\rangle \big\|  
        \leq \left\|  x - \alpha_t\xstar\right\|\cdot \left|\int_0^\tau  \frac{s}{\reach} \ud s  \right| \leq \frac{\tau^2}{2\reach} \left\|  x - \alpha_t\xstar\right\|.
    \end{align*}
    Furthermore, we apply Proposition 6.3 in \cite{niyogi2008finding} to get
    \begin{equation*}
        \tau \leq \reach -\reach \sqrt{1-\frac{2 \| \xstar -x_0\|}{\reach}} \leq 2 \| \xstar -x_0\|.
    \end{equation*}
    Combining the above inequalities concludes the proof.
\end{proof}

\begin{lemma}\label{lemma:exp-cross-term}
    Let $x \in \cK(\alpha_t\cM, 2 \sqrt{D h_t \log(1/\delta)})$ and  $v\in \dball( v_k(x,t), \inj(\cM))$. Then for any time $t$ satisfying $h_t \leq \reach^2/(256 D\log(1/\delta)+\reach^2)$,  we have 
\begin{align*}
    - \frac{  \langle x - \alpha_t \xstar, \alpha_t  \expp_k(v_k(x,t)) - \alpha_t \expp_k(v)\rangle }{h_t}  \leq   \frac{ \|\alpha_t \expp_k(v_k(x,t)) -  \alpha_t\expp_k(v)\|^2}{4h_t}.
\end{align*}
\end{lemma}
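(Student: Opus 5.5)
The plan is to reduce the claim to Lemma~\ref{lemma:cross-term} and then use the smallness of the normal displacement $\|x-\alpha_t\xstar\|$ in the small-noise regime.

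Set $x_0=\expp_k(v)$ and note that $\expp_k(v_k(x,t))=\xstar$. The left-hand side then becomes
\[
-\frac{\alpha_t}{h_t}\langle x-\alpha_t\xstar,\ \xstar-x_0\rangle ,
\]
and the right-hand side becomes $\alpha_t^2\|\xstar-x_0\|^2/(4h_t)$. It therefore suffices to show
\[
\bigl|\langle x-\alpha_t\xstar,\ \xstar-x_0\rangle\bigr|\le \frac{\alpha_t}{4}\,\|\xstar-x_0\|^2 .
\]

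To apply Lemma~\ref{lemma:cross-term}, two conditions must hold.

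First, $x\in\alpha_t\cK(\cM,\reach)$. Since $x\in\cK(\alpha_t\cM,2\sqrt{Dh_t\log(1/\delta)})$, this reduces to $2\sqrt{Dh_t\log(1/\delta)}\le\alpha_t\reach$. Squaring and using $\alpha_t^2=1-h_t$, this is equivalent to $h_t(4D\log(1/\delta)+\reach^2)\le\reach^2$. The hypothesis $h_t\le\reach^2/(256D\log(1/\delta)+\reach^2)$ gives this with room to spare.

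Second, $x_0$ must lie in the image under $\expp_{\xstar}$ of the injectivity ball at $\xstar$. This is where the hypothesis $v\in\dball(v_k(x,t),\inj(\cM))$ enters. I would argue that geodesic distance along the chart is controlled, so that $d_\cM(\xstar,x_0)<\inj(\cM)$; Proposition~\ref{prop:inj-radius} then lets us represent $x_0$ through $\expp_{\xstar}$. I expect this to be the main obstacle, because a Euclidean ball in the chart does not automatically correspond to a geodesic ball at a different base point. If it fails as stated, I would fall back on a two-case split like the one in Lemma~\ref{lemma:clip-error}:
\begin{enumerate}
\item When $\|\xstar-x_0\|$ is large (at least $\rratio\reach$ or of order $\reach$), use Cauchy--Schwarz directly: $|\langle\cdot,\cdot\rangle|\le\|x-\alpha_t\xstar\|\,\|\xstar-x_0\|$. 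Since $\|x-\alpha_t\xstar\|\le\alpha_t\reach/8$ is small, this already yields the needed quadratic bound.
\item Otherwise, apply Lemma~\ref{lemma:cross-term}, which is valid on the local geodesic neighborhood.
\end{enumerate}

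Given both conditions, Lemma~\ref{lemma:cross-term} yields
\[
\bigl|\langle x-\alpha_t\xstar,\ \xstar-x_0\rangle\bigr|\le \frac{2}{\reach}\,\|x-\alpha_t\xstar\|\,\|\xstar-x_0\|^2 .
\]
It remains to show $\frac{2}{\reach}\|x-\alpha_t\xstar\|\le\alpha_t/4$, that is, $\|x-\alpha_t\xstar\|\le\alpha_t\reach/8$. The time condition gives exactly this:
\[
\|x-\alpha_t\xstar\|\le 2\sqrt{Dh_t\log(1/\delta)}\le \frac{\alpha_t\reach}{8},
\]
since squaring turns it into $256Dh_t\log(1/\delta)\le\reach^2(1-h_t)$. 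Multiplying the resulting inequality by $\alpha_t/h_t$ gives the claim.
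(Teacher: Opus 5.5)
Your argument is the paper's proof: apply Lemma~\ref{lemma:cross-term} and then reduce to $\|x-\alpha_t\xstar\|\le 2\sqrt{Dh_t\log(1/\delta)}\le\alpha_t\reach/8$, which is exactly the stated time condition. The paper never checks the hypotheses of Lemma~\ref{lemma:cross-term}, so your fallback is a genuine addition, but put the split at $\|\xstar-x_0\|=\reach/2$ rather than at $\rratio\reach$: for $\|\xstar-x_0\|\ge\reach/2$, Cauchy--Schwarz with $\|x-\alpha_t\xstar\|\le\alpha_t\reach/8$ already gives $|\langle x-\alpha_t\xstar,\xstar-x_0\rangle|\le\frac{\alpha_t}{4}\|\xstar-x_0\|^2$, while for $\|\xstar-x_0\|<\reach/2$ the geodesic-length estimate used inside the proof of Lemma~\ref{lemma:cross-term} gives $d_{\cM}(\xstar,x_0)\le 2\|\xstar-x_0\|<\reach<\inj(\cM)$, so that lemma applies (with $\rratio<1/2$ the far case would fail under this time condition).
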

\begin{proof}

Using the upper bound for the cross term given in Lemma \ref{lemma:cross-term}, we get
    \begin{align*}
         &- \frac{  \langle x - \alpha_t \xstar, \alpha_t  \expp_k(v_k(x,t)) - \alpha_t \expp_k(v)\rangle }{h_t}  \\
         \leq ~& \frac{2 \alpha_t}{\reach h_t} \|x - \alpha_t \xstar\| \cdot \| \expp_k(v_k(x,t)) -  \expp_k(v)\|^2 \\
         \leq  ~&  \frac{16\sqrt{D h_t \log(1/\delta)}}{\alpha_t\reach} \cdot \frac{\|\alpha_t \expp_k(v_k(x,t)) -  \alpha_t\expp_k(v)\|^2}{4h_t} .
    \end{align*}
    The last inequality uses $ \|x -\alpha_t \xstar\| \leq  2 \sqrt{D h_t \log(1/\delta)}$. Finally, we conclude the proof by applying $h_t \leq \reach^2/(256 D\log(1/\delta)+\reach^2)$, which is equivalent to $16\sqrt{D h_t \log(1/\delta)} \leq \alpha_t\reach$.
\end{proof}

\begin{lemma}\label{lemma:Fk-approx-error}
      For any time $t$ satisfying $h_t \leq r^2/(256 D\log(1/\delta)+\reach^2)$, let $x \in \cK(\alpha_t\cM, 2\sqrt{D h_t \log(1/\delta)})$ and $\{g_k\}_{k=1}^{C_\cM}$ be a series of  functions on $\RR^d$ where $|g_k| \leq C_g$ for some constant $C_g>0$. Then we have 
\begin{align*}
    &\Bigg|\sum_{k\in\cI(x)}\int_{\dball(v_k(x,t), \vradius)}  g_k(v)  \exp \left( - \frac{ 2 \langle x - \alpha_t \xstar, \alpha_t  \expp_k(v_k(x,t)) - \alpha_t \expp_k(v)\rangle }{2h_t} \right)\\
     & \hspace{1.7in} \cdot \exp \left( - \frac{ \|\alpha_t \expp_k(v_k(x,t))- \alpha_t \expp_k(v)\|^2  }{2h_t} \right) \cdot \left[\hat{F}_k(v) -F_k(v)\right]\ud v \Bigg|\\
     \leq &  \left(4 \pi L_\logg^2 h_t/(\alpha_t^2) \right)^{d/2}  C_F C_g  d^\holder r^\holder |\cI(x)| .
\end{align*}
\end{lemma}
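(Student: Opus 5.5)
The plan is to bound the integrand pointwise on each ball and then integrate. The sum over $k\in\cI(x)$ will contribute the factor $|\cI(x)|$. For each fixed $k$, the error factor is controlled uniformly via \eqref{eq:Fk-approx-error}: since $\dball(v_k(x,t),\vradius)\subseteq\dball(0,r)$ for selected charts (the same localization used in Lemma~\ref{lemma:localization}), we get $\sup_v|\hat F_k(v)-F_k(v)|\le C_F d^\holder r^\holder$. Combined with $|g_k|\le C_g$, this reduces the problem to bounding the integral of the product of the two exponentials over the ball.

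The key step is to absorb the cross-term exponential into the quadratic one. For $v$ in the ball, $v$ lies within $\inj(\cM)$ of $v_k(x,t)$, and $x\in\cK(\alpha_t\cM,2\sqrt{Dh_t\log(1/\delta)})$. So Lemma~\ref{lemma:exp-cross-term} applies under the stated smallness of $h_t$, giving
\[
-\frac{\langle x-\alpha_t\xstar,\alpha_t\expp_k(v_k(x,t))-\alpha_t\expp_k(v)\rangle}{h_t}\le\frac{\|\alpha_t\expp_k(v_k(x,t))-\alpha_t\expp_k(v)\|^2}{4h_t}.
\]
The product of the exponentials is therefore at most $\exp\bigl(-\alpha_t^2\|\expp_k(v_k(x,t))-\expp_k(v)\|^2/(4h_t)\bigr)$. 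Next, Lipschitz continuity of $\logg_k$ (as in \eqref{eq:log-lip}) gives $\|\expp_k(v_k(x,t))-\expp_k(v)\|\ge L_\logg^{-1}\|v-v_k(x,t)\|$. The integrand is then dominated by the Gaussian $\exp\bigl(-\alpha_t^2\|v-v_k(x,t)\|^2/(4L_\logg^2h_t)\bigr)$.

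Finally, I would extend the integration from the ball to all of $\RR^d$ and evaluate the Gaussian integral:
\[
\int_{\RR^d}\exp\Bigl(-\frac{\alpha_t^2\|u\|^2}{4L_\logg^2h_t}\Bigr)\diff u=\bigl(4\pi L_\logg^2h_t/\alpha_t^2\bigr)^{d/2}.
\]
Multiplying by $C_gC_Fd^\holder r^\holder$ and summing over $k\in\cI(x)$ yields the claim.

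The main obstacle is checking the hypotheses of Lemma~\ref{lemma:exp-cross-term} and \eqref{eq:Fk-approx-error}: that the ball $\dball(v_k(x,t),\vradius)$ sits inside both the injectivity domain and $\dball(0,r)$, where $\hat F_k$ is a valid approximation. The stated condition on $h_t$ makes $\vradius$ small relative to $r$, and the choice \eqref{cond:time} ensures $v_k(x,t)$ is well defined. If the ball exits $\dball(0,r)$, the argument would instead rely on $F_k$ vanishing outside its support and treat $\hat F_k$ there as an extended polynomial. That case needs care, but it contributes at the same order.
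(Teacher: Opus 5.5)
Your proposal follows the paper's proof step for step: absorb the cross term with Lemma~\ref{lemma:exp-cross-term}, pull out $C_g$ and the uniform bound \eqref{eq:Fk-approx-error}, pass to a Gaussian in $v-v_k(x,t)$ via the Lipschitz property of $\logg_k$, and bound the resulting integral by $(4\pi L_\logg^2h_t/\alpha_t^2)^{d/2}$. The paper writes this last integral as that factor times $\PP(\|Z\|\le\alpha_t\vradius/(\sqrt{2h_t}L_\logg))\le 1$, which is equivalent to your extension to $\RR^d$.

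The containment $\dball(v_k(x,t),\vradius)\subseteq\dball(0,r)$ that you flag is also used silently in the paper, but your fallback is not automatic. Since $F_k\equiv 0$ off $\dball(0,r)$ while the polynomial $\hat F_k$ does not vanish there, the error on that part of the ball is $|\hat F_k|$, which \eqref{eq:Fk-approx-error} does not control. To close this you would need, e.g., a Taylor remainder for $F_k$ on the larger ball $\dball(0,\|v_k(x,t)\|+\vradius)$, where $F_k$ keeps a comparable H\"older norm because it vanishes outside $\dball(0,r)$. That gives a bound of order $C_Fd^\holder(\|v_k(x,t)\|+\vradius)^\holder$ rather than the stated $C_Fd^\holder r^\holder$.
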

\begin{proof}
For notational simplicity, denote 
\begin{align*}
    (\clubsuit)=\Bigg|\sum_{k\in\cI(x)}\int_{\dball(v_k(x,t), \vradius)}  g_k(v)  \exp \left( - \frac{ 2 \langle x - \alpha_t \xstar, \alpha_t  \expp_k(v_k(x,t)) - \alpha_t \expp_k(v)\rangle }{2h_t} \right)&\\
     \cdot   \exp \left( - \frac{ \|\alpha_t \expp_k(v_k(x,t))- \alpha_t \expp_k(v)\|^2  }{2h_t} \right) \cdot \left[\hat{F}_k(v) -F_k(v)\right]\ud v \Bigg|.
\end{align*}
    Due to Lemma \ref{lemma:exp-cross-term} and the condition that $g_k$ is upper bounded, we have
    \begin{align*}
     (\clubsuit)
     \leq C_g\sum_{k\in\cI(x)}\int_{\dball(v_k(x,t), \vradius)}   \exp \left( - \frac{ \|\alpha_t \expp_k(v_k(x,t))- \alpha_t \expp_k(v)\|^2  }{4h_t} \right) \cdot \left|\hat{F}_k(v) -F_k(v)\right|\ud v.
    \end{align*}
    Moreover, we apply the bound for $\left|\hat{F}_k(v) -F_k(v)\right|$ stated in \eqref{eq:Fk-approx-error}, which leads to
    \begin{align*}
       (\clubsuit)
        \leq C_g C_F d^\holder r^\holder \sum_{k\in\cI(x)}\int_{\dball(v_k(x,t), \vradius)}   \exp \left( - \frac{ \|\alpha_t \expp_k(v_k(x,t))- \alpha_t \expp_k(v)\|^2  }{4h_t} \right) \ud v.
    \end{align*}
    Since the Logarithm map is $L_\logg$-Lipschitz, we further have
    \begin{align*}
        (\clubsuit)
        &\leq C_g C_F d^\holder r^\holder\sum_{k\in\cI(x)}\int_{\dball(v_k(x,t), \vradius)}   \exp \left( - \frac{ (L_\logg)^{-2} \alpha_t^2 \| v-v_k(x,t)\|^2  }{4h_t} \right) \ud v \\
        &= C_g C_F d^\holder r^\holder \sum_{k\in\cI(x)} \left( \frac{4 \pi L_\logg^2 h_t}{\alpha_t^2}\right)^{d/2} \PP\left( \|Z\| \leq \frac{\alpha_t \vradius}{\sqrt{2 h_t}L_\logg}\right),
    \end{align*}
    where the last equality holds for the standard Gaussian random variable $Z\sim N(0,I_d)$. Furthermore, notice that the probability is always no more that $1$ and the number of charts is $C_\cM$, we get
    \begin{align*}
   \sum_{k\in\cI(x)}\int_{\dball(v_k(x,t), \vradius)}   \exp \left( - \frac{ \|\alpha_t \expp_k(v_k(x,t))- \alpha_t \expp_k(v)\|^2  }{4h_t} \right) \ud v 
    &\leq |\cI(x)|  \left( \frac{4 \pi L_\logg^2 h_t}{\alpha_t^2}\right)^{d/2}.
    \end{align*}
    In conclusion, we put all the inequalities together and then arrive at
    \begin{align*}
    (\clubsuit)
     \leq   \left(4 \pi L_\logg^2 h_t/(\alpha_t^2) \right)^{d/2}  C_F C_g  d^\holder r^\holder |\cI(x)| .
    \end{align*}
    The proof is complete.
\end{proof}

\begin{lemma}\label{lemma:f3-approx-error}
Fix any $\epsilon_1 \in [0,e^{-1}]$ and  time $t$ satisfying  $h_t \leq \reach^2/(64 D\log(1/\delta)+\reach^2)$. Let $\vradius = 2 L_\logg \sqrt{(h_t/\alpha_t^2)(\log(1/\epsilon_1)+d\log(1/h_t)/2)}$. For any $\epsilon_3 > 0$ and $x \in \cK(\alpha_t\cM, 2 \sqrt{D h_t \log(1/\delta)})$, if we take $\degree \in (0,\holder] $ as some fixed constant and $\dgM \geq 4e^2 L_\expp^2 L_\logg^2 d\log(1/h_t )/2 + 4e^2 L_\expp^2 L_\logg^2 \log(1/\epsilon_1) + \log(1/\epsilon_3)$, we have that 
 \begin{align*}
     |f_3(x,t)-f_2(x,t)| &\leq 2 C_F \frac{(4 L_\expp^2 L_\logg^2\pi  h_t(\log(1/\epsilon_1)+d\log(1/h_t)/2))^{d/2}}{\Gamma(d/2+1)(L_\expp \alpha_t )^d}|\cI(x)|\\
     &\quad \cdot\left( \frac{1}{\degree !} \left(  \frac{16 L_\expp^2 L_\logg^2\sqrt{D h_t \log(1/\delta)} (\log(1/\epsilon_1)+d\log(1/h_t)/2)}{\alpha_t \reach}  \right)^{\degree}  +\epsilon_3 \right).
 \end{align*}
\end{lemma}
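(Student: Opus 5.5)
We bound the difference between $f_2$ and $f_3$ one chart at a time. For $k \in \cI(x)$ and $v \in \dball(v_k(x,t),\vradius)$, write the integrand of $f_2$ as $e^{-a}e^{-b}\hat F_k(v)$, with
\begin{align*}
a = \frac{\alpha_t}{h_t}\cT_k(x,v,t) \quad\text{and}\quad b = \frac{\alpha_t^2}{2h_t}\cD_k(v_k(x,t),v).
\end{align*}
In $f_3$, $e^{-a}$ and $e^{-b}$ are replaced by their Taylor truncations $P_\degree(a)$ and $P_{\dgM}(b)$. We split the difference as
\begin{align*}
e^{-a}e^{-b}-P_\degree(a)P_{\dgM}(b) = \big(e^{-a}-P_\degree(a)\big)e^{-b} + P_\degree(a)\big(e^{-b}-P_{\dgM}(b)\big),
\end{align*}
bound each term pointwise, integrate against $|\hat F_k| \le 2C_F$ over the ball, and sum over $k\in\cI(x)$. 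For the volume factor we use $|\dball(v_k,\vradius)| = \pi^{d/2}\vradius^d/\Gamma(d/2+1)$. After substituting $\vradius$, this reproduces the prefactor $(4L_\expp^2L_\logg^2\pi h_t(\cdots))^{d/2}/(\Gamma(d/2+1)(L_\expp\alpha_t)^d)$, up to how $\vradius$ and $L_\expp^{-1}\xradius$ are identified in Lemma~\ref{lemma:localization}.

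The key inputs are size bounds on $a$ and $b$ over the ball. Since $\expp_k$ is $L_\expp$-Lipschitz, $\cD_k \le L_\expp^2\vradius^2$, so
\begin{align*}
0 \le b \le \frac{\alpha_t^2 L_\expp^2\vradius^2}{2h_t} = 2L_\expp^2L_\logg^2\Big(\log(1/\epsilon_1)+\tfrac d2\log(1/h_t)\Big).
\end{align*}
For $a$ we use Lemma~\ref{lemma:cross-term} together with $\|x-\alpha_t\xstar\| \le 2\sqrt{Dh_t\log(1/\delta)}$:
\begin{align*}
|a| \le \frac{2\alpha_t}{h_t\reach}\,\|x-\alpha_t\xstar\|\,L_\expp^2\vradius^2 \le \frac{16L_\expp^2L_\logg^2\sqrt{Dh_t\log(1/\delta)}\,\big(\log(1/\epsilon_1)+\tfrac d2\log(1/h_t)\big)}{\alpha_t\reach} =: A.
\end{align*}
The time condition makes this quantity at most a constant; Lemma~\ref{lemma:exp-cross-term} gives the form $|a| \le b/2$ in the regime used there.

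For the first term, the Lagrange remainder gives $|e^{-a}-P_\degree(a)| \le |a|^\degree e^{|a|}/\degree!$. Combined with $e^{-b}\le 1$ and $e^{|a|}=\cO(1)$, this yields the $A^\degree/\degree!$ term. For the second term, we first use $|P_\degree(a)| \le e^{|a|} = \cO(1)$. Then, for $\dgM \ge e^2 b_{\max}$, the standard bound $|e^{-b}-P_{\dgM}(b)| \le b^{\dgM}/\dgM! \le (eb/\dgM)^{\dgM} \le e^{-\dgM}$ applies. The stated lower bound on $\dgM$, namely $\dgM \ge 4e^2L_\expp^2L_\logg^2(\cdot)+\log(1/\epsilon_3)$, dominates $e^2 b_{\max}$, and the extra $\log(1/\epsilon_3)$ makes $e^{-\dgM} \le \epsilon_3$.

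The main obstacle is keeping the constants clean enough that the factor $2$ and the stated prefactor come out exactly. In particular, we must show that $e^{|a|}$ stays bounded by an absolute constant, which the condition $h_t \le \reach^2/(64D\log(1/\delta)+\reach^2)$ should guarantee, and that the degree-$\dgM$ remainder is uniform in $v$. If a constant exceeds the stated one, we would absorb it by slightly enlarging $\dgM$ or by using the sharper bound $|a| \le b/2$ in place of the crude bound $e^{|a|} = \cO(1)$.
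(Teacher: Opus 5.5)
Your decomposition and ingredients match the paper's proof. You use the same split of $e^{-a}e^{-b}-P_\gamma(a)P_{\gamma'}(b)$, Lemma~\ref{lemma:cross-term} for $a$, the Lipschitz bound for $b$, Taylor remainders, and the ball volume. But the step on which both of your term estimates rest is false: the hypothesis $h_t\le\reach^2/(64D\log(1/\delta)+\reach^2)$ does not make $A$ an absolute constant. The hypothesis is equivalent to $8\sqrt{Dh_t\log(1/\delta)}\le\alpha_t\reach$, which only yields $A\le b_{\max}:=2L_{\expp}^2L_{\logg}^2\big(\log(1/\epsilon_1)+\tfrac d2\log(1/h_t)\big)$, with equality when $h_t$ sits at the edge of its allowed range. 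Since $\epsilon_1$ is arbitrary, the best available bound is $e^{|a|}\le e^{b_{\max}}=\epsilon_1^{-2L_{\expp}^2L_{\logg}^2}h_t^{-L_{\expp}^2L_{\logg}^2 d}$. With $e^{-b}\le 1$, your first term then becomes $A^\gamma e^{A}/\gamma!$ instead of $A^\gamma/\gamma!$.

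The repair is the fallback you mention, and it is exactly what the paper does. Lemma~\ref{lemma:cross-term} together with $8\sqrt{Dh_t\log(1/\delta)}\le\alpha_t\reach$ gives pointwise $|a|\le b$. So keep the factor $e^{-b}$ and use $e^{|a|-b}\le 1$ in the Lagrange remainder. Do not invoke the $|a|\le b/2$ of Lemma~\ref{lemma:exp-cross-term}: its hypothesis (with $256$ in place of $64$) is stronger than the one assumed here, and $|a|\le b$ already suffices.

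For the second term you likewise cannot use $|P_\gamma(a)|=\cO(1)$, but no enlargement of $\gamma'$ is needed. We have $|P_\gamma(a)|\le e^{|a|}\le e^{b_{\max}}$, while the stated condition on $\gamma'$ is precisely $\gamma'\ge 2e^2 b_{\max}+\log(1/\epsilon_3)$. Your remainder estimate then gives $|e^{-b}-P_{\gamma'}(b)|\le e^{-\gamma'}\le \epsilon_3 e^{-2e^2 b_{\max}}$, so the product is at most $\epsilon_3$. (The paper instead bounds $|P_\gamma(a)|$ by a power of $b_{\max}$ and truncates $e^{-b}$ at accuracy $\epsilon_3 e^{-b_{\max}}=h_t^{L_{\expp}^2L_{\logg}^2 d}\epsilon_1^{2L_{\expp}^2L_{\logg}^2}\epsilon_3$.) Finally, the stated prefactor equals $\pi^{d/2}(\vradius)^d/\Gamma(d/2+1)$ exactly, since the factors $L_{\expp}^d$ cancel. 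No identification with $\xradius$ is involved.
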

\begin{proof}
    Given that the exponential function is $C^\infty$, we can approximate it up to any order. To facilitate the approximation, we firstly bound the cross term inside the exponential function by Lemma \ref{lemma:cross-term}. Recall by definition
    \begin{align*}
        \cT_k =   \langle x - \alpha_t\xstar,  \expp_k(v_k(x,t)) -  \expp_k(v)\rangle \quad \text{and} \quad \cD_k = \|\expp_k(v_k(x,t))-  \expp_k(v)\| ^2.
    \end{align*}
    Here we write $\cT_k=\cT_k(x,v,t)$ and $\cD_k=\cD_k(v_k(x,t),v,t)$ for notational simplicity.
    By Lemma \ref{lemma:cross-term}, for $x \in \cK(\alpha_t\cM, 2 \sqrt{D h_t \log(1/\delta)})$, we have
    \begin{align*}
       \left| \cT_k\right| 
        &\leq \frac{2 }{\reach} \|x - \alpha_t \xstar\| \cdot \| \expp_k(v_k(x,t)) -  \expp_k(v)\|^2 \\
        &\leq  \frac{8\sqrt{D h_t \log(1/\delta)}}{ 2\reach} \| \expp_k(v_k(x,t)) -  \expp_k(v)\|^2 .
    \end{align*}
Now we apply $8\sqrt{D h_t \log 1/\delta} \leq \alpha_t \reach$ in the last inequality, i.e.  $h_t \leq \reach^2/(64 D\log(1/\delta)+\reach^2)$,
\begin{align*}
    \left|\cT_k\right| \leq  \frac{\alpha_t}{2} \| \expp_k(v_k(x,t)) -  \expp_k(v)\|^2 = \frac{\alpha_t}{2}|\cD_k |,
\end{align*}
Take $\vradius = 2 L_\logg \sqrt{(h_t/\alpha_t^2)(\log(1/\epsilon_1)+d\log(1/h_t)/2)}$. Then for $v \in \dball(v_k(x,t), \vradius)$, we get
    \begin{align}
       \left| \frac{\alpha_t}{h_t} \cT_k\right|  \leq 2  L_\expp^2 L_\logg^2(\log(1/\epsilon_1)+d\log(1/h_t)/2) .\label{eq:exp1}
    \end{align}
    This further indicates the following upper bound for the exponential of the cross term:
    \begin{align*}
        \exp \left( -\frac{\alpha_t}{h_t} \cT_k \right) \leq \exp \left( 2L_\expp^2 L_\logg^2(\log(1/\epsilon_1)+d\log(1/h_t)/2) \right) 
        = h_t^{-L_\expp^2 L_\logg^2 d} \epsilon_1^{-2 L_\expp^2 L_\logg^2}.
    \end{align*}
    Next, if we approximate the exponential function with the cross term up to an order $\degree$,
    the tail can be bounded by Lemma \ref{lemma:smooth-approx} as follows:
    \begin{align}
    \begin{split}
        \bigg|\exp \left(    -\frac{\alpha_t}{h_t} \cT_k  \right) -g_\cT(\cT_k)\bigg| 
        & \leq \frac{1}{\degree !}\left(  \frac{4 L_\expp^2\alpha_t \sqrt{D  \log(1/\delta)}  \vradius^2}{\sqrt{h_t} \reach}  \right)^{\degree}  \exp \left(  \frac{\alpha_t^2}{2h_t} | \cD_k| \right),\label{eq:exp2}
    \end{split}
    \end{align}
    where we denote 
    \begin{equation}\label{eq:def-gC}
        g_\cT(\cT_k) := \sum_{j=0}^{\degree-1} \frac{(-1)^j}{j!(h_t/\alpha_t)^j}  \cT_k^j.
    \end{equation}
    Based on \eqref{eq:exp1}, we can derive the following upper bound for $g_\cT(\cT_k)$:
    \begin{align}
        |g_\cT(\cT_k) | \leq \sum_{j=0}^{\degree-1} \frac{1}{j!}  \left|\frac{\alpha_t}{h_t} \cT_C\right|^j 
        \leq \left(2L_\expp^2 L_\logg^2(\log(1/\epsilon_1)+d\log(1/h_t)/2)\right)^\degree. \label{eq:exp4}
    \end{align}
    Likewise, we can approximate the exponential function with $\cD_k$ up to an order $\dgM \geq \log(1/\epsilon_3) + 4e^2 L_\expp^2 L_\logg^2 d\log(1/h_t ) + 8e^2 L_\expp^2 L_\logg^2\log(1/\epsilon_1) $. We again apply Lemma \ref{lemma:smooth-approx} by setting $\epsilon = h_t^{L_\expp^2 L_\logg^2 d}\epsilon_1^{2L_\expp^2 L_\logg^2}\epsilon_3$ to bound the approximation error:
    \begin{align}
         \bigg| \exp \left( - \frac{ \alpha_t^2  }{2h_t} \cD_k^2 \right) -g_\cD(\cD_k)\bigg| \leq h_t^{L_\expp^2 L_\logg^2 d}\epsilon_1^{2L_\expp^2 L_\logg^2}\epsilon_3, \label{eq:exp3}
    \end{align}
    where we denote
    \begin{equation*}
        \quad g_\cD(\cD_k) :=  \sum_{l=0}^{\dgM-1}  \frac{(-1)^l}{ 2^ll! (h_t/\alpha_t^2)^l} \cD_k^{l} .
    \end{equation*}
    Next, we proceed to bound the difference between $f_3$ and $f_2$.
    \begin{align*}
        |f_3(x,t)-f_2(x,t)| &
        \leq 2 C_F\sum_{k\in\cI(x)}  \int_{\dball(v_k(x,t), \vradius)}   \bigg| g_\cT(\cT_k) g_\cD(\cD_k)-  \exp \left( -\frac{\alpha_t}{h_t}\cT_k-\frac{\alpha_t^2}{2h_t}\cD_k \right)  \bigg| \ud v\\
        &\leq 2 C_F\sum_{k\in\cI(x)}  \bigg[ \int_{\dball(v_k(x,t), \vradius)}  \bigg|\exp \left( -\frac{\alpha_t}{h_t}\cT_k \right) -g_\cT(\cT_k) \bigg| \cdot \exp \left( -\frac{\alpha_t^2}{2h_t}\cD_k \right) \ud v \\
        &\hspace{1in}+\int_{\dball(v_k(x,t), \vradius)}  \bigg| \exp \left(-\frac{\alpha_t^2}{2 h_t}\cD_k\right)-g_\cD(\cD_k) \bigg| \cdot g_\cT(\cT_k) \ud v \bigg] .
    \end{align*}
    Plug in \eqref{eq:exp2}, \eqref{eq:exp4} and \eqref{eq:exp3}, then we get
    \begin{align*}
        |f_3(x,t)-f_2(x,t)| 
        &\leq  2 C_F\sum_{k\in\cI(x)}  \int_{\dball(v_k(x,t), \vradius)}  \left( \frac{1}{\degree !} \left(  \frac{4 L_\expp^2\alpha_t \sqrt{D  \log(1/\delta)} \vradius^2}{\sqrt{h_t} \reach}  \right)^{\degree} +\epsilon_3 \right)\ud v\\
        &\leq  2 C_F \frac{\pi^{d/2}(\vradius)^d}{\Gamma(d/2+1)}  \left( \frac{1}{\degree !} \left(  \frac{4 L_\expp^2\alpha_t \sqrt{D  \log(1/\delta)} \vradius^2}{\sqrt{h_t} \reach}  \right)^{\degree} +\epsilon_3 \right)|\cI(x)| .
    \end{align*}
    We conclude the proof by applying $\vradius = 2 L_\logg \sqrt{(h_t/\alpha_t^2)(\log(1/\epsilon_1)+d\log(1/h_t)/2)}$.
\end{proof}

\begin{lemma}\label{lemma:prop-Dk-hat}
    Fix any $\epsilon_2 >0$, and take $r = \epsilon_2^{1/\holder}$.
    We have for any $v,v_k(x,t) \in \dball(0,r)$,
    \begin{equation*}
        \left|\hat{\cD}_k(v_k(x,t), v, t) \right| \leq \left( L_\expp + \sqrt{D} C_{d,\holder} \epsilon_2  \right)^2 \| v-v_k(x,t) \|^2,
    \end{equation*}
    and 
    \begin{equation*}
         \left|\hat{\cD}_k(v_k(x,t), v, t) - \cD_k(v_k(x,t), v, t) \right| \leq D  C_{d,\holder} \epsilon_2 (  C_{d,\holder} \epsilon_2  + 2L_\expp) \|v-v_k(x,t) \|^2.
    \end{equation*}
    Here $C_{d,\holder} >0$ is a constant depending on $d$ and $\holder$.
\end{lemma}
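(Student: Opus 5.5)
The plan is to work coordinatewise and rely on the simultaneous $W^{1,\infty}$ accuracy of the averaged Taylor polynomial. Write
\[
\Delta_i(u,v)=\expp_{k,i}(u)-\expp_{k,i}(v), \qquad \hat{\Delta}_{\expp_{k,i}}(u,v)=\hat{\expp}^{\holder+1}_{k,i}(u)-\hat{\expp}^{\holder+1}_{k,i}(v),
\]
with $u=v_k(x,t)$. Then $\cD_k=\sum_i\Delta_i^2$ and $\hat{\cD}_k=\sum_i\hat{\Delta}_{\expp_{k,i}}^2$.

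\textbf{Step 1: gradient bound.} The key input is the Bramble--Hilbert type estimate for averaged Taylor polynomials of degree $S=\holder+1$ on the ball $\dball(0,r)$, from Brenner's Chapter 4 (see Definition~\ref{def:avg_poly}). It controls first derivatives:
\[
\sup_{z\in\dball(0,r)}\big\|\nabla \hat{\expp}^{\holder+1}_{k,i}(z)-\nabla\expp_{k,i}(z)\big\|\le C\, r^{\holder}\,\|\expp_{k,i}\|_{C^{\holder+1}} .
\]
Since $\expp_k$ is $C^\infty$ with derivatives bounded uniformly over charts, this is at most $C_{d,\holder}\,r^\holder=C_{d,\holder}\epsilon_2$, using $r=\epsilon_2^{1/\holder}$. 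Set $e_i=\hat{\expp}^{\holder+1}_{k,i}-\expp_{k,i}$. The ball is convex and contains both $u$ and $v$, so the mean value theorem gives
\[
|\hat{\Delta}_{\expp_{k,i}}(u,v)-\Delta_i(u,v)|=|e_i(u)-e_i(v)|\le C_{d,\holder}\epsilon_2\|u-v\|.
\]
It is essential to use this gradient bound rather than a sup bound on $e_i$ alone. A sup bound would only give an additive error, with no factor $\|u-v\|^2$.

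\textbf{Step 2: first inequality.} Stack the coordinates into vectors $\hat{\Delta}$ and $\Delta$ in $\RR^D$. Then $\|\hat{\Delta}-\Delta\|\le\sqrt{D}\,C_{d,\holder}\epsilon_2\|u-v\|$. The exponential map satisfies $\|\Delta\|\le L_\expp\|u-v\|$. By the triangle inequality,
\[
\hat{\cD}_k=\|\hat{\Delta}\|^2\le\big(L_\expp+\sqrt{D}C_{d,\holder}\epsilon_2\big)^2\|u-v\|^2 .
\]

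\textbf{Step 3: second inequality.} Write $\hat{\Delta}_i^2-\Delta_i^2=(\hat{\Delta}_i-\Delta_i)(\hat{\Delta}_i-\Delta_i+2\Delta_i)$. Summing over $i$ gives
\[
|\hat{\cD}_k-\cD_k|\le\|\hat{\Delta}-\Delta\|^2+2\|\hat{\Delta}-\Delta\|\,\|\Delta\|\le D C_{d,\holder}^2\epsilon_2^2\|u-v\|^2+2\sqrt{D}\,C_{d,\holder}\epsilon_2 L_\expp\|u-v\|^2 .
\]
Since $\sqrt{D}\le D$, this is bounded by $DC_{d,\holder}\epsilon_2(C_{d,\holder}\epsilon_2+2L_\expp)\|u-v\|^2$, which is the claimed bound.

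\textbf{Main obstacle.} The delicate point is Step 1: the derivative-level error estimate for the averaged Taylor polynomial with constant $C_{d,\holder}$. I would first take it from Brenner's Corollary 4.1.15 applied on $\dball(0,r)$. That result is stated on a domain star-shaped with respect to a ball, and here the domain is itself a ball, so the chunkiness parameter is $1$. Scaling by $r$ produces the factor $r^{\holder+1-1}=r^{\holder}$. I would also have to check that "degree $\holder+1$" is interpreted via $\lfloor\holder\rfloor+1$, and that the bounds on the $(\holder+1)$-th derivatives of $\expp_k$ are uniform in $k$. Both are absorbed into $C_{d,\holder}$ along with the manifold-derivative constants, as the paper does elsewhere.
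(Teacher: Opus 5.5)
Your proposal is correct and follows essentially the same route as the paper. Bramble--Hilbert (Lemma~\ref{lemma:bramble}) with $p=1$ on $\dball(0,r)$ gives a $C_{d,\holder}\epsilon_2$ Lipschitz bound on $\hat{\expp}_{k,i}-\expp_{k,i}$; the first inequality then follows from the triangle inequality, and the second from $a^2-b^2=(a-b)(a-b+2b)$, where your vector Cauchy--Schwarz step gives a slightly sharper $\sqrt{D}$ in the cross term before you relax it to the stated $D$ (one small slip: a ball has chunkiness parameter $2$, not $1$, in Brenner--Scott's convention, which only changes the constant).
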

\begin{proof}
    In this proof, we omit the order $S=\holder+1$ in $\hat{\expp}_{k}^{\holder+1}$ for notational simplicity. We first apply Lemma \ref{lemma:bramble} with $p=1$, 
    \begin{equation}\label{eq:lip-exp-error}
        \max_{\|\theta\| \leq 1} \left| \partial^\theta \left(\hat{\expp}_{k,i} - \expp_{k,i} \right) \right|_{L_\infty(\dball(0,r) )}\leq  C_{d,\holder} \epsilon_2 .
    \end{equation}
    Since $\expp_k$ is $L_\expp$-Lipschitz, we can further upper bound the Lipschitz constant of $\hat{\expp}_k$:
    \begin{equation}\label{eq:lip-exp-hat}
        \lip\left( \hat{\expp}_{k}\right) \leq  L_\expp + \sqrt{D}  C_{d,\holder} \epsilon_2 .
    \end{equation}
    Therefore, by utilizing this Lipschitz continuity of $\hat{\expp}_k$, we can get
    \begin{equation*}
        \left|\hat{\cD}_k(v_k(x,t), v, t) \right| = \left\| \hat{\expp}_k(v_k(x,t))- \hat{ \expp}_k(v) \right\|^2 \leq \left( L_\expp + \sqrt{D}  C_{d,\holder} \epsilon_2  \right)^2 \| v-v_k(x,t) \|^2.
    \end{equation*}
    Next, we bound the approximate error for $\hat{D}_k$. To begin with, we rewrite the approximate error as follows:
    \begin{align*}
        &\quad~ \left|\hat{\cD}_k(v_k(x,t), v, t) - \cD_k(v_k(x,t), v, t) \right| \\
        &=  \left | \sum_{i=1}^D \left( \hat{\expp}_{k,i}(v)- \hat{ \expp}_{k,i}(v_k(x,t)) \right)^2 -\left( \expp_{k,i}(v)-  \expp_{k,i}(v_k(x,t))  \right)^2 \right|\\
        &\leq \sum_{i=1}^D \underbrace{\left |  \hat{\expp}_{k,i}(v)- \hat{ \expp}_{k,i}(v_k(x,t)) - \expp_{k,i}(v)+  \expp_{k,i}(v_k(x,t))  \right|}_{\text{(I)}} \\
        &\qquad\quad  \cdot \underbrace{\left |  \hat{\expp}_{k,i}(v)- \hat{ \expp}_{k,i}(v_k(x,t)) +\expp_{k,i}(v)-  \expp_{k,i}(v_k(x,t))   \right|}_{\text{(II)}}.
    \end{align*}
    Notice that (I) can be upper bounded by the Lipschitz continuity of $\hat{\expp}_{k,i} - \expp_{k,i}$:
    \begin{align*}
        \text{(I)} &= \left |  \left(\hat{\expp}_{k,i}(v)- \expp_{k,i}(v)\right)- \left (\hat{ \expp}_{k,i}(v_k(x,t)) -  \expp_{k,i}(v_k(x,t)) \right)  \right| \\
        &\leq \lip\left(\hat{\expp}_{k,i}- \expp_{k,i}\right) \|v-v_k(x,t) \|.
    \end{align*}
    On the other hand, term (II) can be bounded by the smoothness of $\hat{\expp}_{k,i}$ and $\expp_{k,i}$:
    \begin{align*}
        \text{(II)} &\leq \left( \lip\left( \hat{\expp}_{k,i}\right) + \lip\left( \expp_{k,i}\right) \right) \|v-v_k(x,t) \|\\
        &\leq \left( \lip\left( \hat{\expp}_{k,i}-\expp_{k,i}\right) + 2\lip\left( \expp_{k,i}\right) \right) \|v-v_k(x,t) \|.
    \end{align*}
    Moreover, \eqref{eq:lip-exp-error} implies that
    \begin{equation*}
        \lip\left( \hat{\expp}_{k,i}-\expp_{k,i}\right) \leq  C_{d,\holder} \epsilon_2 .
    \end{equation*}
    Therefore, we obtain
    \begin{align*}
         \left|\hat{\cD}_k(v_k(x,t), v, t) - \cD_k(v_k(x,t), v, t) \right| 
         &\leq D  C_{d,\holder} \epsilon_2 (  C_{d,\holder} \epsilon_2  + 2L_\expp) \|v-v_k(x,t) \|^2.
    \end{align*}
    The proof is complete.
\end{proof}

\begin{lemma}\label{lemma:f5-approx-error}
 Fix any $\epsilon_2 \in (0,\epsilon_1)$ where $\epsilon_1$ is given in Lemma \ref{lemma:clip-error}. Take $r = \epsilon_2^{1/\holder}$. Given time $t$ satisfying  $h_t \leq \reach/(256 D\log(1/\delta)+\reach)$, let $\vradius = 2 L_\logg \sqrt{(h_t/\alpha_t^2)(\log(1/\epsilon_1)+d\log(1/h_t)/2)}$.  For any  $\epsilon_2\in (0, 1)$ and $x \in \cK(\alpha_t\cM, 2 \sqrt{D h_t \log(1/\delta)})$, 
  we have
 \begin{equation*}
     |f_4(x,t)-f_3(x,t)| \leq   3\dgM D  C_{d,\holder} C_F\frac{\left( \pi  h_t/2 \right)^{d/2}}{ L_\expp^d \alpha_t^d } \left(\frac{L_\expp^2 L_\logg^2(\log(1/\epsilon_1)+d\log(1/h_t)/2)}{\sqrt{\reach}} \right)^\holder \epsilon_2 |I(x)|.
\end{equation*}
\end{lemma}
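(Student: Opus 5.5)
We compare $f_4$ and $f_3$ termwise. They differ only in the second bracket, where $\cD_k$ is replaced by $\hat{\cD}_k$. Write
\[
g_\cD(u) = \sum_{l=0}^{\dgM-1} \frac{(-1)^l}{2^l l!}\, \frac{u^l}{(h_t/\alpha_t^2)^l}.
\]
Then
\[
|f_4 - f_3| \le \sum_{k\in\cI(x)} \int_{\dball(v_k(x,t),\vradius)} |g_\cT(\cT_k)| \cdot \bigl|g_\cD(\hat{\cD}_k) - g_\cD(\cD_k)\bigr| \cdot |\hat{F}_k(v)| \,\ud v.
\]
Here $|\hat F_k|\le 2C_F$ for small $r$, by \eqref{eq:Fk-approx-error}.

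\textbf{Bounding $g_\cT$.} By \eqref{eq:exp4}, with $\Lambda := L_\expp^2L_\logg^2(\log(1/\epsilon_1)+d\log(1/h_t)/2)$,
\[
|g_\cT(\cT_k)| \le (2\Lambda)^{\degree}.
\]
The $\degree$ here is fixed and at most $\holder$, which produces the power in the statement. The extra $1/\sqrt{\reach}$ factor comes from controlling $\cT_k$ through Lemma~\ref{lemma:cross-term}: one uses $\|x-\alpha_t\xstar\|\le 2\sqrt{Dh_t\log(1/\delta)}$ together with the weaker time condition $h_t\le \reach/(256D\log(1/\delta)+\reach)$, which gives a $\reach^{-1/2}$ factor instead of $\reach^{-1}$.

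\textbf{Bounding the polynomial difference.} Apply the mean value theorem to $g_\cD$:
\[
|g_\cD(a)-g_\cD(b)| \le |a-b| \cdot \sup_{u\in[a,b]} |g_\cD'(u)|,
\qquad
|g_\cD'(u)| \le \frac{\alpha_t^2}{2h_t} \sum_{l<\dgM-1} \frac{1}{l!} \Bigl(\frac{\alpha_t^2 |u|}{2h_t}\Bigr)^l.
\]
Lemma~\ref{lemma:prop-Dk-hat} gives $|\cD_k|,|\hat\cD_k| \le (L_\expp+\sqrt D C_{d,\holder}\epsilon_2)^2\|v-v_k\|^2$. On the ball, $\|v-v_k\|\le\vradius$, so $\alpha_t^2|u|/(2h_t)\lesssim \Lambda$. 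Bounding the sum crudely by $\dgM$ times its largest term (rather than by $e^{\Lambda}$, which would be too large) gives a factor of order $\dgM$ times a power of $\Lambda$, which we absorb into $\Lambda^{\holder}$.

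Lemma~\ref{lemma:prop-Dk-hat} also gives
\[
|\hat\cD_k-\cD_k| \le D C_{d,\holder}\epsilon_2 (C_{d,\holder}\epsilon_2+2L_\expp)\, \|v-v_k\|^2.
\]
Multiplying by the prefactor $\alpha_t^2/(2h_t)$ from $g_\cD'$, the product is $O(D C_{d,\holder}\epsilon_2 \Lambda / L_\logg^2)$ on the ball.

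\textbf{Integration.} Integrating over $\dball(v_k,\vradius)$ contributes the volume $\pi^{d/2}\vradius^d/\Gamma(d/2+1)$, which is at most of order $(\pi h_t/2)^{d/2}(L_\expp\alpha_t)^{-d}$ times logarithmic powers that are again absorbed. Summing over $k\in\cI(x)$ gives the factor $|\cI(x)|$. Collecting the constants gives the stated bound with leading factor $3\dgM D C_{d,\holder}C_F$.

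\textbf{Main obstacle.} The hard part is the derivative of the truncated exponential series: $g_\cD'$ must be bounded polynomially in $\Lambda$ rather than by $e^{\Lambda}$. Otherwise a factor like $h_t^{-cd}\epsilon_1^{-c}$ appears and ruins the $h_t^{d/2}$ scaling. I would first try the termwise bound, using that $\dgM$ is only logarithmic. If that is too lossy, the fallback is to pair $g_\cD$ with $\exp(-\alpha_t^2\cD_k/(2h_t))$ via \eqref{eq:exp3} and compare exponentials directly, using $e^{-a}-e^{-b}\le |a-b|e^{-\min(a,b)}$. Since $\hat\cD_k\ge (1-O(\sqrt D\epsilon_2))\cD_k$, the minimum stays comparable to $\cD_k$, so the Gaussian decay recovers the $h_t^{d/2}$ volume factor.
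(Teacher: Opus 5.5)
You have located the crux correctly, but your primary way of handling it fails. Once you bound $|g_\cD'(u)|$ by $\frac{\alpha_t^2}{2h_t}\sum_{l<\gamma'-1} y^l/l!$ with $y=\alpha_t^2|u|/(2h_t)$, you are summing nonnegative terms. Their largest member, at $l\approx y$, is $y^{\lfloor y\rfloor}/\lfloor y\rfloor!\asymp e^{y}/\sqrt{2\pi y}$ by Stirling. Since $\gamma'\ge 4e^2\Lambda$ while $y$ reaches about $2\Lambda$ near the boundary of $\dball(v_k(x,t),\vradius)$, that index lies inside the summation range. So ``$\gamma'$ times the largest term'' is still of order $e^{2\Lambda}=\epsilon_1^{-2L_\expp^2L_\logg^2}h_t^{-dL_\expp^2L_\logg^2}$. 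Because $L_\expp L_\logg\ge 1$ (from $\logg_k\circ\expp_k=\mathrm{id}$), this is at least $h_t^{-d}$. That destroys the $h_t^{d/2}$ scaling and cannot be absorbed into the fixed power $\gamma$ of $\Lambda/\sqrt{\tau}$. (Your explanation of the $\tau^{-1/2}$ via the weaker time condition, by contrast, is correct.)

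The paper's proof does not get past this point either. It bounds $|\hat\cD_k^l-\cD_k^l|\le\frac32 lDC_{d,\beta}(2L_\expp)^{2l}\epsilon_2\|v-v_k(x,t)\|^{2l}$ termwise. It then sums the series to $\exp\bigl(-(2L_\expp)^2\alpha_t^2\|v-v_k(x,t)\|^2/(2h_t)\bigr)$, which only works because a factor $(-1)^l$ is retained after absolute values have been taken. The honest sum of those nonnegative terms is $\exp(+\cdots)$, the same blow-up.

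What is needed is an estimate that keeps the alternating-sign cancellation. Your fallback does keep it. Together with $\cD_k\ge L_\logg^{-2}\|v-v_k(x,t)\|^2$ (Lipschitzness of $\logg_k$, which you should make explicit) and $\hat\cD_k\ge(1-O(\sqrt D\epsilon_2))\cD_k$, it recovers Gaussian decay. However, passing through \eqref{eq:exp3} twice adds truncation errors of order $\epsilon_3$, not $\epsilon_2$. So it proves an $(\epsilon_2+\epsilon_3)$ bound. That is enough downstream, where $\epsilon_1=\epsilon_2=\epsilon_3=\epsilon$, but it is not the lemma as stated.

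The clean repair keeps your mean value step and bounds the derivative differently:
\begin{enumerate}
\item Set $c=\alpha_t^2/(2h_t)$. The derivative $g_\cD'(u)=-c\sum_{m=0}^{\gamma'-2}(-cu)^m/m!$ is itself a truncated Taylor series of $-ce^{-cu}$.
\item Apply the Lagrange remainder (the mechanism of Lemma~\ref{lemma:smooth-approx}, with all derivatives of $e^{-s}$ bounded by $1$ on $s\ge 0$). Use $u\ge 0$ and $\gamma'-1\ge e^2\sup cu$ on the ball, which the choice of $\gamma'$ in Lemma~\ref{lemma:f3-approx-error} guarantees for small $\epsilon_2$. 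This gives $|g_\cD'(u)|\le c\,(e^{-cu}+e^{-(\gamma'-1)})$.
\item Multiply by $|\hat\cD_k-\cD_k|\le 3DC_{d,\beta}L_\expp\epsilon_2\|v-v_k(x,t)\|^2$ and use $\min(\cD_k,\hat\cD_k)\ge\|v-v_k(x,t)\|^2/(2L_\logg^2)$. The integrand is then bounded by a constant times $D\epsilon_2\,c\|v-v_k(x,t)\|^2e^{-c\|v-v_k(x,t)\|^2/(2L_\logg^2)}$, plus a negligible $e^{-(\gamma'-1)}$ term.
\item The integral of this is $O((h_t/\alpha_t^2)^{d/2})$ with no leftover logarithms.
\end{enumerate}
With $|\hat F_k|\le 2C_F$ and your bound on $g_\cT$, summing over $k\in\cI(x)$ gives the stated estimate, proportional to $\epsilon_2$ alone (even without the factor $\gamma'$).
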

\begin{proof}
 As shown in \eqref{eq:Fk-approx-error}, $\hat{F}_k(v)$ can be always bounded by $2C_F$ as long as $r$ is sufficiently small. This implies that
\begin{align*}
    &\quad~|f_4(x,t)-f_3(x,t)| \\
    &\leq 2C_F \sum_{k\in\cI(x)}  \int_{\dball(v_k(x,t), \vradius)}   \left| \sum_{l=0}^{\dgM-1}  \frac{(-1)^l}{2^l\cdot l!} \frac{ \hat{\cD}_k(v_k(x,t), v, t)^l-{\cD}_k(v_k(x,t), v, t)^l }{(h_t/\alpha_t^2)^l} \right|  g_\cT(\cT_k) \ud v.
\end{align*}
where $ g_\cT$ is defined in \eqref{eq:def-gC}. Combined with \eqref{eq:exp4}, the above inequality can be further derived as 
\begin{align*}
    |f_4(x,t)-f_2(x,t)| &\leq 2C_F\left(\frac{L_\expp^2 L_\logg^2(\log(1/\epsilon_1)+d\log(1/h_t)/2)}{\sqrt{\reach}} \right)^\degree \\
    &\quad\cdot\sum_{k\in\cI(x)}  \int_{\dball(v_k(x,t), \vradius)}   \left| \sum_{l=0}^{\dgM-1}  \frac{(-1)^l}{2^l\cdot l!} \frac{ \hat{\cD}_k(v_k(x,t), v, t)^l-{\cD}_k(v_k(x,t), v, t)^l  }{(h_t/\alpha_t^2)^l} \right|   \ud v.
\end{align*}
By Lemma \ref{lemma:prop-Dk-hat}, we can choose $\epsilon_2  \leq L_\expp/ C_{d,\holder} $ such that 
\begin{align*}
   \left|  \hat{\cD}_k(v_k(x,t), v, t)-{\cD}_k(v_k(x,t), v, t) \right|  \leq 3 D  C_{d,\holder} L_\expp  \epsilon_2  \|v-v_k(x,t)\|^2.
\end{align*}
Notice that for any $p,q>0$, we have
\begin{align*}
    (p+q)^l -p^l = \sum_{i=0}^l {l \choose i} q^i p^{l-i} - p^l
    = \sum_{i=1}^l {l \choose i} q^i p^{l-i}
    \leq lq \sum_{i=1}^{l} {l-1 \choose i-1} q^{i-1} p^{l-i}
    = lq (p+q)^{l-1}.
\end{align*}
This suggests that we can bound $ \left|  \hat{\cD}_k(v_k(x,t), v, t)^l-{\cD}_k(v_k(x,t), v, t)^l \right|$ by
\begin{align*}
    \left|  \hat{\cD}_k(v_k(x,t), v, t)^l-{\cD}_k(v_k(x,t), v, t)^l \right|
    &\leq 3l D  C_{d,\holder} L_\expp  \epsilon_2  \|v-v_k(x,t)\|^2 \cdot (2L_\expp)^{2(l-1)}  \|v-v_k(x,t)\|^{2(l-1)}\\
    &= \frac{3}{2}l D  C_{d,\holder} (2L_\expp)^{2l} \epsilon_2  \|v-v_k(x,t)\|^{2l},
\end{align*}
where we plug in $p=  {\cD}_k(v_k(x,t), v, t)$ and $ p +q = \hat{\cD}_k(v_k(x,t), v, t)$ and use $ | p+q| = |\hat{\cD}_k(v_k(x,t), v, t)| \leq (2L_\expp)^2 \|v-v_k(x,t)\|^2 $. Thereby we can get
\begin{align*}
      &\quad~ \int_{\dball(v_k(x,t), \vradius)}   \sum_{l=0}^{\dgM-1}  \frac{(-1)^l}{2^l\cdot l!} \frac{ \left|  \hat{\cD}_k(v_k(x,t), v, t)^l-{\cD}_k(v_k(x,t), v, t)^l \right| }{(h_t/\alpha_t^2)^l}    \ud v \\
      &\leq \frac{ 3}{2}\dgM D  C_{d,\holder}  \epsilon_2  \int_{\dball(v_k(x,t), \vradius)}   \exp \left(-\frac{(2L_\expp)^2\alpha_t^2\|v -v_k(x,t)\|^2}{2h_t}\right)  \ud v  .
\end{align*}
Notably, the integral can be written as an integration of normal density and thus we have
\begin{align}
  \int_{\dball(v_k(x,t), \vradius)}   \exp \left(-\frac{(2L_\expp)^2\alpha_t^2\|v -v_k(x,t)\|^2}{2h_t}\right)  \ud v \leq \left( \frac{\pi h_t}{2L_\expp^2\alpha_t^2} \right)^{d/2}.\label{eq:int-bound}
\end{align}
Finally, we plug in the choice of $\vradius$ and obtain 
\begin{align*}
    |f_4(x,t)-f_3(x,t)| 
    \leq   3\dgM D  C_{d,\holder} C_F   \frac{\left( \pi h_t/2 \right)^{d/2}}{ L_\expp^d \alpha_t^d } \left(\frac{L_\expp^2 L_\logg^2(\log(1/\epsilon_1)+d\log(1/h_t)/2)}{\sqrt{\reach}} \right)^\degree \epsilon_2  |\cI(x)|. 
\end{align*}
The proof is complete.
\end{proof}

\begin{lemma}\label{lemma:prop-P-hat}
    Fix any $\epsilon_2 >0$, and take $r = \epsilon_2^{1/\holder}$.
    We have for any $v,v_k(x,t) \in \dball(0,r)$,
    \begin{equation*}
        \norm{[\hat{\Delta}_{\expp_k}(v_k(x,t), v, t)]^{\otimes j} }_\infty \leq \left( L_\expp + \sqrt{D} C_{d,\holder}\epsilon_2 \right)^j \| v-v_k(x,t)\|^j,
    \end{equation*}
    and 
    \begin{equation*}
        \norm{ [\hat{\Delta}_{\expp_k}(v_k(x,t), v, t)]^{\otimes j}  -[\Delta_{\expp_k}(v_k(x,t), v, t)]^{\otimes j} }_\infty \leq j C_{d,\holder}\epsilon_2(C_{d,\holder}\epsilon_2 + 2L_\expp)^{j-1} \|v-v_k(x,t)\|^j,
    \end{equation*}
    which hold for any $z, z + v-v_k(x,t) \in \dball(0,r)$.
\end{lemma}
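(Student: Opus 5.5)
Both bounds follow from the derivative estimate \eqref{eq:lip-exp-error}, which was already used for $\hat{\cD}_k$ in Lemma~\ref{lemma:prop-Dk-hat}. We combine it with an entrywise telescoping argument for tensor powers. Write $a = \hat{\Delta}_{\expp_k}(v_k(x,t),v)$ and $b = \Delta_{\expp_k}(v_k(x,t),v) = \expp_k(v_k(x,t)) - \expp_k(v)$, both vectors in $\RR^D$. The entries of $a^{\otimes j}$ are $\prod_{m=1}^j a_{i_m}$. Hence $\|a^{\otimes j}\|_\infty = \|a\|_\infty^j$, and it suffices to control single coordinates $a_i$ and $a_i - b_i$.

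For the first inequality, note that $a_i = \hat{\expp}_{k,i}(v_k(x,t)) - \hat{\expp}_{k,i}(v)$. We bound $|a_i|$ by $\lip(\hat{\expp}_{k,i})\,\|v - v_k(x,t)\|$, using the mean value theorem on the convex ball $\dball(0,r)$. By \eqref{eq:lip-exp-error} (Lemma~\ref{lemma:bramble} with $p=1$), $\lip(\hat{\expp}_{k,i}) \le \lip(\expp_{k,i}) + \sqrt{d}\,C_{d,\holder}\epsilon_2$, and $\lip(\expp_{k,i}) \le L_\expp$. After absorbing $\sqrt d$ into $C_{d,\holder}$, this is at most the weaker quantity $L_\expp + \sqrt{D}C_{d,\holder}\epsilon_2$ from \eqref{eq:lip-exp-hat}. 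Raising to the $j$-th power gives the first claim.

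For the second inequality, we first get the coordinate bound $|a_i - b_i| \le \lip(\hat{\expp}_{k,i} - \expp_{k,i})\,\|v - v_k(x,t)\| \le C_{d,\holder}\epsilon_2\,\|v - v_k(x,t)\|$. This is exactly term (I) in the proof of Lemma~\ref{lemma:prop-Dk-hat}. Next we telescope each entry:
\begin{align*}
\prod_{m=1}^j a_{i_m} - \prod_{m=1}^j b_{i_m} = \sum_{l=1}^j \Big(\prod_{m<l} a_{i_m}\Big)(a_{i_l} - b_{i_l})\Big(\prod_{m>l} b_{i_m}\Big).
\end{align*}
Each of the $j$ summands has one difference factor, bounded by $C_{d,\holder}\epsilon_2\|v - v_k\|$. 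The remaining $j-1$ factors are each bounded by $\max\{|a_i|, |b_i|\}$. Since $|b_i| \le L_\expp\|v - v_k\|$ and $|a_i| \le |b_i| + |a_i - b_i| \le (L_\expp + C_{d,\holder}\epsilon_2)\|v - v_k\|$, each of these factors is at most $(C_{d,\holder}\epsilon_2 + 2L_\expp)\|v - v_k\|$. Multiplying gives $jC_{d,\holder}\epsilon_2(C_{d,\holder}\epsilon_2 + 2L_\expp)^{j-1}\|v - v_k\|^j$ entrywise, which is the claim.

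The only delicate point is the domain. The mean value argument needs the segment between $v_k(x,t)$ and $v$ to lie where \eqref{eq:lip-exp-error} holds, namely in $\dball(0,r)$. This is why the statement assumes both points, or the translates $z$ and $z + v - v_k(x,t)$, lie in $\dball(0,r)$. Convexity of the ball then settles it, and no further obstacle is expected.
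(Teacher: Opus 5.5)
Your proof is correct and follows essentially the paper's argument. The first inequality comes from the Lipschitz bound on $\hat{\expp}_k$ in \eqref{eq:lip-exp-hat}. The second comes from the telescoping identity for products combined with $\lip(\hat{\expp}_{k,i}-\expp_{k,i})\le C_{d,\holder}\epsilon_2$. Your coordinatewise bookkeeping (tracking the $\sqrt{d}$ from partial derivatives and using convexity of $\dball(0,r)$) just makes explicit what the paper absorbs silently.
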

\begin{proof}
    In the proof, we omit the order $S=\holder+1$ in $\hat{\expp}_{k}^{\holder+1}$ for notational simplicity. Let $I\in\{1,\ldots,D\}^d$ be a multi-index. We utilize the Lipschitz continuity of $\hat{\expp}_k$ given in \eqref{eq:lip-exp-hat} and get
    \begin{equation*}
        \left|[\hat{\Delta}_{\expp_k}(v_k(x,t), v, t)]_I^{\otimes j} \right| = \prod_{i\in I} \left| \hat{\expp}_{k,i}(v)- \hat{ \expp}_{k,i}(v) \right| \leq \left( L_\expp + \sqrt{D}C_{d,\holder} \epsilon_2 \right)^j \| v-v_k(x,t)\|^j.
    \end{equation*}
    Next, we bound the approximate error for $[\hat{\Delta}_{\expp_k}(v_k(x,t), v, t)]^{\otimes j}$. Notice that the approximate error can be bounded as follows:
    \begin{align*}
        &\left| [\hat{\Delta}_{\expp_k}(v_k(x,t), v, t)]^{\otimes j}_I  -[\Delta_{\expp_k}(v_k(x,t), v, t)]^{\otimes j}_I\right|\\
        =~&  \left |\prod_{i\in I} \left( \hat{\expp}_{k,i}(v_k(x,t))- \hat{ \expp}_{k,i}(v) \right)- \prod_{i\in I} \left( \expp_{k,i}(v_k(x,t))-  \expp_{k,i}(v) \right) \right|\\
        \leq~& j \lip\left(\hat{\expp}_{k,i}- \expp_{k,i}\right) \left( \lip\left( \hat{\expp}_{k,i}\right) + \lip\left( \expp_{k,i}\right) \right)^{j-1} \|v-v_k(x,t)\|^j.
    \end{align*}
    where the last inequality follows $\prod_{i=1}^j a_i-\prod_{i=1}^j b_i = \sum_{i=1}^j a_1\cdots a_{i-1}(a_i-b_i)b_{i+1}\cdots b_j$.
    Following the same analysis in Lemma \ref{lemma:prop-Dk-hat}, we can get
    \begin{align*}
         \left| [\hat{\Delta}_{\expp_k}(v_k(x,t), v, t)]_I^{\otimes j}  -[\Delta_{\expp_k}(v_k(x,t), v, t)]_I^{\otimes j}\right| \leq j C_{d,\holder}\epsilon_2(C_{d,\holder} \epsilon_2 + 2L_\expp)^{j-1} \|v-v_k(x,t)\|^j.
    \end{align*}
    We conclude the proof by applying the entry-wise results to the $\ell^\infty$-bounds.
\end{proof}

\begin{lemma}\label{lemma:f6-approx-error}
 Fix any $\epsilon_2 \in (0,2L_\expp/C_{d,\holder})$ and take $r = \epsilon_2^{1/\holder}$. Let $x \in \cK(\alpha_t\cM, 2 \sqrt{D h_t \log(1/\delta)})$, $\vradius = 2 L_\logg \sqrt{(h_t/\alpha_t^2)(\log(1/\epsilon_1)+d\log(1/h_t)/2)}$.   For any time $t$ satisfying $h_t \leq \reach/(256 D\log(1/\delta)+\reach)$, 
 we have 
\begin{align*}
    |f_5(x,t)-f_4(x,t)|  
    &\leq \frac{C_{d,\holder}C_F (\pi h_t/2)^{d/2}}{L_\expp^{d+1} (  \alpha_t )^{d}}\degree D^{\degree/2} ( 36 L_\expp^2 L_\logg^2 (\log(1/\epsilon_1)+d\log(1/h_t)/2))^{\degree/2} \\
    & \quad \cdot  \max_{j = 0,1,\ldots, \degree} \left\{\frac{\left\| x - \alpha_t \xstar\right\|^{j}}{(h_t)^{j/2}} \right\} \epsilon_2.    
\end{align*}
\end{lemma}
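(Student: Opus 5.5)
The plan is to compare $f_5$ and $f_4$ term by term: inside each chart integral, the two functions differ only in that $(\cT_k)^j$ is replaced by $(\hat{\cT}_k)^j$. Write
\[
f_5-f_4=\sum_{k\in\cI(x)}\int_{\dball(v_k(x,t),\vradius)}\sum_{j=0}^{\degree-1}\frac{(-1)^j\alpha_t^j}{j!\,h_t^j}\Big[(\hat{\cT}_k)^j-(\cT_k)^j\Big]\, g_\cD(\hat{\cD}_k)\,\hat F_k(v)\,\ud v ,
\]
where $g_\cD$ is the truncated exponential series in $\hat\cD_k$. Using $(\hat\cT_k)^j-(\cT_k)^j=\langle [x-\alpha_t\xstar]^{\otimes j}, [\hat\Delta_{\expp_k}]^{\otimes j}-[\Delta_{\expp_k}]^{\otimes j}\rangle$, the bracket is controlled by Cauchy--Schwarz. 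The pairing of an order-$j$ tensor with another is bounded by the Frobenius norm of $[x-\alpha_t\xstar]^{\otimes j}$, which is $\|x-\alpha_t\xstar\|^j$, times the Frobenius norm of the difference. The latter is at most $D^{j/2}$ times its $\ell^\infty$ norm, and Lemma~\ref{lemma:prop-P-hat} supplies that $\ell^\infty$ bound. This is where the factor $D^{\degree/2}$ comes from. The result is
\[
|(\hat\cT_k)^j-(\cT_k)^j|\le D^{j/2}\|x-\alpha_t\xstar\|^j\, jC_{d,\holder}\epsilon_2(3L_\expp)^{j-1}\|v-v_k(x,t)\|^j ,
\]
which uses $\epsilon_2<2L_\expp/C_{d,\holder}$ to absorb the $\epsilon_2$ inside the power.

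Next I would normalise. Multiplying by $\alpha_t^j/h_t^j$ and writing $\|x-\alpha_t\xstar\|^j/h_t^{j/2}$ times $(\alpha_t\|v-v_k\|/\sqrt{h_t})^j$, I bound the second factor on the ball by $(\alpha_t\vradius/\sqrt{h_t})^j$. This equals $(4L_\logg^2(\log(1/\epsilon_1)+d\log(1/h_t)/2))^{j/2}$. Combining it with the $(3L_\expp)^{j}$ factor gives the $(36L_\expp^2L_\logg^2(\cdots))^{\degree/2}$ term. Summing over $j<\degree$ contributes at most a factor $\degree$, with $1/j!\le1$, times $\max_{j\le\degree}\|x-\alpha_t\xstar\|^j/h_t^{j/2}$.

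The remaining factor is $\int |g_\cD(\hat\cD_k)|\,|\hat F_k|\,\ud v$. Here $|\hat F_k|\le 2C_F$ by \eqref{eq:Fk-approx-error} for small $r$. For the series, I bound $|g_\cD(\hat\cD_k)|\le \exp(\alpha_t^2\hat\cD_k/(2h_t))$ termwise, as in Lemma~\ref{lemma:f5-approx-error}. Together with Lemma~\ref{lemma:prop-Dk-hat}, which gives $\hat\cD_k\le (2L_\expp)^2\|v-v_k\|^2$, this yields a Gaussian-type integral analogous to \eqref{eq:int-bound}, producing $(\pi h_t/(2L_\expp^2\alpha_t^2))^{d/2}$. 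The extra $1/L_\expp$ in the claimed constant comes from the $(3L_\expp)^{j-1}$ versus $3^j L_\expp^j$ bookkeeping.

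\textbf{Main obstacle.} The delicate point is precisely this series bound. The alternating series is not dominated by a decaying Gaussian, so a naive termwise bound gives a growing exponential. I would handle it the way Lemma~\ref{lemma:f5-approx-error} does: rely on the fact that $g_\cD(\hat\cD_k)$ approximates $\exp(-\alpha_t^2\hat\cD_k/(2h_t))$ within $\epsilon_3$ on the ball, via Lemma~\ref{lemma:smooth-approx} with $\dgM$ large. Then $|g_\cD|\le \exp(-\alpha_t^2\hat\cD_k/(2h_t))+\epsilon_3$, and the $\epsilon_3$ part integrates over the ball of volume $\lesssim \vradius^d\sim h_t^{d/2}$ up to logs.

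Finally, summing over $k\in\cI(x)$ gives the stated bound, with the $|\cI(x)|$ factor absorbed later as in \eqref{eq:approx-pre-result}.
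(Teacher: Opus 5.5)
\textbf{What matches the paper.} Your treatment of the $\cT$-part is the paper's argument:
\begin{itemize}
\item the tensor Cauchy--Schwarz step;
\item the factor $D^{j/2}$ from passing from the Frobenius norm to the $\ell^\infty$ bound of Lemma~\ref{lemma:prop-P-hat};
\item the normalisation by $\vradius$ that produces $(36L_\expp^2L_\logg^2(\cdots))^{\degree/2}$ and the factor $\degree$;
\item the $(3L_\expp)^{-1}$ bookkeeping.
\end{itemize}
One small slip, shared with the paper: $\epsilon_2<2L_\expp/C_{d,\holder}$ only gives $C_{d,\holder}\epsilon_2+2L_\expp<4L_\expp$. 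The factor $3L_\expp$ needs $\epsilon_2\le L_\expp/C_{d,\holder}$.

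\textbf{Where you diverge, and where you are right.} The arguments part ways at the factor $g_\cD(\hat\cD_k)$. The paper asserts, ``by Lemma~\ref{lemma:prop-Dk-hat}'', the pointwise bound $|g_\cD(\hat\cD_k)|\le\exp(-L_\expp^2\alpha_t^2\|v-v_k(x,t)\|^2/h_t)$, and then integrates with \eqref{eq:int-bound}. You correctly flag this as the delicate point. Your third paragraph as written fails: the termwise bound gives the growing factor $\exp(\alpha_t^2\hat\cD_k/(2h_t))$, which cannot produce \eqref{eq:int-bound}. The corresponding step in the proof of Lemma~\ref{lemma:f5-approx-error} has the same sign slip. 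Your replacement $|g_\cD(\hat\cD_k)|\le\exp(-\alpha_t^2\hat\cD_k/(2h_t))+\epsilon_3$, via Lemma~\ref{lemma:smooth-approx} with $\dgM$ large, is legitimate because $\hat\cD_k\ge 0$.

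\textbf{The gap.} To turn $\exp(-\alpha_t^2\hat\cD_k/(2h_t))$ into a Gaussian in $\|v-v_k(x,t)\|$, you need a \emph{lower} bound $\hat\cD_k\ge c\|v-v_k(x,t)\|^2$. The estimate you cite, $\hat\cD_k\le(2L_\expp)^2\|v-v_k(x,t)\|^2$, points the wrong way. It only yields $\exp(-\alpha_t^2\hat\cD_k/(2h_t))\ge\exp(-2L_\expp^2\alpha_t^2\|v-v_k(x,t)\|^2/h_t)$. The paper's step has the same directional defect. That upper bound also needs $\sqrt{D}C_{d,\holder}\epsilon_2\le L_\expp$, which is not among the hypotheses.

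The missing ingredient is $\cD_k\ge L_\logg^{-2}\|v-v_k(x,t)\|^2$, which comes from the Lipschitz log map as in \eqref{eq:log-lip}. Combine it with the second estimate of Lemma~\ref{lemma:prop-Dk-hat}. Together they give $\hat\cD_k\ge\|v-v_k(x,t)\|^2/(2L_\logg^2)$, provided $DC_{d,\holder}\epsilon_2(C_{d,\holder}\epsilon_2+2L_\expp)\le L_\logg^{-2}/2$. This is a further, $D$-dependent smallness condition on $\epsilon_2$. With it:
\begin{itemize}
\item the Gaussian part is at most $(4\pi L_\logg^2h_t/\alpha_t^2)^{d/2}$;
\item the $\epsilon_3$ part contributes $\epsilon_3\,\mathrm{vol}(\dball(0,\vradius))\lesssim\epsilon_3 h_t^{d/2}\alpha_t^{-d}L_\logg^d(\log(1/\epsilon_1)+d\log(1/h_t)/2)^{d/2}$.
\end{itemize}
Alternatively, skip the Gaussian and bound $|g_\cD|\le 1+\epsilon_3$ on the ball.

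\textbf{What the completed argument gives.} Either way you get a bound of the right shape, which is enough for the $\tilde\cO$ estimate \eqref{eq:f5-f4_error}. But it carries $L_\logg$-dependent constants, extra logarithmic factors, and a factor $|\cI(x)|$. The paper's proof also ends with $|\cI(x)|$, although the statement omits it. So the argument does not deliver the explicit constant $C_{d,\holder}C_F(\pi h_t/2)^{d/2}L_\expp^{-d-1}\alpha_t^{-d}$ claimed in the statement.
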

\begin{proof}
For notational simplicity, define
\begin{align*}
    \tilde{\cT}^{k,j}(v_k(x,t),v,t) :=  [\hat{\Delta}_{\expp_k}(v_k(x,t), v, t)]^{\otimes j}  -[\Delta_{\expp_k}(v_k(x,t), v, t)]^{\otimes j}.
\end{align*}
As shown in Lemma \ref{lemma:f3-approx-error}, $\hat{F}_k(v)$ can be always bounded by $2C_F$. By Lemma \ref{lemma:prop-Dk-hat}, we can infer the following bound:
\begin{align*}
    \left| \sum_{l=0}^{\dgM-1}  \frac{(-1)^l}{2^l l!} \frac{  \hat{\cD}_k(v_k(x,t), v, t)^l  }{(h_t/\alpha^2_t)^l} \right|
    &\leq \exp\left(- \frac{L_\expp^2 \alpha_t^2\|v-v_k(x,t)\|^2}{h_t}\right),
\end{align*}
which holds for $\epsilon_2  \leq L_\expp/\sqrt{D} $ and $\|v -v_k(x,t)\|\leq \vradius$. Therefore, the difference between $f_5$ and $f_4$ can be  bounded as follows:
\begin{align*}
     |f_5(x,t)-f_4(x,t)| 
    \leq 2C_F  &\sum_{k\in\cI(x)}   \sum_{j=0}^{\degree-1} \frac{(\alpha_t)^j}{j! (h_t)^j} \left\|  [x - \alpha_t \xstar]^{\otimes j} \right\| \\
    &\cdot \int_{\dball(v_k(x,t), \vradius)}    \left\| \tilde{\cT}^{k,j}(v_k(x,t),v,t) \right\| \exp\left( -\frac{L_\expp^2 \alpha_t^2\|v-v_k(x,t)\|^2}{h_t}\right)\ud v,
\end{align*}
where we apply Cauchy-Schwartz inequality. Here the norm of tensor can be computed as
\begin{equation*}
    \left\|  [x - \alpha_t \xstar]^{\otimes j} \right\|  = \left\| x - \alpha_t \xstar\right\|^j.
\end{equation*}
Moreover, by Lemma \ref{lemma:prop-P-hat}, choosing $\epsilon_2 \leq 2L_\expp/C_{d,\holder}$, we have
\begin{align*}
    \left\|\tilde{\cT}^{k,j}(v_k(x,t),v,t)  \right\| &\leq D^{\degree/2} \norm{[\hat{\Delta}_{\expp_k}(v_k(x,t), v, t)]^{\otimes j}  -[\Delta_{\expp_k}(v_k(x,t), v, t)]^{\otimes j}}_\infty \\
    &\leq
     j D^{\degree/2} C_{d,\holder} \epsilon_2(3L_\expp)^{j-1} \|v-v_k(x,t) \|^j,
\end{align*}
Combine the above inequalities, we can derive that
\begin{align*}
     |f_5(x,t)-f_4(x,t)| 
    &\leq 2C_F \degree D^{\degree/2} C_{d,\holder}\epsilon_2 \frac{(3L_\expp)^{\degree-1}\alpha_t^\degree (\vradius)^\degree}{h_t^{\degree/2}}  \max_{j = 0,1,\ldots, \degree} \left\{\frac{\left\| x - \alpha_t \xstar \right\|^{j}}{h_t^{j/2}} \right\} \\
    &\quad \cdot \sum_{k\in\cI(x)}  \int_{\dball(v_k(x,t), \vradius)}  \exp\left( -\frac{L_\expp^2 \alpha_t^2\|v-v_k(x,t)\|^2}{h_t}\right) \ud v.
\end{align*}
Finally, we plug in \eqref{eq:int-bound} and the choice of $\vradius$. Then we obtain that
\begin{align*}
    |f_5(x,t)-f_4(x,t)|  
    &\leq \frac{C_{d,\holder}C_F (\pi h_t/2)^{d/2}}{L_\expp^{d+1} (  \alpha_t )^{d}} \cdot  \degree D^{\degree/2} ( 36 L_\expp^2 L_\logg^2 (\log(1/\epsilon_1)+d\log(1/h_t)/2))^{\degree/2}\\
    &\quad \cdot\max_{j = 0,1,\ldots, \degree} \left\{\frac{\left\| x - \alpha_t \xstar \right\|^{j}}{(h_t)^{j/2}} \right\} |\cI(x)| \epsilon_2.    
\end{align*}
The proof is complete.
\end{proof}

\subsection{Helper Lemmas in Section \ref{subsec:nn}}

\begin{lemma}[Network Implementation for Low-dimensional Representation]\label{lemma:encoder-approx}
    Fix time $t>0$ such that $ h_t \leq \reach^2/(4D \log(1/\delta) + \reach^2)$. Let $\thres\leq 1/2$. Given an input $x \in \cK(\alpha_t\cM, 2 \sqrt{D h_t \log(1/\delta)})$, for $k$-th chart whose center satisfies $\|x_k - x\| \leq \thres$,  $v_k$ can be well approximated by a network $\nn_{v_k} = (\nn_{v_k,1},\nn_{v_k,2},\ldots,\nn_{v_k,d})$, that is, for $\epsilon_v>0$, we have
    \begin{equation*}
        \left\| \nn_{v_k}(x,t)-v_k(x,t) \right\| \leq \epsilon_v.
    \end{equation*}
    Here each $\nn_{v_k,i}$ is a feedforward network  with depth at most $c\log(1/\epsilon_v)$, width and number of weights bounded by $c\epsilon_v^{-d/\holder}\log(1/\epsilon_v)$ and the range of weights bounded by $c\max\{B_{\cM},\reach^2\}$, where $c>0$ is some constant at most polynomially depending on $D,d,\holder$ and the smoothness of $\logg_k \circ \projm$.
\end{lemma}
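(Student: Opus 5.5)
The plan is to show that $v_k(\cdot,t)=\logg_k(\projm(\cdot,t)/\alpha_t)$ is a $C^\infty$ (hence $\beta$-H\"older, with any finite smoothness) function of a $D$-dimensional input. The key point is that it is effectively a function of only $d$ intrinsic coordinates plus a normal displacement. We then apply a standard ReLU approximation result for smooth functions on a neighborhood of a $d$-dimensional manifold, namely Theorem 3 in \cite{chen2022nonparametric}, coordinatewise.

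\textbf{Well-definedness and smoothness.} Since $h_t\le \reach^2/(4D\log(1/\delta)+\reach^2)$, any $x$ in the truncated tube satisfies $\|x-\alpha_t\cM\|\le 2\sqrt{Dh_t\log(1/\delta)}\le \alpha_t\reach$. So $x\in\alpha_t\cK(\cM,\reach)$, and by \cite{leobacher2020existence} the projection $\projm(x,t)$ is unique and $C^\infty$ in $x$. By Proposition~\ref{prop:proj-smooth}, $v_k(\cdot,t)$ is $C^\infty$ there. The condition $\|x_k-x\|\le\thres\le 1/2$, combined with the bound on $\|x-\alpha_t\xstar\|$ and Assumption~\ref{assump:exp-ball}, places $\xstar$ inside $\cB(x_k,\rratio\reach)\cap\cM\subset U_k$, so $\logg_k(\xstar)$ is defined, exactly as verified in Lemma~\ref{lemma:localization}. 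All derivatives of $\logg_k\circ\projm$ are bounded on this compact set by constants depending polynomially on $D$ and on the curvature bounds; these constants are what we absorb into $c$.

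\textbf{Reduction to intrinsic dimension.} Reparametrize the tube locally by $(u,n)\mapsto \alpha_t\expp_{x_k}(u)+n$, with $u\in\RR^d$ and $n$ in the normal space of size at most $2\sqrt{Dh_t\log(1/\delta)}$. In these coordinates $v_k$ depends only on $u$, since $\projm(\alpha_t\expp_{x_k}(u)+n,t)=\alpha_t\expp_{x_k}(u)$ by the reach property. Hence $v_k$ is a smooth function of $d$ variables composed with a smooth encoder of $x$. Rather than building that encoder separately, we invoke the manifold-adaptive approximation theorem of \cite{chen2022nonparametric}. That theorem covers $\cM$ by $\cO(\epsilon_v^{-d/\beta})$-many charts, approximates each local piece by Taylor polynomials of the composition with the chart map, and implements each with a depth-$\cO(\log(1/\epsilon_v))$ ReLU network. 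Applying it to each coordinate $v_{k,i}$ viewed as a $\beta$-H\"older function on the tubular neighborhood gives error $\epsilon_v/\sqrt d$ per coordinate, with depth $c\log(1/\epsilon_v)$ and width and weight count $c\epsilon_v^{-d/\beta}\log(1/\epsilon_v)$. Summing the coordinates gives $\|\nn_{v_k}-v_k\|\le\epsilon_v$.

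\textbf{Main obstacle.} The hard step is justifying that the theorem from \cite{chen2022nonparametric}, stated for functions on $\cM$, applies on a thin tube of width $\sim\sqrt{Dh_t}$ without incurring a $D$-dimensional covering. We would handle this by composing with the projection: we approximate the smooth map $x\mapsto \projm(x,t)$ locally, through the linear map $P_{x_k}^\top(x-\alpha_t x_k)$ followed by a smooth $d$-variate correction. This works because, by the implicit function theorem, the foot-point coordinates are a smooth function of the tangential coordinates and the normal offset, and the normal offset enters only at curvature order $\reach^{-1}$. We then feed these $d$ coordinates into the intrinsic approximator. The weight bound $c\max\{B_\cM,\reach^2\}$ comes from the linear projection layer and the chart radii.
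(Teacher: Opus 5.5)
\textbf{Gap in the reduction step.} Your first paragraph, establishing that $v_k(\cdot,t)$ is well defined and $C^\infty$ on the tube, is fine. The reduction to intrinsic dimension, however, has a genuine gap.

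In normal-bundle coordinates $(u,n)$ the variable $u$ \emph{is} $v_k(x,t)$. So ``$v_k$ depends only on $u$'' is a tautology, and producing $u$ from $x$ is the entire task.

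Your fix, the linear map $P_{x_k}^\top(x-\alpha_t x_k)$ followed by a smooth $d$-variate correction, cannot work. The foot point is not a function of the tangential coordinates at $x_k$ alone. Take a circle of radius $\reach$ (ignore $\alpha_t$) and write $x=x_k+a\,e+b\,\nu$, with $e$ tangent and $\nu$ normal at $x_k$. Then $v_k=\reach\arctan\big(a/(\reach+b)\big)$, so $\partial_b v_k\approx -a/\reach$. Any approximant that sees $x$ only through $a$ therefore carries an error floor of order $|a|\,|b|/\reach$. This floor does not shrink as $\epsilon_v\to 0$, which contradicts the claimed accuracy for arbitrary $\epsilon_v$.

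Saying the normal offset ``enters only at curvature order $\reach^{-1}$'' shows the dependence is weak, not absent. Keeping it means approximating a function of the $D-d$ normal coordinates as well. The $\epsilon_v^{-d/\beta}$ size then no longer follows from a manifold-adaptive theorem, since such a theorem controls the error only on $\cM$ itself. You would need an extra mechanism, for example a controlled expansion in the normal offset paying polynomially many monomials in $D$, in the spirit of the paper's degree-$\gamma$ treatment of $E_2$. The proposal supplies none.

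\textbf{The missing idea.} The paper's argument makes any geometric reduction unnecessary. Because $v_k(\cdot,t)$ is $C^\infty$ on the full-dimensional tube, the paper applies Theorem~1 of \cite{chen2022nonparametric} to each coordinate $v_{k,i}$ as a function of the $(D+1)$-dimensional input, with H\"older index $\tilde\beta=(D+1)\beta/d$. The generic network size $\epsilon_v^{-(D+1)/\tilde\beta}$ then equals $\epsilon_v^{-d/\beta}$. Error $\epsilon_v/d$ per coordinate gives $\|\nn_{v_k}-v_k\|\le\epsilon_v$. All ambient-dimension cost is pushed into the constant $c$, through derivatives of $\logg_k\circ\projm$ of order roughly $D\beta/d$. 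This is why the lemma states that $c$ depends on the smoothness of $\logg_k\circ\projm$.
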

\begin{proof}[Proof of Lemma \ref{lemma:encoder-approx}]Take the approximation degree to be $(D+1)\holder/d$ in Theorem 1 in \cite{chen2022nonparametric}. There exists a ReLU feedforward network $\nn_{v_k,i}$ with depth at most $c\log(1/\epsilon_v)$, width and number of weights bounded by $c\epsilon_v^{-d/\holder}\log(1/\epsilon_v)$ and the range of weights bounded by $c\max\{B_{\cM},\reach^2\}$, such that
\begin{equation*}
    \left| \nn_{v_k,i}(x,t)-v_{k,i}(x,t) \right| \leq \frac{\epsilon_v}{d}.
\end{equation*}
Then we have
\begin{equation*}
    \left\| \nn_{v_k}(x,t)-v_k(x,t) \right\| = \sqrt{\sum_{i=1}^d \left| \nn_{v_k,i}(x,t)-v_{k,i}(x,t)  \right|^2} \leq \epsilon_v.
\end{equation*}
The proof is complete by setting $\nn_{v_k} = (\nn_{v_k,1},\nn_{v_k,2},\ldots,\nn_{v_k,d})$.
\end{proof}

\begin{lemma}\label{lemma:f7-approx-error}
For time $t$ satisfying  $h_t \leq \reach^2/(64 D\log(1/\delta)+\reach^2)$ and $x \in \cK(\alpha_t\cM, 2 \sqrt{D h_t \log(1/\delta)})$, let $\vradius = 2 L_\logg \sqrt{(h_t/\alpha_t^2)(\log(1/\epsilon_1)+d\log(1/h_t)/2)}$. Then for the approximation error $\epsilon_v>0$ given in Lemma \ref{lemma:encoder-approx} and $x\in\cK_t(\delta)$, we have 
\begin{align*}
    |f_6(x,t)-f_5(x,t)| &\leq  C_\text{poly} \left( \frac{h_t}{\alpha_t^2}\right)^{\frac{d}{2}}  \left(4 D^2 \log\frac{1}{\delta}\right)^{\frac{\degree}{2}}\bigg(4 L_\logg^2 \bigg(\log\frac{1}{\epsilon_1}+\frac{d}{2}\log\frac{1}{h_t} \bigg)\bigg)^{  (\degree+d)/2+ \dgM-1} |\cI(x)| \epsilon_v.
\end{align*}
Here $C_{poly}$ is a the same constant introduced in Lemma \ref{lemma:f6-lipschitz}, which only depends on $d$, $R$, $\holder$, $C_F$ and the smoothness of exponential maps $\expp_k$.
\end{lemma}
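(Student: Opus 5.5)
The plan is to treat $f_5$ and $f_6$ as the same finite sum over $k\in\cI(x)$ and $j<\degree$. The only difference is that the argument $v_k(x,t)$ of $\text{Tensor-Poly}^{k,j}$ is replaced by $\nn_{v_k}(x,t)$. For each fixed $(k,j)$ the difference is therefore
\[
\frac{1}{j!\,h_t^{j/2}}\Big\langle [x-\alpha_t\xstar]^{\otimes j},\ \text{Tensor-Poly}^{k,j}(\nn_{v_k}(x,t),h_t,\alpha_t)-\text{Tensor-Poly}^{k,j}(v_k(x,t),h_t,\alpha_t)\Big\rangle .
\]
I will bound it by Cauchy--Schwarz, multiplying three quantities. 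The first is $\|x-\alpha_t\xstar\|^j h_t^{-j/2}$. Since $x\in\cK_t(\delta)$, this is at most $(4D\log(1/\delta))^{j/2}$. The second is the Frobenius norm of the tensor difference, which is at most $D^{j/2}$ times its entrywise sup norm; this produces the factor $(4D^2\log(1/\delta))^{\degree/2}$ in the statement. The third is a Lipschitz bound for each entry of $\text{Tensor-Poly}^{k,j}$ in its first argument, taken along the segment between $v_k(x,t)$ and $\nn_{v_k}(x,t)$. Lemma~\ref{lemma:encoder-approx} controls the length of that segment by $\epsilon_v$.

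The real content is the Lipschitz estimate, which I expect is exactly Lemma~\ref{lemma:f6-lipschitz}. To prove it, I will change variables $v=v_k+u$ in the integral defining $\text{Tensor-Poly}^{k,j}$. The domain then becomes the fixed ball $\dball(0,\vradius)$, and the integrand
\[
\frac{[\hat\Delta_{\expp_k}(v_k,v_k+u)]^{\otimes j}}{(h_t/\alpha_t^2)^{j/2}}\cdot \sum_{l<\dgM}\frac{(-1)^l\hat\cD_k(v_k,v_k+u)^l}{2^l l!\,(h_t/\alpha_t^2)^l}\cdot \hat F_k(v_k+u)
\]
is a polynomial in $v_k$. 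I will differentiate in $v_k$ under the integral by the product rule.

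Each factor is handled separately. The bound $|\hat\Delta_{\expp_k,i}|\le (L_\expp+\sqrt D C_{d,\holder}\epsilon_2)\|u\|$ comes from Lemma~\ref{lemma:prop-P-hat}. The gradient of $\hat\Delta_{\expp_k,i}$ with respect to $v_k$ is a difference of gradients of the average Taylor polynomial. By \eqref{eq:lip-exp-error} and the smoothness of $\expp_k$ on $\dball(0,r)$, it is bounded by a constant times $\|u\|$. The same reasoning applies to $\hat\cD_k$ using Lemma~\ref{lemma:prop-Dk-hat}. Finally, $\hat F_k$ and $\nabla\hat F_k$ are bounded by $C_F$-type constants.

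On the ball we have $\|u\|\le\vradius$, with $\vradius^2\alpha_t^2/h_t=4L_\logg^2(\log(1/\epsilon_1)+\tfrac d2\log(1/h_t))$. So every normalized factor $\|u\|^2\alpha_t^2/h_t$ contributes at most $4L_\logg^2(\cdots)$. Differentiating the sum over $l$ lowers one power, so the total power of this quantity is at most $j/2+\dgM-1$. The volume of the ball supplies $\vradius^d\propto (h_t/\alpha_t^2)^{d/2}(4L_\logg^2(\cdots))^{d/2}$. Together these give the exponent $(\degree+d)/2+\dgM-1$ and the prefactor $(h_t/\alpha_t^2)^{d/2}$. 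All remaining combinatorial constants, including those from summing over $l$ and $j$, are absorbed into $C_{\rm poly}$.

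Summing over $j<\degree$ and $k\in\cI(x)$ then gives the stated bound with the factor $|\cI(x)|\epsilon_v$. The step I expect to be the main obstacle is keeping the derivative bound free of $1/\epsilon_v$ and of the growth of $\sum_l(\cdot)^l/l!$. For the $l$-sum I will use the crude estimate $\sum_{l<\dgM} l\,a^{l-1}/l!\le \dgM\, a^{\dgM-1}$, which is polynomial rather than exponential and matches the exponent in the statement. I also need to check that $\nn_{v_k}$ stays inside $\dball(0,r)$, where the average Taylor bounds hold. If it does not, I will first clip $\nn_{v_k}$ to that ball; clipping is 1-Lipschitz, so it does not affect the argument.
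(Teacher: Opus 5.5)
Your proposal is correct and is essentially the paper's proof. The paper also bounds $|f_6-f_5|$ by a Lipschitz constant of $f_5$ in the $v_k$'s, built from four ingredients: Cauchy--Schwarz; the factor $D^{j/2}$ that converts entrywise bounds on $\text{Tensor-Poly}^{k,j}$ into norm bounds; the estimate $\|x-\alpha_t\xstar\|^j/h_t^{j/2}\le(4D\log(1/\delta))^{j/2}$; and Lemma~\ref{lemma:f6-lipschitz}. It then multiplies by $\epsilon_v$ and sums over $j<\degree$ and $k\in\cI(x)$.

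The paper simply cites Lemma~\ref{lemma:f6-lipschitz} rather than re-deriving it. If you do re-derive it, the $\cO(\|u\|)$ bound on $\nabla_{v_k}\hat{\Delta}_{\expp_{k,i}}$ needs a second-derivative bound on the polynomial $\hat{\expp}_{k,i}$, for example from Lemma~\ref{lemma:atp-form} or from Lemma~\ref{lemma:bramble} with $p=2$. Using \eqref{eq:lip-exp-error} alone leaves an $\cO(\epsilon_2)$ remainder, which becomes $\cO(\epsilon_2\alpha_t/\sqrt{h_t})$ after normalization and is not small.

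You should also drop the clipping fallback. It is unnecessary, because $\text{Tensor-Poly}^{k,j}$ is a polynomial and hence Lipschitz on the bounded set swept by the segment from $v_k(x,t)$ to $\nn_{v_k}(x,t)$. It is also unsafe: $v_k(x,t)$ need not lie in $\dball(0,r)$ for $k\in\cI(x)$, and in that case clipping can destroy the $\epsilon_v$-closeness. Clipping would, moreover, change $f_6$ itself.
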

\begin{proof}
    According to Lemma \ref{lemma:f6-lipschitz}, given a fixed $x \in \cK(\alpha_t\cM, 2 \sqrt{D h_t \log(1/\delta)})$, the Lipschitz constant of $f_5(x,t)$ with respect to $v_k(x,t)$ are bounded by
    \begin{align*}
          \lip(f_5)&\leq \sum_{k\in\cI(x)}   \sum_{j=0}^{\degree-1} \frac{D^{j/2}}{j! (h_t)^{j/2}}  \left\| x - \alpha_t \xstar\right\|^j \cdot \max_{I}\lip(\text{Tensor-Poly}_I^{k,j}) \\
          &\leq \sum_{k\in\cI(x)}   \sum_{j=0}^{\degree-1} \frac{(4 D^2 \log(1/\delta))^{j/2}}{j!} \cdot\max_{I}\lip(\text{Tensor-Poly}_I^{k,j}) \\
          &\leq C_\text{poly} \left( \frac{h_t}{\alpha_t^2}\right)^{d/2}  (4 D^2 \log(1/\delta))^{\degree/2}\left({4 L_\logg^2 (\log(1/\epsilon_1)+d\log(1/h_t)/2)} \right)^{  (\degree+d)/2+ \dgM-1} |\cI(x)| .
    \end{align*}
   Utilizing the above Lipschitz property, we have
\begin{align*}
    |f_6(x,t)-f_5(x,t)| \leq  \lip(f_5) \| \nn_{v_k}(x,t)-v_k(x,t)\|.
\end{align*}
Plugging in the approximation error $\| \nn_{v_k}(x,t)-v_k(x,t)\| \leq \epsilon_v$ given in Lemma \ref{lemma:encoder-approx} concludes the proof.
\end{proof}

\begin{lemma}\label{lemma:f6-lipschitz}
    Given any time $t$ satisfying $h_t \leq \reach^2/(64 D\log(1/\delta)+\reach^2)$ and $\epsilon_1 \in (0,1)$, let
    $\vradius = 2 L_\logg \sqrt{(h_t/\alpha_t^2)(\log(1/\epsilon_1)+d\log(1/h_t)/2)}$. Then for $x \in \cK(\alpha_t\cM, 2 \sqrt{D h_t \log(1/\delta)})$, each entry of 
    $\text{Tensor-Poly}^{k,j}$ is a polynomial with respect to $v_k(x,t)$ with degree at most $(\holder+1)(j+\dgM+1)$. Moreover, there exists a constant $C_\text{poly}>0$ only depending on $d$, $r$, $\holder$, $C_F$ and the smoothness of exponential maps $\expp_k$, such that 
    each entry of 
    $\text{Tensor-Poly}^{k,j}$ is Lipschitz with respect to $v_k(x,t)$ with Lipschitz constant and its coefficients with respect to $v_k(x,t)$ bounded by
    \begin{align*}
        C_\text{poly} \left( \frac{h_t}{\alpha_t^2}\right)^{d/2} \left({4 L_\logg^2(\log(1/\epsilon_1)+d\log(1/h_t)/2)} \right)^{(j+d)/2+\dgM-1}. 
    \end{align*}
\end{lemma}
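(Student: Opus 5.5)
The proof proceeds by writing $\text{Tensor-Poly}^{k,j}$ explicitly as an integral of a polynomial in $(u,v)$ with $u=v_k(x,t)$. Then I change variables so that the integration domain no longer depends on $u$, and read off the degree, the coefficient bounds and the Lipschitz constant from that representation.

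\textbf{Step 1 (polynomial structure).} By \eqref{eq:exp-approx-form} and Lemma~\ref{lemma:atp-form}, each coordinate $\hat{\expp}^{\holder+1}_{k,i}$ is a polynomial of degree at most $\holder$ in $d$ variables. Its coefficients $c^{k,i}_\theta$ are bounded by a constant depending on $r$, $d$, $\holder$ and the derivatives of $\expp_k$. Hence $\hat{\Delta}_{\expp_{k,i}}(u,v)$ is a polynomial in $(u,v)$ of degree at most $\holder$. Each entry of $[\hat{\Delta}_{\expp_k}]^{\otimes j}$ has degree at most $\holder j$, and $\hat{\cD}_k^l$ has degree at most $2\holder l\le 2\holder(\dgM-1)$. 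Finally, $\hat F_k$ in \eqref{eq:def-Fk-hat} has degree at most $\lfloor\holder\rfloor$. The integrand of $\text{Tensor-Poly}^{k,j}$ is therefore a polynomial in $(u,v)$ of total degree at most $(\holder+1)(j+\dgM+1)$; this is a generous count that absorbs the factor $2$ in the $\hat\cD$ powers via $\holder+1\ge$ the relevant per-factor degree.

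To integrate, substitute $v=u+w$ with $w\in\dball(0,\vradius)$, so that the domain is independent of $u$. Then $\text{Tensor-Poly}^{k,j}(u,h_t,\alpha_t)=\int_{\dball(0,\vradius)}P(u,w)\,\ud w$ with $P$ a polynomial. Integrating the $w$-monomials over a fixed ball leaves a polynomial in $u$ of the same degree bound. This proves the first claim.

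\textbf{Step 2 (size bounds).} By the first inequality of Lemma~\ref{lemma:prop-P-hat}, each entry of $[\hat{\Delta}_{\expp_k}]^{\otimes j}/(h_t/\alpha_t^2)^{j/2}$ is at most $(2L_\expp)^j(\vradius\alpha_t/\sqrt{h_t})^j$ on the ball. By Lemma~\ref{lemma:prop-Dk-hat}, each term $\hat\cD_k^l/(2^l l!(h_t/\alpha_t^2)^l)$ is bounded by $(4L_\expp^2\alpha_t^2\vradius^2/h_t)^l/l!$. Summing over $l<\dgM$ yields a factor at most $(\alpha_t^2\vradius^2/h_t)^{\dgM-1}$ up to constants; I will use this crude power bound rather than an exponential, since it matches the target exponent. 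The density factor satisfies $|\hat F_k|\le 2C_F$. The ball has volume $\lesssim\vradius^d$. Since $\alpha_t^2\vradius^2/h_t=4L_\logg^2(\log(1/\epsilon_1)+d\log(1/h_t)/2)$ and $\vradius^d=(h_t/\alpha_t^2)^{d/2}(\alpha_t\vradius/\sqrt{h_t})^d$, the product is of the claimed form
\[
C_{\rm poly}\left(\frac{h_t}{\alpha_t^2}\right)^{d/2}\Big(4L_\logg^2(\log(1/\epsilon_1)+d\log(1/h_t)/2)\Big)^{(j+d)/2+\dgM-1}.
\]
The same bookkeeping, applied monomial by monomial in $u$ with the coefficient bounds on $c^{k,i}_\theta$, bounds the coefficients of the polynomial in $u$.

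\textbf{Step 3 (Lipschitz constant).} Differentiate under the integral in the $w$-representation, where the domain no longer depends on $u$. The gradient in $u$ hits exactly one factor. For the tensor factors, $\partial_u\hat{\Delta}_{\expp_{k,i}}(u,u+w)=\nabla\hat{\expp}_{k,i}(u)-\nabla\hat{\expp}_{k,i}(u+w)$. This is $O(\|w\|)$ by the bounded second derivatives of the averaged Taylor polynomial, so it preserves the power $\vradius^{j}$ up to constants (and in fact gains a factor). The same holds for $\hat\cD_k^l$. The derivative of $\hat F_k(u+w)$ is bounded by $C_F$.

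\textbf{Main obstacle.} The main obstacle is this gradient step. After normalizing by $(h_t/\alpha_t^2)^{j/2+l}$, the derivative must not pick up an extra factor $\sqrt{h_t}/\alpha_t$ in the wrong direction. Using the increment form with second-derivative control handles this. If it fails, I will fall back on $\nabla\hat\expp$ being bounded by $L_\expp+\sqrt D C_{d,\holder}\epsilon_2$ via \eqref{eq:lip-exp-hat}, absorbing the lost $\vradius^{-1}$ into $C_{\rm poly}$ through the exponent $(j+d)/2+\dgM-1$, which already carries one fewer power.
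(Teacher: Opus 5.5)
Your route is the paper's. The paper also shifts to $u=v-v_k(x,t)$ (your $w$), divides out the increment powers to get a normalized polynomial $\cP^{k,j}$ whose coefficients and Lipschitz constant it bounds by $C_{\rm poly}\vradius^d$, and multiplies back by $(\alpha_t\vradius/\sqrt{h_t})^{j+2(\dgM-1)}$. Your use of $1/(2^l l!)$ to collapse the $\hat{\cD}_k$-series into $e^{c}(\alpha_t^2\vradius^2/h_t)^{\dgM-1}$ is the same computation. Your Lipschitz step via $\nabla\hat{\expp}_{k,i}(u)-\nabla\hat{\expp}_{k,i}(u+w)=O(\|w\|)$ is more explicit than the paper, which only asserts $\lip(\cP^{k,j}_I)\le C_{\rm poly}\vradius^d$.

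The step that fails is the degree count. Your own tally $\holder j+2\holder(\dgM-1)+\lfloor\holder\rfloor$ is not bounded by $(\holder+1)(j+\dgM+1)$. Each $\hat{\cD}_k$ factor has degree $2\holder$, which exceeds $\holder+1$ once $\holder>1$, so nothing absorbs the factor $2$. For example, $\holder=2$, $j=0$, $\dgM=10$ gives $38>33$, and $\dgM=\cO(\log(1/(h_{t_0}\epsilon)))$ is large. The paper does not justify this bound either: it cites Appendix~\ref{sec:poly-size}, whose composition rules (Lemmas~\ref{lemma:poly:mul} and~\ref{lemma:poly:exp}) reproduce exactly your tally.

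For integer $\holder$, counting degree in $u$ alone is sharper, since $\hat{\expp}_{k,i}(u)-\hat{\expp}_{k,i}(u+w)$ has $u$-degree at most $\holder-1$. This gives $(\holder-1)(j+2\dgM-2)+\holder$ and rescues the degree claim for $\holder\le 3$. But already for $j=0$ the $l=\dgM-1$ term attains this degree generically. Its top homogeneous part in $u$ is, up to a nonzero constant, $\int_{\dball(0,\vradius)}\big(\sum_i(\nabla P_i(u)\cdot w)^2\big)^{\dgM-1}\ud w$ times the top-degree part of $\hat F_k$, where $P_i$ is the top-degree part of $\hat{\expp}_{k,i}$. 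Hence for $\holder\ge 4$ and large $\dgM$ the degree bound in the statement fails in general. What your argument does yield is degree at most $\holder(j+2\dgM-1)$, which serves equally well in Lemma~\ref{lemma:nn-poly}.

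Two smaller points.

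\emph{The fallback.} The fallback in your last paragraph would not work. Bounding $\partial_u\hat{\Delta}_{\expp_{k,i}}$ only through \eqref{eq:lip-exp-hat} makes the differentiated factor $O(1)$ instead of $O(\|w\|)$. After normalization it is then bounded by $\alpha_t/\sqrt{h_t}$ rather than $\alpha_t\vradius/\sqrt{h_t}$, a loss of $\vradius^{-1}$ that is unbounded as $h_t\to 0$. The $-1$ in the exponent $(j+d)/2+\dgM-1$ is just the top index $l=\dgM-1$, not a spare power. Since the increment bound works, drop the fallback.

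\emph{The coefficient claim.} The sup bounds of Lemmas~\ref{lemma:prop-Dk-hat} and~\ref{lemma:prop-P-hat} do not by themselves control coefficients. You need the algebraic fact that every monomial of $\hat{\Delta}_{\expp_{k,i}}(u,u+w)$ contains a factor of $w$, since it vanishes at $w=0$ identically in $u$. Then, after integrating over $\dball(0,\vradius)$, each $u$-coefficient of the $(j,l)$ term carries $\vradius^{j+2l+d}$, with the growth in the number of monomial products again tamed by $1/(2^l l!)$. That is exactly what the paper's division by the increment in $\cP^{k,j}$ encodes.
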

\begin{proof}
Recall the formulation of $f_5(x,t)$ in \eqref{eq:f5_poly}, where we defined
    \begin{align*}
        & \text{Tensor-Poly}^{k,j}(v_k(x, t), t) \\
& \qquad = \int_{\cB^d(v_k(x, t), \vradius)} & \frac{[\hat{\Delta}_{\expp_k}(v_k(x,t), v, t)]^{\otimes j}}{(h_t/\alpha_t^2)^{j/2}} \cdot \left[ \sum_{l=0}^{\gamma'-1}  \frac{(-1)^l}{2^l l!} \frac{(\hat{\cD}_k(v_k(x,t), v, t))^l}{(h_t/\alpha_t^2)^l} \right] \cdot \hat{F}_k(v) \diff v.
    \end{align*}
Substituting the integration variable $v$ by $u = v-v_k(x,t)$, we can rewrite $\text{Tensor-Poly}^{k,j}$ as
    \begin{align*}
        \text{Tensor-Poly}^{k,j}(v_k(x,t), t) 
        = \int_{\cB^d(0, \vradius)} &\frac{[\hat{\Delta}_{\expp_{k}}(v_k(x,t), u, t)]^{\otimes j}}{u^j} \frac{u^j}{(h_t/\alpha_t^2)^{j/2}} \cdot\hat{F}_k(u+v_k(x,t) )\\
        & \cdot   \bigg[ \sum_{l=0}^{\dgM-1}  \frac{(-1)^l}{2^l l!} \bigg( \frac{  \sum_{i=1}^D \big(\hat{\Delta}_{\expp_{k, i}}(v_k(x,t), u, t)\big)^2  }{u^{2}}\bigg)^l\frac{  u^{2l} }{(h_t/\alpha_t^2)^l} \bigg]  \ud u,
    \end{align*}
    where $
        \hat{\Delta}_{\expp_k}(v_k(x,t), u, t) = \hat{\expp}_{k}^{\beta+1}(v_k(x, t)) - \hat{\expp}_{k}^{\beta+1}(u+v_k(x, t))$.
    Denoting $\cP^{k,j}(v_k(x,t),t)$ as follows,
    \begin{align*}
        \cP^{k,j}(v_k(x,t),t)
        =\int_{\cB^d(0, \vradius)} &\frac{[\hat{\Delta}_{\expp_{k}}(v_k(x,t), u, t)]^{\otimes j}}{u^j} \cdot\hat{F}_k(u+v_k(x,t) )\\
        & \cdot   \bigg[ \sum_{l=0}^{\dgM-1}  \frac{(-1)^l}{2^l l!} \bigg( \frac{  \sum_{i=1}^D \big(\hat{\Delta}_{\expp_{k, i}}(v_k(x,t), u, t)\big)^2  }{u^2} \bigg)^l\bigg]  \ud u,
    \end{align*}
    we notice that for a multi-index $I\in \{1,\ldots,D\}^2$, $\text{Tensor-Poly}_I^{k,j}$ and $\cP^{k,j}_I(v_k(x,t))$ have the same degree with respect to $v_k(x,t)$. The bound for the coefficients of $\text{Tensor-Poly}_I^{k,j}$, denoted as $\coef(\text{Tensor-Poly}_I^{k,j})$, can be controlled by 
    \begin{equation*}
        \coef(\cP^{k,j}_I) \left(\frac{\vradius}{\sqrt{h_t}/\alpha_t} \right)^{j + 2(\dgM-1)}.
    \end{equation*}
    Likewise, the Lipschitz constant of $\text{Tensor-Poly}_I^{k,j}$, denoted as $\lip(\text{Tensor-Poly}_I^{k,j})$, can be bounded by 
    \begin{equation*}
        \lip(\cP_I^{k,j}) \left(\frac{\vradius}{\sqrt{h_t}/\alpha_t} \right)^{j + 2(\dgM-1)}.
    \end{equation*}
    By Lemma \ref{lemma:prop-Dk-hat} and \ref{lemma:prop-P-hat}, and the calculation of polynomial size in Section \ref{sec:poly-size}, $\cP_I^{k,j}$ has degree at most $(\holder+1)(j + \dgM +1)$ with respect to $v_k(x,t)$, and there exists  a constant $C_\text{poly}>0$, which only depends on $d$, $r$, $\holder$, $C_F$ and the smoothness of exponential maps $\expp_k$, such that 
    \begin{equation*}
        \coef(\cP^{k,j}_I) \leq C_\text{poly}(\vradius)^d,
    \end{equation*}
    and $\cP^{k,j}_I$ is approximately Lipschitz with constant
    \begin{align*}
        \lip(\cP^{k,j}_I) \leq C_\text{poly}(\vradius)^d.
    \end{align*}
    Therefore, $\text{Tensor-Poly}_I^{k,j}$ has degree at most $(\holder+1)(j+\dgM+1)$ and its coefficients and Lipschitz constant can be respectively bounded by
    \begin{align*}
        C_\text{poly} \left( \frac{h_t}{\alpha_t^2}\right)^{d/2} \left({4 L_\logg^2 (\log(1/\epsilon_1)+d\log(1/h_t)/2)} \right)^{(j+d)/2+\dgM-1}, 
    \end{align*}
    where we plug in $\vradius=2 L_\logg \sqrt{(h_t/\alpha_t^2)(\log(1/\epsilon_1)+d\log(1/h_t)/2)}$. 
\end{proof}

\begin{lemma}[Network Implementation for Tensor Product]\label{lemma:nn-prod}
    Let $C\geq 1 $. Given any $j \in \{1,\ldots,\degree-1\}$, and $\epsilon_{\times}\in (0,1)$, there exists a  ReLU feedforward networks $\nn_{\times}^j$, with $\cO(\log(1/\epsilon_{\times} ) + \log C
    )$ layers, width $\cO(D^j)$, $\cO(D^j(\log(1/\epsilon_{\times} )+ \log C))$ non-zero neurons, and weight parameters upper bounded by $C^{2}$ such that
    \begin{align*}
        \left| \nn_{\times}^j(\cT_1,\cT_2) - \left\langle v_1^{\otimes j}, v_2^{\otimes j} \right\rangle \right| \leq D^j \epsilon_{\times} + 2 C D^j\epsilon_{\text{error}},
    \end{align*}
    which holds for all $v_1,v_2 \in [-C,C]^D$, and $j$-th order width-$D$ tensors $\cT_1, \cT_2$ satisfying $\|\cT_i  - v_i\|_\infty \leq \epsilon_{\text{error}}$, $i=1,2$.
\end{lemma}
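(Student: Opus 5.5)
The plan is to reduce the tensor inner product to a sum of $D^j$ scalar products of $j$ factors each, and implement each one with the standard ReLU multiplication network of Yarotsky (as packaged in Lemmas F.6--F.7 of \cite{oko2023diffusionmodelsminimaxoptimal}). Concretely,
\begin{align*}
\langle v_1^{\otimes j}, v_2^{\otimes j}\rangle = \sum_{I \in [D]^j} [v_1^{\otimes j}]_I\,[v_2^{\otimes j}]_I .
\end{align*}
So it suffices to build one two-input multiplication network $\phi_\times$ on $[-C^j, C^j]^2$, or after rescaling on $[-1,1]^2$. We then place $D^j$ copies of it in parallel, one per multi-index $I$, fed with the entries $(\cT_1)_I$ and $(\cT_2)_I$, and finish with a final linear layer with all weights equal to one that sums the outputs. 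Parallelization adds widths and sparsities, so width $\cO(D^j)$ and $\cO(D^j(\log(1/\epsilon_\times)+\log C))$ nonzero weights follow from the $\cO(\log(1/\epsilon_\times)+\log C)$ depth and size of a single $\phi_\times$. The weight bound $C^2$ comes from the rescaling constants at the input and output layers.

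For the scalar network, the plan is to use the identity $ab = \frac14((a+b)^2 - (a-b)^2)$. The square function is approximated on a bounded interval by the sawtooth construction to accuracy $\epsilon_\times$, with depth logarithmic in the accuracy times the range. This yields $|\phi_\times(a,b) - ab| \le \epsilon_\times$ for $|a|,|b|$ bounded by the relevant range. Summing over the $D^j$ multi-indices gives the $D^j\epsilon_\times$ term.

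The input perturbation is handled by the triangle inequality. For each $I$,
\begin{align*}
|(\cT_1)_I(\cT_2)_I - [v_1^{\otimes j}]_I[v_2^{\otimes j}]_I| \le |(\cT_1)_I - [v_1^{\otimes j}]_I|\,|(\cT_2)_I| + |[v_1^{\otimes j}]_I|\,|(\cT_2)_I - [v_2^{\otimes j}]_I| .
\end{align*}
The hypothesis $\|\cT_i - v_i\|_\infty\le\epsilon_{\rm error}$ is read as entrywise closeness to the target tensor. Under that reading, each term is at most $C\epsilon_{\rm error}$ once the entries are normalized to be bounded by $C$. Summing over $D^j$ entries gives $2CD^j\epsilon_{\rm error}$.

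The main obstacle is bookkeeping of magnitudes. The entries of $v^{\otimes j}$ can be as large as $C^j$, not $C$, so the stated bounds require either an implicit normalization ($C$ bounding the entries of the tensors rather than of $v_i$) or absorbing $C^j$ into the constants. I would first try normalizing the inputs of $\phi_\times$ by the entry bound, which keeps weights at most $C^2$ and depth $\cO(\log C)$. I would also clip the network inputs so that the perturbed tensors stay in range.
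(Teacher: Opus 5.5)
Your proposal is correct and follows essentially the same route as the paper: it writes $\langle v_1^{\otimes j}, v_2^{\otimes j}\rangle$ as a sum of $D^j$ entrywise products, applies the two-input multiplication network of Lemma F.6 in \cite{oko2023diffusionmodelsminimaxoptimal} (which already contains the $\epsilon_\times + 2C\epsilon_{\rm error}$ perturbation bound that you derive by the triangle inequality) to each pair of entries of $\mathcal{T}_1,\mathcal{T}_2$, and sums the $D^j$ parallel copies. Your concern that the entries of $v_i^{\otimes j}$ can be as large as $C^j$ rather than $C$ is well founded, and the paper's proof resolves it the same way you suggest, by implicitly taking $C$ to bound the tensor entries (reading the hypothesis as entrywise closeness of $\mathcal{T}_i$ to $v_i^{\otimes j}$).
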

\begin{proof}[Proof of Lemma \ref{lemma:nn-prod}]
    Recall the definition of the product of $j$-th order tensors, we have
    \begin{align*}
        \left\langle v_1^{\otimes j}, v_2^{\otimes j} \right\rangle = \left\langle v_1, v_2 \right\rangle^j = \sum_{I_1,\ldots,I_j =1}^D v_{1,I_1}v_{2,I_1}\cdots v_{1,I_j}v_{2,I_j} = \sum_{I=1}^{D^j} v_{1,I}^{\otimes j}v_{2,I}^{\otimes j}.
    \end{align*}
    By Lemma F.6 in \cite{oko2023diffusionmodelsminimaxoptimal}, there exists a ReLU feedforward network $\mult$ with $\cO(\log(1/\epsilon_{\times} ) + \log C
    )$ layers, width at most $96$, $\cO(\log(1/\epsilon_{\times} )+ \log C)$ non-zero neurons, and weight parameters upper bounded by $C^{2}$, such that
    \begin{align*}
        \left| \mult(\cT_{1,I},\cT_{2,I}) -  v_{1,I}^{\otimes j}v_{2,I}^{\otimes j} \right| \leq \epsilon_{\times}  + 2 C\epsilon_{\text{error}},
    \end{align*}
    which holds for any $I \in \{ 1,\ldots,D^j\}$. Therefore, taking
    \begin{align*}
        \nn_{\times}^j ( \cT_1, \cT_2) = \sum_{I=1}^{D^j} \mult(\cT_{1,I},\cT_{2,I}),
    \end{align*}
    we have \begin{align*}
         \left| \nn_{\times}^j(\cT_1,\cT_2) - \left\langle v_1^{\otimes j}, v_2^{\otimes j} \right\rangle \right| \leq \sum_{I=1}^{D^j} \left| \mult(\cT_{1,I},\cT_{2,I}) -  v_{1,I}^{\otimes j}v_{2,I}^{\otimes j} \right| \leq D^j \epsilon_{\times}  + 2 C D^j \epsilon_{\text{error}}.
    \end{align*}
    By Lemma F.1, F.2 and F.3 in \cite{oko2023diffusionmodelsminimaxoptimal}, $\nn_{\times}^j$ can be exactly implemented by a ReLU feedforward network with $\cO(\log(1/\epsilon_{\times} ) + \log C
    )$ layers, width $\cO(D^j)$, $\cO(D^j(\log(1/\epsilon_{\times} )+ \log C))$ non-zero neurons, and weight parameters upper bounded by $C^{2}$.
\end{proof}

\begin{lemma}[Network Implementation for Projection Term]\label{lemma:nn-proj}
  Given any $j \in \{0,\ldots,\degree-1\}$, and $\epsilon_{\rm proj}\in (0,1)$, there exists a tensor consisting of ReLU feedforward networks, $\nn_{\rm proj}^j = \{\nn^j_{\text{proj},I}\}_{I=1,2,\ldots,D^j} $, such that for any  time $t \in [t_0, T]$,
  \begin{align*}
      \sup_{ x \in \alpha_t \cK(\cM, \reach)}\left| \nn^j_{\text{proj},I}(x,h_t,\alpha_t) - \frac{[x - \alpha_t \xstar]_I^{\otimes j}}{h_t^{j/2}} \right| \leq h_{t}^{d/2}\epsilon_{\rm proj}.
  \end{align*}
  Here each network $\nn^j_{\text{proj},I}(x,h_t,\alpha_t) \in \cF(L,W,S,B,\cdot)$, where $L = c(\log^3(1/(h_{t_0}^{j/2}\epsilon_{\rm proj})))$, $W=S= c(\epsilon_{\rm proj})^{-d/\holder} \log(1/(h_{t_0}^{j/2}\epsilon_{\rm proj}))  + \log^4(1/h_{t_0})$, $B = h_{t_0}^{-j}$, and the constant $c$ depends on $D, d,\holder$, $B$ and the smoothness of projection maps.
\end{lemma}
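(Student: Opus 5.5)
Each entry of the tensor is a product of $j$ scalar functions divided by $h_t^{j/2}$, so the plan has three steps. First approximate the scalar map $x\mapsto [x-\projm(x,t)]_i/\sqrt{h_t}$ for each fixed coordinate $i$. Then multiply $j$ such approximations with a multiplication network. Finally handle the dependence on $t$ through the extra inputs $h_t,\alpha_t$.

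\textbf{Step 1: the scalar coordinates.} By Proposition \ref{prop:proj-smooth} and \cite{leobacher2020existence}, $x\mapsto \projm(x,t)=\alpha_t\projm(x/\alpha_t,0)$ is $C^\infty$ on $\alpha_t\cK(\cM,\reach)$. It has the form $\alpha_t\,\pi(x/\alpha_t)$ with $\pi$ the time-independent projection onto $\cM$. Hence $g_i(x,t)=[x-\alpha_t\pi(x/\alpha_t)]_i$ is smooth in $(x,\alpha_t)$, with derivatives bounded in terms of $\reach$ and $B_\cM$.

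The projection is a function of $D$ variables. Its low-dimensional structure comes from the fact that, restricted to each fiber, it is determined by the $d$-dimensional coordinate $v_k$ of Lemma \ref{lemma:encoder-approx} together with normal directions. I therefore approximate $\pi$ chartwise: compose the encoder $\nn_{v_k}$ with a network for $\expp_k$, a $C^\infty$ map on a $d$-dimensional ball. By \cite[Theorem 1]{chen2022nonparametric}, that network has size $\tilde\cO(\epsilon^{-d/\holder})$. The charts are glued with the chart-determination network $\nn_{\rm det}$ of Lemma \ref{lemma:nn-chart} and a partition of unity.

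This yields $\hat g_i$ with $|\hat g_i-g_i|\le\epsilon'$ uniformly, using depth $\cO(\log(1/\epsilon'))$ and size $\cO(\epsilon'^{-d/\holder}\log(1/\epsilon'))$. The dependence on $\alpha_t$ enters only through the rescaling $x/\alpha_t$ and the outer factor $\alpha_t$. Both are handled by multiplication/reciprocal networks (Lemmas F.6--F.7 of \cite{oko2023diffusionmodelsminimaxoptimal}) at polylog cost.

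\textbf{Step 2: dividing by $\sqrt{h_t}$.} I implement $1/\sqrt{h_t}$ on $[h_{t_0},1]$ with the reciprocal and square-root networks of \cite{oko2023diffusionmodelsminimaxoptimal}. Their accuracy is $\epsilon''$, their size is $\cO(\log^4(1/h_{t_0}))$, and their weights are bounded by $h_{t_0}^{-1/2}$. This is the source of the $\log^4(1/h_{t_0})$ term in $W,S$.

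On the truncated region $\cK_t$, $|g_i|/\sqrt{h_t}\lesssim\sqrt{D\log(1/\epsilon)}$. On all of $\alpha_t\cK(\cM,\reach)$, however, it is only bounded by $\reach/\sqrt{h_{t_0}}$. So the inputs to the multiplications have magnitude $C\lesssim h_{t_0}^{-1/2}$.

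\textbf{Step 3: the tensor entries.} For each multi-index $I=(i_1,\dots,i_j)$, I form the product $\prod_{m}\hat g_{i_m}/\sqrt{h_t}$ by a binary tree of $\mult$ networks, following Lemma \ref{lemma:nn-prod}. Each multiplication costs $\cO(\log(C/\epsilon_\times))$ layers and has weights bounded by $C^2$. Over $j$ factors this gives weights up to $h_{t_0}^{-j}$, matching $B=h_{t_0}^{-j}$.

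\textbf{Error budget.} Error propagates through the product as $j\,C^{j-1}(\epsilon'/\sqrt{h_t}+\epsilon''+\epsilon_\times)$. To reach $h_t^{d/2}\epsilon_{\rm proj}\ge h_{t_0}^{d/2}\epsilon_{\rm proj}$, I choose
\[
\epsilon',\ \epsilon'',\ \epsilon_\times \;\asymp\; h_{t_0}^{(j+d)/2}\,\epsilon_{\rm proj}.
\]
Two consequences follow. The depth becomes polylogarithmic in $1/(h_{t_0}^{j/2}\epsilon_{\rm proj})$; the cube in $L$ comes from composing a $\log^2$-deep reciprocal with a $\log$-deep product tree and encoder. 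The width picks up a factor $\epsilon'^{-d/\holder}$. Since $h_{t_0}^{(j+d)/2}$ is polynomial in $1/n$, I absorb it into the constant $c$ and the $\log$ factors, so that $W=S=c\,\epsilon_{\rm proj}^{-d/\holder}\log(\cdot)+\log^4(1/h_{t_0})$.

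\textbf{Main obstacle.} The hardest step is Step 1 with intrinsic-dimension size. The projection is smooth in $D$ variables, and a direct H\"older approximation would cost $\epsilon^{-D/\holder}$. The chart-encoder factorization avoids this. Its catch is that the encoder $v_k=\logg_k\circ\projm$ itself needs the projection, so Lemma \ref{lemma:encoder-approx} must be used as a black box with its stated $\epsilon^{-d/\holder}$ size. The residual normal part must then be recovered linearly as $x-\alpha_t\hat\expp_k(v_k)$, which costs no extra dimension dependence.
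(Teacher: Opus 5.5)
\paragraph{The gap: network size.} The problem is in the size bound, and it is not cosmetic. Your Step~1 inherits the fixed rate of Lemma~\ref{lemma:encoder-approx}: accuracy $\epsilon'$ costs size $(\epsilon')^{-d/\beta}\log(1/\epsilon')$. Your error budget then forces $\epsilon'\asymp h_{t_0}^{(j+d)/2}\epsilon_{\rm proj}$.

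This requirement is not an artifact of your crude bound $C\lesssim h_{t_0}^{-1/2}$. Even on the thin tube, an error $\epsilon'$ in $[x-\alpha_t\xstar]_i$ is divided by $\sqrt{h_t}$. It must end up below $h_t^{d/2}\epsilon_{\rm proj}$ uniformly down to $t=t_0$, so $\epsilon'$ must carry a factor of order $h_{t_0}^{(d+1)/2}$. Hence your width and sparsity are of order $h_{t_0}^{-(j+d)d/(2\beta)}\,\epsilon_{\rm proj}^{-d/\beta}\log(\cdot)$.

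Since $h_{t_0}$ is polynomial in $1/n$, this prefactor is a positive power of $n$. It cannot be absorbed into $c$, which may depend only on $D,d,\beta,B$ and the smoothness of the projection. Nor can it be absorbed into logarithmic factors. ``Polynomial in $1/n$'' is exactly the case where absorption is not allowed.

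The statement has only polylogarithmic dependence on $h_{t_0}$, and this matters downstream. In the proof of Theorem~\ref{thm:generalization}, the term $SL/n$ would inherit this power of $n$, and the rate $n^{-2\beta/(d+2\beta)}$ would be lost.

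\paragraph{The missing idea.} The paper's proof rests on one further device. Each entry $[x-\alpha_t\xstar]_I^{\otimes j}$ is $C^\infty$ (Proposition~\ref{prop:proj-smooth}). The paper applies Lemma~2 of \cite{chen2022distribution} directly to this $D$-variable function at accuracy $h_t^{(j+d)/2}\epsilon_{\rm proj}$. It uses an adaptively chosen smoothness order $\tilde\beta=D\beta\log(h_t^{(j+d)/2}\epsilon_{\rm proj})/(d\log\epsilon_{\rm proj})$. With this choice the size $(h_t^{(j+d)/2}\epsilon_{\rm proj})^{-D/\tilde\beta}$ equals exactly $\epsilon_{\rm proj}^{-d/\beta}$. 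The division by $h_t^{j/2}$ is then done with Lemmas F.6--F.8 of \cite{oko2023diffusionmodelsminimaxoptimal}, which accounts for the $\log^4(1/h_{t_0})$ and $B=h_{t_0}^{-j}$ terms.

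Your chart--encoder factorization does not actually sidestep the ambient dimension. Lemma~\ref{lemma:encoder-approx} is itself proved by a direct $D$-dimensional approximation of order $(D+1)\beta/d$. That is the same smoothness-boosting device, but with a fixed order, which is exactly why it cannot absorb an $h_{t_0}$-dependent accuracy when used as a black box. Repairing your route means re-proving the encoder bound with an $h_t$-dependent order. At that point the detour through $\expp_k$, the partition of unity and the chart gating is superfluous.

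\paragraph{A secondary mismatch: the domain.} Lemma~\ref{lemma:encoder-approx} is only available on the thin tube $\cK(\alpha_t\cM,2\sqrt{Dh_t\log(1/\delta)})$ and for small $h_t$. The statement, by contrast, covers all of $\alpha_t\cK(\cM,\reach)$ and every $t\in[t_0,T]$. On that larger region a fixed-radius gate $\nn_{\rm det}(x-x_k)$ cannot do both jobs at once: capture every chart relevant to $\xstar$, and exclude charts on which $\logg_k(\xstar)$ is undefined. The reason is that $\|x-\alpha_t\xstar\|$ may be comparable to $\reach$. The paper only invokes the lemma on $\cK_t(\epsilon)$ with $t\le t_{\rm small}$, so this second point is the less serious one.
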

\begin{proof}[Proof of Lemma \ref{lemma:nn-proj}]
    Fix $I \in \{1,2,\ldots,D^j\}$. By Proposition \ref{prop:proj-smooth}, $ [x - \alpha_t \xstar]_I^{\otimes j}$  is $C^\infty$. Then we apply Lemma 2 in \cite{chen2022distribution} with approximation error $ h_t^{j/2} \epsilon_{\text{poly}}$ and approximation degree $$\tilde{\beta} =\frac{D\holder\log(h_t^{(j+d)/2} \epsilon_{\text{poly}})}{d \log(\epsilon_{\text{poly}})}.$$ This gives that there exists a ReLU feedforward network, $\tilde{\nn}_{\text{proj},I}^j$ such that
  \begin{align*}
      \left| \tilde{\nn}^j_{\text{proj},I}(x,h_t,\alpha_t) - [x - \alpha_t \xstar]_I^{\otimes j} \right| \leq  h_t^{(j+d)/2} \epsilon_{\text{poly}},
  \end{align*}
which holds for any $ x \in \alpha_t\cK(\cM,  \reach)$ and $\alpha_t\in [0,1]$. Here each network $\tilde{\nn}^j_{\text{proj},I}$ has no more than $c(\log(1/(h_t^{(j+d)/2}\epsilon_{\rm proj})) + 1)$ layers, and at most $c'(h_t^{(j+d)/2}\epsilon_{\rm proj})^{-D/\tilde{\beta}} (\log(1/(h_t^{(j+d)/2}\epsilon_{\rm proj})) + 1)$ neurons and weight parameters, where the constant $c$  depends on $D, d,\holder$, $B$ and the smoothness of projection maps. Note that given the choice of $\tilde{\beta}$, we have
\begin{align*}
    \left(h_t^{(j+d)/2}\epsilon_{\rm proj}\right)^{-\frac{D}{\tilde{\beta}}} =     \left(h_t^{(j+d)/2}\epsilon_{\rm proj}\right)^{-\frac{d \log\left(\epsilon_{\text{poly}}\right)}{\holder \log\left(h_t^{(j+d)/2} \epsilon_{\text{poly}}\right)}   }  = \epsilon_{\rm proj}^{-\frac{d}{\holder}   }.
\end{align*}
Then by Lemma F.6-F.8 in \cite{oko2023diffusionmodelsminimaxoptimal}, there exists networks $\nn^j_{\text{proj},I} \in \cF(L,W,S,B,\cdot)$ such that
\begin{align*}
      \left| \nn^j_{\text{proj},I}(x,h_t,\alpha_t) - \frac{[x - \alpha_t \xstar]_I^{\otimes j}}{h_t^{j/2}} \right| \leq h_t^{d/2}\epsilon_{\rm proj},
  \end{align*}
where the network configuration is  $L= c(\log^3(1/(h_{t_0}^{j/2}\epsilon_{\rm proj})))$, $W=S= c(\epsilon_{\rm proj})^{-d/\holder} \log(1/(h_{t_0}\epsilon_{\rm proj}))  + \log^4(1/h_{t_0})$, and $B = h_{t_0}^{-j}$.
\end{proof}

\begin{lemma}[NN Implementation for Chart Determination] \label{lemma:nn-chart}
Given $x\in \RR^D$ where $\|x\|_\infty \leq B_{\rm trunc}$, and functions $\{g_k:{\RR^{D+2} \to \RR^m}\}_{k=1,\ldots,C_\cM}$, for any $\epsilon_{\rm{det}} \in (0,1)$, we can construct a network $\nn_{\rm{det}}$ 
such that for any  $k \in [C_\cM]$,
\begin{align*}
     \nn_{\rm{det}}(x - x_k) =
\begin{cases}
1 & \text{if}~\norm{x - x_k}^2 \leq (1-\epsilon_{\rm det})\thres^2 \\
0 & \text{if}~\norm{x - x_k}^2 \geq \thres^2,
\end{cases}
\end{align*}
and
\begin{align*}
    \left\|   \sum_{k=1}^{C_\cM} \nn_{\rm{det}}(x-x_k) g_k(x,h_t,\alpha_t) - \sum_{k:\|x_k-x\| \leq \thres} g_k(x,h_t,\alpha_t) \right\| \leq \left\| \sum_{k:  (1-\epsilon_{\rm{det}})\thres<\|x_k-x\| < \thres} g_k(x,h_t,\alpha_t)   \right\| . 
\end{align*}
Here $\nn_{\rm{det}}$ is a network with number of layers, number of neurons and weight parameters bounded by $c\log(D(B+B_{\rm trunc})^2/(\epsilon_{\rm det} \thres^2))$, and width bounded by $cD$, where  $c>0$ is some absolute constant.
\end{lemma}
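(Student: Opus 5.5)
The plan is to build $\nn_{\rm det}$ as a composition $\hat{\mathds{1}}\circ\hat d$. Here $\hat d$ approximates the squared distance $\norm{x-x_k}^2$, and $\hat{\mathds{1}}$ is a piecewise-linear ReLU "soft step" that equals $1$ below $(1-\epsilon_{\rm det}/2)\thres^2$ and $0$ above $(1-\epsilon_{\rm det}/4)\thres^2$, or any two thresholds strictly inside the gap $\big((1-\epsilon_{\rm det})\thres^2,\thres^2\big)$. The second claim then follows from the first by a pointwise case split over $k$.

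\textbf{Step 1: squared distance.} Write $u=x-x_k$. Its entries satisfy $|u_i|\le B_{\rm trunc}+B_\cM\le B+B_{\rm trunc}$ by Assumption~\ref{assump:manifold}. Each $u_i$ is formed by one affine layer, with the bias $-x_k$ hard-coded. Square each coordinate with the Yarotsky-type squaring network (Lemma F.6 in \cite{oko2023diffusionmodelsminimaxoptimal}, with both inputs equal to $u_i$) to accuracy $\epsilon'$ on $[-(B+B_{\rm trunc}),B+B_{\rm trunc}]$. Then sum the $D$ outputs in a final linear layer. This gives $|\hat d(u)-\norm{u}^2|\le D\epsilon'$, with depth $\cO(\log((B+B_{\rm trunc})^2/\epsilon'))$ and width $\cO(D)$. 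Choose $\epsilon'=\epsilon_{\rm det}\thres^2/(8D)$. The total error is then at most $\epsilon_{\rm det}\thres^2/8$, and the depth is $\cO(\log(D(B+B_{\rm trunc})^2/(\epsilon_{\rm det}\thres^2)))$, matching the claimed size.

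\textbf{Step 2: step function.} Set $a=(1-\tfrac{7}{8}\epsilon_{\rm det})\thres^2$ and $b=(1-\tfrac18\epsilon_{\rm det})\thres^2$, and define
\[
\hat{\mathds{1}}(s)=\frac{1}{b-a}\big(\relu(b-s)-\relu(a-s)\big).
\]
This is exactly $1$ for $s\le a$, exactly $0$ for $s\ge b$, and has two neurons.

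Now check the two regimes. If $\norm{u}^2\le(1-\epsilon_{\rm det})\thres^2$, then $\hat d\le(1-\epsilon_{\rm det})\thres^2+\epsilon_{\rm det}\thres^2/8\le a$, so the output is $1$. If $\norm{u}^2\ge\thres^2$, then $\hat d\ge\thres^2-\epsilon_{\rm det}\thres^2/8\ge b$, so the output is $0$.

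The weight $1/(b-a)=\cO(1/(\epsilon_{\rm det}\thres^2))$ may exceed the stated bound. In that case I would instead split it into a chain of $\cO(\log(1/(\epsilon_{\rm det}\thres^2)))$ layers, each multiplying by $2$. This keeps the weights $\cO(1)$ and adds only logarithmic depth, consistent with the stated bounds. Clipping the output to $[0,1]$ with two more ReLUs guarantees $\nn_{\rm det}\in[0,1]$ everywhere.

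\textbf{Step 3: aggregation bound.} Split the indices $k$ into three groups.
\begin{itemize}
\item If $\norm{x-x_k}^2\le(1-\epsilon_{\rm det})\thres^2$, the network weight is $1$, and $k$ also appears in the comparison sum.
\item If $\norm{x-x_k}\ge\thres$, the weight is $0$. Such $k$ are absent from the comparison sum, except possibly on the measure-zero boundary, where I would take the sum with strict inequality or note that the weight is $0$ there.
\item Only indices in the annulus $(1-\epsilon_{\rm det})\thres<\norm{x-x_k}<\thres$ can give a mismatch. (Read the radii in the statement as squared; this is harmless after rescaling.)
\end{itemize}
Hence the difference equals $\sum_{k\in\text{annulus}}(\nn_{\rm det}(x-x_k)-1)g_k$.

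Bounding this by the norm of $\sum_{\text{annulus}}g_k$ requires care. For the literal statement I would either take nonnegative $g_k$, which is the case in every application ($g_k$ being integrals of positive densities), or relax the right-hand side to $\sum_{\text{annulus}}\norm{g_k}$. The main obstacle is exactly this sign issue, together with the weight-magnitude bookkeeping in Step 2. The rest is routine composition via Lemmas F.1–F.3 of \cite{oko2023diffusionmodelsminimaxoptimal}.
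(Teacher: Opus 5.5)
Your proposal is correct and follows the paper's own construction: $\nn_{\rm det}=\hat{\mathds{1}}\circ\hat d$ built from rescaled coordinatewise squaring networks and a ReLU ramp, followed by a case split over $k$. Your error budget $D\epsilon'=\epsilon_{\rm det}\thres^2/8$ is in fact more careful than the paper's: its choice $\epsilon_{\sf sq}=\epsilon_{\rm det}\thres^2/(2D(B+B_{\rm trunc})^2)$ gives $\tilde r=(1-\epsilon_{\rm det}/2)\thres^2$, which makes the required condition $(1-2^{-l})\tilde r\geq(1-\epsilon_{\rm det}/2)\thres^2$ unsatisfiable, so slack of the kind you build in is needed.

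Your Step~3 caveats are genuine. The paper's final inequality silently uses $g_k\geq 0$ and drops indices $k$ with $\norm{x-x_k}=\thres$. However, $g_k$ is not nonnegative in every application: the per-chart terms of the vector-valued $s_4$ and the Tensor-Poly entries in the small-noise network are signed. So the relaxed bound by $\sum_{\text{annulus}}\norm{g_k}$ is the version actually needed. Finally, like the paper's, your network has $\cO(D\log(\cdot))$ nonzero parameters, not the $\cO(\log(\cdot))$ claimed in the statement.
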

\begin{proof}[Proof of Lemma \ref{lemma:nn-chart}]
We start with implementing the squared Euclidean distance. By Lemma 1 in \cite{chen2022nonparametric}, there exists a network $\hat{\sf sq}$ that approximates the square function ${\sf sq}(x)=x^2$, $x\in[-1,1]$, with an arbitrary error $\epsilon_{\sf sq} >0$. The network c has a depth and the number of neurons and weight parameters no more than $c\log (1/\epsilon_{\sf sq})$ for an absolute constant $c$, and the width of $\hat{\sf sq}$ is an absolute constant.

Now we utilize $\hat{\sf sq}$ to construct a network $\hat{d}$ implementing $\norm{x - x_k}^2$. Given $x\in \RR^D$ where $\|x\|_\infty \leq B_{\rm trunc}$, we can write the distance as $\norm{x - x_k}^2 = \sum_{i=1}^D (B+B_{\rm trunc})^2 \left(\frac{[x - x_k]_i}{B + B_{\rm trunc}}\right)^2$. Then we define 
\begin{align*}
\hat{d}(x - x_k) = \sum_{i=1}^D (B + B_{\rm trunc})^2 ~ \hat{\sf sq}\left(\frac{[x - x_k]_i}{B + B_{\rm trunc}}\right).
\end{align*}
Examining the summation in $\hat{d}$, we note that $\hat{d}$ achieves the following  approximation error
\begin{align*}
    \left|\hat{d}(x - x_k) - \norm{x - x_k}^2 \right|\leq D (B + B_{\rm trunc})^2 \epsilon_{\sf sq}.
\end{align*}

Next, to implement $\hat{\mathds{1}}$, we consider a basic step function $F(a) = 2\relu(a-0.5\tilde{r}) - 2\relu(a-\tilde{r})$ for some $\tilde{r} >0$, which can be directly formulated as a feedforward layer. It is straightforward to check
\begin{align*}
    F_l(a) = \underbrace{F\circ \cdots \circ F}_{l} (a) = 
\begin{cases}
0 & \text{if}~a < (1-2^{-l})\tilde{r}, \\
2^l(a-\tilde{r}) + \tilde{r}   & \text{if}~ (1-2^{-l})\tilde{r} \leq a \leq \tilde{r} \\
1 & \text{if}~a > \tilde{r}
\end{cases}.
\end{align*}
Let $\tilde{r} = (\thres)^2 - D(B + B_{\rm trunc})^2 \epsilon_{\sf sq} $ and  $\epsilon_{\sf sq} = \epsilon_{\rm det} \thres^2/(2D(B+B_{\rm trunc})^2)$. We choose $l$ such that $(1-2^{-l})\tilde{r} \geq (1-\epsilon_{\rm det}) (\thres)^2 + D(B + B_{\rm trunc})^2 \epsilon_{\sf sq} $, which yields $l = \lceil \log(1/\epsilon_{\rm det}) \rceil$. Now take
$\hat{\mathds{1}}_{\epsilon_{\rm det}} = 1- F_l(a)/\tilde{r}$. Then the composition of $\hat{\mathds{1}}_{\epsilon_{\rm det}} \circ \hat{d}$ verifies
\begin{align*}
\left[\hat{\mathds{1}}_{\epsilon_{\rm det}} \circ \hat{d}\right] (x - x_k) =
\begin{cases}
1 & \text{if}~\norm{x - x_k}^2 \leq (1-\epsilon_{\rm det})\thres^2 \\
0 & \text{if}~\norm{x - x_k}^2 \geq \thres^2.
\end{cases}
\end{align*}
Hereby, we construct $\nn_{\rm{det}}(x-x_k) = \left[\hat{\mathds{1}}_{\epsilon_{\rm det}} \circ \hat{d}\right] (x - x_k)$. The network implementation $\nn_{\rm{det}}$ of the chart determination yields an error
\begin{align*}
    &\quad~ \left\|   \sum_{k=1}^{C_\cM} \nn_{\rm{det}}(x-x_k) g_k(x,h_t,\alpha_t) - \sum_{k:\|x_k-x\| \leq \thres} g_k(x,h_t,\alpha_t) \right\| \\
    &= \left\|   \sum_{k=1}^{C_\cM} \left(\nn_{\rm{det}}(x-x_k) - 1\right) \mathds{1}\left( (1-\epsilon_{\rm det})\thres^2 <\norm{x - x_k}^2 < \thres^2\right) g_k(x,h_t,\alpha_t) \right\| \\
    &\leq \left\| \sum_{k:  (1-\epsilon_{\rm det})\thres<\|x_k-x\| < \thres} g_k(x,h_t,\alpha_t)   \right\| . 
\end{align*}
Based on our construction, $\nn_{\rm{det}}(x-x_k)$ has number of layers, width, and number of neurons and weight parameters bounded by $c\log(D(B+B_{\rm trunc})^2/(\epsilon_{\rm det} \thres^2))$ for some absolute constant $c>0$.
\end{proof}

\begin{lemma}[Network Implementation for Polynomials]\label{lemma:nn-poly}
  Given any $k \in \{1,2,\ldots,C_\cM\}$, $j \in \{0,\ldots,\degree-1\}$, and $\epsilon_{\rm {poly}}\in (0,1)$, there exists a network $\nn_{\rm {poly}}^{k,j} = \{\nn_{\rm {poly},I}^{k,j}\}_{I\in\{1,\ldots,D\}^j} $, such that for any $v \in [-C,C]^d$ and time $t \in [t_0,T]$,
  \begin{align*}
      \left| \nn_{\rm {poly},I}^{k,j}(v,h,\alpha) - \text{Tensor-Poly}_I^{k,j}(v,h,\alpha) \right| \leq  h_{t_0}^{ d/2}\epsilon_{\rm {poly}}.
  \end{align*}
Here each network $\nn_{\rm {poly},I}^{k,j}$ has (i) no more than $c_{\rm {poly}}(\log(1/(h_{t_0}\epsilon_{\rm {poly}})) + 1)$ layers with width bounded by $c_{\rm {poly}}(j+\dgM+1)^d$, and (ii) at most $c_{\rm {poly}}(j+\dgM+1)^d(\log(1/(h_{t_0}\epsilon_{\rm {poly}})) + 1)$ neurons and weight parameters bounded by $c_{\rm {poly}}C$, where the constant $c_{\rm {poly}}$ depends on $d,\holder,C_F$ and the smoothness of exponential maps $\expp_k$.
\end{lemma}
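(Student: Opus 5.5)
The statement is Lemma~\ref{lemma:nn-poly}. The plan is to reduce it to the standard ReLU approximation of a polynomial in $d$ variables with bounded degree and bounded coefficients.

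\emph{Explicit form.} By Lemma~\ref{lemma:f6-lipschitz}, each entry $\text{Tensor-Poly}^{k,j}_I(v,h,\alpha)$ is a polynomial in $v\in\RR^d$ of degree at most $(\holder+1)(j+\dgM+1)$. Its coefficients depend on $(h,\alpha)$ through factors of the form $(h/\alpha^2)^{d/2}$ times powers of $\vradius/(\sqrt{h}/\alpha)$, and the latter are only logarithmic in $1/h$. So I would first write
\[
\text{Tensor-Poly}^{k,j}_I=\sum_{|\theta|\le (\holder+1)(j+\dgM+1)} c_\theta(h,\alpha)\,v^\theta .
\]
The number of monomials is $\cO((j+\dgM+1)^d)$, with constants depending on $d$ and $\holder$. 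Each $c_\theta$ is a smooth function of $(h,\alpha)$ on $[h_{t_0},1]\times[e^{-T/2},1]$, bounded by $C_{\rm poly}h^{d/2}\,\mathrm{polylog}(1/h_{t_0})$.

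\emph{Network construction.} Given this form, I would build three kinds of blocks.
\begin{itemize}
\item For each monomial $v^\theta$ with $v\in[-C,C]^d$, use a product network: repeated applications of the $\mult$ network from Lemma F.6 of \cite{oko2023diffusionmodelsminimaxoptimal}, arranged as a binary tree. This has depth $\cO(\log(1/\epsilon')+\log C)$ and constant width.
\item For each coefficient $c_\theta(h,\alpha)$, use a small network. These are explicit univariate-type smooth functions, so Lemmas F.6--F.7 of \cite{oko2023diffusionmodelsminimaxoptimal} give polylog size.
\item Multiply each coefficient block with its monomial block using another $\mult$, then sum over $\theta$ in parallel.
\end{itemize}
The resulting width is $c_{\rm poly}(j+\dgM+1)^d$ and the depth is logarithmic, which matches the configuration in the statement.

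\emph{Error budget.} The total error is at most (number of monomials) $\times$ (per-term error). I would choose the per-term accuracy $\epsilon'=h_{t_0}^{d/2}\epsilon_{\rm poly}/(c(j+\dgM+1)^d)$. Since $h\ge h_{t_0}$, this gives the claimed $h_{t_0}^{d/2}\epsilon_{\rm poly}$ uniformly in $t\in[t_0,T]$. Because $\log(1/\epsilon')=\cO(\log(1/(h_{t_0}\epsilon_{\rm poly})))$ up to additive logs of $\dgM$, the depth bound follows.

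\emph{Main obstacle.} The delicate step is controlling the magnitudes of the intermediate quantities. Both the monomials $v^\theta$ and the coefficients $c_\theta$ must stay within a range where the $\mult$ error bounds hold, and the weights must stay bounded by $c_{\rm poly}C$. Degrees up to $\cO(\log(1/(h_{t_0}\epsilon)))$ threaten blow-up when $C>1$. I would handle this by rescaling $v\mapsto v/C$ before forming monomials. The relevant $v$ lie in $\dball(0,r)$ with $r<1$, so the true monomials are in fact small and the rescaling is absorbed into $c_\theta$. The coefficient bounds from Lemma~\ref{lemma:f6-lipschitz} then ensure every factor is $\cO(1)$ in magnitude, so the weights stay bounded.
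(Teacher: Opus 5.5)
Your route differs from the paper's. The paper's proof is a one-line reduction: by Lemma~\ref{lemma:f6-lipschitz} each entry of $\text{Tensor-Poly}^{k,j}$ is a polynomial of degree at most $(\holder+1)(j+\dgM+1)$, hence H\"older, and Theorem 3 of \cite{chen2022nonparametric} is applied as a black box. You instead build the network explicitly. You expand into $\cO((j+\dgM+1)^d)$ monomials in $v$, realize the monomials by product blocks and the $(h,\alpha)$-dependent coefficients by separate small networks, then multiply and sum in parallel with an explicit error budget. The citation is short and hides every constant. It does not track that the target also takes $(h,\alpha)$ as inputs, nor how the cited theorem's constants behave for a polynomial of degree of order $\log(1/(h_{t_0}\epsilon))$. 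Your construction shows where the width $(j+\dgM+1)^d$ comes from and forces you to face both issues.

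Your size accounting, however, is only right up to polylogarithmic factors, so the claim that the result ``matches the configuration in the statement'' is not literally justified.

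\emph{Monomial blocks.} A degree-$m$ monomial built as a binary tree of $\mult$ blocks has $\lceil\log_2 m\rceil$ levels and width $\cO(m)$. It does not have constant width or depth $\cO(\log(1/\epsilon'))$. To keep the total width at $\cO(m^d)$ you should share the univariate powers $v_i^p$ across monomials, and even then the depth is $\cO(\log m\cdot\log(1/\epsilon'))$.

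\emph{Coefficient sizes.} The coefficient bound of Lemma~\ref{lemma:f6-lipschitz} is not polylogarithmic. It carries the power $(j+d)/2+\dgM-1$ of a logarithmic factor, with $\dgM\asymp\log(1/(h_{t_0}\epsilon))$. So the coefficients are not $\cO(1)$. The range of your product blocks must grow accordingly, and their weights scale like the square of the range. Your per-term accuracy $\epsilon'$ must also be divided by this magnitude.

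\emph{Rescaling.} The map $v\mapsto v/C$ removes no growth; it only moves $C^{|\theta|}$ into the coefficients. What actually saves you is that downstream $v\in\dball(0,r)$ with $r<1$, i.e.\ $C\le 1$, which is all \eqref{eq:error-nn-poly} needs.

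These corrections cost polylogarithmic factors in depth and width, plus either larger weights or extra rescaling layers. That is harmless for the $\tilde{\cO}$ bounds in Theorem~\ref{thm:approx}.
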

\begin{proof}[Proof of Lemma \ref{lemma:nn-poly}]
    By Lemma \ref{lemma:f6-lipschitz}, $ \text{Tensor-Poly}_I^{k,j}(v,h,\alpha) $ is a polynomial with degree at most $(\holder+1)(j+\dgM+1)$, and thus a H\"older function. 
    Then applying Theorem 3 in \cite{chen2022nonparametric} directly yields the results. 
\end{proof}

\subsection{Helper Lemmas in Section \ref{sec:sM-approx}}

\begin{lemma}\label{lemma:final-approx-error}
    Let $\bar{s}_1(x,t)$ and $\bar{s}_2(x,t)$ satisfy that given $x \in \RR^D$ and $t$ satisfying  \eqref{cond:time},
    \begin{equation*}
        |\bar{s}_1(x,t) - s_1(x,t)| \leq \epsilon_1, ~~\text{and}~~ \|\bar{s}_2(x,t) - s_2(x,t)\| \leq \epsilon_2
    \end{equation*}
    for some sufficiently small $\epsilon_1, \epsilon_2>0$.
    Then we have
    \begin{equation*}
        \left\| s_{\cM}(x,t) -  \frac{\bar{s}_2(x,t)/\sqrt{h_t}}{\bar{s}_1(x,t)} \right\| \leq  \frac{C_0\sqrt{D}\alpha_t^d}{h_t^{(d+1)/2}} \left( \log\frac{1}{h_t} \right)^{\frac{d+1}{2}} \exp \left( \frac{5(d+1) \log(1/h_t) \|x- \alpha_t\xstar \|}{\reach \alpha_t} \right)(\epsilon_1 + \epsilon_2),
    \end{equation*}
     where $C_0 > 0$ is an absolute constant depending on $d, B_{\cM}, C_f$ and $C_G$ defined in Lemma \ref{lemma:s1-lb}.
\end{lemma}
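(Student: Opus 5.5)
The plan is to use the algebraic identity for perturbed ratios. Write $s_\cM = (s_2/\sqrt{h_t})/s_1$ as in \eqref{eq:def-sM}. Then
\begin{align*}
\left\| \frac{s_2}{\sqrt{h_t}\, s_1} - \frac{\bar s_2}{\sqrt{h_t}\,\bar s_1}\right\|
\le \frac{1}{\sqrt{h_t}\,|\bar s_1|}\Big( \|s_2-\bar s_2\| + \frac{\|s_2\|}{|s_1|}\,|s_1-\bar s_1|\Big).
\end{align*}
Two ingredients are needed: a lower bound on $s_1$, which also gives $|\bar s_1|\ge |s_1|/2$ once $\epsilon_1\le |s_1|/2$ (this is what "sufficiently small" refers to), and an upper bound on $\|s_2\|/s_1$. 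With these, the right-hand side is at most $\frac{C}{\sqrt{h_t}\,s_1}(1+\|s_2\|/s_1)(\epsilon_1+\epsilon_2)$.

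For the lower bound on $s_1$, which I take to be the content of Lemma~\ref{lemma:s1-lb}, restrict the integral in \eqref{eq:def-s1} to $x_0\in\cM$ with $\|x_0-\xstar\|\le \sqrt{h_t}/\alpha_t$. On this set $E_1\le h_t$. The cross term is controlled by Lemma~\ref{lemma:cross-term}: $|E_2|\le \frac{4\alpha_t}{\reach}\|x-\alpha_t\xstar\|\,\|x_0-\xstar\|^2$. This alone would give a factor $\exp(-c\|x-\alpha_t\xstar\|/(\reach\alpha_t))$ over a ball of radius $\sqrt{h_t}/\alpha_t$. That ball has volume comparable to $(\sqrt{h_t}/\alpha_t)^d$ by $C_f^{-1}\le p_{\rm data}$ and the bounded volume distortion $G$ of the exponential map, a constant I call $C_G$.

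To obtain the stated $\log(1/h_t)$ in the exponent, I would instead take the radius $\rho=\sqrt{(d+1)h_t\log(1/h_t)}/\alpha_t$. Then $|E_2|/(2h_t)\le \frac{2(d+1)\log(1/h_t)\|x-\alpha_t\xstar\|}{\reach\alpha_t}$, while $E_1/(2h_t)\le (d+1)\log(1/h_t)/2$. This produces a polynomial loss $h_t^{(d+1)/2}$ that must be traded against the volume gain $\rho^d$. In the end I would just use the small ball $\sqrt{h_t}/\alpha_t$ for the volume and absorb the slack into the constant $5(d+1)\log(1/h_t)$, giving
\[
s_1\gtrsim (h_t/\alpha_t^2)^{d/2}\exp\!\big(-\tfrac{5(d+1)\log(1/h_t)\|x-\alpha_t\xstar\|}{\reach\alpha_t}\big).
\]

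For $\|s_2\|/s_1$, note that $s_2/s_1=\EE_{\tilde\pi}[-\alpha_t(\xstar-X_0)/\sqrt{h_t}]$ under the tilted measure with density proportional to the integrand. Split at $\|x_0-\xstar\|\le\Delta_0(t)$ with $\Delta_0(t)\asymp\sqrt{h_t\log(1/h_t)}/\alpha_t$. Inside, the quantity is at most $\sqrt{\log(1/h_t)}$. Outside, Lemma~\ref{lemma:clip-error} with $g_k=\alpha_t\|\xstar-x_0\|/\sqrt{h_t}$ (bounded by $2\sqrt D B_\cM/\sqrt{h_t}$) bounds the tail mass by $h_t^{d/2}\cdot\mathrm{poly}(1/h_t)$. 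Divided by the lower bound on $s_1$, this gives $\|s_2\|/s_1\lesssim \sqrt{D}\,(\log(1/h_t))^{1/2}\exp(\cdots)$, where the $\sqrt D$ comes from the ambient diameter.

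Combining, $\frac{1}{\sqrt{h_t}s_1}\lesssim \alpha_t^d h_t^{-(d+1)/2}\exp(\cdots)$. Multiplying by $(1+\|s_2\|/s_1)$ yields the claimed factor $\sqrt D(\log(1/h_t))^{(d+1)/2}$ and the exponential, after merging the two exponentials into one with constant $5(d+1)$. I expect the main obstacle to be this lower bound on $s_1$, in particular getting the exponential dependence on $\|x-\alpha_t\xstar\|/\reach$ with the right constant, because $E_2$ can have either sign. The fallback is to bound $E_2$ only in absolute value via Lemma~\ref{lemma:cross-term} and accept a larger constant.
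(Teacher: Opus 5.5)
Your proposal is correct and follows the paper's skeleton. The shared pieces are the ratio-perturbation inequality (including $|\bar s_1|\ge s_1/2$, which the paper uses implicitly), the small-ball lower bound on $s_1$ of Lemma~\ref{lemma:s1-lb}, and tail control via Lemma~\ref{lemma:clip-error}. That lower bound bounds $E_2$ in absolute value through Lemma~\ref{lemma:cross-term}, so the sign of $E_2$ is not an obstacle. The detour through the larger radius $\rho$ is unnecessary.

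The one genuine difference is how you handle $s_2$. The paper bounds $\|s_2\|$ in absolute terms (Lemma~\ref{lemma:s2-ub}) and only then divides by the lower bound on $s_1$. It integrates the cross-term bound over the ball of radius $\tilde\Delta=\sqrt{2(d+1)h_t\log(1/h_t)}/\alpha_t$. That step is exactly where the prefactor $(\log(1/h_t))^{(d+1)/2}$ and the $\log(1/h_t)$ in the exponent of the statement come from.

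You instead read $s_2/s_1$ as an average under the tilted (posterior) measure. The near-field part is then at most $\alpha_t\Delta_0(t)/\sqrt{h_t}\lesssim\sqrt{\log(1/h_t)}$, with no control of the weights needed there, and only the tail uses the lower bound on $s_1$. Your route gives a strictly sharper estimate, of order $\sqrt{D}\,\alpha_t^d h_t^{-(d+1)/2}\sqrt{\log(1/h_t)}\,\exp\bigl(4\|x-\alpha_t x^*_t\|/(\tau\alpha_t)\bigr)(\epsilon_1+\epsilon_2)$. This implies the stated bound since $\log(1/h_t)\ge 1$ in the small-noise regime. In particular, the $s_1$ lower bound never has to produce a $\log(1/h_t)$ in the exponent.

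Two pieces of bookkeeping need fixing.

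First, the tail mass must be $O(\sqrt{D}B_{\mathcal{M}}h_t^{d/2})$, not $h_t^{d/2}\,\mathrm{poly}(1/h_t)$. Any factor growing in $1/h_t$ would survive division by $s_1\asymp h_t^{d/2}$ and break the bound. Choose the constant in $\Delta_0(t)$ so that Lemma~\ref{lemma:clip-error} is applied with accuracy parameter $\sqrt{h_t}$. The paper takes $\sqrt{h_t}/\alpha_t$. This cancels the $1/\sqrt{h_t}$ in $g_k$.

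Second, keep the sharp small-ball exponent $2\|x-\alpha_t x^*_t\|/(\tau\alpha_t)$ until the very end. Both $1/\bar s_1$ and the tail part of $\|s_2\|/s_1$ carry one copy of the $s_1$ lower bound. If you relax each copy to $5(d+1)\log(1/h_t)$ first, as written, merging gives $10(d+1)\log(1/h_t)$, which overshoots the statement. The sharp constants merge to $4\le 5(d+1)\log(1/h_t)$.
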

\begin{proof}
    By the definition of $s_\cM(x,t)$ given in \eqref{eq:def-sM}, we have
    \begin{align*}
        \sqrt{h_t}\left\| s_{\cM}(x,t) -  \frac{\bar{s}_2(x,t)/\sqrt{h_t}}{\bar{s}_1(x,t)} \right\| &= \sqrt{h_t} \left\| \frac{s_2(x,t)/\sqrt{h_t}}{s_1(x,t)} -  \frac{\bar{s}_2(x,t)/\sqrt{h_t}}{\bar{s}_1(x,t)} \right\| \\
        &\leq \left\| \frac{s_2(x,t)}{s_1(x,t)} -  \frac{s_2(x,t)}{\bar{s}_1(x,t)} \right\| + \left\| \frac{s_2(x,t)}{\bar{s}_1(x,t)} -  \frac{\bar{s}_2(x,t)}{\bar{s}_1(x,t)}  \right\| \\
        &\leq \frac{\|s_2(x,t)\|\cdot \|\bar{s}_1(x,t)-s_1(x,t)\| }{\|s_1(x,t) \bar{s}_1(x,t)\|} + \frac{\|\bar{s}_2(x,t)-s_2(x,t)\|}{\| \bar{s}_1(x,t)\|}\\
        &\leq \frac{\|s_2(x,t)\|\epsilon_1 }{\|s_1(x,t) \bar{s}_1(x,t)\|} + \frac{\epsilon_2}{\| \bar{s}_1(x,t)\|}.
    \end{align*}
    Applying the lower bound on $s_1(x,t)$ presented in Lemma \ref{lemma:s1-lb} and the upper bound on $s_2(x,t)$ in Lemma \ref{lemma:s2-ub}, we  have
    \begin{align*}
        \left\| s_{\cM}(x,t) -  \frac{\bar{s}_2(x,t)/\sqrt{h_t}}{\bar{s}_1(x,t)} \right\| \leq  \frac{C_0\sqrt{D}\alpha_t^d}{h_t^{(d+1)/2}} \left( \log\frac{1}{h_t} \right)^{\frac{d+1}{2}} \exp \left( \frac{5(d+1) \log(1/h_t) \|x- \alpha_t\xstar \|}{\reach \alpha_t} \right)(\epsilon_1 + \epsilon_2),
    \end{align*}
    where $C_0 > 0$ is an absolute constant depending on $d, B_{\cM}, C_f$ and $C_G$ defined in Lemma \ref{lemma:s1-lb}.
\end{proof}

\begin{lemma}[Lower Bound of $s_1(x,t)$]\label{lemma:s1-lb}
    Given time $t$ satisfying  $h_t \leq  \reach^2 /(1/36 + \reach^2)$, and $x \in \cB(\alpha_t\cM, \alpha_t \reach)$, we have
    \begin{equation*}
        s_1(x,t) \geq C_f^{-1} C_G^{-1}  \frac{(\pi h_t)^{d/2} e^{-1/8}}{\Gamma(d/2+1) (2\alpha_t)^d}  \exp \left( - \frac{ \| x - \alpha_t \xstar \|  }{2\reach \alpha_t }\right).
    \end{equation*}
    Here $C_G>0$ is a constant such that $C_G^{-1} \leq  |G_k(v)| \leq C_G$ for all $v\in \dball(0,r)$ and $k =1,\ldots,C_\cM$.
\end{lemma}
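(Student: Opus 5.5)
We prove the lower bound of Lemma~\ref{lemma:s1-lb} by restricting the integral defining $s_1$ in \eqref{eq:def-s1} to a small geodesic neighborhood of $\xstar$. On that neighborhood the exponent is controlled, and the data density and volume element are bounded below.

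First, rewrite the integrand. Since the integrand in \eqref{eq:def-s1} is nonnegative, we may discard everything outside the set
\begin{align*}
A=\{x_0\in\cM:\|x_0-\xstar\|\le \rho\},\qquad \rho=\frac{\sqrt{h_t}}{2\alpha_t}.
\end{align*}
On $A$ we have $E_1=\alpha_t^2\|\xstar-x_0\|^2\le h_t/4$, so the factor $\exp(-E_1/(2h_t))$ is at least $e^{-1/8}$; this is the source of the constant $e^{-1/8}$. For the cross term we invoke Lemma~\ref{lemma:cross-term}. It requires $x_0$ to lie in the exponential-map image around $\xstar$, which holds because $\rho\le \reach\le \inj(\cM)$ by Proposition~\ref{prop:inj-radius}. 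The lemma gives
\begin{align*}
\frac{|E_2|}{2h_t}=\frac{\alpha_t|\langle x-\alpha_t\xstar,\xstar-x_0\rangle|}{h_t}\le \frac{2\alpha_t\|x-\alpha_t\xstar\|\rho^2}{\reach h_t}=\frac{\|x-\alpha_t\xstar\|}{2\reach\alpha_t}.
\end{align*}
This produces exactly the exponential factor in the claimed bound. Combining the two estimates,
\begin{align*}
s_1(x,t)\ge e^{-1/8}\exp\!\Big(-\frac{\|x-\alpha_t\xstar\|}{2\reach\alpha_t}\Big)\,\dist(A).
\end{align*}

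Second, bound $\dist(A)$ from below. Assumption~\ref{assump:density} gives $\dist(A)\ge C_f^{-1}\mu_\cM(A)$. To estimate the volume we pass to normal coordinates at $\xstar$. A geodesic ball of radius $\rho$ is contained in $A$, since Euclidean distance is at most geodesic distance. Using \eqref{localdensity} with $G$ bounded below by $C_G^{-1}$ on the relevant ball, we get
\begin{align*}
\mu_\cM(A)\ge C_G^{-1}\,\mathrm{vol}(\dball(0,\rho))=C_G^{-1}\frac{\pi^{d/2}\rho^d}{\Gamma(d/2+1)}=C_G^{-1}\frac{(\pi h_t)^{d/2}}{\Gamma(d/2+1)(2\alpha_t)^d}.
\end{align*}
Multiplying the two lower bounds gives the stated inequality.

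The main obstacle is a bookkeeping issue. The constant $C_G$ is defined through the charts $\expp_k$ with $v\in\dball(0,r)$, not through normal coordinates centered at $\xstar$. If $\rho>r$ the ball leaves a single chart, and otherwise one must transfer the bound between charts. I would first try centering at the chart $k$ with $\xstar\in U_k$, using the Lipschitz constant of $\logg_k$ to place a $d$-ball inside $\logg_k(A)$. Alternatively, I would note that $G$ is uniformly bounded above and below for any normal-coordinate chart of radius at most $\reach$. This follows from the curvature bound $1/\reach$ implied by positive reach, and lets us absorb the discrepancy into $C_G$. The hypothesis $h_t\le \reach^2/(1/36+\reach^2)$ guarantees $\rho\le 3\reach$, so the radius stays in the range where the exponential map is a controlled diffeomorphism.
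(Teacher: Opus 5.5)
Your argument is correct and is essentially the paper's: restrict to $\|x_0-\xstar\|\le\sqrt{h_t}/(2\alpha_t)$, get $e^{-1/8}$ from $E_1$ and $\exp(-\|x-\alpha_t\xstar\|/(2\reach\alpha_t))$ from Lemma~\ref{lemma:cross-term}, then bound the remaining mass below by $C_f^{-1}C_G^{-1}$ times the volume of a $d$-ball of that radius. The one difference is the last step: you compute the mass through $\mu_\cM(A)$ and normal coordinates at $\xstar$, whereas the paper appeals to $\sum_k\rho_k=1$ in the fixed atlas; yours is the more careful version, and your $C_G$ caveat applies to the paper too.

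One slip in the cross-term step: the hypothesis gives $\sqrt{h_t}/(2\alpha_t)\le 3\reach$, not $\le\reach$. Also, a Euclidean bound does not by itself put $x_0$ inside a geodesic ball, so $\rho\le\reach$ does not justify applying Lemma~\ref{lemma:cross-term}. The paper makes the same loose appeal. Federer's global estimate $|\langle n,x_0-\xstar\rangle|\le\|n\|\,\|x_0-\xstar\|^2/(2\reach)$, valid for any normal vector $n$ at $\xstar$ and every $x_0\in\cM$, closes this cleanly.
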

\begin{proof}
    Recall the definition of $s_1(x,t)$:
    \begin{align*}
     s_1(x,t) 
     &= \sum_{k=1}^{C_\cM} \int_{\dball(0,r)} \exp \left( - \frac{ \|\alpha_t \xstar- \alpha_t \expp_k(v)\|^2 + 2 \langle x - \alpha_t \xstar , \alpha_t \xstar - \alpha_t \expp_k(v)\rangle }{2h_t} \right) \\
     &\qquad\qquad\qquad\qquad \cdot \rho_k ( \expp_k(v) ) \cdot p_{\rm data}( \expp_k(v)) \cdot G_k(v)\ud v .
    \end{align*}
    By Assumption \ref{assump:density}, $p_{\rm data}$ is lower bounded by $C_f^{-1}$. Moreover, supposing $\min_{v \in \dball(0,r)}G_k(v) \geq C_G^{-1}$  for any $k =1,\ldots,C_\cM$, we can derive
    \begin{align*}
     s_1(x,t) \geq C_f^{-1} C_G^{-1} \sum_{k=1}^{C_\cM} \int_{\dball(0,r)} &\exp \left( - \frac{ \|\alpha_t \xstar- \alpha_t \expp_k(v)\|^2 }{2h_t} \right)\\  
     &\cdot\exp \left( \frac{ 2 \langle x - \alpha_t \xstar , \alpha_t \xstar - \alpha_t \expp_k(v)\rangle }{2h_t} \right)\rho_k ( \expp_k(v) ) \ud v .
    \end{align*}
    Noting that the integrand is non-negative, we can shrink the integral region as follows:
    \begin{align*}
     s_1(x,t) \geq C_f^{-1} C_G^{-1} \sum_{k=1}^{C_\cM} \int_{\|\expp_k(v) -\xstar\| \leq \frac{\sqrt{h_t}}{2 \alpha_t}} &\exp \left( - \frac{ \|\alpha_t \xstar- \alpha_t \expp_k(v)\|^2 }{2h_t} \right)\\  
     &\cdot\exp \left( \frac{ 2 \langle x - \alpha_t \xstar , \alpha_t \xstar - \alpha_t \expp_k(v)\rangle }{2h_t} \right)\rho_k ( \expp_k(v) ) \ud v .
    \end{align*}
    When the time $t$ satisfies $h_t \leq \reach^2 /(1/36 + \reach^2)$, we have $\sqrt{h_t}/(2\alpha_t) \leq 3 \reach$. Then given $\|\expp_k(v) -\xstar\| \leq \frac{\sqrt{h_t}}{2 \alpha_t}$, we can apply Lemma \ref{lemma:cross-term} to get
    \begin{align*}
      - \frac{  \langle x - \alpha_t \xstar , \alpha_t \xstar - \alpha_t \expp_k(v)\rangle }{h_t} 
     \geq -\frac{ 2\| x - \alpha_t \xstar \| \cdot \| \alpha_t \xstar - \alpha_t \expp_k(v) \|^2 }{\reach \alpha_t h_t}
     \geq - \frac{ \| x - \alpha_t \xstar \|  }{2\reach \alpha_t }.
    \end{align*}
    This further yields
    \begin{align*}
      s_1(x,t) 
     &\geq \sum_{k=1}^{C_\cM} \int_{\|\expp_k(v) -\xstar\| \leq \frac{\sqrt{h_t}}{2 \alpha_t}} \exp \left( - \frac{1}{8} - \frac{ \| x - \alpha_t \xstar \|  }{2\reach \alpha_t }\right) \rho_k ( \expp_k(v) ) \ud v \\
     &\geq \frac{\pi^{d/2}}{\Gamma(d/2+1)}\left( \frac{\sqrt{h_t}}{2 \alpha_t}\right)^d  \exp \left( - \frac{1}{8} - \frac{ \| x - \alpha_t \xstar \|  }{2\reach \alpha_t }\right). 
    \end{align*}
    Here in the last inequality we use $\sum_{k=1}^{C_\cM}  \rho_k =1$. In conclusion, we obtain the following lower bound for $s_1(x,t)$:
    \begin{align*}
     s_1(x,t) \geq C_f^{-1} C_G^{-1}  \frac{(\pi h_t)^{d/2} e^{-1/8}}{\Gamma(d/2+1) (2\alpha_t)^d}  \exp \left( - \frac{ \| x - \alpha_t \xstar \|  }{2\reach \alpha_t }\right) .
    \end{align*}
    The proof is complete.
\end{proof}

\begin{lemma}[Upper Bound of $s_2(x,t)$]\label{lemma:s2-ub}
     Given time $t$ satisfying  $h_t \leq \min\{\reach^2/(64 DL_\logg^2 \log(1/\delta)/9+\reach^2), \reach^2 /(4 c_0^2 ((d+1)\log(1/h_t)/2)/L_\expp^2+ \reach^2) \}$, and $x \in \cK(\alpha_t\cM, 2 \sqrt{D h_t \log(1/\delta)})$, we have
    \begin{equation*}
        \| s_2(x,t) \| \leq  \left[ \frac{ C_f C_G\pi^{d/2} \left(2 (d+1)\log(1/h_t) \right)^{\frac{d+1}{2}}}{\Gamma(d/2+1) \alpha_t^d}  \exp \left( \frac{ 4 (d+1)\log(1/h_t) \cdot \| x - \alpha_t \xstar \|  }{\reach \alpha_t } \right) +  \sqrt{D}B  \right] h_t^{d/2}.
    \end{equation*}
\end{lemma}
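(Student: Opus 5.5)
The plan mirrors the argument in Lemma~\ref{lemma:s1-lb}, except that we now need an upper bound and must keep the extra factor $\alpha_t(x_0-\xstar)/\sqrt{h_t}$ under control. We start from the chart decomposition
\[
s_2(x,t)=\sum_{k=1}^{C_\cM}\int_{\dball(0,r)} -\frac{\alpha_t}{\sqrt{h_t}}(\xstar-\expp_k(v))\, e^{-(E_1+E_2)/(2h_t)}F_k(v)\,\diff v .
\]
We split the domain of integration into a near region $\{v:\|\expp_k(v)-\xstar\|\le \xradius\}$ and its complement. The near radius is
\[
\xradius = c_0\sqrt{(h_t/\alpha_t^2)(d+1)\log(1/h_t)/2},
\]
which is the $s_2$-enlarged radius used in Lemma~\ref{lemma:s2-approx}. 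On the far region we reuse the two-case argument of Lemma~\ref{lemma:clip-error}, with $g_k(v)=\alpha_t\|\xstar-\expp_k(v)\|/\sqrt{h_t}$. On the set where $\|\xstar-\expp_k(v)\|>\eta\tau$, the distance condition bounds the integrand by $e^{-\alpha_t^2\|\xstar-\expp_k(v)\|^2/(4h_t)}$. Where $\|\xstar-\expp_k(v)\|\le\eta\tau$, Lemma~\ref{lemma:exp-cross-term} gives the same bound. The factor $g_k$ grows only linearly, and $a\,e^{-a^2/4}$ is dominated by $e^{-a^2/8}$ up to constants. Together with \eqref{eq:sum-Fk} and the choice of $\xradius$, the far contribution is $\lesssim h_t^{d/2}$. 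We absorb it, with a crude factor $\sqrt{D}B$ coming from $\|\xstar-x_0\|\le 2\sqrt{D}B_\cM$, into the additive $\sqrt{D}B\,h_t^{d/2}$ term.

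On the near region the integrand is bounded pointwise. First, the prefactor satisfies $\alpha_t\|\xstar-\expp_k(v)\|/\sqrt{h_t}\le \alpha_t\xradius/\sqrt{h_t}\lesssim \sqrt{(d+1)\log(1/h_t)}$. Second, we drop $e^{-E_1/(2h_t)}\le 1$. Third, we bound $E_2$ using Lemma~\ref{lemma:cross-term}:
\[
|E_2|/(2h_t)\le \frac{2\alpha_t\|x-\alpha_t\xstar\|\,\|\xstar-\expp_k(v)\|^2}{\tau h_t}\le \frac{2\alpha_t\|x-\alpha_t\xstar\|\xradius^2}{\tau h_t}\lesssim \frac{(d+1)\log(1/h_t)\|x-\alpha_t\xstar\|}{\tau\alpha_t}.
\]
This is exactly the exponential factor in the statement, with constant $4$ after choosing $c_0$. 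Applying Lemma~\ref{lemma:cross-term} requires $\xradius\le 3\tau$ and $x_0$ within the injectivity ball, and the second time condition $h_t\le \tau^2/(4c_0^2(d+1)\log(1/h_t)/(2L_\expp^2)+\tau^2)$ is designed to guarantee this.

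It remains to integrate the remaining density weight over the near region. We use $\rho_k\le1$, $\sum_k\rho_k=1$, $p_{\rm data}\le C_f$, and $G_k\le C_G$, so that
\[
\sum_k\int_{\text{near}}F_k\,\diff v\le C_fC_G\,\mu_\cM(\cB(\xstar,\xradius)\cap\cM).
\]
Pulling back through $\logg_{\xstar}$, this volume is at most $\frac{\pi^{d/2}}{\Gamma(d/2+1)}(L\xradius)^d$. Since $\xradius^d\asymp (h_t/\alpha_t^2)^{d/2}((d+1)\log(1/h_t))^{d/2}$, multiplying by the prefactor bound $\sqrt{(d+1)\log(1/h_t)}$ produces the factor $(2(d+1)\log(1/h_t))^{(d+1)/2}h_t^{d/2}/\alpha_t^d$.

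The main obstacle is this volume step. It requires a clean bound on the intrinsic volume of a small extrinsic ball that does not pick up stray Lipschitz constants, and it requires keeping the $\alpha_t$ powers consistent; note the statement has $\alpha_t^{-d}$ rather than the $\alpha_t^{-2d}$ a careless count would give. We would first try to handle it by taking $L_\logg\approx 1$ on balls of radius $\ll\tau$, as allowed by the reach via \citet{niyogi2008finding}, with any leftover constants absorbed into $C_G$.
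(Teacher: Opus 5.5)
Your proposal is correct and follows the paper's proof essentially step for step. It uses the same split at radius $\tilde\Delta=\sqrt{2(d+1)h_t\log(1/h_t)/\alpha_t^2}$ (your $c_0=2$). The far part is handled by Lemma~\ref{lemma:clip-error} with the diameter bound $\sqrt{D}B$ and $\epsilon_1=\sqrt{h_t}/\alpha_t$. The near part is bounded by dropping $E_1$, controlling $E_2$ through Lemma~\ref{lemma:cross-term}, and multiplying the prefactor $\alpha_t\tilde\Delta/\sqrt{h_t}$ by $C_fC_G\pi^{d/2}\tilde\Delta^d/\Gamma(d/2+1)$. The paper simply asserts this volume step without tracking the Lipschitz constant of the log map, so your extra care there goes beyond what the paper does.

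One caution concerns the far region. With $c_0=2$, which the stated constants force, your shortcut $a e^{-a^2/4}\lesssim e^{-a^2/8}$ only gives $h_t^{(d+1)/4}$, which is not $\lesssim h_t^{d/2}$ once $d\ge 2$. You should instead use the crude $\sqrt{D}B$ bound together with the full $e^{-a^2/4}$ decay, exactly as the paper does.
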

\begin{proof}
    Recall the definition of $s_2(x,t)$:
    \begin{align*}
        s_2(x,t) 
        = \sum_{k=1}^{C_\cM} \int_{\dball(0,r)} &-\frac{\alpha_t}{\sqrt{h_t}}(\xstar-\expp_k(v))  \exp \left( - \frac{ \|\alpha_t \xstar- \alpha_t \expp_k(v)\|^2}{2h_t} \right) \\
     & \cdot \exp \left( - \frac{ 2 \langle x - \alpha_t \xstar , \alpha_t \xstar - \alpha_t \expp_k(v)\rangle }{2h_t} \right) 
 F_k (v) \ud v .
    \end{align*}
    Notice that we have $\sup_{x,y\in\cM}\|x-y\| \leq \sqrt{D}B$ according to Assumption $\ref{assump:manifold}$. Then by Lemma \ref{lemma:clip-error}, taking $\epsilon_1 = \sqrt{h_t}/\alpha_t$ and $\tilde{\Delta} =  \sqrt{2(d+1)h_t \log(1/h_t)/\alpha_t^2}$ yields
    \begin{align*}
        \| s_2(x,t) \| \leq  \sum_{k=1}^{C_\cM} \int_{\|\expp_k(v) -\xstar\| \leq \tilde{\Delta}}  &\frac{\alpha_t}{\sqrt{h_t}}\|\xstar-\expp_k(v)\| \exp \left( - \frac{ \|\alpha_t \xstar- \alpha_t \expp_k(v)\|^2}{2h_t} \right) \\
     \cdot &\exp \left( - \frac{ 2 \langle x - \alpha_t \xstar , \alpha_t \xstar - \alpha_t \expp_k(v)\rangle }{2h_t} \right)  F_k ( v ) \ud v +  \sqrt{D}B h_t^{d/2} ,
    \end{align*}
   Next, we bound the cross term by Lemma \ref{lemma:cross-term}:
    \begin{align*}
        &\quad\sum_{k=1}^{C_\cM} \int_{\|\expp_k(v) -\xstar\| \leq \tilde{\Delta}} \frac{\alpha_t}{\sqrt{h_t}}\|\xstar-\expp_k(v)\|   \exp \left( - \frac{ 2 \langle x - \alpha_t \xstar , \alpha_t \xstar - \alpha_t \expp_k(v)\rangle }{2h_t} \right) F_k(v) \ud v \\
        &\leq \sum_{k=1}^{C_\cM} \int_{\|\expp_k(v) -\xstar\| \leq \tilde{\Delta}} \frac{\alpha_t}{\sqrt{h_t}}\|\xstar-\expp_k(v)\|   \exp \left( \frac{ 2\| x - \alpha_t \xstar \| \cdot \| \alpha_t \xstar - \alpha_t \expp_k(v) \|^2 }{\reach \alpha_t h_t} \right) F_k(v) \ud v  \\
        &\leq \sum_{k=1}^{C_\cM} \int_{\|\expp_k(v) -\xstar\| \leq \tilde{\Delta}} \frac{\alpha_t \tilde{\Delta}}{\sqrt{h_t}}  \exp \left( \frac{ 2 \alpha_t^2 \tilde{\Delta}^2\| x - \alpha_t \xstar \|  }{\reach \alpha_t h_t} \right)  F_k(v) \ud v. 
    \end{align*}
    Since $p_{\rm data}$ is upper bounded by $C_f$ (Assumption \ref{assump:density}) and $\sup_{v \in \dball(0,r)}G_k(v) \leq C_G$  for any $k =1,\ldots,C_\cM$, we have
    \begin{align*}
        &\quad \sum_{k=1}^{C_\cM} \int_{\|\expp_k(v) -\xstar\| \leq \tilde{\Delta}} \frac{\alpha_t \tilde{\Delta}}{\sqrt{h_t}}  \exp \left( \frac{ 2 \alpha_t^2 \tilde{\Delta}^2\| x - \alpha_t \xstar \|  }{\reach \alpha_t h_t} \right)  F_k(v) \ud v \\
        &\leq \sum_{k=1}^{C_\cM} \int_{\|\expp_k(v) -\xstar\| \leq \tilde{\Delta}}  C_f C_G \frac{\alpha_t \tilde{\Delta}}{\sqrt{h_t}} \exp \left( \frac{ 2 \alpha_t^2 \tilde{\Delta}^2\| x - \alpha_t \xstar \|  }{\reach \alpha_t h_t} \right)  \rho_k ( \expp_k(v) ) \ud v \\
        &\leq C_f C_G \frac{\pi^{d/2}\tilde{\Delta}^d }{\Gamma(d/2+1)}\cdot \frac{\alpha_t \tilde{\Delta}}{\sqrt{h_t}} \cdot \exp \left( \frac{ 2 \alpha_t^2 \tilde{\Delta}^2\| x - \alpha_t \xstar \|  }{\reach \alpha_t h_t} \right).
    \end{align*}
    Putting together the above inequalities leads to
    \begin{align*}
        \| s_2(x,t) \| &\leq C_f C_G \frac{\pi^{d/2}\tilde{\Delta}^d }{\Gamma(d/2+1)}\cdot \frac{\alpha_t \tilde{\Delta}}{\sqrt{h_t}} \cdot \exp \left( \frac{ 2 \alpha_t^2 \tilde{\Delta}^2\| x - \alpha_t \xstar \|  }{\reach \alpha_t h_t} \right) +  \sqrt{D}B h_t^{d/2}\\
        &\leq   \frac{ C_f C_G(\pi h_t)^{d/2} \left(2 (d+1)\log(1/h_t) \right)^{\frac{d+1}{2}}}{\Gamma(d/2+1) \alpha_t^d}  \exp \left( \frac{ 4 (d+1)\log(1/h_t) \cdot \| x - \alpha_t \xstar \|  }{\reach \alpha_t } \right) +  \sqrt{D}B h_t^{d/2}.
    \end{align*}
    The proof is complete.
\end{proof}

\subsection{Helper Lemmas in Section \ref{sec:approx-large}}\label{sec:proof-large}

\begin{lemma}\label{lemma:large-truncation}
For any $\epsilon,\aerror \in (0,1)$ and time $t >0$ satisfying $h_t \geq \epsilon^{2/\holder}/4$. Let $\thres = 2 \sqrt{h_t \log(1/\aerror)} + h_tB + L_\expp \epsilon^{1/\holder}$. Then we have
    \begin{align*}
        \left| \sum_{k: \| x - x_k\| > \thres} \int_{x_0 \in U_k} \exp \left( -\frac{\|x - \alpha_t x_0\|^2}{2h_t} \right) \rho_k(x_0) p_{\rm data }(x_0) \ud \vol(x_0) \right| 
        \leq  \aerror.
        \end{align*}
\end{lemma}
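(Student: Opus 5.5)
The plan is to show that every chart excluded by the threshold sits far from $x$, so that its Gaussian kernel is uniformly tiny, and then to sum using the partition of unity. Fix $x$ and a chart index $k$ with $\|x-x_k\|>\thres$, and take any $x_0\in U_k=\expp_k(\dball(0,r))$ with $r=\epsilon^{1/\holder}$. By the Lipschitz property of $\expp_k$ we have $\|x_0-x_k\|\le L_\expp r = L_\expp\epsilon^{1/\holder}$. The first target is a lower bound on $\|x-\alpha_t x_0\|$. Write
\[
x-\alpha_t x_0=(x-x_k)+(x_k-x_0)+(1-\alpha_t)x_0 .
\]
The triangle inequality then gives
\[
\|x-\alpha_t x_0\|\ge \|x-x_k\|-L_\expp\epsilon^{1/\holder}-(1-\alpha_t)\|x_0\|.
\]
Next control $(1-\alpha_t)\|x_0\|$. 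Since $1-\alpha_t\le 1-\alpha_t^2=h_t$ and, by Assumption~\ref{assump:manifold}, $\|x_0\|$ is bounded by $B$ (the paper's constant $B$ plays the role of this bound, interpreted as in $\thres$; if only $\|x_0\|_\infty\le B_{\cM}$ is available one absorbs the $\sqrt D$ into $B$), this term is at most $h_tB$. Plugging in the definition of $\thres$ yields $\|x-\alpha_t x_0\|> 2\sqrt{h_t\log(1/\aerror)}$ for every $x_0\in U_k$ and every such $k$.

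With this separation, the integrand satisfies
\[
\exp\!\big(-\|x-\alpha_t x_0\|^2/(2h_t)\big)\le \exp(-2\log(1/\aerror))=\aerror^2\le\aerror .
\]
Summing over the excluded charts then gives
\[
\sum_{k:\|x-x_k\|>\thres}\int_{U_k}\aerror\,\rho_k\,p_{\rm data}\,\ud\vol\le \aerror\sum_k\int\rho_k\,p_{\rm data}\,\ud\vol=\aerror .
\]
This uses nonnegativity of the integrand together with $\sum_k\rho_k=1$ and $\int p_{\rm data}\,\ud\vol=1$, exactly as in \eqref{eq:sum-Fk}.

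The only delicate point is the bound $(1-\alpha_t)\|x_0\|\le h_tB$, which requires matching the norm bound on $\cM$ with the constant appearing in $\thres$. The hypothesis $h_t\ge\epsilon^{2/\holder}/4$ is not actually needed for this estimate; it is inherited from the large-noise regime in which the lemma is applied. Everything else is the triangle inequality and the partition-of-unity summation.
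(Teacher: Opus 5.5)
Your proof is correct and matches the paper's argument step for step: a triangle-inequality lower bound $\|x-\alpha_t x_0\| > 2\sqrt{h_t\log(1/\epsilon_0)}$ for every $x_0\in U_k$ of an excluded chart, then bounding the kernel by $\epsilon_0$ and summing via the partition of unity. The only difference is that you split through $x_0$ rather than through $x_k$ as the paper does, which is immaterial. Your two remarks, that $h_t \ge \epsilon^{2/\beta}/4$ is never used and that $B$ must be read as a Euclidean-norm bound on $\mathcal{M}$, apply equally to the paper's own proof.
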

\begin{proof}
Recall the atlas $\{(U_k,\logg_k)\}_{k=1}^{\cM}$ where $U_k = \expp_k(\dball(0,\epsilon^{1/\holder})$. It satisfies $U_k \subseteq \cB^D(x_k, L_\expp \epsilon^{1/\holder})$. For the $k$-th chart satisfying $\| x - x_k\| > \thres$ and any $x_0 \in U_k$, we have
    \begin{align*}
         \|x - \alpha_t x_0\| &\geq  \|x - x_k\|  - \|x_k - \alpha_t x_k\| -  \|\alpha_t x_k - \alpha_t x_0\| \\
         &\geq  \thres - (1-\alpha_t)B - \alpha_t L_\expp \epsilon^{1/\holder} \\
         &\geq \thres - h_t B - L_\expp \epsilon^{1/\holder} ,
    \end{align*}
    where the last inequality uses $1-\alpha_t = (1-\alpha_t^2)/(1+\alpha_t) =  h_t /(1+\alpha_t)  \leq h_t$.  Since we take  $\thres = 2 \sqrt{h_t \log(1/\aerror)} + h_tB + L_\expp \epsilon^{1/\holder}$, we have 
    \begin{align*}
        \|x - \alpha_t x_0\| \geq 2\sqrt{h_t \log(1/\aerror)}.
    \end{align*}
    This further yields
    \begin{align*}
         \exp \left( -\frac{\|x - \alpha_t x_0\|^2}{2h_t} \right)\leq \exp \left( -\frac{ 4h_t \log(1/\epsilon)}{2h_t} \right)\leq \aerror.
    \end{align*}
    Therefore, we can conclude 
    \begin{align*}
        \left| \sum_{k: \| x - x_k\| > \thres} \int_{x_0 \in U_k} \exp \left( -\frac{\|x - \alpha_t x_0\|^2}{2h_t} \right) \rho_k(x_0) p_{\rm data}(x_0) \ud \vol(x_0) \right| 
        \leq  \aerror.
    \end{align*}
    The proof is complete.
\end{proof}

\begin{lemma}\label{lemma:large-poly}
    For any $\epsilon, \aerror \in (0,1)$,  and time $t >0$ satisfying $h_t \geq \epsilon^{2/\holder}/4$. Let  $\thres = 2 \sqrt{h_t \log(1/\aerror)} + h_tB + L_\expp \epsilon^{1/\holder}$, and $S= 4 e^2 \log(1/\aerror) + 4e^2(B+L_\expp+1)^2$. Then for $\text{Poly}^k$ defined in \eqref{def:Ik-poly-approximator} with any $\degree_0>0$, we have
    \begin{align*}
    \sup_{x\in \cK_t(\aerror)}\left|\text{Poly}^k(x,t)-I_k(x,t)\right| = \cO\left( \frac{ (\sqrt{\log(1/\aerror) }+ B)^{\degree_0} }{\reach^{\degree_0} } \epsilon^{\degree_0/\holder}+ \aerror \right),
\end{align*}
\end{lemma}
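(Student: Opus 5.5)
The plan is to control the two Taylor truncations inside $I_k$ separately, as in Lemma~\ref{lemma:f3-approx-error}, using the decomposition $\|x-\alpha_t x_0\|^2 = \|x-\projk(x,t)\|^2 + \cT_k + \alpha_t^2\cD_k$. Two features of the large-noise regime drive the argument. First, $h_t \ge \epsilon^{2/\holder}/4$, so $h_t$ is comparable to or larger than $r^2=\epsilon^{2/\holder}$. Second, $x_0\in U_k$ lies within $L_\expp r$ of $x_k$.

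\textbf{Step 1 (bounds on $\cT_k$ and $\cD_k$).} First I will bound $\cD_k=\|\projk(x,t)/\alpha_t-x_0\|^2$. For $x\in\cK_t(\aerror)$ I take the relevant charts to be those with $\|x-x_k\|\le\thres$; charts outside this range are negligible by Lemma~\ref{lemma:large-truncation}. Since $\projk$ is an affine projection, this gives $\|\projk(x,t)/\alpha_t - x_0\| \lesssim (\thres + L_\expp r)/\alpha_t$. Hence
\[
\frac{\alpha_t^2\cD_k}{2h_t} \lesssim \frac{(\sqrt{h_t\log(1/\aerror)}+h_tB+L_\expp\epsilon^{1/\holder})^2}{h_t} \lesssim \log(1/\aerror) + (B+L_\expp+1)^2 ,
\]
where I used $\epsilon^{2/\holder}\le 4h_t$ and $h_t\le 1$.

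Next I handle $\cT_k=\langle x-\projk(x,t),\projk(x,t)-\alpha_t x_0\rangle$. Here $x-\projk$ is orthogonal to $T_{x_k}\cM$, so only the normal component of $\alpha_t(x_k-x_0)$ enters. By the reach curvature bound (the argument of Lemma~\ref{lemma:cross-term}, based at $x_k$ rather than at the projection point), this component has norm $\lesssim \|x_0-x_k\|^2/\reach \lesssim r^2/\reach$. The normal offset $\|x-\projk(x,t)\|$ is $\lesssim \sqrt{h_t\log(1/\aerror)}+B$ on $\cK_t(\aerror)$. Together these give
\[
\frac{|\cT_k|}{h_t}\lesssim \frac{(\sqrt{\log(1/\aerror)}+B)\,\epsilon^{1/\holder}}{\reach},
\]
after spending one factor of $\sqrt{h_t}\gtrsim\epsilon^{1/\holder}$ in the denominator.

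\textbf{Step 2 (truncating the exponentials).} For the $\cD_k$ exponential, Lemma~\ref{lemma:smooth-approx} (the Taylor remainder $a^S/S!\le(ea/S)^S$) applies with $S=4e^2\log(1/\aerror)+4e^2(B+L_\expp+1)^2$, exceeding $e^2$ times the bound from Step 1. The remainder is then at most $\aerror$. For the $\cT_k$ exponential, truncating at degree $\degree_0$ leaves a remainder of order
\[
\frac{1}{\degree_0!}\left(\frac{(\sqrt{\log(1/\aerror)}+B)\epsilon^{1/\holder}}{\reach}\right)^{\degree_0}\exp\!\left(\frac{|\cT_k|}{h_t}\right),
\]
and the exponential factor is $O(1)$ once $\epsilon^{1/\holder}\lesssim\reach$. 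I will combine the two truncations with the triangle inequality exactly as in Lemma~\ref{lemma:f3-approx-error}. The partial sums of the $\cT_k$ series are bounded by $e^{|\cT_k|/h_t}=O(1)$. The partial sums of the $\cD_k$ series are bounded by $e^{\alpha_t^2\cD_k/(2h_t)}$, which is controlled by multiplying against the true Gaussian factor.

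\textbf{Step 3 (integration).} Finally I integrate against $\rho_k\,\pdata\,\ud\vol$. The total mass of this measure is at most $C_f\,\mathrm{vol}(U_k)\lesssim r^d=\epsilon^{d/\holder}$, which gives the bound $\cO\big(\reach^{-\degree_0}(\sqrt{\log(1/\aerror)}+B)^{\degree_0}\epsilon^{(d+\degree_0)/\holder}+\epsilon^{d/\holder}\aerror\big)$. This is at least as strong as the stated bound.

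\textbf{Main obstacle.} The hard part is the cross-term bound in Step 1. Lemma~\ref{lemma:cross-term} is stated at the manifold projection point, where the normal vector is exactly orthogonal to $T_{\xstar}\cM$. Here it must be redone at the chart center $x_k$ with the tangent-space projection $\projk$. I would parametrize $x_0=\expp_{x_k}(v)$ and use $\|\gamma''\|\le1/\reach$ along the geodesic. This gives $\|(I-P_kP_k^\top)(x_0-x_k)\|\le\|v\|^2/(2\reach)$, from which the bound follows. A second difficulty is the growth of the $\cD_k$ partial sum relative to its exponential. I would handle it either by requiring $S$ large enough that the truncated series stays within a constant factor of the exponential, or by bounding the partial sums crudely by $e^{S}$, which is absorbed into the polylogarithmic factors.
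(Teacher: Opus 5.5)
Your route is the paper's. Orthogonality removes the tangential part of $\cT_k$, and the reach bound controls the normal part of $x_k-x_0$ by $r^2/\reach$; your geodesic re-derivation at $x_k$ is exactly what justifies the paper's use of Lemma~\ref{lemma:cross-term} there. Then $h_t\ge\epsilon^{2/\holder}/4$ extracts one factor $\epsilon^{1/\holder}$, Lemma~\ref{lemma:smooth-approx} handles the $\cD_k$-exponential with the given $S$, and integrating over $U_k$ gives the $\epsilon^{d/\holder}$.

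\textbf{The offset bound is too weak.} The step that fails as written is the offset estimate you feed into the $\cT_k$ bound. With $\|x-\projk(x,t)\|\lesssim\sqrt{h_t\log(1/\aerror)}+B$ you only get
\[
\frac{|\cT_k|}{h_t}\lesssim\frac{\big(\sqrt{h_t\log(1/\aerror)}+B\big)\,\epsilon^{2/\holder}}{\reach\,h_t}.
\]
Its $B$-part equals $4B/\reach$ at $h_t=\epsilon^{2/\holder}/4$, with no decay in $\epsilon$. ``Spending a factor $\sqrt{h_t}$'' works only if the offset itself is $\cO(\sqrt{h_t})$. Membership in $\cK_t(\aerror)$ cannot give that, since $x$ can be close to $\alpha_t\cM$ yet far from the affine plane through $\alpha_t x_k$.

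Use instead the chart-selection bound the paper uses:
\[
\|x-\projk(x,t)\|\le\|x-\alpha_t x_k\|\le\|x-x_k\|+(1-\alpha_t)\|x_k\|\le\thres+h_tB\le 2\sqrt{h_t}\,\big(\sqrt{\log(1/\aerror)}+B+L_\expp\big).
\]
The last step uses $h_t\le\sqrt{h_t}$ and $\epsilon^{1/\holder}\le 2\sqrt{h_t}$. You already impose $\|x-x_k\|\le\thres$ for $\cD_k$, so this costs nothing.

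Like the paper, you establish the estimate only for $x$ with $\|x-x_k\|\le\thres$. That is all Lemma~\ref{lemma:s3-large-time} needs, because the chart-determination network discards the other charts. Note that Lemma~\ref{lemma:large-truncation} bounds the far charts' Gaussian-weighted contribution to $s_3$, not $|\text{Poly}^k-I_k|$. So it does not by itself justify discarding those charts inside this lemma.

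\textbf{Drop the $e^S$ fallback.} The option at the end of your proposal fails: $e^{S}\ge\aerror^{-4e^2}$ is polynomial, not polylogarithmic, in $1/\aerror$ and would swamp the $\aerror$ term. It is also unnecessary. Write $a=\cT_k/h_t$, $b=\alpha_t^2\cD_k/(2h_t)\ge 0$, and let $g_\cT,g_\cD$ be the two truncated series. Since $|g_\cD-e^{-b}|\le\aerror$, you get $|g_\cD|\le 1+\aerror$ directly. Alternatively, the splitting $|e^{-a}-g_\cT|\,e^{-b}+|g_\cT|\,|e^{-b}-g_\cD|$ from Lemma~\ref{lemma:f3-approx-error} needs no bound on $g_\cD$ at all. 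With these two corrections your argument yields the stated bound, for the selected charts as in the paper.
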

\begin{proof}
We begin with approximating the exponential functions that take $\cT_k(x,x_0,t)$ and $\cD_k(x,x_0,t)$ as inputs. By the definition of $\projk(x,t)$, we have $x - \projk(x,t) \perp \alpha_t \cdot T_{x_k}\cM$, which yields
    \begin{align*}
       \cT_k(x,x_0,t) = \langle x - \projk(x,t), \projk(x,t)  - \alpha_t x_k + \alpha_t x_k- \alpha_t x_0 \rangle = \langle x - \projk(x,t), \alpha_t x_k- \alpha_t x_0 \rangle.
    \end{align*}
    Then by Lemma \ref{lemma:cross-term}, for any $x_0 \in U_k$, it holds that
    \begin{align*}
        \left| \langle x - \projk(x,t), \alpha_t x_k- \alpha_t x_0 \rangle \right| \leq \frac{2\alpha_t}{\reach} \| x - \projk(x,t)\| \cdot \|x_k- x_0\|^2 \leq \frac{2\alpha_t \epsilon^{2/\holder}}{\reach} \| x - \projk(x,t)\|.
    \end{align*}
     For the selected charts satisfying $\| x - x_k\| \leq \thres$, we can derive 
    \begin{align*}
        \| x- \projk(x,t) \| \leq \| x- \alpha_t x_k \| \leq  \| x- x_k \| + \| x_k - \alpha_t x_k \| \leq \thres + h_t B.
    \end{align*}
    Plugging in $\thres = 2 \sqrt{h_t \log(1/\aerror)} + h_tB + L_\expp \epsilon^{1/\holder}$, we can derive
    \begin{align*}
        \left| \frac{1}{h_t} \cT_k(x,x_0,t) \right| \leq \frac{2\alpha_t \epsilon^{2/\holder}( 2\sqrt{h_t\log(1/\aerror)}+2h_t B + L_\expp \epsilon^{1/\holder})}{h_t\reach} \leq \frac{ 8(\sqrt{\log(1/\aerror) }+ B+L_\expp)\epsilon^{1/\holder} }{\reach} ,
    \end{align*}
    where we apply $h_t \geq \epsilon^{2/\holder}/4$ in the last inequality.
    Now we approximate the exponential function with a polynomial with degree $\degree_0$.
    By Lemma \ref{lemma:smooth-approx}, the approximation error can be bounded as
    \begin{align}\label{eq:Lexp1}
        \left | \exp \left( -\frac{1}{h_t} \cT_k(x,x_0,t) \right) - \sum_{l=0}^{\degree_0-1} \frac{(-1)^l\alpha_t^l}{h_t^l l!} \cT^l_k(x,x_0,t) \right| \leq \left( \frac{ 8(\sqrt{\log(1/\aerror) }+ B+L_\expp)\epsilon^{1/\holder} }{\reach} \right)^{\degree_0}.
    \end{align}
    Likewise, we can derive the upper bound for $\cD_k(x,x_0,t)$. For any $x_0 \in U_k$, it holds that
    \begin{align*}
        \left|\frac{\alpha_t^2}{2h_t}\cD_k(x,x_0,t) \right| &\leq \frac{1}{2h_t}\left( \|\projk(x,t) -  \alpha_t x_k\|+ \alpha_t^2 \| x_k-x_0\|\right)^2 \\
        &\leq \frac{1}{2h_t}\left( \|x -  x_k\| + \| x_k- \alpha_t x_k\|+ \alpha_t^2\| x_k-x_0\|\right)^2 \\
        &\leq \frac{1}{2h_t}\left( \thres + h_t B + \epsilon^{1/\holder}\right)^2.
    \end{align*}
    Utilizing $\thres = 2\sqrt{h_t\log(1/\aerror)}+h_t B + L_\expp \epsilon^{1/\holder}$ and $h_t \geq \epsilon^{2/\holder}/4$, we get
    \begin{align*}
        \left|\frac{\alpha_t^2}{2h_t}\cD_k(x,x_0,t) \right| 
        &\leq 2\left( \sqrt{\log(1/\aerror)}+  B + L_\expp+1\right)^2.
    \end{align*}
    Then we can derive the following polynomial approximation result. Set $S= 4 e^2 \log(1/\aerror) + 4e^2(B+L_\expp+1)^2$. Lemma \ref{lemma:smooth-approx} shows that
    \begin{align}\label{eq:Lexp2}
        \left | \exp \left(  -\frac{\alpha_t^2}{2h_t}\cD_k(x,x_0,t)  \right) - \sum_{j=0}^{S-1} \frac{(-1)^j\alpha_t^{2j}}{2^j h_t^j j!} \cD_k(x,x_0,t) ^{2j} \right| \leq \aerror.
    \end{align}
    Finally, combining the approximation results in \eqref{eq:Lexp1} and \eqref{eq:Lexp2}, we can derive the approximation error of $\text{Poly}^k$ to $I_k$. Since  $\text{Poly}^k$ and $I_k$ are integrals over $U_k = \expp_k(\dball(0,\epsilon^{1/\holder}))$, we have
    \begin{align*}
    \left|\text{Poly}^k(x,t)-I_k(x,t)\right|= \cO\left( \frac{ (\sqrt{\log(1/\aerror) }+ B)^{\degree_0} }{\reach^{\degree_0} } \epsilon^{(d+\degree_0)/\holder}+ \epsilon^{d/\holder}\aerror \right),
\end{align*}
which holds for any chart $k\in [C_\cM]$.
\end{proof}

\begin{lemma}\label{lemma:Ik_simplify}
 For any $k=1,\ldots,C_\cM$, let $P_k \in \RR^{D\times d}$ be a matrix with columns forming an orthonormal basis of $T_{x_k}\cM$. For any $x\in \RR^D$ and $t>0$, $\text{Poly}^k(x, t)$ defined in \eqref{def:Ik-poly-approximator} is a polynomial with respect to $x$ up to order $\degree_0$, and $P_k^\top x \in \RR^d$ up to order $S$, with the form 
 \begin{align*}
        \text{Poly}^k(x,t)
        =  \sum_{l=0}^{ \degree_0}\sum_{j=0}^{S-1}  \frac{1}{h_t^{l+j}} \sum_{p = 0}^{2(l+j)} \alpha_t^p \sum_{|\theta| \leq l, |\gamma| \leq j} a_{l,j,p,\theta,\xi} x^\theta (P_k^\top x)^\xi.
    \end{align*}  
    Here $\{a_{l,j,p,\theta,\xi} \}$ are constants in $\RR$.
\end{lemma}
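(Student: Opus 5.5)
The plan is to expand both $\cT_k$ and $\cD_k$ algebraically in terms of $x$, $P_k^\top x$, and $x_0$, and then observe that after integrating out $x_0$ against $\rho_k\,p_{\rm data}\,d\vol$, only polynomial dependence on $x$, $P_k^\top x$, $\alpha_t$, and $1/h_t$ remains. The coefficients are moments of the local measure and do not depend on $x$ or $t$.

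First, I write the tangent-space projection explicitly. Since $\alpha_t T_{x_k}\cM$ is the affine plane $\alpha_t x_k + \mathrm{span}(P_k)$,
\begin{align*}
\projk(x,t) = \alpha_t x_k + P_kP_k^\top (x - \alpha_t x_k) = (I - P_kP_k^\top)\alpha_t x_k + P_k (P_k^\top x).
\end{align*}
This is affine in $(\alpha_t, P_k^\top x)$, and it does not depend on $x$ in any other way. Consequently, $x - \projk(x,t) = x - P_k(P_k^\top x) - \alpha_t(I-P_kP_k^\top)x_k$.

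Next, as in the proof of Lemma~\ref{lemma:large-poly}, the orthogonality $x-\projk(x,t)\perp T_{x_k}\cM$ gives
\begin{align*}
\cT_k = \alpha_t\langle x - \projk(x,t), x_k - x_0\rangle.
\end{align*}
This is linear in $x$, linear in $P_k^\top x$, and at most quadratic in $\alpha_t$, with coefficients that are polynomial (in fact linear) in $x_0$. For the other term,
\begin{align*}
\alpha_t^2\cD_k = \|\projk(x,t) - \alpha_t x_0\|^2 = \|P_kP_k^\top x + \alpha_t\big((I-P_kP_k^\top)x_k - x_0\big)\|^2 .
\end{align*}
This involves $x$ only through $P_k^\top x$, since $\|P_kP_k^\top x\| = \|P_k^\top x\|$ and the cross term is $\langle P_k^\top x, \alpha_t P_k^\top(\cdot)\rangle$. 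It is therefore quadratic in $P_k^\top x$ and in $\alpha_t$, again with coefficients polynomial in $x_0$.

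I then substitute into the summand of \eqref{def:Ik-poly-approximator}, namely $\alpha_t^{l+2j}h_t^{-(l+j)}\cT_k^l\cD_k^j/(2^jl!j!)$, absorbing the $\alpha_t$ factors so that everything is expressed through $\cT_k$ and $\alpha_t^2\cD_k$. Expanding $\cT_k^l$ gives monomials $x^\theta (P_k^\top x)^{\xi'}$ with $|\theta|\le l$ and a power of $\alpha_t$. Expanding $(\alpha_t^2\cD_k)^j$ gives monomials $(P_k^\top x)^{\xi''}$ with $|\xi''|\le 2j$, which is at most the stated order $S$ in $P_k^\top x$ because $j\le S-1$. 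The total power of $\alpha_t$ stays within a range bounded by $2(l+j)$, up to the normalization indicated in the statement. Every coefficient has the form $\int_{U_k} q(x_0)\rho_k(x_0)p_{\rm data}(x_0)\,d\vol(x_0)$ for some polynomial $q$, which is a finite constant since $U_k$ is bounded by Assumption~\ref{assump:manifold}. Collecting these constants as $a_{l,j,p,\theta,\xi}$ yields the claimed form.

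The step most likely to cause trouble is the bookkeeping of degrees in the expansion. One must check that $x$ enters only through $\cT_k$, to degree at most $l\le\degree_0$, and that the powers of $P_k^\top x$ and $\alpha_t$ stay within the stated ranges. The paper's index ranges, such as $|\gamma|\le j$ versus $2j$, look loose; I would state the degree bounds as $|\theta|\le l$, $|\xi|\le l+2j$, and $p\le 2(l+j)$. This yields the claimed orders up to constants, and that level of precision is all that Lemma~\ref{lemma:s3-large-time} needs in order to count network parameters.
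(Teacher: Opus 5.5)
Your proposal follows the paper's own proof. You write $\projk(x,t)=P_kP_k^\top(x-\alpha_t x_k)+\alpha_t x_k$, use orthogonality of $x-\projk(x,t)$ to $T_{x_k}\cM$ to reduce $\cT_k$ to $\alpha_t\langle x-\projk(x,t),x_k-x_0\rangle$, rewrite $\alpha_t^2\cD_k$ through $P_k^\top x$, and integrate over the $x$-independent set $U_k$ so that the coefficients become constant moments of $\rho_k p_{\rm data}$.

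There are two bookkeeping slips. Neither matters for the parameter count in Lemma~\ref{lemma:s3-large-time}.

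First, $|\xi''|\le 2j\le 2(S-1)$ does not give order at most $S$. The $l=0$, $j=S-1$ term contains $h_t^{-(S-1)}\|P_k^\top x\|^{2(S-1)}$ with coefficient proportional to $P_{\textrm{data},k}(U_k)>0$, so the lemma's ``order $S$'' in $P_k^\top x$ is really $2(S-1)$, i.e.\ $\cO(S)$.

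Second, with the literal prefactor $\alpha_t^{l+2j}$, your (correct) $\cT_k=\alpha_t\langle\cdot,\cdot\rangle$ pushes the $\alpha_t$-exponent up to $3l+2j$, so $p\le 2(l+j)$ does not follow as you claim. The paper obtains $2(l+j)$ only by dropping the factor $\alpha_t$ from $\cT_k$. That is equivalent to deleting the $\alpha_t^l$ in \eqref{def:Ik-poly-approximator}, which the exponent $-\cT_k/h_t$ of $I_k$ never produced in the first place.
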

\begin{proof}
Recall the definition of $\text{Poly}^k(x, t)$ in \eqref{def:Ik-poly-approximator}
\begin{align*}
    \text{Poly}^k(x, t) :=& \int_{x_0 \in U_k} \sum_{l=0}^{\degree_0 -1} \sum_{j=0}^{S-1}  \frac{(-1)^{l+j}\alpha_t^{l+2j}}{2^j h_t^{l+j} l!j!} \cT_k^l(x,x_0,t)  \cD_k^{j}(x,x_0,t) \rho_k(x_0) p_{\rm data}(x_0) \ud \vol(x_0),
    \end{align*}
    where by definition, we have
    \begin{align*}
    \cT_k(x,x_0,t)= \langle x-\projk(x,t), \projk(x,t)-\alpha_t x_0 \rangle, \quad \text{and} \quad
    \cD_k(x,x_0,t)=\|\projk(x,t)/\alpha_t-x_0\|^2.
    \end{align*}
    Since $x-\projk(x,t) \perp \alpha_t\cdot T_{x_k}\cM$, we have 
    \begin{align*}
        \cT_k(x,x_0,t)= \langle x-\projk(x,t), \projk(x,t)-\alpha_t x_k +\alpha_t x_k -\alpha_t x_0 \rangle = \langle x-\projk(x,t), \alpha_t x_k-\alpha_t x_0 \rangle.
    \end{align*}
     Note that the projection $\projk$ admits a linear formulation.
     By definition, the columns of $P_k \in \RR^{D\times d}$ form an orthonormal basis of $T_{x_k}\cM$. Then we have
    \begin{align}\label{eq:proj-k}
        \projk(x,t) = P_k P_k^\top(x-\alpha_t x_k) + \alpha_t x_k.
    \end{align}
    Therefore, we can rewrite $\cT_k(x,x_0,t)$ and $\cD_k(x,x_0,t)$ as
    \begin{align*}
        \cT_k(x,x_0,t) =  (x_k-x_0)^\top \left( I -P_k P_k^\top \right)(x-\alpha_t x_k),
    \end{align*}
    and
    \begin{align*}
         \cD_k(x,x_0,t) = \left\|\frac{1}{\alpha_t}P_k P_k^\top (x -\alpha_t x_k) + (x_k-x_0) \right\|^2.
    \end{align*}
    Notably,  given $x_0 \in U_k$, $\cT_k(x,x_0,t)$ is linear in $x\in \RR^D$, while $\cD_k(x,x_0,t)$ is quadratic in the low-dimensional representation $P_k^\top( x- \alpha_t x_k) \in \RR^d$. Since the integral region $U_k$ in $\text{Poly}^k(x;t)$ is independent of $x$, $\text{Poly}^k(x;t)$ after performing the integration is a polynomial with the form
    \begin{align*}
        \text{Poly}^k(x;t)
        =  \sum_{l=0}^{\degree_0}\sum_{j=0}^{S-1}  \frac{1}{h_t^{l+j}} \sum_{p = 0}^{2(l+j)} \alpha_t^p \sum_{|\theta| \leq l, |\gamma| \leq j} a_{l,j,p,\theta,\xi} x^\theta (P_k^\top x)^\xi,
    \end{align*}  
    where $\{a_{l,j,p,\theta,\xi} \}$ are constants in $\RR$.
    We conclude that $\text{Poly}^k(x;t)$ is a polynomial with respect to $x \in \RR^D$ up to order $\degree_0$, and $P_k^\top x \in \RR^d$ up to order $S$. 
\end{proof}

\section{Auxiliary Lemmas}

\begin{lemma}[High probability region of Gaussian Samples]\label{lemma:truncate-x}
Let $\cM$ be a manifold in $\RR^D$ and fix any $\alpha \in\RR$ and $\sigma>0$. For each point $x \in \cM$, we perturb it as $x' = \alpha x + \sigma \xi$, where $\xi \sim {\sf N}(0, I_D)$. Given an arbitrary $\delta \in (0, e^{-2})$, drawing $x'$ randomly via sampling $x \sim \dist$ on the manifold and $\xi \sim {\sf N}(0, \sigma^2 I_D)$ independently satisfies
\begin{align*}
\PP\left(x' \in \cK(\alpha\cM, 2\sigma \sqrt{D \log (1/\delta)} \right) \geq 1 - \delta^D.
\end{align*}
\end{lemma}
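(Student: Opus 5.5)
The plan is to reduce the statement to a chi-squared tail bound for the Gaussian perturbation alone, using only the fact that the clean point lies on $\cM$. Condition on the clean sample $x \sim \dist$. Since $\alpha x \in \alpha\cM$, the distance from $x'$ to $\alpha\cM$ is at most
\[
\|x' - \alpha x\| = \sigma\|\xi\|.
\]
Hence
\[
\Bigl\{\, \sigma\|\xi\| \le 2\sigma\sqrt{D\log(1/\delta)} \,\Bigr\} \subseteq \Bigl\{\, x' \in \cK\bigl(\alpha\cM, 2\sigma\sqrt{D\log(1/\delta)}\bigr) \,\Bigr\}.
\]
Because $\xi$ is independent of $x$, the probability of the left-hand event does not depend on $x$. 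Integrating over $x \sim \dist$ therefore leaves us with a single claim:
\[
\PP\bigl(\|\xi\|^2 > 4D\log(1/\delta)\bigr) \le \delta^D, \qquad \xi \sim {\sf N}(0, I_D).
\]
Here I read the standard-normal $\xi$ with scale $\sigma$ as the intended model, as in the first sentence of the statement.

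For the tail bound I would invoke the Laurent--Massart inequality (already in the bibliography as \cite{laurent2000adaptive}):
\[
\PP\bigl(\|\xi\|^2 \ge D + 2\sqrt{Du} + 2u\bigr) \le e^{-u}.
\]
Set $u = D\log(1/\delta)$ and write $L = \log(1/\delta)$. The assumption $\delta < e^{-2}$ gives $L > 2$. The threshold then satisfies
\[
D + 2\sqrt{Du} + 2u = D\bigl(1 + 2\sqrt{L} + 2L\bigr) \le 4DL.
\]
This reduces to $2L - 2\sqrt{L} - 1 \ge 0$, i.e.\ $\sqrt{L} \ge (1+\sqrt3)/2 \approx 1.37$, which holds since $\sqrt{L} > \sqrt2$. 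So the tail probability is at most $e^{-u} = \delta^D$.

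As an alternative, a direct Chernoff argument gives
\[
\PP(\|\xi\|^2 > s) \le e^{-\lambda s}(1-2\lambda)^{-D/2}.
\]
With $\lambda = 1/4$ this is $2^{D/2}\delta^{D}$, which is off by the factor $2^{D/2}$. Recovering exactly $\delta^D$ this way would require a larger $\lambda$ and a more careful choice. That is why I prefer Laurent--Massart, whose constant matches exactly.

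The only place I expect friction is the numeric step checking that the Laurent--Massart threshold is dominated by $4D\log(1/\delta)$; this is exactly where the assumption $\delta < e^{-2}$ enters.
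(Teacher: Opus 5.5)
Your proposal is correct and follows essentially the same route as the paper. You condition on the clean point so that $\|x'-\alpha x\| = \sigma\|\xi\|$ bounds the distance to $\alpha\cM$, then apply the Laurent--Massart $\chi^2$ tail with $u = D\log(1/\delta)$, using $\delta < e^{-2}$ to verify $1 + 2\sqrt{L} + 2L \le 4L$; your reading of $\xi \sim {\sf N}(0, I_D)$ with explicit scale $\sigma$ is also the one the paper's argument actually uses.
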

\begin{proof}
Due to the sampling of $x'$,  for any $R>0$, we have
\begin{align*}
\PP\left(x' \not \in \cK(\alpha\cM,R)\right) & \overset{(i)}{\leq} \int_{x \in \cM} \PP_{x' \sim {\sf N}(\alpha x, \sigma^2 I_D)} \left(x' \not \in \cB^D(\alpha x, R) \right)\ud \dist(x) \\
& = \int_{x \in \cM} \PP_{x' \sim {\sf N}(\alpha x, \sigma^2 I_D)} \left(\norm{x' - \alpha x} > R \right)\ud \dist(x).
\end{align*}
where inequality $(i)$ follows from the union bound. Plugging in $R = 2\sigma \sqrt{D \log \frac{1}{\delta}}$, we only need to verify
\begin{align}\label{eq:gaussian_tail_claim}
\PP_{x' \sim {\sf N}(x, \sigma^2 I_D)} \left(\norm{x' - \alpha x} > 2\sigma \sqrt{D \log \frac{1}{\delta}} \right) \leq \delta^D.
\end{align}
To verify inequality \eqref{eq:gaussian_tail_claim}, we note that $(x' - \alpha x)/\sigma \sim {\sf N}(0, I_D)$. Therefore, it suffices to show
\begin{align*}
\PP_{z \sim {\sf N}(0, I_D)} \left(\norm{z} > 2\sqrt{D \log \frac{1}{\delta}} \right) \leq \delta^D.
\end{align*}
In fact, we observe that $\norm{z}^2$ follows from the $\chi^2$-square distribution with freedom $D$. Hence, invoking \citet[Lemma 1]{laurent2000adaptive}, for any $u > 0$, we have
\begin{align*}
\PP_{z \sim {\sf N}(0, I_D)} \left(\norm{z}^2 > D + 2\sqrt{D u} + 2u\right) \leq \exp(-u).
\end{align*}
Setting $u = D \log \frac{1}{\delta}$, we obtain
\begin{align*}
& \PP_{z \sim {\sf N}(0, I_D)} \left(\norm{z}^2 > D + 2D \sqrt{ \log \frac{1}{\delta}} + 2 D \log \frac{1}{\delta} \right) \leq \delta^D \\
\Longrightarrow ~& \PP_{z \sim {\sf N}(0, I_D)} \left(\norm{z}^2 > 4 D \log \frac{1}{\delta} \right) \leq \delta^D,
\end{align*}
since $4 \log \frac{1}{\delta} > 2\log \frac{1}{\delta} + 2 \sqrt{ \log \frac{1}{\delta} } +1 $ when $0 < \delta < e^{-2}$. Thus, inequality~\eqref{eq:gaussian_tail_claim} is verified. The proof is complete by considering $\PP\left(x' \in \cK(\alpha \cM,R)\right) = 1 - \PP\left(x' \not \in \cK(\alpha \cM,R)\right)$.
\end{proof}

\begin{lemma}[Approximating a smooth univariate function with Taylor polynomials]\label{lemma:smooth-approx}
Let $A>0$, $\epsilon>0$ and  $f: \RR \to \RR$ be a smooth function with partial derivatives up to $\mathcal{S}_0$-th order bounded by some constant $\eta>0$. Choose $\cS \in [\max\{e^2 A, \log(\eta/\epsilon)\}, \mathcal{S}_0]$.  Then we have

\begin{equation*}
    \bigg| f(x) - \sum_{|\theta|<\cS} \frac{\partial^\theta f(0)}{\theta !}x^\theta \bigg| \leq \epsilon,
\end{equation*}
which holds for any $x \in \RR$ satisfying $\|x \| \leq A$.
\end{lemma}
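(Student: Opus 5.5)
The statement is the truncated Taylor bound in Lemma~\ref{lemma:smooth-approx}. I will reduce it to the Lagrange form of the Taylor remainder and then bound that remainder using Stirling's inequality. Since the function is univariate, the multi-index sum is just $\sum_{k=0}^{\cS-1} f^{(k)}(0)x^k/k!$.

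\textbf{Step 1 (remainder).} For $|x|\le A$, Taylor's theorem with Lagrange remainder gives a $\xi$ between $0$ and $x$ with
\[
f(x)-\sum_{k<\cS}\frac{f^{(k)}(0)}{k!}x^k=\frac{f^{(\cS)}(\xi)}{\cS!}x^{\cS}.
\]
This needs $\cS\le\mathcal S_0$, so that the $\cS$-th derivative exists and is bounded by $\eta$; the hypothesis $\cS\le\mathcal S_0$ supplies exactly this. The remainder is therefore at most $\eta A^{\cS}/\cS!$.

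\textbf{Step 2 (Stirling).} Use $\cS!\ge(\cS/e)^{\cS}$. This gives
\[
\frac{\eta A^{\cS}}{\cS!}\le \eta\Big(\frac{eA}{\cS}\Big)^{\cS}.
\]
The condition $\cS\ge e^2A$ gives $eA/\cS\le e^{-1}$, so the bound becomes $\eta e^{-\cS}$.

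\textbf{Step 3 (conclusion).} The condition $\cS\ge\log(\eta/\epsilon)$ gives $\eta e^{-\cS}\le\epsilon$, which is the claimed estimate.

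The only delicate point is bookkeeping. If $\cS$ is non-integer, I replace it by $\lceil\cS\rceil$, the degree actually used in the sum $|\theta|<\cS$. The same estimates apply to the ceiling, as long as it still does not exceed $\mathcal S_0$. If $\log(\eta/\epsilon)\le 0$, the bound holds trivially, since $\eta e^{-\cS}\le\eta\le\epsilon$.

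There is no real obstacle beyond these points. As a side remark on how the lemma is used: in the applications to $e^{-a}$, all derivatives equal $\pm e^{-a}$, so on the relevant range $\eta$ can be taken as $e^{A}$.
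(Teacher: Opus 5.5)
Your proof is correct and follows the paper's argument essentially line for line: the Lagrange remainder bounded by $\eta A^{\cS}/\cS!$, the inequality $\cS!\ge(\cS/e)^{\cS}$, then $\cS\ge e^2A$ to reach $\eta e^{-\cS}$ and $\cS\ge\log(\eta/\epsilon)$ to conclude. Your remarks are small refinements the paper leaves implicit: the handling of non-integer $\cS$ via $\lceil\cS\rceil$, and the observation that the derivative bound is only needed between $0$ and $x$, which is what makes the lemma applicable to $e^{-a}$.
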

\begin{proof}
    By the Taylor's theorem for multivariate functions, we have
    \begin{align*}
        f(x) = \sum_{\theta<\cS} \frac{\partial^\theta f(0)}{\theta !}x^\theta  +  \frac{h_\cS(x)}{\cS !} x^\cS,
    \end{align*}
    where $|h_\cS(x)| \leq \max_{y} |  \partial^\cS f(y)| \leq \eta $. Thereby it follows that
    \begin{align*}
        \bigg| f(x) - \sum_{\theta<\cS} \frac{\partial^\theta f(0)}{\theta !}x^\theta \bigg| \leq \frac{\eta}{\cS !} |x^\cS| .
    \end{align*}
    Using $\cS ! \geq (\cS/e)^\cS $, we further derive that
    \begin{align*}
        \bigg| f(x) - \sum_{|\theta|<\cS} \frac{\partial^\theta f(0)}{\theta !}x^\theta \bigg| 
          \leq \frac{\eta A^\cS}{(\cS/e)^\cS } = \eta\left( \frac{e A }{\cS}\right)^\cS.
    \end{align*}
    Since we choose $\cS$ such that $\cS \geq \max\{e^2 A, \log(\eta/\epsilon)\}$, we obtain that
    \begin{align*}
        \bigg| f(x) - \sum_{|\theta|<\cS} \frac{\partial^\theta f(0)}{\theta !}x^\theta \bigg|  
          \leq \eta\left( \frac{e A }{e^2 A}\right)^\cS = \eta e^{-\cS} \leq \epsilon.
    \end{align*}
    The proof is complete.
\end{proof}

\section{Average Taylor Polynomial}\label{sec:avg-taylor-poly}

In this section, we provide the definition and properties of average Taylor polynomials.

\begin{definition}[Averaged Taylor polynomials]\label{def:avg_poly}
Suppose $f \in C^{\alpha-1}(\Omega)$ where $\cB(z_0, \rho)\subseteq \Omega$. The corresponding average Taylor polynomial of degree $\alpha$ of $f$ averaged over $\cB(z_0, \rho)$ is defined as 
		\begin{align}
			Q_{z_0}^{\alpha}f(x)=\int_{\cB(z_0, \rho)} T_z^{\alpha}f(x)\phi(z)dz
		\end{align}
		with
		\begin{align}
			T_z^{\alpha}f(x)=\sum_{|\theta|<\alpha} \frac{\partial^{\theta}f(z)}{|\theta|!} (x-z)^{\theta}.
		\end{align}
		Here $\phi$ being arbitrary cut-off function satisfying
		\begin{align*}
			&\phi\in C^{\infty}(\RR^d) \mbox{ with } \phi(z)\geq 0 \mbox{ for all } z\in \RR^d, \nonumber\\
			&\supp(\phi)=\overline{\cB(z_0, \rho)} \mbox{ and } \int_{\RR^D}\phi(z)dz=1.
		\end{align*}
	\end{definition}
    For $f \in C^\alpha(\Omega) $, we define its Sobolev norm \citep[Definition 1.3.1]{brenner2008mathematical} as $\|f\|_{W^{\alpha,\infty}(\Omega)} = \max_{|\theta|\leq \alpha}\| \partial^\theta f\|_{L^\infty(\Omega)}$ with $\theta$ a multi-index.
	The averaged Taylor polynomial can approximate $f$ and its partial derivatives well. Specifically, Lemma \ref{lemma:bramble} provides an approximation guarantee in Sobolev norm. 
    
\begin{lemma}[Bramble-Hilbert, Chapter 4.1 in \cite{brenner2008mathematical}]\label{lemma:bramble}
Suppose $f \in C^{\alpha}(\cB(z_0,\rho))$. There exists a constant $C_{\alpha,d}>0$ such that
\begin{equation*}
    \left\| f- Q^\alpha_{z_0} f\right\|_{W^{p,\infty}(\cB(z_0,\rho))} \leq C_{\alpha,d} \rho^{\alpha-p} \| f\|_{W^{\alpha,\infty}(\cB(z_0,\rho))}~~\text{ for } p= 0,1,\ldots,\alpha,
\end{equation*}
where $Q^\alpha_{z_0} f$ denotes the averaged Taylor polynomial of degree $\alpha$ of $f$ averaged over $\cB(z_0,\rho)$.
\end{lemma}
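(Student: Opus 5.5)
The statement to prove is the Bramble--Hilbert estimate (Lemma~\ref{lemma:bramble}). I would reduce it to a pointwise remainder formula for the averaged Taylor polynomial and then integrate.

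\textbf{Step 1: reduction to the unit ball.} Set $g(y)=f(z_0+\rho y)$ on $\cB(0,1)$. Then $\partial^\theta g=\rho^{|\theta|}(\partial^\theta f)(z_0+\rho\,\cdot)$, and averaging commutes with this dilation: $Q^\alpha_{z_0}f(z_0+\rho y)=Q^\alpha_0 g(y)$, where the cut-off is rescaled accordingly. So it suffices to show
\[
\|\partial^\theta(g-Q^\alpha_0g)\|_{L^\infty(\cB(0,1))}\le C\max_{|\sigma|=\alpha}\|\partial^\sigma g\|_{L^\infty}
\qquad\text{for } |\theta|=p\le\alpha .
\]
Undoing the scaling then multiplies the left side by $\rho^{-p}$ and the right side by $\rho^{-\alpha}$, which gives the factor $\rho^{\alpha-p}$. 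This uses only the top-order seminorm, which is bounded by the full norm.

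\textbf{Step 2: derivatives commute with averaging.} The key identity is $\partial^\theta Q^\alpha g=Q^{\alpha-|\theta|}(\partial^\theta g)$. To see it, differentiate $T^\alpha_z g(x)$ in $x$. After reindexing the sum, this gives $T^{\alpha-|\theta|}_z(\partial^\theta g)(x)$ exactly; note this requires the factorial convention $\theta!$, which is what the Bramble--Hilbert theory uses. Then integrate against $\phi$. With this identity, the case of general $p$ reduces to $p=0$ applied to $\partial^\theta g$ with degree $\alpha-p$.

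\textbf{Step 3: the $p=0$ case.} For each $z$, write Taylor's remainder along the segment from $z$ to $x$:
\[
g(x)-T^m_zg(x)=m\sum_{|\sigma|=m}\frac{(x-z)^\sigma}{\sigma!}\int_0^1 s^{m-1}\,\partial^\sigma g\big(x+s(z-x)\big)\,ds .
\]
Multiplying by $\phi(z)$ and integrating, using $\int\phi=1$, gives $g-Q^mg$ as a sum of these integrals. Each term is bounded by
\[
\|\partial^\sigma g\|_\infty\,\frac{m}{\sigma!}\,2^m\int\phi ,
\]
since $|x-z|\le 2$ on the unit ball and $x+s(z-x)$ stays in the ball by convexity. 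This gives a constant $C_{m,d}$ depending only on $m$, $d$ and the fixed cut-off.

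\textbf{Main obstacle.} The step that needs care is the bookkeeping in Step 2: checking that the derivative falls only on the $x$-dependence and produces exactly $T^{\alpha-p}_z(\partial^\theta g)$. If the paper's $|\theta|!$ convention is kept literally, this identity can fail, and one must read it as $\theta!$. Everything else is direct estimation. Regularity $f\in C^\alpha$ on the closed ball justifies the integral remainder.
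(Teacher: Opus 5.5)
The paper gives no proof of its own here; it only cites Brenner--Scott. There the estimate is obtained for domains star-shaped with respect to the averaging ball, and for all $L^q$ norms. The method is to write $u-Q^m u$ as an integral of the $m$-th order derivatives against a kernel of size $O(|x-z|^{m-d})$, bound the resulting Riesz potentials, and use the commutation $\partial^\theta Q^m u=Q^{m-|\theta|}\partial^\theta u$. You keep the scaling and the commutation identity, but replace the kernel and Riesz-potential machinery by a pointwise bound on the integral Taylor remainder. This is legitimate because the norm is $L^\infty$ and every segment from $x$ to $z$ stays in the ball by convexity. The result is correct and much shorter, and it gives slightly more. You only use $\phi\ge 0$ and $\int\phi=1$, so your constant (at most $\sum_{|\sigma|=m}2^m/\sigma!$) does not depend on the cut-off at all. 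That is what the paper's ``arbitrary cut-off'' together with the notation $C_{\alpha,d}$ actually needs; the Brenner--Scott kernel bound uses $\|\phi\|_{L^\infty}\lesssim\rho^{-d}$ and so depends on the shape of $\phi$. You are also right that $|\theta|!$ must be read as $\theta!$. With the literal convention the lemma itself fails for $d\ge 2$: for $f=x_1x_2$, $\alpha=3$ and a ball centred at $0$, one gets $f-Q^3f=\tfrac12\int\phi(z)(x_1-z_1)(x_2-z_2)\,dz$, of size $\rho^2$, against a right-hand side of size $\rho^3$ as $\rho\to0$. So the correction is needed in your Step~3 remainder formula too, not only in Step~2.

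One qualification concerns Step~1. Reducing to $|\theta|=p$ controls only the seminorm $\max_{|\theta|=p}\|\partial^\theta(f-Q^\alpha f)\|_{L^\infty}$, while the paper's $\|\cdot\|_{W^{p,\infty}}$ is the maximum over all $|\theta|\le p$. Your argument bounds the lower-order terms by $C\rho^{\alpha-|\theta|}\|f\|_{W^{\alpha,\infty}}$, which is at most $C\rho^{\alpha-p}\|f\|_{W^{\alpha,\infty}}$ only when $\rho\le 1$. That covers the paper's use ($\rho=r=\epsilon_2^{1/\beta}<1$, $p\le 1$), but you should state it. For $\rho>1$ the full-norm inequality with a cut-off-independent constant is false. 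Take $d=1$, $\alpha=3$, $p=2$, $f=\cos$, and $\phi$ placing all but mass $\rho^{-2}$ within distance $1/10$ of $z_0=0$. Then $Q^3f(x)=c_2x^2+c_1x+c_0$ with $c_2=-\tfrac12\int\phi\cos\approx-\tfrac12$, $c_1=O(10^{-3}+\rho^{-1})$ and $c_0=O(1)$. Hence $\|f-Q^3f\|_{L^\infty(-\rho,\rho)}\gtrsim\rho^2$ for large $\rho$, whereas the right-hand side is at most $C\rho$. So what you actually prove---the seminorm estimate for every $\rho$, hence the stated estimate for $\rho\le1$---is the right form, and it is the form in which Brenner--Scott state the lemma. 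Finally, treat $p=\alpha$ separately: there $Q^0=0$ and the bound is trivial, whereas your remainder formula needs $m\ge 1$.
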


Lemma \ref{lemma:atp-form} below shows that $Q_{z_0}^{\alpha}f$ can be written as a weighted sum of polynomials.
	\begin{lemma}[Proposition 4.1.12 of \cite{brenner2008mathematical}]\label{lemma:atp-form}
		Let $\alpha \in \NN^+$ and $f\in W^{\alpha-1,\infty}(\Omega)$. Let $z_0\in\Omega, \rho>0$ such that $\cB(z_0,\rho)\subseteq \Omega$. Then the averaged Taylor polynomial $Q_{z_0}^{\alpha}(f)$ can be written as
		\begin{align}
			Q_{z_0}^{\alpha}f(x)=\sum_{|\theta|<\alpha} c_{\theta}x^{\theta}, \quad\text{for } x \in \Omega.
			\label{eq.aveTayloer.poly}
		\end{align}
         Moreover, there exists a constant $C$ depending on $\alpha,d,\rho$ such that for all $|\theta| < \alpha$,
		$$
		|c_{\theta}|\leq C\|f\|_{W^{\alpha-1,\infty}}(\Omega).
		$$
		
	\end{lemma}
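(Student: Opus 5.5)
The plan is a direct computation: expand the averaged Taylor polynomial in monomials of $x$ and read off the coefficients. Fix $x\in\Omega$. Definition~\ref{def:avg_poly} gives
\begin{align*}
Q_{z_0}^{\alpha}f(x)=\int_{\cB(z_0,\rho)}\sum_{|\theta|<\alpha}\frac{\partial^{\theta}f(z)}{|\theta|!}(x-z)^{\theta}\phi(z)\,dz .
\end{align*}
The first step is to apply the multinomial (coordinatewise binomial) theorem to each factor:
\begin{align*}
(x-z)^{\theta}=\sum_{\eta\le\theta}\binom{\theta}{\eta}x^{\eta}(-z)^{\theta-\eta},
\qquad \binom{\theta}{\eta}=\prod_{i=1}^d\binom{\theta_i}{\eta_i},
\end{align*}
where $\eta\le\theta$ is meant componentwise. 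All sums are finite, so they can be interchanged with the integral. Grouping by the power of $x$ then gives
\begin{align*}
Q_{z_0}^{\alpha}f(x)=\sum_{|\eta|<\alpha}c_{\eta}x^{\eta},\qquad
c_{\eta}=\sum_{\eta\le\theta,\ |\theta|<\alpha}\frac{1}{|\theta|!}\binom{\theta}{\eta}\int_{\cB(z_0,\rho)}\partial^{\theta}f(z)(-z)^{\theta-\eta}\phi(z)\,dz .
\end{align*}
Each $c_\eta$ is independent of $x$. This proves the polynomial representation \eqref{eq.aveTayloer.poly}.

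The second step bounds the coefficients. Only derivatives of order $|\theta|\le\alpha-1$ occur, so $|\partial^\theta f(z)|\le\|f\|_{W^{\alpha-1,\infty}(\Omega)}$ on $\cB(z_0,\rho)\subseteq\Omega$. For $z$ in this ball, $|(-z)^{\theta-\eta}|\le(|z_0|+\rho)^{|\theta-\eta|}$. Moreover $\phi\ge0$ and $\int\phi=1$. Combining these,
\begin{align*}
|c_\eta|\le\|f\|_{W^{\alpha-1,\infty}(\Omega)}\sum_{|\theta|<\alpha}\frac{1}{|\theta|!}\binom{\theta}{\eta}\max\{1,|z_0|+\rho\}^{\alpha-1}.
\end{align*}
The remaining finite sum depends only on $\alpha$ and $d$.

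There is no real obstacle here, but one bookkeeping point needs care. The constant also depends on the location of the ball through $|z_0|+\rho$, not only on $\rho$. In the paper's applications the balls are $\dball(0,r)$ with $r\le1$, so $z_0=0$ and this dependence collapses to one on $\rho$ alone, matching the stated form $C=C(\alpha,d,\rho)$. Otherwise the dependence on $z_0$ should simply be recorded in $C$ (alternatively, one can expand in powers of $x-z_0$ to remove it).
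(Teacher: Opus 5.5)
Your proof is correct and is the standard argument behind the cited Brenner--Scott proposition; the paper itself gives no proof. As there, you expand $(x-z)^{\theta}$ binomially, collect powers of $x$, and bound each coefficient integral termwise, and your use of $\phi\ge 0$, $\int\phi=1$ and the $W^{\alpha-1,\infty}$ norm even avoids the $\rho^{-d}$ factor that arises when one bounds through $\|\phi\|_{L^\infty}$ and an $L^1$-type Sobolev norm. Your caveat that the constant must also depend on where the ball sits is right: for $f(x)=x_1-(z_0)_1$ and $\alpha=2$ one has $Q_{z_0}^{2}f=f$, whose constant coefficient $-(z_0)_1$ is unbounded relative to $\|f\|_{W^{1,\infty}}=\max\{1,\rho\}$ on that ball as $|z_0|\to\infty$. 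This is harmless here, since the paper only averages over balls centered at the origin.
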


\section{Polynomial Size}\label{sec:poly-size}
The core of our approximation theory depends on approximating smooth functions with Taylor expansions. In this section, we develop auxiliary lemmas to control the size of the polynomials. 

We see that a function $f:\RR^d \to \RR$ can be approximated well by its Taylor polynomial 
\[
    \hat{f}(x) = \sum_{ |\alpha|  \leq d_f} {|\alpha| \choose \alpha} \frac{\partial^\alpha f(x_0)} {|\alpha|!} (x-x_0)^\alpha,
\]
where $\alpha = (\alpha_1, \dots, \alpha_d)$ with $\alpha_i \geq 0$ is a multi-index, $|\alpha| = \sum_i \alpha_i$, and 
\[
    x^\alpha = \prod_i (x_i)^{\alpha_i}, \partial^\alpha = (\partial_1)^{\alpha_1}\cdots (\partial_d)^{\alpha_d}.
\]
The combinatorial number ${|\alpha| \choose \alpha}$ is
\[
    {|\alpha| \choose \alpha} = \frac{|\alpha|!}{\alpha_1 ! \cdots \alpha_d !}
\]
For $\alpha + \beta = \gamma$, define 
\[
    {\gamma \choose \alpha \quad \beta} = \prod_i {\gamma_i \choose \alpha_i \quad \beta_i} = \prod_i \frac{\gamma_i!} { \alpha_i ! \beta_i!} 
\]

\begin{lemma}
We have the following bound
\[
    {|\alpha| \choose \alpha} \frac{2^{|\alpha|}}{|\alpha|!} \leq 2^d.
\]
\end{lemma}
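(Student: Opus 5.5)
The quantity $\binom{|\alpha|}{\alpha}\frac{2^{|\alpha|}}{|\alpha|!}$ simplifies at once, since the multinomial coefficient cancels the factorial:
\[
\binom{|\alpha|}{\alpha}\frac{2^{|\alpha|}}{|\alpha|!} = \frac{|\alpha|!}{\alpha_1!\cdots\alpha_d!}\cdot\frac{2^{|\alpha|}}{|\alpha|!} = \prod_{i=1}^d \frac{2^{\alpha_i}}{\alpha_i!}.
\]
So the plan is to reduce the claim to a coordinatewise bound. It then suffices to show $2^{m}/m! \le 2$ for every integer $m \ge 0$, because multiplying $d$ such factors gives at most $2^d$.

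The one-dimensional inequality I would prove by induction on $m$. For the base cases, the values at $m=0,1,2$ are $1, 2, 2$. For the step with $m \ge 2$, the ratio of consecutive terms is $\frac{2^{m+1}/(m+1)!}{2^m/m!} = \frac{2}{m+1} \le 1$, so the sequence is nonincreasing from $m=1$ onward. Its maximum is therefore $2$, attained at $m=1,2$.

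Combining these two steps gives $\prod_i 2^{\alpha_i}/\alpha_i! \le 2^d$, which is exactly the claim. There is no real obstacle here. The only point worth being careful about is the cancellation in the first step, which uses the definition $\binom{|\alpha|}{\alpha} = |\alpha|!/(\alpha_1!\cdots\alpha_d!)$ stated just above. One could also note the sharper bound $\prod_i 2^{\alpha_i}/\alpha_i! \le e^{2}$ per coordinate via $2^m/m! \le \sum_k 2^k/k!$, but this is weaker than the factor $2$ obtained from the monotonicity argument and is not needed.
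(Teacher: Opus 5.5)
Your proof is correct and complete. Cancelling $|\alpha|!$ reduces the quantity to $\prod_{i=1}^d 2^{\alpha_i}/\alpha_i!$, and since $2^m/m!\le 2$ for every integer $m\ge 0$ (with equality at $m=1,2$), the product is at most $2^d$.

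The paper states this lemma without proof, so there is no argument of its own to compare against. Your route is the natural one, and it uses the same cancellation of the multinomial coefficient that appears in the paper's proof of Lemma~\ref{lemma:poly:mul}.

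One small wording slip: in your closing remark, the per-coordinate bound $e^2$ is weaker than $2$, not ``sharper.'' You do not use it, so the argument is unaffected.
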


\begin{lemma}[Multiplication]\label{lemma:poly:mul}
Assume we have two polynomial approximations $\hat{f}$ and $\hat{g}$, both centered at $x_0$:
\[
    \hat{f}(x) = \sum_{ |\alpha| \leq d_f} {|\alpha| \choose \alpha} \frac{\partial^\alpha f(x_0)} {|\alpha|!} (x-x_0)^\alpha,
\]
and
\[
    \hat{g}(x) = \sum_{ |\alpha|  \leq d_g} {|\alpha| \choose \alpha} \frac{\partial^\alpha g(x_0)} {|\alpha|!} (x-x_0)^\alpha.
\]
Assume that there exist constants $\{C_{f,\alpha}, C_{g,\alpha}, C_{\alpha}\}$, such that $ |{\partial^\alpha f(x_0)}| \leq C_{f,\alpha}$ and $ |{\partial^\alpha g(x_0)}| \leq C_{g,\alpha}$, and $\sup_{\alpha + \beta = \gamma} C_{f,\alpha} C_{g,\beta} \leq C_{\gamma}$. 

We have that the coefficient of the $(x-x_0)^\gamma$ term of $\hat{f}(x)\hat{g}(x)$ is bounded by 
\[
    {|\gamma| \choose \gamma} \frac{2^{|\gamma|} \cdot C_\gamma} {|\gamma|!} 
\]
The degree of  $\hat{f}(x)\hat{g}(x)$  is at most $d_f + d_g$.
\end{lemma}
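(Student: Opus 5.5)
The proof is a direct expansion of the product, followed by a multinomial counting identity. First I simplify the coefficients of $\hat{f}$ and $\hat{g}$. Since ${|\alpha| \choose \alpha}/|\alpha|! = 1/(\alpha_1!\cdots\alpha_d!)$, I write $\alpha! := \prod_i \alpha_i!$, and then
\[
\hat{f}(x) = \sum_{|\alpha|\le d_f} \frac{\partial^\alpha f(x_0)}{\alpha!}(x-x_0)^\alpha .
\]
The same form holds for $\hat{g}$.

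Next I multiply the two sums and collect monomials, using $(x-x_0)^\alpha (x-x_0)^\beta = (x-x_0)^{\alpha+\beta}$. This gives
\[
\hat{f}(x)\hat{g}(x) = \sum_{\gamma} \Bigl(\sum_{\substack{\alpha+\beta=\gamma\\ |\alpha|\le d_f,\ |\beta|\le d_g}} \frac{\partial^\alpha f(x_0)\,\partial^\beta g(x_0)}{\alpha!\,\beta!}\Bigr)(x-x_0)^\gamma .
\]
Every term with a nonzero coefficient has $|\gamma| = |\alpha|+|\beta| \le d_f+d_g$, which proves the degree claim.

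For the coefficient bound, I apply the triangle inequality and the hypothesis $|\partial^\alpha f(x_0)|\,|\partial^\beta g(x_0)| \le C_{f,\alpha}C_{g,\beta} \le C_\gamma$. Dropping the degree constraints only adds nonnegative terms, so the modulus of the $\gamma$-coefficient is at most
\[
C_\gamma \sum_{\alpha+\beta=\gamma}\frac{1}{\alpha!\,\beta!} = \frac{C_\gamma}{\gamma!}\sum_{\alpha+\beta=\gamma} {\gamma \choose \alpha \quad \beta} = \frac{C_\gamma}{\gamma!}\prod_{i=1}^d \sum_{\alpha_i=0}^{\gamma_i}{\gamma_i\choose \alpha_i} = \frac{2^{|\gamma|}C_\gamma}{\gamma!}.
\]
The sum factorizes coordinatewise because the constraint $\alpha+\beta=\gamma$ decouples across coordinates, and each coordinate contributes $2^{\gamma_i}$ by the binomial theorem.

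Finally I rewrite $1/\gamma! = {|\gamma| \choose \gamma}/|\gamma|!$, which gives exactly the claimed bound ${|\gamma| \choose \gamma}\,2^{|\gamma|}C_\gamma/|\gamma|!$. No step is a real obstacle. The only points needing care are the multinomial identity and noting that truncating at degrees $d_f$ and $d_g$ can only reduce the sum, which holds because the upper bound is taken over nonnegative terms.
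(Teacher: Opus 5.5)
Your proposal is correct and follows the paper's proof essentially step for step. You expand the product, bound each term by $C_\gamma$ via the triangle inequality, drop the degree constraints, and evaluate $\sum_{\alpha+\beta=\gamma}\gamma!/(\alpha!\,\beta!)=2^{|\gamma|}$ coordinatewise by the binomial theorem. The only differences are cosmetic: you simplify the multinomial factors to $1/\alpha!$ at the outset, and you explicitly verify the degree claim, which the paper leaves implicit.
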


\begin{lemma}[Exponentiation]\label{lemma:poly:exp}
Assume we a polynomial approximation $\hat{f}$  centered at $x_0$:
\[
    \hat{f}(x) = \sum_{ |\alpha|  \leq d_f} {|\alpha| \choose \alpha} \frac{\partial^\alpha f(x_0)} {|\alpha|!} (x-x_0)^\alpha.
\]
Assume that  $\sup |{\partial^\alpha f(x_0)}| \leq 2^{|\alpha|} C_f$. Let $\hat{g}(z)$ be the $d_g$-th order Talyor expansion of the exponential function, 
\[
    \hat{g}(z) = \sum_{k=0}^{d_g} \frac{1}{k!} z^k
\]
The coefficient of the $(x-x_0)^\gamma$ term of $\hat{g}(\hat{f}(x))$ is bounded by 
\[
     {|\gamma| \choose \gamma} \frac{1} {|\gamma|!} \cdot {\exp(2^{|\gamma|} C_f)}
\]
The degree of $\hat{g}(\hat{f}(x))$ is at most $d_fd_g$.
\end{lemma}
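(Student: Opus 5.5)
The plan is a majorant (dominating power series) argument followed by a Cauchy-type coefficient estimate. The degree claim is immediate and I dispose of it first: $\hat f$ has degree at most $d_f$, so $\hat f^k$ has degree at most k\,d_f, and $\hat g(\hat f)=\sum_{k\le d_g}\hat f^k/k!$ has degree at most $d_f d_g$.

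For the coefficient bound, write $y=x-x_0$ and use ${|\alpha|\choose\alpha}/|\alpha|!=1/\alpha!$. Then
\[
\hat f(x)=\sum_{|\alpha|\le d_f} a_\alpha y^\alpha,\qquad |a_\alpha|\le \frac{2^{|\alpha|}C_f}{\alpha!},
\]
and the target bound reads $|[y^\gamma]\,\hat g(\hat f)|\le \exp(2^{|\gamma|}C_f)/\gamma!$.
\begin{itemize}
\item \emph{Domination.} Expanding $\hat f^k$ as a sum over compositions $\gamma=\alpha_1+\cdots+\alpha_k$, the triangle inequality shows that $|[y^\gamma]\hat f^k|$ is at most the $y^\gamma$-coefficient of $F(y)^k$, where
\[
F(y)=\sum_{\alpha}\frac{2^{|\alpha|}C_f}{\alpha!}y^\alpha=C_f\,e^{2(y_1+\cdots+y_d)}.
\]
\item \emph{Removing the truncations.} The series $F$ and $e^{F}$ have nonnegative coefficients. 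Hence dropping the truncation $|\alpha|\le d_f$ inside $\hat f$ and the truncation $k\le d_g$ in $\hat g$ can only increase the bound. Therefore
\[
|[y^\gamma]\hat g(\hat f)|\le [y^\gamma]\exp\bigl(C_f e^{2(y_1+\cdots+y_d)}\bigr).
\]
\end{itemize}

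It remains to bound the coefficients of the entire function $\Phi(y)=\exp(C_f e^{2\sum_i y_i})$. I would do this in two ways and keep whichever gives the stated form.

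\begin{itemize}
\item \emph{Cauchy estimate on a polydisc.} Take radii $\rho_i$. Since all Taylor coefficients of $\Phi$ are nonnegative, $[y^\gamma]\Phi\le \Phi(\rho)/\rho^\gamma$. Choosing $\rho_i$ proportional to $\gamma_i$ with $\sum_i\rho_i=\tfrac{|\gamma|\log 2}{2}$ gives $e^{2\sum\rho_i}=2^{|\gamma|}$. This produces the factor $\exp(2^{|\gamma|}C_f)$ in the numerator. The denominator $\prod_i\rho_i^{\gamma_i}$ is then compared with $\gamma!$ via Stirling, using $\gamma_i!\le\gamma_i^{\gamma_i}$.
\item \emph{Direct expansion.} Alternatively, expand $\Phi=\sum_k C_f^k e^{2k\sum_i y_i}/k!$. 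This gives exactly
\[
[y^\gamma]\Phi=\frac{2^{|\gamma|}}{\gamma!}\sum_k\frac{C_f^k k^{|\gamma|}}{k!},
\]
a Touchard-type sum. I would then try to bound $2^{|\gamma|}\sum_k C_f^k k^{|\gamma|}/k!$ by $\exp(2^{|\gamma|}C_f)$, for instance via $k^{m}\le m!\,e^{k}$ combined with a re-summation.
\end{itemize}

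\textbf{Main obstacle.} I expect the difficulty to be matching the precise constant. In the Cauchy route the choice of radii leaves a residual factor of order $(2/\log 2)^{|\gamma|}$ that the stated bound does not show. In the Touchard route, the inequality $2^m T_m(C)e^{C}\le e^{2^mC}$ may fail for small $C_f$. My first attempt would be to exploit the extra slack in the exponent, namely choosing radii with $e^{2\sum\rho_i}$ slightly larger than $2^{|\gamma|}$ only when $C_f$ is large. If that still fails, I would relax the conclusion to $\exp(c^{|\gamma|}C_f)/\gamma!$ up to a factor $c'^{|\gamma|}$. That weaker form suffices for its use in Lemma~\ref{lemma:f6-lipschitz}, where only the dependence on $d$, $\beta$, $\gamma'$ and $C_F$ matters, and those constants are absorbed into $C_{\rm poly}$.
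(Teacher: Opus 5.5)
Your degree argument and your reduction are both correct. In fact the domination step gives a sharper per-power bound than the paper needs:
\[
|[y^\gamma]\hat f^k|\le [y^\gamma]\bigl(C_f e^{2\sum_i y_i}\bigr)^k = \frac{C_f^k(2k)^{|\gamma|}}{\gamma!},
\]
so that $|[y^\gamma]\hat g(\hat f)|\le \frac{1}{\gamma!}\sum_{k\ge 0} C_f^k (2k)^{|\gamma|}/k!$.

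The gap is the last step. You leave it open and suggest it may fail for small $C_f$. You then propose weakening the conclusion to $\exp(c^{|\gamma|}C_f)$ times $c'^{|\gamma|}$, which would not prove the lemma as stated; whether the weaker form suffices downstream is beside the point.

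The needed inequality in fact holds for every $C_f\ge 0$, and it takes one line. For every integer $k\ge 1$ we have $2k\le 2^k$, and the $k=0$ term is $0^{|\gamma|}\le 1$. Comparing term by term,
\[
\frac{1}{\gamma!}\sum_{k\ge 0}\frac{C_f^k(2k)^{|\gamma|}}{k!}\le \frac{1}{\gamma!}\sum_{k\ge 0}\frac{(2^{|\gamma|}C_f)^k}{k!}=\frac{\exp(2^{|\gamma|}C_f)}{\gamma!}.
\]
Neither of the tools you proposed reaches this:
\begin{itemize}
\item The bound $k^m\le m!\,e^k$ leaves a factor $2^{|\gamma|}|\gamma|!$ that is not absorbed as $C_f\to 0$.
\item The polydisc Cauchy estimate is too lossy whatever the radii. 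Already for $|\gamma|=1$ and $C_f=1$, its optimal value $\min_{s>0}\exp(e^{2s})/s\approx 20.6$ exceeds $e^2$.
\end{itemize}

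The paper's proof is exactly this relaxation, phrased through Lemma~\ref{lemma:poly:mul}. Iterating the multiplication lemma with $C_{f,\alpha}=2^{|\alpha|}C_f$ gives, by induction, $|[y^\gamma]\hat f^k|\le (2^{|\gamma|})^kC_f^k/\gamma!$. At the step to $k\ge 2$ one has $\sup_{\alpha+\beta=\gamma}2^{(k-1)|\alpha|+|\beta|}C_f^k=2^{(k-1)|\gamma|}C_f^k$, and the lemma contributes one more factor $2^{|\gamma|}$.

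This bound is cruder than your $(2k)^{|\gamma|}$, but it is precisely the $k$-th term of an exponential series. Weighting by $1/k!$ and summing over $k\le d_g$ then yields $\exp(2^{|\gamma|}C_f)/\gamma!$ immediately. So you had done the substantive part; what was missing was to give back the slack $(2k)^{|\gamma|}\le 2^{k|\gamma|}$ rather than try to keep it.
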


\begin{lemma}[Addition]\label{lemma:poly:translate}
Assume we a polynomial approximation $\hat{f}$  centered at $x_0$:
\[
    \hat{f}(x) = \sum_{ |\alpha|  \leq d_f} {|\alpha| \choose \alpha} \frac{\partial^\alpha f(x_0)} {|\alpha|!} (x-x_0)^\alpha.
\]
Consider $g(u,v) = \hat{f}(u+v)$, we have 
\[
g(u,v) =  \sum_{ |(\alpha,\beta)|  \leq d_f} {|\alpha,\beta| \choose \alpha,\beta} \frac{\partial^{(\alpha+\beta)} f(x_0)} {|\alpha,\beta|!}  (u-u_0)^\alpha (v-v_0)^\beta,
\]
for any pair of $(u_0, v_0)$ satisfying $u_0 + v_0 = x_0$, where $(\alpha,\beta)$ is a multi-index for the $2d$ input $(u,v)$.
\end{lemma}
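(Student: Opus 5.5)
The plan is to substitute directly and expand with the multinomial theorem, coordinate by coordinate. The whole argument then reduces to matching coefficients, and the only real content is a factorial identity.

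First I rewrite the coefficients of $\hat f$ in a simpler form. Since $|\alpha|!/(\alpha_1!\cdots\alpha_d!)$ is the multinomial coefficient, we have
\[
{|\alpha| \choose \alpha}\frac{1}{|\alpha|!}=\frac{1}{\alpha_1!\cdots\alpha_d!}.
\]
Hence $\hat f(x)=\sum_{|\gamma|\le d_f}\frac{\partial^\gamma f(x_0)}{\gamma_1!\cdots\gamma_d!}(x-x_0)^\gamma$. In the same way, for the $2d$-dimensional multi-index $(\alpha,\beta)$, the coefficient claimed in the statement is
\[
{|\alpha,\beta| \choose \alpha,\beta}\frac{1}{|\alpha,\beta|!}=\frac{1}{\prod_i\alpha_i!\,\prod_i\beta_i!}.
\]

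Next, fix $(u_0,v_0)$ with $u_0+v_0=x_0$. Then $u+v-x_0=(u-u_0)+(v-v_0)$, so for each coordinate $i$ the binomial theorem gives
\[
(u_i+v_i-x_{0,i})^{\gamma_i}=\sum_{\alpha_i+\beta_i=\gamma_i}\frac{\gamma_i!}{\alpha_i!\,\beta_i!}(u_i-u_{0,i})^{\alpha_i}(v_i-v_{0,i})^{\beta_i}.
\]
Taking the product over $i$ yields
\[
(u+v-x_0)^\gamma=\sum_{\alpha+\beta=\gamma}{\gamma\choose \alpha\quad\beta}(u-u_0)^\alpha(v-v_0)^\beta,
\]
using the notation introduced just before the lemma.

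Finally, I substitute this expansion into $\hat f(u+v)$. The coefficient of $(u-u_0)^\alpha(v-v_0)^\beta$, with $\gamma=\alpha+\beta$, becomes
\[
\frac{\partial^{\gamma}f(x_0)}{\prod_i\gamma_i!}\prod_i\frac{\gamma_i!}{\alpha_i!\,\beta_i!}=\frac{\partial^{\alpha+\beta}f(x_0)}{\prod_i\alpha_i!\,\prod_i\beta_i!}.
\]
By the first step, this equals ${|\alpha,\beta| \choose \alpha,\beta}\partial^{(\alpha+\beta)}f(x_0)/|\alpha,\beta|!$. The reindexing from $\gamma$ to pairs $(\alpha,\beta)$ is a bijection onto the pairs with $|(\alpha,\beta)|=|\alpha|+|\beta|=|\gamma|\le d_f$, which gives the stated range of summation.

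There is no genuine obstacle here. The only point requiring care is the bookkeeping: $|\alpha,\beta|$ denotes the total order $|\alpha|+|\beta|$, and the multinomial identities must be applied to the $2d$-variate index exactly as in the one-block case.
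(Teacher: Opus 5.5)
Your proposal is correct and follows the same route as the paper: expand $(u+v-x_0)^\gamma=\sum_{\alpha+\beta=\gamma}{\gamma\choose\alpha\quad\beta}(u-u_0)^\alpha(v-v_0)^\beta$ coordinate-wise by the binomial theorem and substitute into $\hat f$. You also write out two steps the paper leaves implicit: the coefficient identity ${|\gamma|\choose\gamma}\frac{1}{|\gamma|!}{\gamma\choose\alpha\quad\beta}={|\alpha,\beta|\choose\alpha,\beta}\frac{1}{|\alpha,\beta|!}$, and the reindexing from $\gamma$ to pairs $(\alpha,\beta)$.
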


\begin{lemma}[Partial Integral]\label{lemma:poly:int}
Assume we a polynomial approximation $\hat{g}$  centered at $(u_0, v_0)$:
\[
\hat{g}(u,v) =  \sum_{ |(\alpha,\beta)|  \leq d_g} {|\alpha,\beta| \choose \alpha,\beta} \frac{\partial^{(\alpha,\beta)} g(u_0,v_0)} {|\alpha,\beta|!}  (u-u_0)^\alpha (v-v_0)^\beta,
\]
where we assume that there exists $C_{g,\beta}$ such that $\sup_{\alpha} |{\partial^{(\alpha,\beta)} g(u_0,v_0)}| \leq C_{g,\beta}$.
Define 
\[ 
    f(v) = \int_{u \in U} \hat{g}(u,v) \mathrm{d} u.
\]
The coefficient of the $(v-v_0)^\beta$ term of $f$ is bounded by
\[
 {|\beta| \choose \beta} \frac{C_{g,\beta}}{ |\beta|!} \cdot 2^d \exp(Bd), 
\]
where $B = \sup_{u \in U} \|u-u_0\|_\infty$.
\end{lemma}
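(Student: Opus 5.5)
The plan is to integrate the expansion of $\hat{g}$ term by term in $u$ and read off the coefficient of $(v-v_0)^\beta$ directly. Since $\hat{g}$ is a polynomial and $U$ does not depend on $v$, integration commutes with the finite sum. This gives
\begin{align*}
f(v) = \sum_{|\beta| \leq d_g} \Bigg[ \sum_{|\alpha| \leq d_g - |\beta|} {|\alpha,\beta| \choose \alpha,\beta} \frac{\partial^{(\alpha,\beta)} g(u_0,v_0)}{|\alpha,\beta|!} \int_{U} (u-u_0)^\alpha \,\mathrm{d}u \Bigg] (v-v_0)^\beta ,
\end{align*}
and the bracket is the coefficient to be bounded.

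The first step is a combinatorial simplification of the multinomial factors. Since $(\alpha,\beta)$ is a multi-index on $2d$ coordinates, we have ${|\alpha,\beta| \choose \alpha,\beta}/|\alpha,\beta|! = 1/(\alpha!\,\beta!)$. In the same way, ${|\beta| \choose \beta}/|\beta|! = 1/\beta!$. Pulling out $1/\beta!$ and applying the assumption $\sup_\alpha|\partial^{(\alpha,\beta)}g(u_0,v_0)| \leq C_{g,\beta}$ bounds the coefficient in absolute value by
\begin{align*}
{|\beta| \choose \beta}\frac{C_{g,\beta}}{|\beta|!} \sum_{\alpha} \frac{1}{\alpha!} \left| \int_U (u-u_0)^\alpha \,\mathrm{d}u \right| .
\end{align*}

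The second step bounds the moments. For $u \in U$, each coordinate satisfies $|u_i - u_{0,i}| \leq B$, so $|(u-u_0)^\alpha| \leq B^{|\alpha|}$. Hence $|\int_U (u-u_0)^\alpha \,\mathrm{d}u| \leq \mathrm{vol}(U)\, B^{|\alpha|}$. Extending the finite sum over $\alpha$ to all of $\NN^d$ only increases it, and then it factorizes:
\begin{align*}
\sum_{\alpha \in \NN^d} \frac{B^{|\alpha|}}{\alpha!} = \prod_{i=1}^d \sum_{k \geq 0} \frac{B^k}{k!} = \exp(Bd).
\end{align*}

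The only delicate point, and the main obstacle, is the factor $2^d$, which must come from $\mathrm{vol}(U)$. From $U \subseteq u_0 + [-B,B]^d$ one only obtains $\mathrm{vol}(U) \leq (2B)^d$. This gives $2^d$ exactly when $B \leq 1$. So I would make explicit that the lemma is applied only with $U$ contained in a unit $\ell_\infty$-box around $u_0$. This holds in the applications here, since the integration domains are the balls $\cB^d(0,\vradius)$ in Lemma~\ref{lemma:f6-lipschitz}, and $\vradius \leq 1$ in the small-noise regime under \eqref{cond:time}. For $B > 1$, one would instead carry the factor $\max\{1,B\}^d$ in place of $1$ inside $2^d$. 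Combining the three steps then yields the stated bound ${|\beta| \choose \beta} \frac{C_{g,\beta}}{|\beta|!} \cdot 2^d \exp(Bd)$.
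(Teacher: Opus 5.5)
Your term-by-term integration, the identity ${|\alpha,\beta| \choose \alpha,\beta}/|\alpha,\beta|! = 1/(\alpha!\,\beta!)$, and the factorization of the sum over $\alpha$ all match the paper. The gap is in the moment bound. From $\bigl|\int_U (u-u_0)^\alpha\,\mathrm{d}u\bigr| \le \mathrm{vol}(U)\,B^{|\alpha|}$ you only reach $(2B)^d\exp(Bd)$, which equals the stated bound only when $B\le 1$. The lemma places no restriction on $B$. Restricting the hypothesis to $B\le 1$, or carrying an extra $\max\{1,B\}^d$, proves a weaker statement than the one given. Appealing to the applications does not repair this.

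The missing idea is to integrate the absolute value of the monomial exactly over the enclosing box, rather than bounding it by its supremum times the volume:
\begin{align*}
\left|\int_U (u-u_0)^\alpha\,\mathrm{d}u\right| \le \int_{u_0+[-B,B]^d} \prod_{i=1}^d |u_i-u_{0,i}|^{\alpha_i}\,\mathrm{d}u = \prod_{i=1}^d \frac{2B^{\alpha_i+1}}{\alpha_i+1}.
\end{align*}
The factor $1/(\alpha_i+1)$ that your sup bound discards combines with $1/\alpha_i!$ to give $1/(\alpha_i+1)!$. The extra power of $B$ is then absorbed by shifting the index of the exponential series:
\begin{align*}
\sum_{\alpha\in\NN^d}\prod_{i=1}^d \frac{2B^{\alpha_i+1}}{(\alpha_i+1)!} = \bigl(2(e^B-1)\bigr)^d \le 2^d\exp(Bd),
\end{align*}
and this holds for every $B\ge 0$. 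So the factor $2^d$ comes from the side length $2B$ of the box together with this index shift, not from $\mathrm{vol}(U)\le 1$. With this single change your argument becomes the paper's proof.
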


\subsection{Proofs for the lemmas}
\begin{proof}[proof of Lemma~\ref{lemma:poly:mul}]
Let's focus on the $(x-x_0)^\gamma$ term of $ \hat{f}(x) \hat{g}(x)$. The coefficient of $(x-x_0)^\gamma$ is
\[
    \sum_{|\alpha|  \leq d_f, |\beta|  \leq d_g, \alpha+\beta =\gamma}{|\alpha| \choose \alpha} \frac{\partial^\alpha f(x_0)} {|\alpha|!}{|\beta| \choose \beta} \frac{\partial^\beta g(x_0)} {|\beta|!},
\]
whose absolute value can be bounded by
\begin{align*}
    & \sum_{|\alpha|  \leq d_f, |\beta|  \leq d_g, \alpha+\beta =\gamma}{|\alpha| \choose \alpha} \frac{1} {|\alpha|!}{|\beta| \choose \beta} \frac{1} {|\beta|!} C_{f,\alpha} C_{g,\beta}  \\
    \leq & \sum_{|\alpha|  \leq d_f, |\beta|  \leq d_g, \alpha+\beta =\gamma}{|\alpha| \choose \alpha} \frac{1} {|\alpha|!}{|\beta| \choose \beta} \frac{1} {|\beta|!} C_{\gamma} \tag{Using the assumption that $\sup_{\alpha + \beta = \gamma} C_{f,\alpha} C_{g,\beta} \leq C_{\gamma}$} \\
    = & \sum_{|\alpha|  \leq d_f, |\beta|  \leq d_g, \alpha+\beta =\gamma} \frac{|\gamma|!} {|\alpha|!|\beta|!}{|\alpha| \choose \alpha} {|\beta| \choose \beta}   \frac{1}{|\gamma|!} C_{\gamma}\\
    \leq & \sum_{\alpha+\beta =\gamma} \frac{|\gamma|!} {|\alpha|!|\beta|!}{|\alpha| \choose \alpha} {|\beta| \choose \beta}   \frac{1}{|\gamma|!} C_{\gamma} \tag{relaxing the $|\alpha|  \leq d_f, |\beta|  \leq d_g$ }\\ 
    = & \sum_{\alpha+\beta =\gamma} \frac{|\gamma|!} {(\alpha_1!\beta_1!)\cdots (\alpha_d!\beta_d!)}  \frac{1}{|\gamma|!} C_{\gamma}  \\ 
    = & \sum_{\alpha+\beta =\gamma} \frac{\gamma_1!\cdots \gamma_d!} {(\alpha_1!\beta_1!)\cdots (\alpha_d!\beta_d!)}\frac{|\gamma|!} {\gamma_1!\cdots \gamma_d!}  \frac{1}{|\gamma|!} C_{\gamma}  \\
    = & 2^{\gamma_1+\dots \gamma_d} \frac{|\gamma|!} {\gamma_1!\cdots \gamma_d!}  \frac{1}{|\gamma|!} C_{\gamma} \tag{Applying the combinatorial equality $\sum_{\alpha_i + \beta_i = \gamma_i } \gamma_i!/\alpha_i!\beta_i! = 2^{\gamma_i}$} .
\end{align*} 
\end{proof}

\begin{proof}[proof of Lemma~\ref{lemma:poly:exp}]
Note that we have 
\[
\hat{g}(\hat{f}(x)) = \sum_{k=0}^{d_g} \frac{1}{k!} \Big(  \sum_{ |\alpha|  \leq d_f} {|\alpha| \choose \alpha} \frac{\partial^\alpha f(x_0)} {|\alpha|!} (x-x_0)^\alpha \Big)^k.
\]
By iteratively applying Lemma~\ref{lemma:poly:mul}, we have the coefficient of $(x-x_0)^\gamma$ in $\hat{f}(x)^k$ can be bounded by 
\[
      {|\gamma| \choose \gamma} \frac{(2^{|\gamma|})^{k}  \cdot C_f^{k}} {|\gamma|!}.
\]
Therefore, when taking summation over $k$, the coefficient of $(x-x_0)^\gamma$ in $\hat{g}(\hat{f}(x))$ can be bounded by 
\begin{align*}
      & \sum _{k\leq d_g} \frac{1}{k!} {|\gamma| \choose \gamma} \frac{(2^{|\gamma|})^{k}  \cdot C_f^{k}} {|\gamma|!} \\
      \leq & {|\gamma| \choose \gamma} \frac{1} {|\gamma|!} \cdot {\exp(2^{|\gamma|} C_f)}.
\end{align*}
\end{proof}

\begin{proof}[proof of Lemma~\ref{lemma:poly:translate}]
Note that by the binomial theorem, we have
\[
g(u,v) =  \sum_{ |\gamma|  \leq d_f} {|\gamma| \choose \gamma} \frac{\partial^\gamma f(x_0)} {|\gamma|!} \sum_{\alpha+\beta = \gamma} {\gamma \choose \alpha \quad \beta } (u-u_0)^\alpha (v-v_0)^\beta,
\]
which is the same as the formula stated in the Lemma.
\end{proof}

\begin{proof}[proof of Lemma~\ref{lemma:poly:int}]
Denote $B = \sup_{u \in U} \|u-u_0\|_\infty$, and $U' = \{u: \|u-u_0\|_\infty \leq B\}$. 
\begin{align*}
    & \Big| \int_U (u-u_0)^\alpha \mathrm{d}u \Big| \\
    \leq &\int_{U'} \Big|  (u-u_0)^\alpha \Big| \mathrm{d}u \\
    = &  \prod_i \int_{-B\leq u_i \leq B} |\Delta  u_i|^{\alpha_i}  \mathrm{d}\Delta u_i \\
    = & \prod_i \frac{2 B^{\alpha_i+1}}{\alpha_i+1} .
\end{align*}
Therefore, the coefficient of the $(v-v_0)^\beta$ term is bounded by:
\begin{align*}
  & \sum_{ \alpha: |(\alpha,\beta)|  \leq d_g}  {|\alpha,\beta| \choose \alpha,\beta} \frac{|\partial^{(\alpha,\beta)} g(u_0,v_0)|} {|\alpha,\beta|!} \Big| \int_{u\in U} (u-u_0)^\alpha \mathrm{d}u \Big | \\
  \leq & \sum_{ \alpha}  {|\alpha,\beta| \choose \alpha,\beta} \frac{C_{g,\beta}} {|\alpha,\beta|!}  \prod_i \frac{2 B^{\alpha_i+1}}{\alpha_i+1} \\
  = & \frac{C_{g,\beta}}{\prod_i \beta_i!} \sum_\alpha \prod_i \Big( \frac{1}{\alpha_i!}\cdot \frac{2 B^{\alpha_i+1}}{\alpha_i+1} \Big) \\ 
  \leq &  \frac{C_{g,\beta}}{\prod_i \beta_i!}  \prod_i \sum_{\alpha_i}  \frac{2 B^{\alpha_i+1}}{(\alpha_i+1)!}  \\ 
  \leq &  \frac{C_{g,\beta}}{\prod_i \beta_i!} 2^d \exp(Bd).  
\end{align*}

\end{proof}
\clearpage
\end{document}